\title{Refined Sample Complexity for Markov Games with \\ Independent Linear Function Approximation}
\author{%
    Yan Dai~\thanks{IIIS, Tsinghua University. Email: \texttt{yan-dai20@mails.tsinghua.edu.cn}.}\and Qiwen Cui~\thanks{University of Washington. Email: \texttt{qwcui@cs.washington.edu}.}\and Simon S. Du~\thanks{University of Washington. Email: \texttt{ssdu@cs.washington.edu}.}
}
\date{}
\newtheorem{theorem}{Theorem}[section]
\newtheorem{lemma}[theorem]{Lemma}
\newtheorem{assumption}[theorem]{Assumption}
\newtheorem{corollary}[theorem]{Corollary}
\numberwithin{equation}{section}
\newcommand{\mathsc}[1]{{\normalfont\textsc{#1}}}
\newcommand{\E}{\operatornamewithlimits{\mathbb{E}}}
\newcommand{\argmin}{\operatornamewithlimits{\mathrm{argmin}}}
\renewcommand{\O}{\operatorname{\mathcal O}}
\newcommand{\Otil}{\operatorname{\tilde{\mathcal O}}}
\renewcommand{\P}{\operatorname{\mathbb P}}
\newcommand{\trans}{\mathsf{T}}
\newcommand{\mS}{\mathcal{S}}
\newcommand{\mA}{\mathcal{A}}
\newcommand{\mD}{\mathcal{D}}
\newcommand{\mI}{\mathcal{I}}
\newcommand{\Gap}{\mathsc{Gap}}
\newcommand{\Unif}{\mathrm{Unif}}
\newcommand{\poly}{\mathrm{poly}}
\newcommand{\Cov}{\mathrm{Cov}}
\renewcommand{\tilde}{\widetilde}
\renewcommand{\hat}{\widehat}
\renewcommand{\bar}{\overline}
\newcommand{\happa}{\mathfrak h}
\Crefname{ALC@unique}{Line}{Lines}
\Crefname{assumption}{Assumption}{Assumptions}
\begin{document}
\maketitle

\begin{abstract}

Markov Games (MG) is an important model for Multi-Agent Reinforcement Learning (MARL).
It was long believed that the ``curse of multi-agents'' (\textit{i.e.}, the algorithmic performance drops exponentially with the number of agents) is unavoidable until several recent works \citep{daskalakis2023complexity,cui2023breaking,wang2023breaking}.
While these works resolved the curse of multi-agents, when the state spaces are prohibitively large and (linear) function approximations are deployed, they either had a slower convergence rate of $\O(T^{-1/4})$ or brought a polynomial dependency on the number of actions $A_{\max}$ -- which is avoidable in single-agent cases even when the loss functions can arbitrarily vary with time.
This paper first refines the \texttt{AVLPR} framework by \citet{wang2023breaking}, with an insight of designing \textit{data-dependent} (i.e., stochastic) pessimistic estimation of the sub-optimality gap, allowing a broader choice of plug-in algorithms.
When specialized to MGs with independent linear function approximations, we propose novel \textit{action-dependent bonuses} to cover occasionally extreme estimation errors. With the help of state-of-the-art techniques from the single-agent RL literature, we give the first algorithm that tackles the curse of multi-agents, attains the optimal $\O(T^{-1/2})$ convergence rate, and avoids $\text{poly}(A_{\max})$ dependency simultaneously.%
\footnote{Accepted for presentation at the Conference on Learning Theory (COLT) 2024.}
\end{abstract}

\section{Introduction}

Multi-Agent Reinforcement Learning (MARL) studies decision-making under uncertainty in a multi-agent system.
Many practical MARL systems demonstrate impressive performance in games like Go \citep{silver2017mastering}, Poker \citep{brown2019superhuman}, Starcraft II \citep{vinyals2019grandmaster}, Hide-and-Seek \citep{Baker2020Emergent}, and Autonomous Driving \citep{shalev2016safe}.
While MARL exhibits huge success in practice, it is still far from being understood theoretically.

A core challenge in theoretically studying MARL is the ``curse of multi-agents'': When many agents are involved in the game, the joint state and action space is prohibitively large.
Thus, early algorithms for multi-agent games \citep{liu2021sharp} usually have a sample complexity (the number of samples the algorithm requires to attain a given accuracy) exponentially depending on the number of agents $m$, for example, scaling with $\prod_{i\in [m]}\lvert \mA_i\rvert$ where $\mA_i$ is the action space of the $i$-th agent.

Later, many efforts were made to resolve this issue.
\citet{jin2021v} gave the first algorithm avoiding the curse of multi-agents, and \citet{daskalakis2023complexity} made the output policy \textit{Markovian} (\textit{i.e.}, non-history-dependent, which we focus in our paper). Such results only depend on $A_{\max}\triangleq \max_{i\in [m]}\lvert \mA_i\rvert$ but not $\prod_{i\in [m]}\lvert \mA_i\rvert$.
While these algorithms work well in the \textit{tabular} Markov Games \citep{shapley1953stochastic},\footnote{Tabular Markov Games refer to the Markov Games where the number of states and actions are finite and small.} they cannot handle the case where the state space is prohibitively large.

However, in many real-world applications, tabular models are insufficient. For example, Go has $3^{361}$ possible states.
In single-agent RL, people use function approximation to model the state space \citep{jiang2017contextual,wen2017efficient}.
While this idea naturally generalizes to MARL \citep{xie2020learning,chen2022almost}, unfortunately, it usually induces the curse of multi-agents.

Two recent works by \citet{cui2023breaking} and \citet{wang2023breaking} investigated this issue.
They concurrently proposed that instead of the \textit{global} function approximations previously used in the literature, \textit{independent} function approximations should be developed.
By assuming independent linear function approximations, they designed the first algorithms that can avoid the curse of multi-agents when the state spaces are prohibitively large and function approximations are deployed.

While their algorithms succeeded in yielding polynomial dependencies on $m$, they were sub-optimal in other terms.
The sample complexity of \citet{cui2023breaking} was $\Otil(\poly(m,H,d)\epsilon^{-4})$, while that of \citet{wang2023breaking} was $\Otil(\poly(m,H,d)A_{\max}^5\epsilon^{-2})$.%
\footnote{We use $\Otil$ to hide any logarithmic factors. Here, $m$ is the number of agents, $H$ is the length of each episode, $d$ is the dimension of the feature space, $A_{\max}=\max_{i\in [m]}\lvert \mA_i\rvert$ is the largest size of action sets, and $\epsilon$ is the desired accuracy.}
The former has a sub-optimal  convergence rate of $\epsilon^{-4}$, whereas the latter has a polynomial dependency on the number of actions.
However, no polynomial dependency on the number of actions is necessary for single-agent RL with linear function approximations, even when the losses can arbitrarily vary with time, \textit{i.e.}, in the so-called adversarial regime \citep{dai2023refined}.

As Linear MGs are generalizations of Linear MDPs, we aim to generalize such a property to Linear MGs.
In this paper, we propose an algorithm for multi-player general-sum Markov Games with independent linear function approximations that \textit{i)} retains a polynomial dependency on $m$, \textit{ii)} ensures the optimal $\epsilon^{-2}$ convergence rate, and \textit{iii)} only has \textit{logarithmic} dependency on $A_{\max}$.

\subsection{Key Insights and Technical Overview of This Paper}
The key insight in our paper is developing \textit{data-dependent} (\textit{i.e.}, stochastic) pessimistic sub-optimality gap estimators instead of deterministic ones.
For more context, the \texttt{AVLPR} framework designed and used by \citet{wang2023breaking} required a \textit{deterministic} gap estimation regarding the current policy $\tilde \pi$ during execution -- so that the agents can collaborate to further improve $\tilde \pi$.
Unfortunately, yielding such a deterministic estimation corresponds to an open problem called ``high-probability regret bounds for adversarial contextual linear bandits'' in the literature \citep{olkhovskaya2023first}. Thus, \citet{wang2023breaking} used a uniform exploration strategy to avoid proving regret bounds; however, this approach unavoidably brings $\poly(A)$ factors.
On the other hand, the framework by \citet{cui2023breaking} was intrinsically incapable of $\epsilon^{-2}$ convergence rate as it uses the \textit{epoching} technique (\textit{i.e.}, fixing the policy for many episodes so the environment is almost stationary).

Hence, existing ideas in the literature cannot give favorable sample complexities, and we thus propose the usage of stochastic sub-optimality estimators.
To fully deploy our insight, we make the following technical contributions:
\begin{enumerate}
\item Based on the \texttt{AVLPR} framework by \citet{wang2023breaking} which required deterministic sub-optimality gap estimations, we propose a refined framework capable of data-dependent (\textit{i.e.}, non-deterministic) estimators in \Cref{alg:framework}. As we show in \Cref{lem:new main theorem}, a \textit{stochastic gap estimation with bounded expectation} already suffices for an $\epsilon$-CCE. This innovation, as we shall see shortly, gives more flexibility in algorithms and allowing techniques from expected-regret-minimization literature.

Slightly more formally, suppose that we would like to evaluate a joint policy $\tilde \pi_t$. However, its actual sub-optimality gap, denoted by $\Gap_{\tilde \pi_t}$, cannot be accurately calculated during runtime since the ``optimal'' policy is unknown. The original approach requires a \textit{deterministic} constant $G_t$ such that $\Gap_{\tilde \pi_t}\le G_t$ \textit{w.h.p.} However, the approach we use to generate $\tilde \pi_t$ (which used the famous \textit{regret-to-sample-complexity} reduction and translates the problem to regret-minimization in an adversarial contextual linear bandit; see \Cref{sec:cce-approx} for more) does not allow such a deterministic $G$. Instead of crafting $\tilde \pi$ in another way like \citet{wang2023breaking}, we propose that calculating a \textit{random variable} $\Gap_t$ such that \textit{i)} $\Gap_{\tilde \pi_t}\le \Gap_t$ \textit{w.h.p.}, and \textit{ii)} $\E[\Gap_{\tilde \pi_t}]\le G_t$ for some deterministic constant $G_t$. More details can be found in \Cref{lem:new main theorem}.

\item Existing expected-regret-minimization algorithms cannot be directly used, as they only guarantees the second condition of $\E[\Gap_{\tilde \pi_t}]\le G_t$ -- but in addition to this, we also want $\Gap_t$ to be a high-probability pessimistic estimation of $\Gap_{\tilde \pi_t}$.
Meanwhile, previous algorithms in high-probability RL also do not directly work due to the open problem. Technically, this is because existing bonus-design mechanism cannot cover estimation errors which can occasionally have extreme magnitudes albeit with well-bounded expectations; see \Cref{sec:action-dependent bonus} for a more formal description.

To tackle this issue, we propose a novel technique called \textit{action-dependent bonuses} which was partially inspired by the Adaptive Freedman Inequality proposed by \citet{lee2020bias} and improved by \citet{zimmert2022return}. As we detail in \Cref{sec:action-dependent bonus}, such a technique can be applied when \textit{a)} we want to use the bonus technique to cancel some error in the form of $\sum_t v_t(a^\ast)$, where $a^\ast$ is the optimal action in hindsight; but \textit{b)} $\sup_{t,a} \lvert v_t(a)\rvert$ can be prohibitively large so Freedman fails, while \textit{c)} $\E_{a\sim \pi_t}[\sup_t \lvert v_t(a)\rvert]$ is small where $\pi_t$ is the player's policy.
Going beyond this paper, we expect this \textit{action-dependent bonuses} technique to be also applicable elsewhere when high-probability bounds are desired, but the error to be covered can sometimes be prohibitively large -- though its expectation \textit{w.r.t.} the player's policy can be well controlled.
\item Finally, because of unknown transitions and multiple agents, it is also non-trivial to attain an $\Otil(\sqrt K)$-style expected regret guarantee.
Towards this, we incorporate several state-of-the-art techniques from the recent adversarial RL literature, \textit{e.g.}, the Magnitude-Reduced Estimators proposed by \citet{dai2023refined}, the Adaptive Freedman Inequality by \citet{zimmert2022return}, and a new covariance matrix estimation technique introduced by \citet{liu2023bypassing}.
\end{enumerate}

\subsection{Related Work}
\label{sec:related work}

\paragraph{Tabular Markov Games.}
Markov Games date back to \citet{shapley1953stochastic}.
For tabular Markov Games where the number of states and actions is finite and small, \citet{bai2020provable} gave the first sample-efficient algorithm in two-player zero-sum games.
For the harder multi-player general-sum case, the first provably sample-efficient algorithm was developed by \citet{liu2021sharp}, albeit depending on $\prod_{i\in [m]}\lvert \mA_i\rvert$ (\textit{i.e.}, the ``curse of multiagents'').
When non-Markovian policies were allowed, based on the V-learning algorithm \citep{bai2020near}, various algorithms were proposed \citep{jin2021v,song2022when,mao2023provably}; otherwise, the first algorithm for multi-player general-sum games avoiding the curse of multiagents only dates back to \citep{daskalakis2023complexity}, although with a sub-optimal $\epsilon^{-3}$ convergence rate.
\citet{cui2023breaking} and \citet{wang2023breaking} recently yielded the optimal $\epsilon^{-2}$ convergence rate, but their dependencies on $S$ remained improvable.

\paragraph{Markov Games with Function Approximation.}
When the state space can be prohibitively large, as in the single-agent case, the state space is often modeled via function approximations.
Early works in this line \citep{xie2020learning,chen2022almost} considered \textit{global} function approximations where the function class captures the joint value functions of all the agents, making it hard to avoid the curse of multiagents.
The idea of \textit{independent} function approximation, \textit{i.e.}, the function class of each agent only encodes its own value function, was concurrently proposed by \citet{cui2023breaking} and \citet{wang2023breaking}. However, as mentioned, their sample complexities were sub-optimal in either $\epsilon$ or $A$ while ours is optimal in both $\epsilon$ and $A$.
Notably, while this paper only focuses on linear function approximations, more general approximation schemes were already studied in the literature, both globally (see, \textit{e.g.}, \citep{huang2022towards,jin2022power,xiong2022self,chen2022unified,ni2022representation,zhan2023decentralized}) or independently \citep{wang2023breaking}.

\paragraph{Markov Decision Processes with Linear Function Approximation.}
With only one agent, Markov Games became Markov Decision Processes (MDPs), whose linear function approximation schemes were extensively studied.
When losses are fixed across episodes, the problem was solved by \citet{jin2020provably} and \citet{yang2020reinforcement}. When losses are \textit{adversarial} (\textit{i.e.}, can arbitrarily vary with time), some types of linear approximations were recently tackled, \textit{e.g.}, linear mixture MDPs \citep{zhao2023learning} or linear-Q MDPs equipped with simulators \citep{dai2023refined}. In contrast, other approximation methods, \textit{e.g.}, linear MDPs, remain open. More detailed discussions can be found in recent papers like \citep{dai2023refined,kong2023improved,sherman2023improved,sherman2023rate,liu2023towards}.

\paragraph{Concurrent Work by \citet{fan2024rl}.}
After the submission of this paper, we are aware of a concurrent and independent work by \citet{fan2024rl}, which also studies the sample complexity of finding a CCE in Markov Games with Independent Linear Function Approximations.
Different from the online model studied in this paper and in previous works by \citet{cui2023breaking} and \citet{wang2023breaking}, they assumed a \textit{local access model} (\textit{i.e.}, there exists a simulator where the learner can query for samples $s'\sim \P(\cdot \mid s,\bm a)$ whenever $s$ is a previously visited state).
Under this stronger assumption, \citet{fan2024rl} achieved $\Otil(m^2 d^2 H^6 \epsilon^{-2})$ sample complexity, which resolving the curse of multi-agents, having optimal dependency on $\epsilon$, and avoiding polynomial dependency on $A_{\max}$.
Technically, by maintaining core set of \textit{well-covered} state-action pairs, each agent can independently perform policy learning (via a FTRL-based subroutine) and thus avoiding the curse of multi-agents. Moreover, as core sets have sizes independent to $A$ \citep{yin2022efficient}, their approach avoids $\text{poly}(A_{\max})$ factors as well.
Further making our results and that of \citet{fan2024rl} completely independent of $S$ or enjoy better dependency on $m,d,H$ remains a valuable direction for future research.\footnote{Throughout this paper, we omit the $\O(\log S)$ term (due to \Cref{lem:Gap concentration}) in the sample complexity bound as it's logarithmic. However, as discussed by \citet{fan2024rl}, it would be more favorable to have the $\log S$ factor removed.}

\section{Preliminaries}
\label{sec:setup}

\paragraph{Markov Games.}
In (multi-agent general-sum) Markov Games, there are $m$ agents sharing a common state space $\mS$, but each agent $i\in [m]$ has its own action space $\{\mA^i\}_{i=1}^m$.
The game repeats for several episodes, each with length $H$. Without loss of generality, assume that $\mS$ is layered as $\mS_1,\mS_2,\ldots,\mS_{H+1}$ such that transition only occurs from one layer to the next.
At the beginning of each episode, the state resets to an initial state $s_1\in \mS_1$. For the $h$-th step, each agent $i\in [m]$ observes the current state $s_h$ and makes its action $a_h^i\in \mA^i$.

Let the \textit{joint action} be $\bm a_h=(a_h^1,a_h^2,\ldots,a_h^m)\in \mA^1\times \mA^2\times \cdots \times \mA^m\triangleq \mA$. The new state $s_{h+1}\in \mS_{h+1}$ is independently sampled from a distribution $\P(\cdot \mid s_h,\bm a_h)$ (hidden from the agents).
Meanwhile, each agent $i\in [m]$ observes and suffers a \textit{loss} $\ell^i(s_h,\bm a_h)\in [0,1]$.\footnote{Adopting notations from the adversarial MDP literature, this paper focuses on losses instead of rewards (\textit{i.e.}, agents minimize total loss instead of maximizing total reward). Following the convention \citep{cui2023breaking,wang2023breaking}, we assume the loss functions $\ell^i$ are deterministic (though kept as secret from the agents).} The objective of each agent is to minimize the expectation of its total loss, \textit{i.e.}, agent $i\in [m]$ minimizes $\E\big [\sum_{h=1}^H \ell^i(s_h,\bm a_h)\big ]$.

\paragraph{Policies and Value Functions.}
A \textit{(Markov joint) policy} $\pi$ is a joint strategy of each agent, formally defined as $\pi\colon \mS\to \triangle(\mA)$. Note that this allows the policies of different agents to be correlated.
For a Markov joint policy $\pi$, let $\pi^i$ be the policy induced by agent $i$, and let $\pi^{-i}$ be the joint policy induced by all agents except $i$. Define $\Pi=\{\pi\colon \mS\to \triangle(\mA)\}$ as the set of all Markov joint policies. Similarly, define $\Pi^i=\{\pi^i\mid \pi \in \Pi\}$ and $\Pi^{-i}=\{\pi^{-i}\mid \pi \in \Pi\}$.

Given a joint policy $\pi\in \Pi$, one can define the following state-value function (\textit{V-function} in short) induced by $\pi$ for each agent $i\in [m]$ and state $s\in \mS$ (where $h\in [H]$ is the layer $s$ lies in):
\begin{equation*}
V^i_\pi(s)=\E_{(s_1,\bm a_1,s_2,\bm a_2,\ldots,s_H,\bm a_H)\sim \pi}\left [\sum_{\happa=h}^H \ell^i(s_{\happa},\bm a_{\happa})\middle \vert s_h=s\right ],~ \forall i\in [m],s\in \mS_h.
\end{equation*}
Here, $(s_1,\bm a_1,s_2,\bm a_2,\ldots,s_H,\bm a_H)\sim \pi$ denotes a trajectory generated by following $\pi$, \textit{i.e.}, $s_1=s_1$, $\bm a_h\sim \pi_h(\cdot \mid s_h)$, and $s_{h+1}\sim \P(\cdot \mid s_h,\bm a_h)$. If we are only interested in the $h$-th state in such a trajectory, we use $s\sim_h \pi$ to denote an $s_h\in \mS_h$ generated by following $\pi$.

Similar to V-function, the state-action-value function (\textit{Q-function}) can be defined as follows:
\begin{equation*}
Q^i_\pi(s,a^i)=\E_{(s_1,\bm a_1,s_2,\bm a_2,\ldots,s_H,\bm a_H)\sim \pi}\left [\sum_{\happa=h}^H \ell^i(s_{\happa},\bm a_{\happa})\middle \vert (s_h,\bm a_h^i)=(s,a^i)\right ],~ \forall i\in [m],s\in \mS_h,a^i\in \mA^i.
\end{equation*}

Slightly abusing the notation $\P$, we define an operator $\P$ as $[\P V](s,\bm a)=\E_{s'\sim \P(\cdot \mid s,\bm a)}[V(s')]$. Then we can simply rewrite the Q-function as $Q_\pi^i(s,a^i)=\E_{\bm a^{-i}\sim \pi^{-i}(s)}\left [(\ell^i+\P V_\pi^i)(s,\bm a)\right ]$.

\paragraph{Coarse Correlated Equilibrium.}
As all agents have different objectives, we can only find an equilibrium policy where no agent can gain much more by voluntarily deviating. In general, calculating the well-known Nash equilibrium is intractable even in normal-form general-sum games, \textit{i.e.}, MGs with $H=1$ and $S=1$ \citep{daskalakis2009complexity}. 
Hence, people usually consider (Markov) Coarse Correlated Equilibrium \citep{daskalakis2023complexity,cui2023breaking,wang2023breaking} instead.

Formally, for each agent $i\in [m]$, we fix the strategy of all remaining agents as $\pi^{-i}\in \Pi^{-i}$ and consider its \textit{best-response}. We define its \textit{best-response V-function} against $\pi^{-i}$ as
\begin{equation*}
V^i_{\dagger,\pi^{-i}}(s)=\min_{\pi^i\in \Pi^i} V^i_{\pi^i,\pi^{-i}}(s),\quad \forall i\in [m],s\in \mS.
\end{equation*}
A Markov joint policy $\pi$ is a \textit{(Markov) $\epsilon$-CCE} if $\max\limits_{i\in [m]}\left \{V^i_\pi(s_1)-V^i_{\dagger,\pi^{-i}}(s_1)\right \}\le \epsilon$.
We measure an algorithm's performance by the number of samples needed for learning an $\epsilon$-CCE, namely \textit{sample complexity}.
When the state space $\mS$ is finite and small (which we call a \textit{tabular} MG), the best-known sample complexity for finding an $\epsilon$-CCE is $\Otil(H^6S^2A_{\max}\epsilon^{-2})$ \citep{wang2023breaking,cui2023breaking}.

\paragraph{MGs with Independent Linear Function Approximation.}
When $\mS$ is infinite, \citet{cui2023breaking} and \citet{wang2023breaking} concurrently propose to use \textit{independent linear function approximation} (though with different Bellman completeness assumptions; see Appendix D of \citet{wang2023breaking}).
Their results are $\Otil(d^4H^6m^2A_{\max}^5\epsilon^{-2})$ \citep{wang2023breaking} and $\Otil(d^4H^{10}m^4\epsilon^{-4})$ \citep{cui2023breaking}.

Our paper also considers Markov Games with independent linear function approximations. Inspired by \textit{linear MDPs} in single-agent RL \citep{jin2020provably}, we assume the transitions and losses to be linear.
For a detailed discussion on the connection between linear MDP and Bellman completeness, we refer the readers to Section 5 of \citep{jin2020provably}.
Formally, we assume the following:
\begin{assumption}\label{assumption}
For any agent $i\in [m]$, there exists a known $d$-dimensional feature mapping ${\bm \phi}\colon \mS\times \mA^i\to \mathbb R^d$, such that for any state $s\in \mS$, action $a^i\in \mA^i$, and any policy $\pi\in \Pi^{\text{est}}$,
\footnote{$\Pi^{\text{est}}$ refers to the set of all policies that the algorithm may give, similar to \citep{cui2023breaking,wang2023breaking}.}
\begin{alignat*}{2}
&\E\big [\P(s'\mid s,\bm a)&&\big \vert \bm a^{-i}\sim \pi^{-i}(\cdot \mid s),\bm a^i=a^i\big ]={\bm \phi}(s,a^i)^\trans {\bm \mu}_{\pi^{-i}}^i(s'),\\
&\E\big [\ell^i(s,\bm a)&&\big \vert \bm a^{-i}\sim \pi^{-i}(\cdot \mid s),\bm a^i=a^i\big ]={\bm \phi}(s,a^i)^\trans {\bm \nu}_{\pi^{-i}}^i(h),
\end{alignat*}
where ${\bm \mu}^i\colon \Pi^{-i}\times \mS\to \mathbb R^d$ and ${\bm \nu}^i\colon \Pi^{-i}\times [H]\to \mathbb R^d$ are both \textit{unknown} to the agent.
Following the convention \citep{jin2020provably,luo2021policy}, we assume $\lVert {\bm \phi}(s,a^i)\rVert_2\le 1$ for all $s\in \mS$, $a^i\in \mA^i$, $i\in [m]$ and that $\max\{\lVert {\bm \mu}_{\pi^{-i}}^i(\mS_h)\rVert_2,\lVert {\bm \nu}_{\pi^{-i}}^i(h)\rVert_2\}\le \sqrt d$ for all $h\in [H]$, $\pi^{-i}\in \Pi^{-i}$, and $i\in [m]$.

This assumption also implies the Q-functions for all the agents are linear, \textit{i.e.}, there exists some unknown $d$-dimensional feature mapping ${\bm \theta}^i\colon \Pi^{\text{est}}\times [H]\to \mathbb R^d$ with $\lVert \bm \theta_\pi^i(h)\rVert_2\le \sqrt dH$ such that
\begin{equation*}
Q_{\pi}^i(s,a^i)= {\bm \phi}(s,a^i)^\trans {\bm \theta}_\pi^i(h),\quad \forall s\in \mS,a^i\in \mA^i,i\in [m],\pi \in \Pi.
\end{equation*}
\end{assumption}

\section{Improved \texttt{AVLPR} Framework}
\label{sec:framework}


\begin{algorithm}[!t]
\caption{Improved \texttt{AVLPR} Framework}
\label{alg:framework}
\begin{algorithmic}[1]
\REQUIRE{\#epochs $T$, potentials $\{\Psi_{t,h}^i\}_{t\in [T],h\in [H],i\in [m]}$, subroutines \textsc{CCE-Approx} and \textsc{V-Approx}.}
\STATE Set policy $\tilde \pi_0$ as the uniform policy for all $i\in [m]$, \textit{i.e.}, $\tilde \pi_0^i(s)\gets \Unif(\mA^i)$. Set $t_0\gets 0$.
\FOR{$t=1,2,\ldots,T$}
\STATE All agents play according to $\tilde \pi_{t-1}$. Each agent $i\in [m]$ records trajectory $\{(\tilde s_{t,h},\tilde a_{t,h}^i)\}_{h=1}^H$.
\STATE Each agent $i\in [m]$ calculates its potential function $\Psi_{t,h}^i$ according to the definition.
\IF{$t_0\ne 0$ and $\Psi_{t,h}^i\le \Psi_{t_0,h}^i+1$ for all $t\in [T]$, $h\in [H]$, and $i\in [m]$}\label{line:condition}
\STATE Do a ``lazy'' update by directly setting $\tilde \pi_{t}\gets \tilde \pi_{t-1}$ and $\Gap_t\gets \Gap_{t-1}$; \textbf{continue}.
\ENDIF
\STATE Define $\bar \pi_t\colon \mS\to \triangle(\mA)$ as the uniform mixture of all previous policies $\tilde \pi_0\colon \mS\to \triangle(\mA),\tilde \pi_1\colon \mS\to \triangle(\mA),\ldots,\tilde \pi_{t-1}\colon \mS\to \triangle(\mA)$, or denoted by $\bar \pi_t\gets \frac 1t \sum_{s=0}^{t-1} \tilde \pi_s$ for simplicity.
\STATE We will then fill $\tilde \pi_t \colon \mS \to \triangle(\mA)$, $\Gap\colon \mS\to \mathbb R^m$, and $\bar V_t\colon \mS \to \mathbb R^m$ layer-by-layer, in the order of $\mS_H,\mS_{H-1},\ldots,\mS_1$. Before that, we first initialize $\bar V_{t}^i(s_{H+1})=0$ for all $i\in [m]$.
\FOR{$h=H,H-1,\ldots,1$}
\STATE Execute subroutine $\mathsc{CCE-Approx}_h(\bar \pi_t, \bar V_{t}, t)$ {\color{violet}independently for $R\triangleq \O(\log \frac 1\delta)$ times}. For the $r$-th execution of $\mathsc{CCE-Approx}_h(\bar \pi_t, \bar V_{t}, t)$, record the return value $(\tilde \pi_r\colon \mS_h\to \triangle(\mA),\Gap_r\colon \mS_h \to \mathbb R^m)$.
\STATE {\color{violet}For each current-layer state $s\in \mS_h$, set $(\tilde \pi_t(s),\Gap_t(s))\gets (\tilde \pi_{r^\ast(s)}(s),\Gap_{r^\ast(s)}(s))$ where
\begin{equation}\label{eq:definition of r*}
r^\ast(s)\triangleq \argmin_{r\in [R]}\sum_{i=1}^m \Gap_r^i(s).
\end{equation}}
\STATE Update the current-layer V-function $\{\bar V_t(s)\}_{s\in \mS_h}$ (abbreviated as $\bar V(\mS_{h+1})$) from the next-layer $\{\bar V_t(s)\}_{s\in \mS_{h+1}}$ (or simply $\bar V_t(\mS_{h+1})$) by calling $\bar V_t(\mS_h)\gets \mathsc{V-Approx}_h(\bar \pi_t, \tilde \pi_{t}, \bar V_{t}(\mS_{h+1}), \Gap_t,t)$.
\ENDFOR
\STATE Update the ``last update time'' $t_0\gets t$.
\ENDFOR
\ENSURE{The uniform mixture of all policies $\tilde \pi_0,\tilde \pi_1,\tilde \pi_2,\ldots,\tilde \pi_T$, \textit{i.e.}, $\pi_{\text{out}}\gets \frac{1}{T+1} \sum_{t=0}^T \tilde \pi_t$.}
\end{algorithmic}
\end{algorithm}

Our framework, presented in \Cref{alg:framework}, is based on the \texttt{AVLPR} framework proposed by \citet{wang2023breaking}. The main differences are marked in violet.
Before introducing these differences, we first overview the original \texttt{AVLPR} framework, which is almost the same as \Cref{alg:framework} except that $R=1$ --- one of the most crucial innovations of our framework which we will describe later.

\paragraph{Overview of the \texttt{AVLPR} Framework by \citet{wang2023breaking}.}
The original framework starts from an arbitrary policy $\tilde \pi_0$ and then gradually improves it: In the $t$-th epoch, all agents together make the next policy $\tilde \pi_t$ an $\Otil(1/\sqrt t)$-CCE. Thus, the number of epochs for an $\epsilon$-CCE is $T=\Otil(\epsilon^{-2})$.

For each epoch $t\in [T]$, the agents determine their new policies $\tilde \pi_t$ layer-by-layer in the reversed order, \textit{i.e.}, $h=H,H-1,\ldots,1$.
Suppose that we are at layer $h\in [H]$ and want to find $\{\tilde \pi_t(s)\}_{s\in \mS_h}$ (abbreviated as $\tilde \pi_t(\mS_h)$ for simplicity).
As $\{\tilde \pi_t(\mS_\happa)\}_{\happa=h+1}^H$ are already calculated, the next-layer V-functions $V_{\tilde \pi_t}^i(\mS_{h+1})$ can be estimated (denoted by $\bar V_t^i$ in \Cref{alg:framework}).
Thus, the problem of deciding $\tilde \pi_t^i(\mS_h)$ becomes a \textit{contextual bandit} problem: The context $s$ is sampled from a fixed policy $\bar \pi_t$ (which only depends on $\tilde \pi_1,\tilde \pi_2,\ldots,\tilde \pi_{t-1}$), and the loss of every $(s,a^i)\in \mS_h\times \mA^i$ is the Q-function induced by the current policy, namely $Q_{\tilde \pi_t}^i(s,a^i)$, which can be estimated via $\ell^i(s,a^i)$ and $\bar V^i(\mS_{h+1})$.

Hence, \citet{wang2023breaking} propose to deploy a contextual bandit algorithm on this layer $h$. This is abstracted as a plug-in subroutine $\textsc{CCE-Approx}_h$ in \Cref{alg:framework}.\footnote{In the \textsc{CCE-Approx} subroutine, an iterative approach is also used, which means the  $\tilde \pi_t(\mS_h)$ is the average of a few policies $\pi_1,\pi_2,\ldots,\pi_K$. Thus, as $\pi_k^{-i}$ is varying with $k$, each Q-function $Q_{\pi_k}^i(s,a^i)$ is also varying with time, which means that each agent actually faces an \textit{adversarial} (\textit{i.e.}, non-stationary) contextual bandit problem. We shall see more details in Section \ref{sec:cce-approx}.\label{footnote:adversarial contextual bandit}} As we briefly mention in the Technical Overview, we should ensure that \textit{i)} the joint policy $\tilde \pi\colon \mS_h\to \triangle(\mA)$ has a calculable sub-optimality gap on all $s\in \mS_h$ for all $i\in [m]$ \textit{w.h.p.} (see \Cref{eq:original gap requirement}), and \textit{ii)} this sub-optimality gap estimation $\Gap\colon \mS_h\to \mathbb R^m$ has a bounded expectation \textit{w.r.t.} $\tilde \pi_t$ (see \Cref{eq:expectation of Gap}).

Afterward, \citet{wang2023breaking} estimates the V-function for the current layer (\textit{i.e.}, $V_{\tilde \pi}^i(\mS_h)$), which is useful when we move on to the previous layer and invoke $\textsc{CCE-Approx}_{h-1}$. This is done by another subroutine called $\textsc{V-Approx}_h$, which must ensure an ``optimistic'' estimation (see \Cref{eq:optimistic V-function}).

By repeating this process for all $h$, the current epoch $t$ terminates. It can be inferred from \Cref{eq:original gap requirement,eq:optimistic V-function,eq:expectation of Gap} that $\tilde \pi_t$ is an $\Otil(1/\sqrt t)$-CCE.
To further reduce the sample complexity, \citet{wang2023breaking} propose another trick to ``lazily'' update the policies. Informally, if the $\Gap$ functions remain similar (measured by the increments in the potential function $\Psi_{t,h}^i$, \textit{\textit{c.f.}} \Cref{line:condition}), then we directly adopt the previous $\tilde \pi_t$. By ensuring such updates only happen $\O(\log T)$ times by properly choosing $\Psi^i$, $\Otil(\epsilon^{-2})$ sample complexity is enjoyed.
The main theorem of the original \texttt{AVLPR} framework can be summarized as follows.
\begin{theorem}[Main Theorem of \texttt{AVLPR} {\citep[Theorem 18]{wang2023breaking}}; Informal]\label{lem:original main theorem}
Suppose that
\begin{enumerate}
    \item (Per-state no-regret) If we call the subroutine $\mathsc{CCE-Approx}_h(\bar \pi, \bar V, K)$, then the returned policy $\tilde \pi \colon \mS_h\to \triangle(\mA)$ and gap estimation $\Gap\colon \mS_h\to \mathbb R^m$ shall ensure the following \textit{w.p.} $1-\delta$:
    \begin{align}
    \max_{\pi_\ast^i\in \Pi_h^i}\left \{\left (\E_{a\sim \tilde \pi}-\E_{a\sim \pi_\ast^i\times \tilde \pi^{-i}}\right )\left [\big (\ell^i+\P_{h+1}\bar V^i\big )(s,a)\right ]\right \} \le \Gap^i(s),\quad \forall i\in [m], s\in \mS_h,\label{eq:original gap requirement}
    \end{align}
    where $\Gap^i(s)$ must be a \textbf{\color{violet}deterministic} (\textit{i.e.}, non-stochastic) function in the form $G_h^i(s,\bar \pi,K,\delta)$.
    \item (Optimistic V-function) If we call the subroutine $\mathsc{V-Approx}_h(\bar \pi,\tilde \pi,\bar V,\Gap,K)$, then the returned V-function estimation $\hat V\colon \mS_h\to \mathbb R^m$ should ensure the following \textit{w.p.} $1-\delta$:
    \begin{align}
    \hat V^i(s)\in \bigg [\min\bigg \{&\E_{a\sim \tilde \pi}\left [\big (\ell^i+\P_{h+1}\bar V^i\big )(s,a)\right ]+\phantom{2}\Gap^i(s),H-h+1\bigg \},\nonumber\\
    &\E_{a\sim \tilde \pi}\left [\big (\ell^i+\P_{h+1}\bar V^i\big )(s,a)\right ]+2\Gap^i(s)\bigg ],\quad \forall i\in [m], s\in \mS_h.\label{eq:optimistic V-function}
    \end{align}
    \item (Pigeon-hole condition) There exists a deterministic complexity measure $L$ such that \textit{w.p.} $1-\delta$
    \begin{equation}\label{eq:expectation of Gap}
    \sum_{t=1}^T \E_{s\sim_h \tilde \pi_t} \left [G_h^i(s,\Unif(\{\tilde \pi_\tau\}_{\tau\in [t]}),t,\delta)\right ]\le \sqrt{LT \log^2 \frac T\delta},\quad \forall i\in [m],h\in [H],
    \end{equation}
    where $G_h^i$ is the deterministic \textsc{Gap} function defined in \Cref{eq:original gap requirement}. Informally, if we execute the whole $\{\tilde \pi_t(\mS_h)\}_{h\in [H]}$, it must have a expected sub-optimality gap of order $\Otil(\sqrt{L/t})$ in each.
    \item (Potential function) 
    The potential functions $\{\Psi_{t,h}^i\}_{t\in [T],h\in [H],i\in [m]}$ is chosen \textit{s.t.} there exists a constant $d_{\text{replay}}$ ensuring that \Cref{line:condition} is violated for at most $d_{\text{replay}}\log T$ times. Meanwhile,
    \begin{equation}\label{eq:potential function}
    \Psi_{t,h}^i\le \Psi_{t_0,h}^i+1\Longrightarrow G_h^i(s,\bar \pi_{t_0},t_0,\delta)\le 8 G_h^i(s,\bar \pi_{t},t,\delta),~~ \forall i\in [m],h\in [H].
    \end{equation}
\end{enumerate}
Then, by setting $T=\Otil(H^2L\epsilon^{-2})$, an $\epsilon$-CCE can be yielded within $\Otil(H^3 L d_{\text{replay}} \epsilon^{-2})$ samples.
\end{theorem}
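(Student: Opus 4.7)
The plan is to follow the standard optimism-plus-pigeonhole template that the \texttt{AVLPR} framework was built around. The core intermediate claim is a two-sided sandwich on the estimated V-functions: I would show that for every epoch $t\in[T]$, layer $h\in[H+1]$, state $s\in\mS_h$, and agent $i\in[m]$,
\begin{equation*}
V_{\dagger,\tilde\pi_t^{-i}}^i(s) \;\le\; \bar V_t^i(s) \;\le\; V_{\tilde\pi_t}^i(s) + 2\sum_{\happa=h}^{H}\E_{s'\sim_\happa\tilde\pi_t\,\mid\, s_h=s}\!\left[\Gap_t^i(s')\right],
\end{equation*}
proved by backward induction on $h$ with the trivial base $\bar V_t^i(s_{H+1})=0$. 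In the step, the upper half of \Cref{eq:optimistic V-function} combined with the next-layer upper bound gives the current-layer upper bound; the lower half of \Cref{eq:optimistic V-function} together with the no-regret guarantee \Cref{eq:original gap requirement} (instantiated at the per-state best response $\pi_\ast^i$ to $\tilde\pi_t^{-i}$) lets one swap $\E_{a\sim\tilde\pi_t}$ for $\E_{a\sim\pi_\ast^i\times\tilde\pi_t^{-i}}$, whereupon the next-layer lower bound gives the current-layer lower bound (the $H-h+1$ truncation in \Cref{eq:optimistic V-function} is handled trivially because $V_{\dagger,\tilde\pi^{-i}}^i\le H-h+1$). A union bound over the $\O(THm)$ high-probability events in \Cref{eq:optimistic V-function,eq:original gap requirement} controls the failure probability by $\delta$.

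Specialising the sandwich at $s=s_1$, $h=1$ yields the per-epoch sub-optimality bound
\begin{equation*}
\max_{i\in[m]}\Big\{V_{\tilde\pi_t}^i(s_1) - V_{\dagger,\tilde\pi_t^{-i}}^i(s_1)\Big\} \;\le\; 2\max_{i\in[m]}\sum_{h=1}^{H}\E_{s\sim_h\tilde\pi_t}\!\left[\Gap_t^i(s)\right].
\end{equation*}
Because $\pi_{\text{out}}$ is the uniform mixture of $\{\tilde\pi_t\}_{t=0}^T$, its CCE-sub-optimality is dominated by the time-average of the right-hand side. I would then invoke the pigeon-hole condition \Cref{eq:expectation of Gap} to bound $\sum_t\E_{s\sim_h\tilde\pi_t}[G_h^i(s,\bar\pi_t,t,\delta)]$ by $\Otil(\sqrt{LT})$. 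The subtle point is that on a lazy epoch, the $\Gap_t^i$ actually in force was produced by a subroutine call at some earlier time $t_0<t$ using $(\bar\pi_{t_0},t_0)$ instead of $(\bar\pi_t,t)$; this is exactly the role of the potential-function invariant \Cref{eq:potential function}, which guarantees that the stale $G_h^i(\cdot,\bar\pi_{t_0},t_0,\delta)$ is at most $8$ times the fresh $G_h^i(\cdot,\bar\pi_t,t,\delta)$ whenever the lazy condition in \Cref{line:condition} holds, so the stale summand still fits inside the fresh pigeon-hole sum up to a constant factor. The resulting averaged sub-optimality is $\Otil(H\sqrt{L/T})$, and choosing $T=\Otil(H^2L\epsilon^{-2})$ drives it below $\epsilon$.

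For the sample count, each epoch consumes one fresh trajectory in Line~3, but the $\mathsc{CCE-Approx}_h$ and $\mathsc{V-Approx}_h$ subroutines are invoked only on non-lazy epochs, of which there are at most $d_{\text{replay}}\log T$ by condition~4; each such epoch triggers $H$ subroutine calls whose per-call sample budget scales with the current epoch index (at most $T$), giving total usage $\Otil(H\,T\,d_{\text{replay}})=\Otil(H^3 L d_{\text{replay}}\epsilon^{-2})$. The main obstacle in executing this plan cleanly is not any single inequality but the bookkeeping around the lazy-update mechanism: one must carefully track which $(\bar\pi,t)$ pair is actually in effect when a stale $(\tilde\pi_t,\Gap_t)$ is being reused, and verify that \Cref{eq:potential function} really does suffice to reduce the time-average over all $T$ epochs to the pigeon-hole sum \Cref{eq:expectation of Gap}, which is stated only with fresh parameters.
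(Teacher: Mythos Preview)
Your proposal is correct and follows essentially the same approach the paper (and the cited \citet{wang2023breaking}) uses: backward induction to establish the two-sided sandwich $V_{\dagger,\tilde\pi_t^{-i}}^i(s)\le \bar V_t^i(s)\le V_{\tilde\pi_t}^i(s)+2\sum_\happa \E_{s'\sim_\happa\tilde\pi_t}[\Gap_t^i(s')]$ from \Cref{eq:original gap requirement,eq:optimistic V-function}, then summing over $t$ via the pigeon-hole condition \Cref{eq:expectation of Gap}, handling lazy epochs by invoking \Cref{eq:potential function} to transfer stale gaps to fresh ones, and finally counting samples as $T+|\mI|\cdot H\cdot\O(T)$. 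The paper does not prove \Cref{lem:original main theorem} itself (it is quoted from \citet{wang2023breaking}), but its proof of the generalized \Cref{lem:new main theorem} in \Cref{sec:appendix main theorem} explicitly ``inherits the proof of Theorem~18 from \citep{wang2023breaking}'' and proceeds exactly along your outline, the only additional ingredient being the Markov-inequality step (\Cref{lem:Gap concentration}) needed for the stochastic $\Gap$, which is irrelevant for the deterministic version you are proving.
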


One can see that \citet{wang2023breaking} require the $\Gap$ function to be \textbf{\textit{\color{violet}deterministic}} (highlighted in \Cref{lem:original main theorem}).
However, as we mentioned in \Cref{footnote:adversarial contextual bandit}, this corresponds to crafting high-probability regret bounds for adversarial linear contextual bandits, which is still open in the literature \citep{olkhovskaya2023first}.
Hence, when facing MGs with (independent) linear function approximation, it is highly non-trivial to construct a \textit{non-stochastic} high-probability upper bound like \Cref{eq:original gap requirement} via deploying regret-minimization algorithm at each state $s\in \mS_h$.
Consequently, \citet{wang2023breaking} adopt a pure exploration algorithm (\textit{i.e.}, using uniform policies in subroutine \textsc{CCE-Approx}), which brings undesirable $\poly(A_{\max})$ dependencies.

\paragraph{Loosened High-Probability Bound Requirement.}
To bypass this situation, instead of forcing each agent to do pure exploration like \citep{wang2023breaking}, our Improved \texttt{AVLPR} framework allows the $\Gap(s)$ to be a \textit{\textbf{\color{violet}stochastic}} (\textit{i.e.}, data-dependent) upper bound of the actual sub-optimality gap, as shown in \Cref{eq:new gap requirement}. The differences between \Cref{lem:original main theorem,lem:new main theorem} are highlighted in violet. 
\begin{theorem}[Main Theorem of Improved \texttt{AVLPR}; Informal]\label{lem:new main theorem}
Suppose that
\begin{enumerate}
    \item (Per-state no-regret) If we call the subroutine $\mathsc{CCE-Approx}_h(\bar \pi, \bar V, K)$, then the returned policy $\tilde \pi \colon \mS_h\to \triangle(\mA)$ and gap estimation $\Gap\colon \mS_h\to \mathbb R^m$ shall ensure the following \textit{w.p.} $1-\delta$:
    \begin{align}
    \max_{\pi_\ast^i\in \Pi_h^i}\left \{\left (\E_{a\sim \tilde \pi}-\E_{a\sim \pi_\ast^i\times \tilde \pi^{-i}}\right )\left [\big (\ell^i+\P_{h+1}\bar V^i\big )(s,a)\right ]\right \} \le \Gap^i(s),\quad \forall i\in [m], s\in \mS_h,\label{eq:new gap requirement}
    \end{align}
    where $\Gap^i(s)$ is a \textbf{\color{violet}random variable} whose randomness comes from the environment (when generating the trajectories), the agents (when playing the policies), and internal randomness.
    \item (Optimistic V-function)
    If we call the subroutine $\mathsc{V-Approx}_h(\bar \pi,\tilde \pi,\bar V,\Gap,K)$, then the returned V-function estimation $\hat V\colon \mS_h\to \mathbb R^m$ should ensure \Cref{eq:optimistic V-function} \textit{w.p.} $1-\delta$.
    \item (Pigeon-hole condition \& Potential Function)
    The potential functions $\{\Psi_{t,h}^i\}_{t\in [T],h\in [H],i\in [m]}$ is chosen \textit{s.t.} there exists a constant $d_{\text{replay}}$ ensuring that \Cref{line:condition} is violated for at most $d_{\text{replay}}\log T$ times. Meanwhile, there also exists a deterministic $L$ ensuring the following \textit{w.p.} $1-\delta$:
    \begin{equation}\label{eq:new expectation of Gap}
    \sum_{t=1}^T \sum_{i=1}^m \E_{s\sim_h \tilde \pi_t} \left [{\color{violet}\E_{\Gap}[\Gap_t^i(s)]}\right ]\le m \sqrt{LT \log^2 \frac T\delta},\quad \forall h\in [H],
    \end{equation}
    where the expectation is taken \textit{w.r.t.} the randomness in calculating the random variable \textsc{Gap}.
\end{enumerate}
Then we have a similar conclusion as \Cref{lem:original main theorem} that $\Otil(m^2 H^3 L d_{\text{replay}} \epsilon^{-2})$ samples give an $\epsilon$-CCE.
\end{theorem}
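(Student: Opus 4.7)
The plan is to follow the proof strategy of the original \texttt{AVLPR} framework (\Cref{lem:original main theorem}) while accommodating the key relaxation that the sub-optimality gap estimator $\Gap$ is now a random variable whose \emph{expectation}, rather than its realized value, is controlled by the pigeon-hole condition \Cref{eq:new expectation of Gap}. The crucial new algorithmic ingredient is the $R$-fold independent repetition of $\mathsc{CCE-Approx}$ followed by the minimum-selection rule $r^\ast(s) \triangleq \argmin_{r\in [R]}\sum_i \Gap_r^i(s)$, which converts the expectation bound of condition 3 into a per-state high-probability bound via a standard amplification argument.

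The amplification proceeds as follows. For each invocation of $\mathsc{CCE-Approx}$ at epoch $t$ and layer $h$, Markov's inequality guarantees that a single run satisfies $\sum_i \Gap_r^i(s) \le 2\,\E[\sum_i \Gap^i(s)]$ with probability at least $1/2$. Since the $R$ runs are mutually independent, the probability that \emph{all} of them exceed this threshold is at most $2^{-R}$. Choosing $R=\Otil(\log(TH\lvert\mS\rvert/\delta))$ and union-bounding over all relevant $(t,h,s)$-tuples (which is the source of the $\log\lvert\mS\rvert$ factor alluded to in the footnote on \Cref{lem:Gap concentration}), we obtain that $\sum_i \Gap_{r^\ast(s)}^i(s) \le 2\,\E[\sum_i \Gap^i(s)]$ uniformly with probability at least $1-\delta$. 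Combined with condition 1 applied to the particular run chosen by $r^\ast(s)$, this shows that $\Gap_t^i(s)\triangleq \Gap_{r^\ast(s)}^i(s)$ simultaneously upper-bounds the actual per-state sub-optimality \emph{and} satisfies a high-probability analog of \Cref{eq:expectation of Gap}. In effect, this step reduces Condition 3 of \Cref{lem:new main theorem} to Condition 3 of \Cref{lem:original main theorem}, at the mild cost of a logarithmic blow-up in the per-epoch sample budget.

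With these two properties of $\Gap_t^i(s)$ secured, the remainder of the proof mirrors the original \texttt{AVLPR} argument. By backward induction over layers $h=H,H-1,\ldots,1$ using the optimistic V-function condition \Cref{eq:optimistic V-function}, we telescope to obtain $\bar V_t^i(s_1)-V^i_{\dagger,\tilde\pi_t^{-i}}(s_1) \le \sum_h \E_{s\sim_h \tilde\pi_t}[\Gap_t^i(s)]$ modulo low-order concentration terms that are controllable via a Freedman-type inequality (valid because $\bar V_t$ is bounded in $[0,H]$). Summing over $t\in[T]$ and invoking the high-probability version of the pigeon-hole bound produces total sub-optimality $\Otil(mH\sqrt{LT})$, so that the uniform mixture $\pi_{\text{out}}$ is an $\Otil(mH\sqrt{L/T})$-CCE. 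Setting $T=\Otil(m^2 H^2 L\epsilon^{-2})$ therefore suffices, and combining with the lazy update rule of \Cref{line:condition} (which guarantees at most $d_{\text{replay}}\log T$ ``non-lazy'' epochs, each consuming $R\cdot\O(H)$ samples) gives the claimed $\Otil(m^2 H^3 L d_{\text{replay}}\epsilon^{-2})$ sample complexity.

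The hard part will be the measurability and union-bound accounting in the amplification step: because $\mS$ may be infinite, a naive uniform-over-$s$ bound does not close. The resolution is to either restrict the union bound to the countable set of states actually visited across the $T$ episodes (which couples the $r^\ast$-selection with the trajectory distribution and must be handled through the \Cref{lem:Gap concentration}-style concentration) or simply to pay the mild $\log\lvert\mS\rvert$ overhead that the paper explicitly absorbs into $\Otil(\cdot)$. Beyond this subtlety, the adaptations of the \citet{wang2023breaking} argument--backward induction via \Cref{eq:optimistic V-function}, Freedman-type concentration for the per-layer errors, and the lazy-update bookkeeping via the potential $\Psi^i_{t,h}$--are essentially routine transcriptions of the original proof with $\E[\Gap_t^i(s)]$ playing the role of the deterministic $G_h^i(s,\cdot,\cdot,\cdot)$.
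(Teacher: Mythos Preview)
Your proposal is correct and follows essentially the same route as the paper's proof (Appendix~A): amplify via $R=\O(\log(S/\delta))$ independent runs and Markov's inequality to get $\sum_i \Gap_{r^\ast(s)}^i(s)\le 2\sum_i\E[\Gap^i(s)]$ uniformly in $s$ (this is exactly \Cref{lem:Gap concentration}), then run the backward induction from Conditions~1 and~2 to telescope the CCE gap into a sum of $\Gap$'s, and finally invoke Condition~3. One small correction: the Freedman step you anticipate for the telescoping is not needed---once the high-probability events of Conditions~1 and~2 hold, the layerwise induction is deterministic, and the $\E_{s\sim_h\tilde\pi_t}[\cdot]$ appearing in the bound is a true expectation over the transition kernel rather than an empirical quantity requiring concentration. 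Also note that the amplification only controls $\sum_i \Gap^i(s)$, not each $\Gap^i(s)$ individually, so the per-agent telescoped bound you wrote must be summed over $i\in[m]$ \emph{before} substituting the Markov-amplified estimate; the paper's proof works with $\sum_i$ throughout for exactly this reason.
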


We defer the formal proof of this theorem to \Cref{sec:appendix main theorem} and only sketch the idea here.
\begin{proof}[Proof Sketch of \Cref{lem:new main theorem}]
The idea is to show that our picked policy-gap pair nearly satisfies the conditions in \Cref{lem:original main theorem}. We can pick different $r$'s for different states $s\in \mS_h$ (\textit{i.e.}, $r^\ast(s)$) because they are in the same layer. However, on a single state $s\in \mS_h$, all agents $i\in [m]$ must share the same $r^\ast(s)$. This is because the expectation in \Cref{eq:new gap requirement} is w.r.t. the opponents' policies. So if any agent deviates from the current $r$, all other agents will observe a different sequence of losses and thus break \Cref{eq:original gap requirement}.

Now we focus on a single state $s\in \mS_h$.
By \Cref{eq:new gap requirement}, any construction $r\in [R]$ ensures \textit{w.h.p.} that $V_{\tilde \pi_r}^i(s)\ge V_\ast^i(s)-\Gap_r^i(s)$.
Moreover, \Cref{eq:optimistic V-function} ensures $\hat V_r^i(s)\in [V_\ast^i(s),V_{\tilde \pi_r}^i(s)+2\Gap_r^i(s)]$.
So we only need to find a $r$ whose $\Gap_r^i(s)$ is ``small'' for all $i\in [m]$ -- more preciously, we want a $r^\ast(s)$ such that
\begin{equation*}
\sum_{i=1}^m \Gap_{r^\ast(s)}^i(s)\le 2\sum_{i=1}^m\E_{\textsc{Gap}}[\Gap^i(s)],\quad \forall s\in \mS_h,i\in [m].
\end{equation*}

From Markov inequality, for each $r\in [R]$, $\Pr\left \{\sum_{i=1}^m \Gap_r^i(s)>2\sum_{i=1}^m \E_{\textsc{Gap}}[\Gap^i(s)]\right \}\le \frac{1}{2}$. Thus, from the choice of $R=\O(\log \frac 1\delta)$, the $r$ minimizing $\sum_{i=1}^m \Gap_r^i(s)$ (\textit{i.e.}, the $r^\ast(s)$ defined in \Cref{eq:definition of r*}) ensures that $\sum_{i=1}^m \Gap_{r^\ast(s)}^i(s)\le 2\sum_{i=1}^m \E_{\textsc{Gap}}[\Gap^i(s)]$ with probability $1-\delta$.

As states in the same layer are independent in the sense that the policy on $s'$ doesn't affect the Q-function of $s$ if $s,s'\in \mS_h$, we can combine all $\{(\tilde \pi_{r^\ast(s)}(s),\Gap_{r^\ast(s)}(s))\}_{s\in \mS_h}$'s into $(\tilde \pi,\tilde \Gap)$. \Cref{eq:new gap requirement} then ensures Condition 1 of \Cref{lem:original main theorem}, and Condition 3 is also closely related with our \Cref{eq:new expectation of Gap}.
Besides the different \textsc{Gap}, the potential part is also slightly different from \Cref{eq:potential function}. This is because the original version is mainly tailored for uniform exploration policies -- as we will see in \Cref{sec:expectation of Gap main text}, we design a potential function similar to that of \citet{cui2023breaking} as we are both using $\bar \pi_t$ as the exploration policy in \textsc{CCE-Approx}.
\end{proof}

\section{Improved \textsc{CCE-Approx} Subroutine}
\label{sec:cce-approx}

\begin{algorithm}[!t]
\caption{Improved \textsc{CCE-Approx} Subroutine for Independent Linear Markov Games}
\label{alg:linear case}
\begin{algorithmic}[1]
\REQUIRE{Policy mixture $\bar \pi$, next-layer V-function $\bar V$, epoch length $K$, failure probability $\delta$.}
\ENSURE{A mixed policy $\tilde \pi\colon \mS_h\to \triangle(\mA)$ and a data-dependent \Gap estimation $\Gap\colon \mS_h \times \mathbb R^m$.}
\STATE Set learning rate $\eta$, bonus parameters $\beta_1,\beta_2$, and regularization parameter $\gamma=\frac{5d}{K}\log \frac{6d}{\delta}$.
\STATE All agents play $\bar \pi$ for $K$ times. For the $i$-th of them, it memorizes the state-action pair in the $h$-th layer as $\{(s_k^{\text{cov}},a_{k,i}^{\text{cov}})\}_{k=1}^K$ and calculates the estimated (inverse) covariance matrix as
\begin{equation}
\hat \Sigma_{t,i}^\dagger=\left (\frac 1K\sum_{\kappa=1}^K \bm \phi(s_{\kappa}^{\text{cov}},a_{\kappa,i}^{\text{cov}})\bm \phi(s_{\kappa}^{\text{cov}},a_{\kappa,i}^{\text{cov}})^\trans+\gamma I\right )^{-1},\label{eq:definition of Sigma hat}
\end{equation}
\STATE All agents play $\bar \pi$ for $K$ times. For the $i$-th of them, it memorizes the state-action pair in the $h$-th layer as $\{(s_k^{\text{mag}},a_{k,i}^{\text{mag}})\}_{k=1}^K$ and calculates the magnitude-reduced estimator \citep{dai2023refined} as
\begin{equation}\label{eq:definition of m hat}
\hat m_k^i(s,a)=\frac HK\sum_{\kappa=1}^K \left ({\bm \phi}(s,a)^\trans \hat \Sigma_{t,i}^\dagger {\bm \phi}(s_\kappa^\text{mag},\tilde a_{\kappa,i}^\text{mag})\right )_-,\quad \forall s\in \mS_h,a\in \mA^i.
\end{equation}
\FOR{$k=1,2,\ldots,K$}
\STATE Each agent $i\in [m]$ uses EXP3 to calculate its policy $\pi_k^i(a \mid s)$ for all $s\in \mS_h$ and $a\in \mA^i$:
\begin{align}
&{\pi_k^i(a \mid s)\propto \exp \left (-\eta \left (\sum_{\kappa=1}^{k-1} (\hat Q_\kappa^i(s,a)-B_k^i(s,a))\right )\right ),} \label{eq:exp3}
\end{align}
where $B_k^i(s,a)=\beta_1 \lVert \bm \phi(s,a)\rVert_{\hat \Sigma_{t,i}^\dagger}^2+\beta_2 \sum_{j=1}^d \bm \phi(s,a)[j] \times \sup_{(s',a')\in \mS_h\times \mA^i} (\hat \Sigma_{t,i}^\dagger \bm \phi(s',a'))[j]$, where $\cdot[j]$ means the $j$-th coordinate. Roughly, $\eta_1\approx 1/\sqrt K$, $\beta_1\approx \frac{dH}{\sqrt K}$, and $\beta_2\approx \frac HK$.
\FOR{$i=1,2,\ldots,m$}
\STATE The $i$-th agent plays $\bar \pi$ and any other agent $j\ne i$ plays $\bar \pi$ for first $h-1$ steps and $\pi_k^j$ for the $h$-th one. Agent $i$ records its observed states, actions, and losses as $(s_{k,\happa}^i,a_{k,\happa}^i,\ell_{k,\happa}^i)_{\happa=1}^{h+1}$.
\ENDFOR
\STATE Each agent $i\in [m]$ estimates the kernel of the Q-function induced by $\bar V^i$ and $\pi_k$ as
\begin{align}
\hat {\bm \theta}_k^i&=\hat \Sigma_{t,i}^\dagger ~ {\bm \phi}(s_{k,h}^i,a_{k,h}^i) ~ \left (\ell_{k,h}^i+\bar V^i(s_{k,h+1}^i)\right ).\label{eq:definition of theta hat}
\end{align}
\STATE Each agent $i\in [m]$ adopts magnitude-reduced estimators \citep{dai2023refined} to calculate
\begin{align}
\hat Q_k^i(s,a)={\bm \phi}(s,a)^\trans \hat {\bm \theta}_k^i-H\left ({\bm \phi}(s,a)^\trans \hat \Sigma_{t,i}^\dagger {\bm \phi}(s_{k,h}^i,a_{k,h}^i)\right )_-+\hat m_k^i(s,a).\label{eq:definition of Q hat}
\end{align}
\ENDFOR
\RETURN $(\tilde \pi,\Gap)$ where $\tilde \pi = \frac 1K\sum_{k=1}^K \pi_k$ and the data-dependent $\Gap$ is defined in \Cref{eq:definition of gap}.\label{line:return of CCE-Approx}
\end{algorithmic}
\end{algorithm}

To improve the sample complexity for MGs with independent linear function approximation, it is insufficient to change the framework alone.
This section introduces our improved implementation of the \textsc{CCE-Approx} subroutine, presented in \Cref{alg:linear case}.
While our framework remains mostly the same as \texttt{AVLPR}, the subroutine \textsc{CCE-Approx} is very different from \citep{wang2023breaking}. Here, we briefly overview several main technical innovations that we make in our \Cref{alg:linear case}.

\subsection{Magnitude-Reduction Loss Estimators}
In this regret-minimization task, we deploy EXP3 on each state, similar to \citet{cui2023breaking}.
Motivated by the recent progress in linear MDPs \citep{dai2023refined}, we aggressively set the regularization parameter in covariance-estimation (the $\gamma$ in \Cref{eq:definition of Sigma hat}) as $\Otil(K^{-1})$, instead of the usual choice of $\gamma=\Otil(K^{-1/2})$ \citep{cui2023breaking}. This is because the regret analyses shall exhibit factors like $\Otil(\frac{\gamma}{\beta_1} K + \frac{\beta_1}{\gamma} + \beta_1 K)$ -- to get $\Otil(\sqrt K)$ regret (which is necessary for $\epsilon^{-2}$ sample complexity), we must set $\beta_1=\Otil(K^{-1/2})$ and $\gamma=\Otil(K^{-1})$.

However, the downside of this aggressive tuning of $\gamma$ is that the estimated Q-function, namely $\bm \phi(s,a)^\trans \hat{\bm \theta}_k^i=\bm \phi(s,a)^\trans \hat \Sigma_{t,i}^\dagger \bm \phi(s_{k,h}^i,a_{k,h}^i) (\ell_{k,h}^i + \bar V^i(s_{k,h+1}^i))$, can lie anywhere in $[-\gamma^{-1},\gamma^{-1}]=[-\O(K),\O(K)]$. To comply with the EXP3 requirement that all loss estimators are at least $-\O(1/\eta)$ (\textit{\textit{c.f.}} \Cref{lem:exp3}) while still setting the learning rate $\eta$ as $\O(1/\sqrt K)$, we adopt the Magnitude-Reduced Estimator proposed by \citet{dai2023refined} to ``move'' the Q-estimators into range $[-\O(\sqrt K), \O(K)]$ by setting $\hat Q_k^i(s,a)={\bm \phi}(s,a)^\trans \hat {\bm \theta}_k^i-H ({\bm \phi}(s,a)^\trans \hat \Sigma_{t,i}^\dagger {\bm \phi}(s_{k,h}^i,a_{k,h}^i) )_-+\hat m_k^i(s,a)$ (as we did in \Cref{eq:definition of Q hat}). More details can be found in \Cref{lem:magnitude reduced}.

\subsection{Action-Dependent Bonuses}\label{sec:action-dependent bonus}
As we are using $\hat{\bm \theta}_k^i$ instead of the actual $\bm \theta_k^i$ when defining $\hat Q_k^i$ in \Cref{eq:definition of Q hat}, this incurs an \textit{estimation error}.
Focusing on a single state $s\in  \mS_h$, this estimation error occurs twice in the analysis as $\E_{a\sim \pi_k^i(\cdot \mid s)}[\text{Est Err}(s,a)]$ and $\E_{a\sim \pi_\ast^i(\cdot \mid s)}[\text{Est Err}(s,a)]$. These two terms are called \textsc{Bias-1} and \textsc{Bias-2} in our analysis (see \Cref{eq:reg decomposition main text}).

The former term of \textsc{Bias-1} is relatively easy to control: Because $\pi_k^i$ is known, as we elaborate in \Cref{sec:Gap is pessimistic main text}, we can use some variants of the Freedman Inequality to have it bounded.
For the latter term (\textsc{Bias-2}), due to the unknown $\pi_\ast^i$, the aforementioned approach no longer works. One common technique in the literature is to design \textit{bonuses}: Suppose that we'd like to cover $\sum_{k=1}^K v_k^i(s,a_\ast^i)$ where $a_\ast^i$ is the optimal action on $s$ in hindsight and $v_k^i$ is an abstract quantity (\textit{e.g.}, the \text{Est Err}), we then design bonuses $B_k^i(s,a)$ such that
\begin{equation}\label{eq:bonus condition}
\sum_{k=1}^K B_k^i(s,a_\ast^i)\gtrsim \sum_{k=1}^K v_k^i(s,a_\ast^i)~\textit{w.h.p.},\quad \forall s\in \mS_h,i\in [m].
\end{equation}

In this way, if we feed $\hat Q_k^i-B_k^i$ instead of $\hat Q_k^i$ into the EXP3 algorithm deployed at $s\in \mS_h$, we can roughly get (see \Cref{lem:exp3} for the original EXP3 guarantee and \Cref{lem:reg-term} for this form)
\begin{equation*}
\sum_{k=1}^K \sum_{a\in \mA^i} (\pi_k^i(a\mid s)-\pi_\ast^i (a\mid s)) (\hat Q_k^i(s,a)-B_k^i(s,a))\lesssim \frac{\log \lvert \mA^i\rvert}{\eta} + \sum_{k=1}^K \E_{a\sim \pi_k^i(\cdot \mid s)}\left [\eta \hat Q_k^i(s,a)^2 + B_k^i(s,a)\right ],
\end{equation*}
and conclude that (the second term of LHS also appears in the regret decomposition in \Cref{eq:reg decomposition main text})
\begin{align}
&\quad \sum_{k=1}^K \E_{a\sim \pi_\ast^i(\cdot \mid s)}[v_k^i(s,a)]+\sum_{k=1}^K \sum_{a\in \mA^i} (\pi_k^i(a\mid s)-\pi_\ast^i (a\mid s)) \hat Q_k^i(s,a) \nonumber\\
&\lesssim \sum_{k=1}^K \E_{a\sim \pi_k^i(\cdot \mid s)}\left [B_k^i(s,a)\right ]+\frac{\log \lvert \mA^i\rvert}{\eta} + \eta \sum_{k=1}^K \E_{a\sim \pi_k^i(\cdot \mid s)}\left [\hat Q_k^i(s,a)^2\right ],\label{eq:general bonus cancellation}
\end{align}
which means that we replaces the unknown $\sum_{k=1}^K v_k^i(s,a_\ast^i)$ with the known $\sum_{k=1}^K \E_{a\sim \pi_k^i(\cdot \mid s)}[B_k^i(s,a)]$.

A tradition way of designing $B_k^i(s,a)$ is using the classical Freedman Inequality \citep{freedman1975tail}, which roughly claims that if $\lvert v_k^i(s,a)\rvert\le V^i$ \textit{a.s.} for all $s\in \mS_h,a\in \mA^i,k\in [K]$, we have\footnote{The expectation of $(v_k^i(s,a_\ast^i))^2$ here should actually condition on the filtration $\mathcal F_{k-1}$ as $v_k^i$ is a stochastic process.}
\begin{equation*}
\sum_{k=1}^K v_k^i(s,a_\ast^i)\lesssim \sqrt{\sum_{k=1}^K \E[(v_k^i(s,a_\ast^i))^2]} + V^i \log \frac 1\delta~\textit{w.h.p.},\quad \forall s\in \mS_h,i\in [m].
\end{equation*}

In problems like linear bandits, $V^i$ is typically as small as $\Otil(\sqrt K)$ (see, \textit{e.g.}, \citep{zimmert2022return}). Therefore, directly picking $B_k^i(s,a)\gtrsim \sqrt{\E[(v_k^i(s,a))^2]}$ can ensure \Cref{eq:bonus condition} as the $V^i \log \frac 1\delta$ part can directly go into the regret bound.
However, in our case, due to the aggressive choice of $\gamma=\Otil(K^{-1})$, $V^i$ can be as huge as $\Otil(K)$ and such an approach fails.

To tackle this, we observe that if we find a function $V^i\colon \mS_h\times \mA^i\to \mathbb R^d$ such that
\begin{equation*}
\lvert v_k^i(s,a)\rvert\le V^i(s,a)~\textit{a.s.},~\forall s\in \mS_h,a\in \mA^i,k\in [K],\text{ and }\quad \frac 1K \sum_{k=1}^K \E_{a\sim \pi_k^i(\cdot \mid s)} [V^i(s,a)]\text{ is well-bounded},
\end{equation*}
we can set a \textit{\textbf{action-dependent bonus}} of
\begin{equation}\label{eq:action-dependent bonus definition}
B_k^i(s,a)\gtrsim \sqrt{\E[(v_k^i(s,a))^2\mid \mathcal F_{k-1}]}+\frac 1K V^i(s,a),\quad \forall s\in \mS_h,a\in \mA^i,k\in [K].
\end{equation}
Using the Adaptive Freedman Inequality (see \Cref{lem:adaptive Freedman}) given by \citet{zimmert2022return},
\begin{equation*}
\sum_{k=1}^K v_k^i(s,a_\ast^i)\lesssim V^i(s,a_\ast^i)+\sqrt{\sum_{k=1}^K \E[(v_k^i(s,a_\ast^i))^2]}~\textit{w.h.p.},
\end{equation*}
which infers the bonus condition of $\sum_{k=1}^K B_k^i(s,a_\ast^i)\ge \sum_{k=1}^K v_k^i(s,a_\ast^i)$ in \Cref{eq:bonus condition}. 
Meanwhile, the extra cost of $\sum_{k=1}^K \E_{a\sim \pi_k^i(\cdot \mid s)}[B_k^i(s,a)]$ (see the RHS of \Cref{eq:general bonus cancellation}) is still bounded because we assume $\frac 1K \sum_{k=1}^K \E_{a\sim \pi_k^i(\cdot \mid s)} [V^i(s,a)]$ can be controlled. Finally, when $V^i(s,a)=\Otil(K)$, we can also ensure that the bonus function $B_k^i(s,a)$ is small as $\frac 1K V^i(s,a)=\Otil(1)$, which is necessary as the EXP3 regret guarantee requires $\hat Q_k^i-B_k^i\gtrsim -\O(1/\eta)$.

In a nutshell, our \textit{action-dependent bonus} technique allows the estimation errors $v_k^i(s,a)$ to have more extreme values on those rarely-visited state-action pairs $(s,a)$ -- compared to the classical approach which requires $\lvert v_k^i(s,a)\rvert\le V^i=\Otil(\sqrt K)$ for all $s\in \mS_h,a\in \mA^i,k\in [K]$ uniformly, our $V^i(s,a)$ can have values of order $\Otil(K)$ occasionally on some state-action pairs, as long as $\frac 1K \sum_{k=1}^K \E_{a\sim \pi_k^i(\cdot \mid s)} [V^i(s,a)]$ remains small. We expect this technique to be useful in other problems where high-probability bounds are required.

\subsection{Covariance Matrix Estimation}
When analyzing linear regression, one critical step is to investigate the quality of ``covariance matrix estimation''. Typically, we would like to estimate the covariance matrix $\Sigma$ of a $d$-dimensional distribution $\mD$, namely $\hat \Sigma$.
Various approaches try to control either the additive error $\lVert \hat \Sigma-\Sigma\rVert_2$ \citep{neu2020efficient,luo2021policy} or the multiplicative error $\lVert \hat \Sigma(\gamma I+\Sigma)^{-1}\rVert_2$ \citep{dai2023refined,sherman2023improved}, but the convergence rate is at most $\Otil(n^{-1/4})$ where $n$ is the number of samples from $\mathcal D$.
Recently, \citet{liu2023bypassing} bypassed this limitation by considering $\text{Tr}\big (\hat \Sigma^{-1/2}(\hat \Sigma-\Sigma)\big )$ and gave an $\Otil(n^{-1/2})$ convergence rate; this technique is also adopted into our analysis for an $\Otil(\sqrt K)$ regret. See \Cref{sec:stochastic matrix concentration} for more details regarding this.

\subsection{Main Guarantee of \Cref{alg:linear case}}
Incorporating all these techniques, our \Cref{alg:linear case} ensures \Cref{eq:new gap requirement,eq:new expectation of Gap}. We make the follow claims, whose formal statements are presented in \Cref{lem:main theorem CCE-Approx appendix,lem:main theorem CCE-Approx second half}.
\begin{theorem}[\textsc{Gap} is \textit{w.h.p.} Pessimistic; Informal]\label{lem:main theorem CCE-Approx}
When \Cref{alg:linear case} is configured properly, for each execution of $\textsc{CCE-Approx}_h$, the condition in \Cref{eq:new gap requirement} holds with probability $1-\Otil(\delta)$, \textit{i.e.},
\begin{equation*}
\max_{\pi_\ast^i\in \Pi_h^i}\left \{\left (\E_{a\sim \tilde \pi}-\E_{a\sim \pi_\ast^i\times \tilde \pi^{-i}}\right )\left [\big (\ell^i+\P_{h+1}\bar V^i\big )(s,a)\right ]\right \} \le \Gap^i(s),\quad \forall i\in [m],s\in \mS_h \text{ with probability }1-\Otil(\delta).
\end{equation*}
\end{theorem}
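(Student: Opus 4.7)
The plan is to fix an arbitrary agent $i \in [m]$ and state $s \in \mS_h$ and prove the inequality pointwise, deferring the union bound over $(i,s)$ to the end; the $\log S$ blowup can be absorbed because every quantity of interest depends on $s$ only through the feature vector $\bm\phi(s, \cdot)$, so a covering argument over the unit ball in $\mathbb R^d$ suffices. Since $\tilde\pi = \frac{1}{K}\sum_{k=1}^K \pi_k$ and under \Cref{assumption} $Q_{\pi_k}^i(s,a) = \bm\phi(s,a)^\trans \bm\theta_k^i$ is linear, the left-hand side of \Cref{eq:new gap requirement} collapses to $\frac{1}{K}\sum_{k=1}^K \sum_{a \in \mA^i}(\pi_k^i(a\mid s) - \pi_\ast^i(a\mid s))\, Q_{\pi_k}^i(s,a)$, which is exactly the regret of the EXP3 instance run at $s$ against the comparator $\pi_\ast^i$ when the true losses are $Q_{\pi_k}^i$.

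I would then use the decomposition
\begin{align*}
\sum_{k=1}^K \sum_{a} \big(\pi_k^i(a\mid s) &- \pi_\ast^i(a\mid s)\big) Q_{\pi_k}^i(s,a) \\
&= \underbrace{\sum_{k=1}^K \sum_{a} \big(\pi_k^i(a\mid s) - \pi_\ast^i(a\mid s)\big)\big(\hat Q_k^i - B_k^i\big)(s,a)}_{\textsc{Reg}} \\
&\quad + \underbrace{\sum_{k=1}^K \E_{a\sim \pi_k^i(\cdot\mid s)}\big[Q_{\pi_k}^i - \hat Q_k^i + B_k^i\big](s,a)}_{\textsc{Bias-1}} \\
&\quad + \underbrace{\sum_{k=1}^K \E_{a\sim \pi_\ast^i(\cdot\mid s)}\big[\hat Q_k^i - B_k^i - Q_{\pi_k}^i\big](s,a)}_{\textsc{Bias-2}}.
\end{align*}
The $\textsc{Reg}$ term is handled by the EXP3 guarantee (\Cref{lem:exp3}), which requires the fed-in losses $\hat Q_k^i - B_k^i$ to stay $\gtrsim -\O(1/\eta)$. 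This is exactly what the magnitude-reduction correction $-H(\bm\phi(s,a)^\trans \hat\Sigma_{t,i}^\dagger \bm\phi(s_{k,h}^i,a_{k,h}^i))_- + \hat m_k^i(s,a)$ in \Cref{eq:definition of Q hat} is designed to ensure via \Cref{lem:magnitude reduced}, even under the aggressive choice $\gamma = \tilde\O(d/K)$ that lets the raw $\bm\phi^\trans \hat{\bm\theta}_k^i$ swing to $\tilde\O(K)$ in magnitude.

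For $\textsc{Bias-1}$ the averaging distribution $\pi_k^i$ is $\mathcal F_{k-1}$-measurable, so $\E_{a\sim \pi_k^i}[Q_{\pi_k}^i - \hat Q_k^i](s,a)$ splits into (i) a bias arising from $\hat\Sigma_{t,i}^\dagger \neq \Sigma_{k,i}^{-1}$, which the trace-based covariance concentration of \citet{liu2023bypassing} (see \Cref{sec:stochastic matrix concentration}) controls at the $\tilde\O(K^{-1/2})$ rate that additive or multiplicative bounds cannot provide, and (ii) a zero-mean martingale with conditional second moment $\E_{a\sim \pi_k^i}[\lVert \bm\phi(s,a)\rVert_{\hat\Sigma_{t,i}^\dagger}^2]$, which Adaptive Freedman (\Cref{lem:adaptive Freedman}) converts into a data-dependent bound. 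For $\textsc{Bias-2}$ the comparator $\pi_\ast^i$ is unknown, so I would invoke the framework of \Cref{sec:action-dependent bonus}: the first summand $\beta_1 \lVert \bm\phi(s,a)\rVert_{\hat\Sigma_{t,i}^\dagger}^2$ of $B_k^i$ dominates the conditional second moment of the per-round error, while the second summand $\beta_2 \sum_j \bm\phi(s,a)[j] \sup_{s',a'}(\hat\Sigma_{t,i}^\dagger \bm\phi(s',a'))[j]$ plays the role of the per-action magnitude $V^i(s,a)/K$, so that coordinate-wise Adaptive Freedman gives $\sum_k (\hat Q_k^i - Q_{\pi_k}^i)(s, a_\ast^i) \le \sum_k B_k^i(s, a_\ast^i)$ uniformly in $a_\ast^i$ with high probability. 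Defining $\Gap^i(s)$ as the sum of the resulting bounds on $\textsc{Reg}$ and $\textsc{Bias-1}$ divided by $K$ (matching \Cref{eq:definition of gap}) then closes the argument.

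The hard part is that $\hat\Sigma_{t,i}^\dagger$ appears simultaneously inside the EXP3 variance term, both bias terms, and the bonus itself, so any crude covariance estimate causes all four pieces to degrade jointly; threading the $\tilde\O(K^{-1/2})$ trace-based concentration of \citet{liu2023bypassing} through each occurrence, while keeping the action-dependent bonus simultaneously large enough to dominate $\textsc{Bias-2}$ pointwise in $a_\ast^i$ yet small enough in $\pi_k^i$-expectation for the pigeon-hole requirement \Cref{eq:new expectation of Gap} of \Cref{lem:new main theorem} to remain achievable, is the delicate bookkeeping that drives the full proof. The final high-probability statement then follows by a union bound over the $\O(\log \frac{1}{\delta})$ Freedman-type events above together with the $d$-dimensional covering of features.
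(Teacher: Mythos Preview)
Your plan matches the paper's proof in all essentials: the same EXP3 + magnitude-reduction handling of the regret term, the same \citet{liu2023bypassing} concentration for the covariance bias, Adaptive Freedman for the $\pi_k^i$-side martingale, and the action-dependent bonus to cover the comparator side. The paper splits the argument into six pieces (\textsc{Reg-Term}, \textsc{Bias-1}, \textsc{Bias-2}, \textsc{Bonus-1}, \textsc{Bonus-2}, \textsc{Mag-Reduce}; see \Cref{eq:condition of gap appendix}) whereas you fold the bonuses and the magnitude-reduction discrepancy into your \textsc{Bias-1}/\textsc{Bias-2}. That regrouping is fine algebraically, but be aware that your $Q_{\pi_k}^i - \hat Q_k^i$ is not just $\bm\phi^\trans(\bm\theta_k^i - \hat{\bm\theta}_k^i)$: it also carries $\bm\phi^\trans\hat{\bm\theta}_k^i - \hat Q_k^i = H(\bm\phi^\trans\hat\Sigma_{t,i}^\dagger\bm\phi(s_{k,h}^i,a_{k,h}^i))_- - \hat m_k^i$, and since $\hat m_k^i$ is computed once up front (frozen across $k$) while the $(\cdot)_-$ term varies with $k$, this piece is not a single martingale adapted to $(\mathcal F_k)$ with second moment $\lVert\bm\phi\rVert_{\hat\Sigma_{t,i}^\dagger}^2$ as your description suggests---the paper isolates it as \textsc{Mag-Reduce} and decomposes it into $K{+}1$ separate martingales (\Cref{lem:mag-reduce}). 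Two minor notes: for the comparator side the paper applies Adaptive Freedman to the scalar $(\bm\phi_\ast^i)^\trans\hat{\bm\theta}_k^i$ and then covers the resulting $\max_k X_k$ term with the $\beta_2$-bonus (\Cref{lem:Bias-2+Bonus}), rather than applying Freedman coordinate-wise; and it does not actually carry out a feature-space covering over $s$ but simply absorbs a $\log S$ union bound (see the footnote in \Cref{sec:related work}), so your covering claim is an improvement in intent but would require Lipschitz control of the nonlinear EXP3 map $s\mapsto\pi_k^i(\cdot\mid s)$ to be made rigorous.
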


\begin{theorem}[\Cref{alg:linear case} Allows a Potential Function; Informal]\label{lem:main theorem CCE-Approx second half main text}
Consider the following potential:
\begin{equation}\label{eq:definition of potential}
\Psi_{t,h}^i=\left .\left (\sum_{\tau=1}^t \lVert \bm \phi(\tilde s_{\tau,h},\tilde a_{\tau,h}^i)\rVert_{\hat \Sigma_{\tau,i}^\dagger}^2\right )\middle / \left (64 \log \frac{8 m H T}{\delta}\right )\right .,\quad \forall t\in [T],h\in [H],i\in [m],
\end{equation}
where $\hat \Sigma_{t,i}^\dagger$ is defined as \Cref{eq:definition of Sigma hat} if \Cref{line:condition} is violated in epoch $t$ and as $\hat \Sigma_{t,i}^\dagger = \hat \Sigma_{t-1,i}^\dagger$ otherwise (which is a recursive definition since \Cref{line:condition} may not be violated in epoch $(t-1)$ as well).
Then, when \Cref{alg:linear case} is configured properly, the condition in \Cref{eq:new expectation of Gap} holds with $L=\Otil(d^4H^2)$, \textit{i.e.},
\begin{equation*}
\sum_{t=1}^T \sum_{i=1}^m \E_{s\sim_h \tilde \pi_t} \left [\E_{\textsc{Gap}}[\Gap_t^i(s)]\right ]=\Otil(m d^2 H \sqrt T),\quad \text{with probability }1-\delta.
\end{equation*}
\end{theorem}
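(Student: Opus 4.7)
The plan is to (i) write out the data-dependent $\Gap_t^i(s)$ returned by \Cref{alg:linear case} as the right-hand side of the per-state EXP3 analysis applied on top of the bonus-cancellation inequality \Cref{eq:general bonus cancellation}; (ii) take the internal expectation $\E_{\textsc{Gap}}$ so that every stochastic summand reduces to a multiple of $\lVert\bm\phi(s,a)\rVert^2_{\hat\Sigma_{t,i}^\dagger}$; and (iii) invoke the potential $\Psi_{t,h}^i$ of \Cref{eq:definition of potential} together with the lazy update rule to telescope $\sum_t\E_{s\sim_h\tilde\pi_t}[\Gap_t^i(s)]$ into the claimed $\Otil(d^2H\sqrt T)$ per agent, which then sums over $i\in [m]$ to the advertised bound.

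Concretely, for step (i) the per-state EXP3 regret guarantee gives
\begin{equation*}
\Gap^i(s)\lesssim \frac{\log\lvert\mA^i\rvert}{\eta}+\eta\sum_{k=1}^K\E_{a\sim\pi_k^i(\cdot\mid s)}\bigl[\hat Q_k^i(s,a)^2\bigr]+\sum_{k=1}^K\E_{a\sim\pi_k^i(\cdot\mid s)}[B_k^i(s,a)]+(\textsc{Bias-1}),
\end{equation*}
whose first three summands are observable to the algorithm while \textsc{Bias-1} is already controlled in the proof of \Cref{lem:main theorem CCE-Approx}. For step (ii) I would plug the magnitude-reduced estimator \Cref{eq:definition of Q hat} into $\hat Q_k^i(s,a)^2$ and take $\E_{\textsc{Gap}}$; using the magnitude-reduction control of \citet{dai2023refined}, each squared estimator contributes $\Otil(H^2)\lVert\bm\phi(s,a)\rVert^2_{\hat\Sigma_{t,i}^\dagger}$ on average, and by design the bonus $B_k^i(s,a)$ is already a linear combination of $\lVert\bm\phi(s,a)\rVert^2_{\hat\Sigma_{t,i}^\dagger}$ and the action-dependent remainder of \Cref{sec:action-dependent bonus}. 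Hence with the tunings $\eta\asymp 1/\sqrt K$ and $\beta_1\asymp dH/\sqrt K$,
\begin{equation*}
\E_{\textsc{Gap}}[\Gap_t^i(s)]\lesssim \tfrac{\log\lvert\mA^i\rvert}{\eta}+(\eta H^2+\beta_1)\sum_{k=1}^K\E_{a\sim\pi_k^i(\cdot\mid s)}\bigl[\lVert\bm\phi(s,a)\rVert^2_{\hat\Sigma_{t,i}^\dagger}\bigr]+(\text{low-order}).
\end{equation*}

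For step (iii), averaging over $s\sim_h\tilde\pi_t$ turns the above into the population expectation $\E_{(s,a)\sim_h\tilde\pi_t\times\tilde\pi_t^i}[\lVert\bm\phi(s,a)\rVert^2_{\hat\Sigma_{t,i}^\dagger}]$, and a Freedman/Azuma bound on the martingale difference $\lVert\bm\phi(\tilde s_{t,h},\tilde a_{t,h}^i)\rVert^2_{\hat\Sigma_{t,i}^\dagger}-\E_{s\sim_h\tilde\pi_t}[\cdots]$ -- whose range and conditional variance are precisely what the $64\log(8mHT/\delta)$ normalizer in \Cref{eq:definition of potential} is calibrated for -- replaces the population expectation by the realized pathwise record. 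Summing over $t$ yields $\sum_{t=1}^T\E_{s\sim_h\tilde\pi_t}[\E_{\textsc{Gap}}[\Gap_t^i(s)]]\lesssim \poly(d,H)\cdot(1+\Psi_{T,h}^i)\cdot\sqrt T$, at which point the lazy update rule does the remaining work: in any epoch where \Cref{line:condition} triggers, the recursive definition forces $\hat\Sigma_{t,i}^\dagger=\hat\Sigma_{t-1,i}^\dagger$, so its $\Psi$-increment is identical to the preceding ``refresh'' epoch; across the $\Otil(d_{\text{replay}})$ genuine refreshes, an elliptical-potential argument on each freshly sampled covariance caps the cumulative $\Psi_{T,h}^i=\Otil(d\cdot d_{\text{replay}})$. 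Plugging this back and multiplying by $m$ agents gives exactly $L=\Otil(d^4H^2)$.

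The main obstacle is step (iii), which must combine three independent pieces under a single $1-\delta$ budget uniform over $(t,h,i)$: (a) the covariance-estimation device of \citet{liu2023bypassing} from \Cref{sec:stochastic matrix concentration} to relate the regularized $\lVert\bm\phi\rVert^2_{\hat\Sigma^\dagger}$ to its population analogue; (b) the Freedman concentration that swaps the $s\sim_h\tilde\pi_t$ expectation for the pathwise quantity defining $\Psi_{t,h}^i$; and (c) the bookkeeping that the recursive rule $\hat\Sigma_{t,i}^\dagger=\hat\Sigma_{t-1,i}^\dagger$ is consistent with the restart behavior of $\Psi_{t,h}^i$. Once these three pieces compose, the final arithmetic is routine -- the powers of $d$ and $H$ materialize from multiplying the EXP3 coefficient $dH/\sqrt K$ by the $\Otil(d)$ per-regime potential cap and squaring, matching the advertised $L=\Otil(d^4H^2)$.
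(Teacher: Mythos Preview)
Your steps (i) and (ii) are essentially what the paper does: it writes out $K\cdot\Gap^i(s)$ exactly as the sum of a \textsc{Reg-Term}, \textsc{Bias-1}, \textsc{Bias-2}+\textsc{Bonus-2}, and \textsc{Bonus-1} contribution (this is \Cref{eq:definition of gap}), and then takes $\E_{\textsc{Gap}}$ term by term to obtain
\[
\E_{s\sim_h\tilde\pi}\big[\E_{\textsc{Gap}}[t\cdot\Gap_t^i(s)]\big]=\Otil\Big(\eta^{-1}+(\eta H^2+\beta_1)\,t\,\E_{(s,a)\sim_h^i\tilde\pi_t}\big[\lVert\bm\phi(s,a)\rVert_{\hat\Sigma_{t,i}^\dagger}^2\big]+\cdots\Big),
\]
matching your display. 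So far so good.

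Step (iii) is where your proposal departs from the paper and, as written, has a real gap. Two issues:

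\emph{(a) The Freedman swap does not work.} You propose a single Freedman bound on the martingale $\lVert\bm\phi(\tilde s_{t,h},\tilde a_{t,h}^i)\rVert_{\hat\Sigma_{t,i}^\dagger}^2-\E_{s\sim_h\tilde\pi_t}[\cdots]$ across all $t$, claiming the $64\log(8mHT/\delta)$ normalizer ``calibrates'' its range. It does not: each summand has range $\lVert\hat\Sigma_{t,i}^\dagger\rVert_2=\gamma_t^{-1}=\Otil(t/d)$, which grows with $t$, so the Freedman tail term alone is $\Otil(T/d)$. The $64\log$ constant in \Cref{eq:definition of potential} is the \emph{threshold for the stopping rule} inside a single block (so that the relative-concentration lemma of \citet{cui2023breaking} applies block-by-block); it is not a uniform range bound on the increments.

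\emph{(b) Bounding $\Psi_{T,h}^i$ by the number of refreshes is the wrong handle.} Even if you could swap to pathwise, $\Psi_{T,h}^i\le d_{\text{replay}}\log T=\Otil(dmH)$ puts an $m$ (and an extra $H$) into $L$, and the ``elliptical-potential argument on each freshly sampled covariance'' you invoke is not well-defined: within a block $\hat\Sigma^\dagger$ is fixed, so there is no telescoping over blocks at the level of $\Psi$.

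The paper's step (iii) never bounds $\Psi_{T,h}^i$ directly. The missing idea is the structural identity coming from $\bar\pi_t=\tfrac1t\sum_{s<t}\tilde\pi_s$: it gives $\frac1t\hat\Sigma_{t,i}^\dagger\preceq\big(\sum_{s<t}\Cov_h^i(\tilde\pi_s)+\tilde\Theta(d)I\big)^{-1}$, after which one applies the log-determinant lemma of \citet{zanette2022stabilizing} to the \emph{population} quantities to telescope $\sum_t n_t\langle\Cov_h^i(\tilde\pi_t),\tfrac1t\hat\Sigma_{t,i}^\dagger\rangle=\Otil(d)$ --- independent of $m$, $H$, and $d_{\text{replay}}$. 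The potential $\Psi$ enters only indirectly: via the stopping rule and the relative-concentration lemma it certifies $n_t\langle\Cov_h^i(\tilde\pi_t),\hat\Sigma_{t,i}^\dagger\rangle=\Otil(1)$, which is exactly the boundedness hypothesis needed by Zanette et al.'s lemma. Pulling the coefficient $\sup_t(t\eta_tH^2+t\beta_{1,t})=\Otil(dH\sqrt T)$ in front and multiplying by $\Otil(d)$ is what yields $d^2H\sqrt T$ per agent and $L=\Otil(d^4H^2)$.
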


Putting our improved \textsc{CCE-Approx} subroutine in \Cref{alg:linear case} and the original \textsc{V-Approx} subroutine \citep[Algorithm 3]{wang2023breaking} together, we give our final algorithm for multi-player general-sum Markov Games with independent linear function approximations. As claimed, this algorithm enjoys only polynomial dependency on $m$, the optimal convergence rate $\epsilon^{-2}$, and avoids $\text{poly}(A_{\max})$ factors. Formally, we give the following theorem.
\begin{theorem}[Main Theorem of the Overall Algorithm]\label{lem:sample complexity}
Under proper configuration of parameters, by picking $\textsc{CCE-Approx}_h$ as \Cref{alg:linear case} and $\textsc{V-Approx}_h$ as \Cref{alg:V-approx} \citep[Algorithm 3]{wang2023breaking}, under the Markov Games with Independent Linear Function Approximation assumption (\Cref{assumption}), \Cref{alg:framework} enjoys a sample complexity of $\Otil(m^4 d^5 H^6 \epsilon^{-2})=\Otil(\text{poly}(m,d,H) \epsilon^{-2})$.
\end{theorem}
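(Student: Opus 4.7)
The plan is to invoke the meta-theorem of our improved \texttt{AVLPR} framework (\Cref{lem:new main theorem}) with \Cref{alg:linear case} as $\textsc{CCE-Approx}_h$ and the original V-estimator \citep[Algorithm 3]{wang2023breaking} as $\textsc{V-Approx}_h$. This reduces everything to (i) verifying the three preconditions of \Cref{lem:new main theorem}, (ii) quantifying the parameters $L$ and $d_{\text{replay}}$ induced by our choices, and (iii) routine arithmetic to combine the bounds.

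Conditions 1 and 3 of \Cref{lem:new main theorem} are handled directly by the two guarantees of \Cref{alg:linear case} already stated in the paper. The per-state no-regret condition in \Cref{eq:new gap requirement} is exactly \Cref{lem:main theorem CCE-Approx}: with probability $1 - \Otil(\delta)$, the policy-gap pair returned after the $R = \O(\log(1/\delta))$-fold selection in \Cref{alg:framework} pessimistically upper-bounds the per-agent exploitability of $\tilde \pi$ on every $s \in \mS_h$. The expectation-level pigeon-hole bound in \Cref{eq:new expectation of Gap}, with $L = \Otil(d^4 H^2)$, is precisely \Cref{lem:main theorem CCE-Approx second half main text}. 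For Condition 2 (optimistic V-function, \Cref{eq:optimistic V-function}), I would simply recycle the analysis of the Wang et al.\ $\textsc{V-Approx}$ routine: after $\textsc{CCE-Approx}$ returns, the realized pair $(\tilde \pi, \Gap)$ is a deterministic input to $\textsc{V-Approx}$, and Wang et al.'s high-probability sandwich bound does not use any structural property of $\Gap$ beyond it being a fixed nonnegative function, so their guarantee transfers verbatim to our stochastic-$\Gap$ regime.

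The remaining nontrivial step is the bound on $d_{\text{replay}}$ for the potential \Cref{eq:definition of potential}. Since $\Psi_{t,h}^i$ is (up to a logarithmic normalizer) a sum of squared elliptical norms $\lVert \bm\phi(\tilde s_{\tau,h}, \tilde a_{\tau,h}^i)\rVert_{\hat \Sigma_{\tau,i}^\dagger}^2$ against regularized empirical covariance matrices with $\|\bm\phi\| \le 1$ and regularization $\gamma = \Otil(K^{-1})$, a standard elliptical-potential-lemma argument shows that the total growth of $\Psi_{T,h}^i$ is $\Otil(d)$ per $(h, i)$ pair. Each violation of \Cref{line:condition} forces at least one of the $mH$ potentials to have increased by more than $1$ since the previous violation, so $d_{\text{replay}} = \Otil(mHd)$. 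One also needs to check the implication analogous to \Cref{eq:potential function} used inside \Cref{lem:new main theorem} — namely that when $\Psi_{t,h}^i$ stays within $1$ of $\Psi_{t_0,h}^i$, the empirical covariance $\hat \Sigma_{t,i}$ is spectrally comparable to $\hat \Sigma_{t_0,i}$, so the lazily reused policy $\tilde \pi_{t_0}$ continues to enjoy a Gap bound compatible with the one we would have obtained by rerunning $\textsc{CCE-Approx}$ at epoch $t$; this mirrors the potential argument in \citet{cui2023breaking}, to which our potential is aligned by design.

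Combining these ingredients, \Cref{lem:new main theorem} yields a sample complexity of $\Otil(m^2 H^3 \cdot L \cdot d_{\text{replay}} \cdot \epsilon^{-2})$, which with $L = \Otil(d^4 H^2)$ and $d_{\text{replay}} = \Otil(mHd)$, together with the $R = \Otil(\log(1/\delta))$ repetitions and the two covariance/magnitude-estimation sweeps inside \Cref{alg:linear case}, reduces to the claimed $\Otil(\poly(m, d, H)\,\epsilon^{-2}) = \Otil(m^4 d^5 H^6 \epsilon^{-2})$. I expect the subtlest step to be the elliptical bookkeeping for $d_{\text{replay}}$ under lazy updates: the covariance matrices $\hat \Sigma_{t,i}^\dagger$ are only refreshed when \Cref{line:condition} fires, so one must carefully distinguish the ``actually used'' lazy covariances from the ``fictitious'' fully-updated ones, and show that both the elliptical potential growth and the comparability of consecutive Gap estimators survive the stale covariance substitution.
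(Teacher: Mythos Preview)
Your proposal is correct and follows essentially the same route as the paper: verify the three conditions of \Cref{lem:new main theorem} via \Cref{lem:main theorem CCE-Approx}, the \textsc{V-Approx} analysis (which the paper re-proves in \Cref{sec:V-approx} rather than citing verbatim, but with the same argument), and \Cref{lem:main theorem CCE-Approx second half main text}, then plug in $L=\Otil(d^4H^2)$ and $d_{\text{replay}}=\Otil(dmH)$. One small accounting point: the extra factor of $m$ that takes you from $m^3$ to $m^4$ comes not from the covariance/magnitude sweeps (those are $\O(K)$ each) but from the per-agent inner loop in both subroutines, i.e., $\Gamma_1=\Gamma_2=m$ in the notation of \Cref{lem:new main theorem appendix}.
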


The formal proofs of \Cref{lem:main theorem CCE-Approx,lem:main theorem CCE-Approx second half main text,lem:sample complexity} are in \Cref{sec:appendix CCE-Approx,sec:expectation of Gap,sec:appendix overall}, respectively.
In the following section, we give an overview of the high-level idea of proving \Cref{lem:main theorem CCE-Approx,lem:main theorem CCE-Approx second half main text}.

\section{Analysis Outline of the Improved \textsc{CCE-Approx} Subroutine}
Analyzing the improved \textsc{CCE-Approx} subroutine is highly technical, and the full proof of \Cref{lem:main theorem CCE-Approx,lem:main theorem CCE-Approx second half main text} are deferred into \Cref{sec:appendix CCE-Approx,sec:expectation of Gap}. In this section, we highlight several key steps when verifying \Cref{eq:new gap requirement,eq:new expectation of Gap} in \Cref{lem:main theorem CCE-Approx,lem:main theorem CCE-Approx second half main text}.

\subsection{\textsc{Gap} is \textit{w.h.p.} Pessimistic (Proof Sketch of \Cref{lem:main theorem CCE-Approx})}\label{sec:Gap is pessimistic main text}
In this section, we fix an agent $i\in [m]$ and a state $s\in \mS_h$.
We need to verify \Cref{eq:new gap requirement} for this specific agent-state pair.
By the construction of $\tilde \pi$ in \Cref{line:return of CCE-Approx}, \Cref{eq:new gap requirement} is equivalent to
\begin{equation}\label{eq:condition of gap}
\sum_{k=1}^K \left \langle \pi_k^i(\cdot\mid s)-\pi_\ast^i(\cdot\mid s), {\bm \phi}(s,a)^\trans {\bm \theta}_k^i\right \rangle\le K\cdot \Gap^i(s),\quad \forall \pi_\ast^i\in \Pi^i,
\end{equation}
where ${\bm \theta}_k^i$ is the ``true'' kernel induced by $\pi_k$ and the next-layer V-function $\bar V^i$, \textit{i.e.},
\begin{equation*}
{\bm \phi}(s,a)^\trans {\bm \theta}_k^i=\E_{\bm a^{-i}\sim \pi_{k}^{-i}(\cdot \mid s)}\left [(\ell^i+\P \bar V^i)(s,\bm a)\right ],\quad \forall s\in \mS_h,a\in \mA^i.
\end{equation*}

By (almost) standard regret decomposition, the LHS of \Cref{eq:condition of gap} can be rewritten as
\begin{align}
&\quad \sum_{k=1}^K \left \langle \pi_k^i(\cdot\mid s)-\pi_\ast^i(\cdot\mid s), {\bm \phi}(s,a)^\trans {\bm \theta}_k^i\right \rangle=\underbrace{\sum_{k=1}^K \left \langle \pi_k^i(\cdot \mid s)-\pi_\ast^i(\cdot \mid s),\hat Q_k^i(s,\cdot)-B_k^i(s,\cdot)\right \rangle}_{\textsc{Reg-Term}}+\nonumber\\
&\quad \underbrace{\sum_{k=1}^K \left \langle \pi_k^i(\cdot \mid s)-\pi_\ast^i(\cdot \mid s),{\bm \phi}(s,a)^\trans \hat {\bm \theta}_k^i - \hat Q_k^i(s,a)\right \rangle}_{\textsc{Mag-Reduce}} + \underbrace{\sum_{k=1}^K \left \langle \pi_k^i(\cdot \mid s),{\bm \phi}(s,\cdot)^\trans ({\bm \theta}_k^i-\hat {\bm \theta}_k^i)\right \rangle}_{\textsc{Bias-1}}+ \nonumber\\
&\quad \underbrace{\sum_{k=1}^K \left \langle \pi_\ast^i(\cdot \mid s),{\bm \phi}(s,\cdot)^\trans (\hat {\bm \theta}_k^i-{\bm \theta}_k^i)\right \rangle}_{\textsc{Bias-2}}+ \underbrace{\sum_{k=1}^K \left \langle \pi_k^i(\cdot \mid s)-\pi_\ast^i(\cdot \mid s),B_k^i(s,\cdot)\right \rangle}_{\textsc{Bonus-1}\,(\pi_k^i)~-~\textsc{Bonus-2}\,(\pi_\ast^i)}. \label{eq:reg decomposition main text}
\end{align}

In the remaining part of this section, we overview the main steps in controlling these terms.

\subsubsection{Controlling \textsc{Reg-Term} via Magnitude-Reduced Estimator}
The \textsc{Reg-Term} is the regret \textit{w.r.t.} estimated losses fed into the EXP3 instance deployed on each state $s$, stated as follows:
\begin{equation*}
\textsc{Reg-Term}=\sum_{k=1}^K \sum_{a\in \mA^i} (\pi_k^i(a\mid s)-\pi_\ast^i(a\mid s)) \big (\hat Q_k^i(s,a)-B_k^i(s,a)\big ).
\end{equation*}

As we feed $\hat Q_k^i(s,a)-B_k^i(s,a)$ into the EXP3 procedure on each $s\in \mS_h$ (see \Cref{eq:exp3}), this term usually can be directly controlled using the EXP3 regret guarantee stated in \Cref{lem:exp3}, which roughly says
\begin{align*}
&\textsc{Reg-Term} \le \frac{\log \lvert \mA^i\rvert}{\eta} + \eta \sum_{k=1}^K \sum_{a\in \mA^i} \pi_k^i(a\mid s) \big (\hat Q_k^i(s,a)-B_k^i(s,a)\big )^2,\\
&\text{if }\hat Q_k^i(s,a)-B_k^i(s,a)\ge -\frac 1 \eta,~\forall k\in [K],a\in \mA^i.
\end{align*}

To verify $\hat Q_k^i(s,a)-B_k^i(s,a)\ge -\frac 1 \eta$, people usually control the estimated $\hat Q_k^i$ via \citep{luo2021policy}
\begin{equation}\label{eq:magnitude of Q function}
\lvert \bm \phi(s,a)^\trans \hat{\bm \theta}_k^i\rvert = \bm \phi(s,a)^\trans \hat \Sigma_{t,i}^\dagger \bm \phi(s_{k,h}^i,a_{k,h}^i) (\ell_{k,h}^i + \bar V^i(s_{k,h+1}^i)) \lesssim H \lVert \hat \Sigma_{t,i}^\dagger\rVert_2\le H \gamma^{-1}.
\end{equation}

While this works when $\gamma=\Otil(K^{-1/2})$, it becomes prohibitively large when setting $\gamma=\Otil(K^{-1})$.
Fortunately, thanks to the Magnitude-Reduced Estimator by \citet{dai2023refined}, we can ensure that $\hat Q_k^i-B_k^i\ge \hat m_k^i-B_k^i$ is bounded from below by $\hat m_k^i\ge -\Otil(K^{-1/2})$ \citep[Theorem 4.1]{dai2023refined}.
As $B_k^i$ is also small, we can apply the standard EXP3 guarantee in \Cref{lem:exp3} with $\eta=\Otil(K^{-1/2})$.

\subsubsection{Controlling \textsc{Bias-1} Term via Adaptive Freedman Inequality}
The \textsc{Bias-1} term $\sum_k \langle \pi_k^i(\cdot \mid s),{\bm \phi}(s,\cdot)^\trans ({\bm \theta}_k^i-\hat {\bm \theta}_k^i)\rangle$ corresponds to the cost of only knowing $\hat {\bm \theta}_k^i$ instead of ${\bm \theta}_k^i$ when estimating $Q_k^i$.
As $\hat \Sigma_{t,i}^\dagger$ is biased because of the $\gamma I$ regularizer in \Cref{eq:definition of Sigma hat}, $({\bm \theta}_k^i-\hat{\bm \theta}_k^i)$ can be further decomposed into $({\bm \theta}_k^i-\E[\hat{\bm \theta}_k^i])$ and $(\E[\hat{\bm \theta}_k^i]-\hat{\bm \theta}_k^i)$, namely the intrinsic bias and estimation error of $\hat{\bm \theta}_k^i$:
\begin{equation*}
\textsc{Bias-1}=\sum_{k=1}^K \E_{a\sim \pi_k^i(\cdot \mid s)} \left [{\bm \phi}(s,a)^\trans\right ] ({\bm \theta}_k^i-\hat {\bm \theta}_k^i)=\underbrace{\sum_{k=1}^K \left (\E_{a\sim \pi_k^i(\cdot \mid s)} \left [{\bm \phi}(s,a)^\trans\right ] {\bm \theta}_k^i-\mu_k\right )}_{\text{Intrinsic Bias}}+\underbrace{\sum_{k=1}^K (\mu_k-X_k)}_{\text{Estimation Error}},
\end{equation*}
where $X_k=\E_{a\sim \pi_k^i(\cdot \mid s)}\left [{\bm \phi}(s,a)^\trans\right ]\hat {\bm \theta}_k^i$ and $\mu_k=\E[X_k\mid \mathcal F_{k-1}]$; here, $(\mathcal F_k)_{k=0}^K$ is the natural filtration.

The intrinsic bias term is standard in analyzing (expected-)regret-minimization algorithms for linear MDPs -- see, \textit{e.g.}, Lemma D.2 of \citet{luo2021policy} -- and can thus be controlled analogously.

The estimation error term is a martingale. Consequently, regret-minimization papers in single-agent RL usually omit it by its zero-mean nature. However, it plays an important role here since $\Gap$ must be a high-probability upper bound. Existing high-probability results for adversarial linear bandits usually apply the famous Freedman inequality \citep{freedman1975tail}: Suppose that $\lvert X_k-\mu_k\rvert\le X$ \textit{a.s.}, then
\begin{equation*}
\sum_{k=1}^K (X_k-\mu_k)\lesssim \sqrt{\sum_{k=1}^K \E[(X_k-\mu_k)^2 \mid \mathcal F_{k-1}]} + X \log \frac 1\delta,\quad \text{with probability }1-\delta.
\end{equation*}

However, as we calculated in \Cref{eq:magnitude of Q function}, due to our choice of $\gamma=\Otil(K^{-1})$, $X$ must be of order $\Otil(HK)$. Hence, a direct application of the traditional Freedman inequality results in a factor of order $\Otil(K)$, which is unacceptable.

Fortunately, as our strengthened \Cref{lem:new main theorem} only requires a stochastic $\Gap$ with bounded expectation, we are allowed to have a data-dependent variant of the $X\log \frac 1\delta$ term. Inspired by the Adaptive Freedman Inequality proposed by \citet{lee2020bias} and improved by \citet{zimmert2022return}, we prove a variant of Freedman Inequality in \Cref{lem:more adaptive Freedman} which roughly reads
\begin{equation}
\sum_{k=1}^K (X_k-\mu_k)\lesssim \sqrt{\sum_{k=1}^K (\E[X_k^2\mid \mathcal F_{k-1}]+X_k^2)}\log \frac 1\delta,\quad \text{with probability }1-\delta.\label{eq:more adaptive Freedman main text}
\end{equation}

As both $\E[X_k^2\mid \mathcal F_{k-1}]$ and $X_k^2$ is known to the learner during execution, we can directly put them into $\textsc{Gap}_k^i(s)$ and ensure \Cref{eq:new gap requirement}. The more detailed calculation can be found in the proof of \Cref{lem:Bias-1}.

\subsubsection{Cancelling \textsc{Bias-2} Using \textsc{Bonus-2}}
While the \textsc{Bias-2} term $\sum_k \langle \pi_\ast^i(\cdot \mid s), {\bm \phi}(s,\cdot)^\trans (\hat{\bm \theta}_k^i-{\bm \theta}_k^i)\rangle$ looks almost the same as \textsc{Bias-1}, we cannot directly put the RHS of \Cref{eq:more adaptive Freedman main text} into $\Gap_k^i(s)$ as $\pi_\ast^i$ is unknown. Following the classical idea of designing bonuses, we can make use of the \textsc{Bonus-2} term, \textit{i.e.}, $\textsc{Bonus-2}=-\sum_{k=1}^K \langle \pi_\ast^i(\cdot \mid s), B_k^i(s,\cdot)\rangle$, to cancel \textsc{Bias-2}.

Although bonuses are standard in the literature, we shall remark again that our \textit{action-dependent bonus} $B_k^i(s,a)$ is novel. Following the notations in \Cref{sec:action-dependent bonus}, we would like to cover
\begin{equation*}
v_k^i(s,a)=\bm \phi(s,a)^\trans (\hat{\bm \theta}_k^i-\E[\hat{\bm \theta}_k^i])
\end{equation*}
using $B_k^i(s,a)$ such that \Cref{eq:general bonus cancellation} happens.
Because $\gamma\approx K^{-1}$, each martingale difference term $v_k^i(s,a)$ can be of order $\O(K)$ -- thus, directly picking $V^i=\max_{k,s,a} \lvert v_k^i(s,a)\rvert$ would make \textsc{Gap} too large, which means the classical Freedman inequality approach described in \Cref{sec:action-dependent bonus} again fails.

To tackle this issue, as motivated in \Cref{sec:action-dependent bonus}, we introduce an action-dependent $V^i(s,a)=\max_k \lvert v_k^i(s,a)\rvert$ and average it into the $K$ episodes, resulting in the bonus definition of (the $\beta_1$-related term corresponds to the $\sqrt{\E[(v_k^i(s,a))^2\mid \mathcal F_{k-1}]}$ part, while the $\beta_2$-related term corresponds to the $\frac 1K V^i(s,a)$ part in \Cref{eq:action-dependent bonus definition})
\begin{equation*}
B_k^i(s,a)=\beta_1 \lVert \bm \phi(s,a)\rVert_{\hat \Sigma_{t,i}^\dagger}^2+\beta_2 \sum_{j=1}^d \bm \phi(s,a)[j] \times \sup_{(s',a')\in \mS_h\times \mA^i} (\hat \Sigma_{t,i}^\dagger \bm \phi(s',a'))[j].
\end{equation*}

Thus, the \textsc{Bias-2} term is indeed cancelled by \textsc{Bonus-2}. Moreover, let us discuss a little bit why this approach is more favorable when proving \Cref{lem:main theorem CCE-Approx second half main text}: Regarding the $\sum_{k=1}^K \E_{a\sim \pi_k^i(\cdot \mid s)}[B_k^i(s,a)]$ term in \Cref{eq:general bonus cancellation}, we no longer need to suffer from $V^i=\Otil(K)$, but only $\frac 1K\sum_k \E_{\pi_k^i} [V^i(s,a)]$ instead. As (see \Cref{lem:Bonus-1 Expectation})
\begin{equation*}
V^i(s,a)=\sum_{j=1}^d \bm \phi(s,a)[j]\times \sup_{(s',a')\in \mS_h\times \mA^i} (\hat \Sigma_{t,i}^\dagger \bm \phi(s',a'))[j]\le \lVert \bm \phi(s,a)\rVert_{\hat \Sigma_{t,i}^\dagger} \times \sqrt{\lVert \hat \Sigma_{t,i}^\dagger\rVert_2},
\end{equation*}
we have $\frac 1K\sum_k \E_{\pi_k^i} [V^i(s,a)]\lesssim \sqrt{\langle \E_{\frac 1K\sum_k \pi_k^i}[\bm \phi \bm \phi^\trans],\hat \Sigma_{t,i}^\dagger\rangle \times \lVert \hat \Sigma_{t,i}\rVert_2}$. Using the arguments presented shortly in \Cref{sec:expectation of Gap main text}, we can see this term is indeed nicely bounded.

To conclude, \textit{action-dependent bonuses} allow us to cancel a random variable that can exhibit extreme values but has small expectations. We expect this technique to be of independent interest.

\subsubsection{Directly Putting \textsc{Bonus-1} into \textsc{Gap}}
The \textsc{Bonus-1} term is easy to handle in designing $\textsc{Gap}_k^i$ as $\pi_k^i$ and $B_k^i$ are both known. Still, to ensure \Cref{eq:new expectation of Gap}, we need to control $\frac 1K\sum_k \E_{\pi_k^i} [V^i(s,a)]$ -- the same arguments from the previous paragraph then work. 

\subsubsection{Controlling \textsc{Mag-Reduce} Like \textsc{Bias-1} and \textsc{Bias-2}}
This term is a unique challenge in our paper, which is due to the Magnitude-Reduced Estimator by \citet{dai2023refined}: Since the $\hat Q_k^i$ in \Cref{eq:definition of Q hat} differs from $\bm \phi^\trans \hat {\bm \theta}_k^i$, the following term shows up:
\begin{equation*}
\textsc{Mag-Reduce}=\sum_{k=1}^K \sum_{a\in \mA^i} \left (\pi_k^i(a\mid s)-\pi_\ast^i(a\mid s)\right )\left (\bm \phi(s,a)^\trans \hat {\bm \theta}_k^i - \hat Q_k^i(s,a)\right ).
\end{equation*}

However, this term vanishes in the original paper because $\hat Q_k^i$ is unbiased (see \Cref{lem:magnitude reduced}) and \citet{dai2023refined} studied expected-regret-minimization.
Fortunately, because $(s_{\kappa}^{\text{mag}},a_{\kappa,i}^{\text{mag}})$ in $\hat m_k^i(s,a)$ (see \Cref{eq:definition of m hat}) and $(s_{k,h},a_{k,h}^i)$ in $\hat Q_k^i(s,a)$ (see \Cref{eq:definition of Q hat}) are \textit{i.i.d.}, we can decompose $\textsc{Mag-Reduce}$ into the sum of a few martingales and apply the Adaptive Freedman Inequality to each of them.
Informally, we observe that the resulting concentrations is smaller than those of \textsc{Bias-1} and \textsc{Bias-2}, and thus \textsc{Mag-Reduce} is automatically controlled -- more detailed arguments are presented in the proof of \Cref{lem:mag-reduce}.

\subsection{Controlling the Sum of $\E[\Gap]$'s via Potential Function}\label{sec:expectation of Gap main text}
The potential function $\Psi_{t,h}^i$ is defined in \Cref{eq:definition of potential}, presented below for the ease of readers:
\begin{equation*}
\Psi_{t,h}^i=\left .\left (\sum_{\tau=1}^t \lVert \bm \phi(\tilde s_{\tau,h},\tilde a_{\tau,h}^i)\rVert_{\hat \Sigma_{\tau,i}^\dagger}^2\right )\middle / \left (64 \log \frac{8 m H T}{\delta}\right )\right ..
\end{equation*}

Below we briefly explain why it ensures \Cref{lem:main theorem CCE-Approx second half main text}. After taking expectation to the randomness in $\Gap_t^i(s)$, we would get (omitting all dependencies on $d$ and $H$; see \Cref{lem:expectation of Gap} for formal statements and calculations)
\begin{equation*}
\E_{\Gap}[\Gap_t^i(s)]\lesssim \frac{1}{\sqrt t} \E_{a\sim \tilde \pi_t^i(s)}\big [\lVert \bm \phi(s,a)\rVert_{\hat \Sigma_{t,i}^\dagger}^2\big ],\quad \forall s\in \mS_h,i\in [m],t\in [T].
\end{equation*}

Therefore, the definition of $\Psi_{t,h}^i$ ensures that if $\Psi_{t,h}^i$ is not so different from some $\Psi_{t_0,h}^i$, we can pretend that $\E_{\Gap}[\Gap_{t_0}^i(s)]$ is also similar to $\E_{\Gap}[\Gap_t^i(s)]$, \textit{i.e.}, setting $\Gap_t\gets \Gap_{t_0}$ will not violate the above inequality by a lot. Hence, suppose that the ``lazy'' update mechanism re-uses $\tilde \pi_t$ for $n_t$ times (\textit{i.e.}, \Cref{line:condition} is not violated from epoch $(t+1)$ until $(t+n_t-1)$ and the $\tilde \pi_t$'s and $\Gap_t$'s will be the same). We roughly have
\begin{align}
\text{\Cref{eq:new expectation of Gap}}&=\sum_{t=1}^T \sum_{i=1}^m \E_{s\sim_h \bar \pi_t} \left [\E_{\textsc{Gap}}\left [\Gap_t^i(s)\right ]\right ] \lesssim \sqrt T \sum_{i=1}^m \sum_{t=1}^T \E_{s\sim_h \tilde \pi_t} \left [\E_{a\sim \tilde \pi_t^i(s)}\left [\lVert \bm \phi(s,a) \rVert_{\hat \Sigma_{t,i}^\dagger}^2\right ]\right ] \nonumber \\
&=\sqrt T\sum_{i=1}^m \sum_{t=1}^T \langle \text{Cov}_h^i(\tilde \pi_t), \hat \Sigma_{t,i}^\dagger\rangle = \sqrt T\sum_{i=1}^m \sum_{t=1}^T n_t \langle \text{Cov}_h^i(\tilde \pi_t), \hat \Sigma_{t,i}^\dagger\rangle,\label{eq:matrix summation lemma main text}
\end{align}
where $\text{Cov}_h^i(\tilde \pi_t)$ is the covariance matrix of agent $i$ in layer $h$ when following $\tilde \pi_t$, \textit{i.e.},
\begin{equation*}
\text{Cov}_h^i(\tilde \pi_t)=\E_{s\sim_h \tilde \pi_t}\left [\E_{a\sim \tilde \pi_t^i(s)}[\bm \phi(s,a)\bm \phi(s,a)^\trans]\right ].
\end{equation*}

By the definition of $\bar \pi_t=\frac 1t\sum_{s=0}^{t-1} \tilde \pi_s$, we would roughly have $\hat \Sigma_{t,i}^\dagger \approx (\sum_{s=0}^{t-1} n_s \text{Cov}_h^i(\tilde \pi_s)+\gamma I)^{-1}$. Therefore,
\begin{equation*}
\sum_{t=1}^T n_t \langle \text{Cov}_h^i(\tilde \pi_t), \hat \Sigma_{t,i}^\dagger\rangle\approx \sum_{t=1}^T \left \langle n_t \text{Cov}_h^i(\tilde \pi_t), \left (\sum_{s<t} n_s \text{Cov}_h^i(\tilde \pi_s)\right )^{-1}\right \rangle.
\end{equation*}

Recall that the scalar version of this sum will give $\sum_t x_t / (\sum_{s<t} x_s)=\Otil(\sum_t x_t)$ if $x_t$ is bounded, one can imagine that this sum should also be small.
Indeed, from Lemma 11 of \citet{zanette2022stabilizing}, we can conclude that
\begin{equation*}
\sum_{t=1}^T n_t \langle \text{Cov}_h^i(\tilde \pi_t), \hat \Sigma_{t,i}^\dagger\rangle\lesssim \Otil\left (\log \det \left (\sum_t n_t \text{Cov}_h^i(\tilde \pi_t)\right )\right )=\Otil(d),
\end{equation*}
and thus $\text{\Cref{eq:new expectation of Gap}}=\text{\Cref{eq:matrix summation lemma main text}}\lesssim \Otil(m\sqrt{T})$ (again, omitting all dependencies on $d$ and $H$). \Cref{lem:main theorem CCE-Approx second half main text} then follows.

\section{Conclusion}

In this paper, we consider multi-player general-sum Markov Games with independent linear function approximations.
By enhancing the \texttt{AVLPR} framework recently proposed by \citet{wang2023breaking} with a \textit{data-dependent} pessimistic gap estimation, proposing novel \textit{action-dependent bonuses}, and incorporating several state-of-the-art techniques from the recent advances in the adversarial RL literature \citep{zimmert2022return,dai2023refined,liu2023bypassing}, we give the first algorithm that \textit{i)} bypasses the curse of multi-agents, \textit{ii)} attains optimal convergence rate, and \textit{iii)} avoids polynomial dependencies on the number of actions.
We \textit{a)} design data-dependent pessimistic sub-optimality gap estimations (\Cref{sec:framework}), and \textit{b)} propose an action-dependent bonus technique to cover extreme estimation errors (\Cref{sec:action-dependent bonus}), which can be of independent interest.

\subsection*{Acknowledgement}
We thank Haipeng Luo (University of Southern California), Chen-Yu Wei (University of Virginia), and Zihan Zhang (University of Washington) for their insightful discussions.
We thank \citet{fan2024rl} for pointing out a mistake in the original proof of \Cref{lem:Gap concentration}.
We greatly acknowledge the anonymous reviewers for their comments.
SSD acknowledges the support of NSF IIS 2110170, NSF DMS 2134106, NSF CCF 2212261, NSF
IIS 2143493, NSF CCF 2019844, NSF IIS 2229881. 

\bibliography{references}

\onecolumn
\newpage
\appendix
\renewcommand{\appendixpagename}{\centering \LARGE Supplementary Materials}
\appendixpage

\startcontents[section]
\printcontents[section]{l}{1}{\setcounter{tocdepth}{2}}

\section{Analysis of the Improved \textsc{AVLPR} Framework}
\label{sec:appendix main theorem}

This section proves the main theorem of our Improved \texttt{AVLPR} framework, restated as follows.
\begin{theorem}[Main Theorem of Improved \texttt{AVLPR}; Restatement of \Cref{lem:new main theorem}]\label{lem:new main theorem appendix}
Suppose that
\begin{enumerate}
    \item (Per-state no-regret) $(\tilde \pi,\mathsc{Gap})=\mathsc{CCE-Approx}_h(\bar \pi, \bar V, K)$ ensures the following \textit{w.p.} $1-\delta$:
    \begin{align}
    \max_{\pi_\ast^i\in \Pi_h^i}\left \{\left (\E_{a\sim \tilde \pi}-\E_{a\sim \pi_\ast^i\times \tilde \pi^{-i}}\right )\left [\big (\ell^i+\P_{h+1}\bar V^i\big )(s,a)\right ]\right \} \le \Gap^i(s),\quad \forall i\in [m], s\in \mS_h,\label{eq:new gap requirement appendix}
    \end{align}
    where $\Gap^i(s)$ is a \textbf{\color{violet}random variable} whose randomness comes from the environment (when generating the trajectories), the agents (when playing the policies), and internal randomness.
    \item (Optimistic V-function) $\hat V=\mathsc{V-Approx}_h(\bar \pi,\tilde \pi,\bar V,\Gap,K)$ ensures the following \textit{w.p.} $1-\delta$:
    \begin{align}
    \hat V^i(s)\in \bigg [\min\bigg \{&\E_{a\sim \tilde \pi}\left [\big (\ell^i+\P\bar V^i\big )(s,a)\right ]+\phantom{2}\Gap^i(s),H-h+1\bigg \},\nonumber\\
    &\E_{a\sim \tilde \pi}\left [\big (\ell^i+\P\bar V^i\big )(s,a)\right ]+2\Gap^i(s)\bigg ],\quad \forall i\in [m], s\in \mS_h.\label{eq:optimistic V-function appendix}
    \end{align}
    \item (Pigeon-hole condition \& Potential Function)
    The potential function $\Psi_{t,h}^i$ ensures that \Cref{line:condition} in \Cref{alg:framework} is violated for at most $d_{\text{replay}} \log T$ times. Moreover, there exists a deterministic $L$ ensuring the following \textit{w.p.} $1-\delta$:
    \begin{equation}\label{eq:new expectation of Gap appendix}
    \sum_{t=1}^T \sum_{i=1}^m \E_{s\sim_h \tilde \pi_t} \left [{\color{violet}\E_{\Gap}[\Gap_t^i(s)]}\right ]\le m \sqrt{LT \log^2 \frac T\delta},\quad \forall h\in [H],
    \end{equation}
    where the expectation is taken \textit{w.r.t.} the randomness in \textsc{Gap}.
\end{enumerate}
Then with probability $1-\Otil(\delta)$, the sum of \textsc{CCE-Regret} over all agents satisfies
\begin{equation*}
\sum_{i=1}^m \sum_{t=1}^T \left (V_{\tilde \pi_t}^i(s_1)-V_{\dagger,\tilde \pi_t^{-i}}^i(s_1)\right )=\Otil(mH\sqrt{LT}),
\end{equation*}
where $\tilde \pi_t$ is the policy for the $t$-th epoch (which is the policy with smallest \textsc{Gap}; see \Cref{eq:definition of r*}).

Suppose that one call to $\textsc{CCE-Approx}_h$ will cost $\Gamma_1 K$ samples ($K$ is the last parameter to the subroutine), and, similarly, one call to $\textsc{V-Approx}_h$ will cost $\Gamma_2$ samples. Then by picking $T=\Otil(m^2H^2 L / \epsilon^2)$, the roll-out policy is an $\epsilon$-CCE and the sample complexity is bounded by
\begin{equation*}
\Otil\left (\max\{\Gamma_1,\Gamma_2\}\times m^2 H^3 L \epsilon^{-2}\times d_{\text{replay}}\log T\right ).
\end{equation*}
\end{theorem}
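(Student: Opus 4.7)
The plan is to reduce the theorem to an essentially deterministic analogue of Theorem~3.1 of \citet{wang2023breaking} by exploiting the $R = \Theta(\log(1/\delta))$ independent replications of \textsc{CCE-Approx} performed in \Cref{alg:framework} and the selection rule $r^\ast(s) = \argmin_{r\in[R]} \sum_i \Gap_r^i(s)$. First, I would fix a layer $h$ and a state $s \in \mS_h$, and apply Markov's inequality to each of the $R$ independent runs: since $\E[\sum_i \Gap_r^i(s)]$ is bounded (by some value $G_h(s)$ that we will only need implicitly), the probability that a single run exceeds $2 G_h(s)$ is at most $\tfrac12$, so with probability $1 - (1/2)^R \ge 1 - \delta / (THm|\mS|)$ the selected run $r^\ast(s)$ satisfies $\sum_i \Gap_{r^\ast(s)}^i(s) \le 2 G_h(s)$. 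A union bound over all states, layers and epochs (where $|\mS|$ only enters logarithmically, matching the $\log S$ remark in the introduction) gives a global event on which every selected $(\tilde\pi_t(s), \Gap_t(s))$ has its sum-Gap within a constant factor of its expectation.

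Next, I would verify that the selected pair still satisfies the per-state no-regret condition \Cref{eq:new gap requirement appendix}. The crucial structural observation is that \Cref{eq:new gap requirement appendix} is stated for a joint policy $\tilde\pi$, so on any single state $s$ all agents must commit to the same replication index $r^\ast(s)$; otherwise the opponents' marginal policies seen by agent $i$ would not match the policy under which $\Gap_r^i(s)$ was certified. However, across different states in the same layer the selections can differ, since next-layer V-functions decouple the per-state regret guarantees. Thus we can glue $\{(\tilde\pi_{r^\ast(s)}(s), \Gap_{r^\ast(s)}(s))\}_{s \in \mS_h}$ into a single layer policy satisfying \Cref{eq:new gap requirement appendix} with a \textit{data-dependent} $\Gap_t$ that is now pointwise bounded by $2 \sum_i \E_{\Gap}[\Gap_t^i(s)]$ with high probability. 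The main obstacle in this step is controlling the probability of failure uniformly across all $(s,h,t)$ tuples while keeping $R$ only logarithmic in $1/\delta$; boosting plus the union bound handles this cleanly.

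Having reduced the situation to a high-probability pessimistic Gap, the remaining analysis then mirrors the original AVLPR proof. Using the optimistic V-function sandwich \Cref{eq:optimistic V-function appendix} and an inductive telescoping over $h = H, H-1, \dots, 1$, I would show that
\[
\sum_{i=1}^m\bigl(V^i_{\tilde\pi_t}(s_1) - V^i_{\dagger,\tilde\pi_t^{-i}}(s_1)\bigr) \lesssim \sum_{i=1}^m \sum_{h=1}^H \E_{s \sim_h \tilde\pi_t}[\Gap_t^i(s)],
\]
and then sum over $t \in [T]$ and invoke the pigeon-hole/potential condition \Cref{eq:new expectation of Gap appendix} to obtain the bound $\Otil(mH\sqrt{LT})$ on the cumulative CCE regret. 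The ``lazy update'' mechanism (\Cref{line:condition}) ensures that $\Gap_t$ can be reused across epochs that share similar potential, preserving the above bound even though \textsc{CCE-Approx} is invoked only $O(d_{\text{replay}}\log T)$ times.

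Finally, converting cumulative regret into sample complexity is a direct computation: by averaging over the $T+1$ output policies and applying Markov's inequality (or equivalently, by setting $T = \Otil(m^2 H^2 L / \epsilon^2)$ so that the average regret is at most $\epsilon$), the mixture $\pi_{\text{out}}$ becomes an $\epsilon$-CCE. Each update epoch costs $R \cdot H \cdot \Gamma_1 K + H \cdot \Gamma_2$ samples (the $R$ factor absorbs into the $\Otil$), the $T$ epochs trigger only $O(d_{\text{replay}}\log T)$ actual updates due to the lazy rule, and $K$ is chosen polynomially in $T/\epsilon$; combining these multiplies to the claimed $\Otil(\max\{\Gamma_1,\Gamma_2\} \cdot m^2 H^3 L \epsilon^{-2} d_{\text{replay}} \log T)$. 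I expect the most delicate bookkeeping to be the interaction of the lazy-update schedule with the Markov-boosting argument, since on ``skipped'' epochs the replicated construction is not re-drawn and we must argue that the previous $\Gap_{t_0}$ still dominates the current expected gap up to the potential-difference factor guaranteed by condition~(3).
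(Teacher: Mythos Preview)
Your proposal is correct and follows essentially the same approach as the paper's own proof: Markov-inequality boosting over the $R$ independent replications (the paper's Lemma~A.2), the structural observation that agents must share $r^\ast(s)$ on a single state but may differ across states in the same layer, the layer-by-layer induction via the optimistic V-sandwich \Cref{eq:optimistic V-function appendix}, and the final summation using condition~(3). Your anticipated ``delicate bookkeeping'' with the lazy updates is actually simpler than you expect: since condition~\Cref{eq:new expectation of Gap appendix} is already stated as a sum over all $t\in[T]$ (with $\tilde\pi_t$ and $\Gap_t$ inheriting from $I_t$ on skipped epochs by construction), no separate potential-difference argument is needed in this theorem---that work is pushed into verifying condition~(3), not into using it.
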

\begin{proof}
The proof mostly inherits the proof of Theorem 18 from \citep{wang2023breaking}; all the main differences are sketched in the proof sketch of \Cref{lem:new main theorem} (in the main text).

Let $\mI$ be the iterations where \Cref{line:condition} is violated.
For any $t\in \mI$ and $h=H,H-1,\ldots,1$, we first pass a ``next-layer'' V-function $\bar V^i$ to $\textsc{CCE-Approx}_h$ and then calculate the ``current-layer'' V-function via $\textsc{V-Approx}_h$.
By \Cref{eq:new gap requirement appendix} of Condition 1 and \Cref{eq:optimistic V-function appendix} of Condition 2, for any ``next-layer'' $\bar V^i$,
\begin{align*}
&\quad \sum_{i=1}^m \min_{\pi_\ast^i\in \triangle(\mA^i)}\left \{\E_{\bm a\sim \pi_\ast^i\times \tilde \pi_{r^\ast(s)}^{-i}(\cdot \mid s)}\left [\big (\ell^i+\P \bar V^i\big )(s,\bm a)\right ]\right \}\\
&\ge \sum_{i=1}^m \E_{\bm a\sim \tilde \pi_{r^\ast(s)}(\cdot \mid s)}\left [\big (\ell^i+\P \bar V^i\big )(s,\bm a)\right ]-\sum_{i=1}^m \Gap_{r^\ast(s)}^i(s)\\
&\ge \sum_{i=1}^m \hat V^i(s),\quad \forall s\in \mS_h.
\end{align*}

Thus, by induction over all $h=H,H-1,\ldots,1$, we know that for all $t\in \mI$ and $h\in [H]$,
\begin{equation*}
\sum_{i=1}^m \hat V_t^i(s)\le \sum_{i=1}^m V_{\dagger,\pi_t^{-i}}^i(s),\quad \forall s\in \mS.
\end{equation*}
In words, this indicates that our $\hat V_t^i$ is indeed an optimistic estimation of the best-response V-function.
Again applying induction by invoking the other part of \Cref{eq:optimistic V-function appendix}, we know for all $t\in \mI$,
\begin{equation*}
\sum_{i=1}^m \hat V_t^i(s)\ge \sum_{i=1}^m V_{\tilde \pi_t}^i(s)-2\sum_{i=1}^m \sum_{\happa=h}^H \E_{s'\sim_\happa \tilde \pi_t\mid s_h = s} \left [\Gap_t^i(s')\right ],\quad \forall s\in \mS.
\end{equation*}

Putting these two inequalities together, the following holds for all $t\in \mI$:
\begin{equation*}
\sum_{i=1}^m \left (V_{\tilde \pi_t}^i(s)-V_{\dagger,\tilde \pi_t^{-i}}^i(s)\right )\le 2\sum_{i=1}^m \sum_{\happa=h}^H \E_{s'\sim_\happa \tilde \pi_t\mid s_h=s} \left [\Gap_t^i(s')\right ],\quad \forall s\in \mS.
\end{equation*}

Hence, by denoting $I_t\in \mI$ as the policy where the $t$-th epoch is using (\textit{i.e.}, the last time where \Cref{line:condition} is violated). By above calculation, we can write the sum of \textsc{CCE-Regret} over all agents as
\begin{align*}
\sum_{i=1}^m \sum_{t=1}^T \left (V_{\tilde \pi_t}^i(s_1)-V_{\dagger,\tilde \pi_t^{-i}}^i(s_1)\right )
=\sum_{i=1}^m \sum_{t=1}^T \left (V_{\tilde \pi_{I_t}}^i(s_1)-V_{\dagger,\tilde \pi_{I_t}^{-i}}^i(s_1)\right )
\le 2\sum_{i=1}^m \sum_{t=1}^T \sum_{\happa=1}^H \E_{s'\sim_\happa \tilde \pi_{I_t}} \left [\Gap_{I_t}^i(s')\right ].
\end{align*}

From \Cref{lem:Gap concentration}, we conclude that the following holds \textit{w.p.} $1-\Otil(\delta T H)$:
\begin{equation*}
\sum_{i=1}^m \Gap_t^i(s)\le 2\E_{\Gap}\left [\sum_{i=1}^m \Gap_t^i\right ],\quad \forall t\in \mI, h\in [H], s\in \mS_h.
\end{equation*}

Moreover, recall from \Cref{eq:new expectation of Gap appendix} that
\begin{equation*}
\sum_{t=1}^T \sum_{i=1}^m \E_{s\sim_h \tilde \pi_t} \left [{\E_{\Gap}[\Gap_t^i(s)]}\right ]\le m \sqrt{LT \log^2 \frac T\delta},\quad \forall h\in [H].
\end{equation*}

We then have
\begin{equation*}
\sum_{i=1}^m \sum_{t=1}^T \left (V_{\tilde \pi_t}^i(s_1)-V_{\dagger,\tilde \pi_t^{-i}}^i(s_1)\right )\le 4 m H \sqrt{LT \log^2 \frac T\delta},
\end{equation*}
which means that our choice of $T=\Otil(m^2H^2L/\epsilon^2)$ indeed roll-outs an $\epsilon$-CCE.

It only remains to calculate the total sample complexity. From Condition 3, $\lvert \mI\rvert\le d_{\text{replay}}\log T$. Thus, the total sample complexity is no more than
\begin{equation*}
T+\lvert \mI\rvert\times H\Otil(\Gamma_1T+\Gamma_2T)=\Otil(\max\{\Gamma_1,\Gamma_2\}\times m^2 H^3 L \epsilon^{-2}\times d_{\text{replay}}\log T),
\end{equation*}
as claimed.
\end{proof}

\begin{lemma}\label{lem:Gap concentration}
For any $t\in \mI$, \textit{w.p.} $1-\delta$, we have $\sum_{i=1}^m \Gap_t^i(s)\le 2\E[\sum_{i=1}^m \Gap_t^i(s)]$, $\forall s\in \mS_h$.
\end{lemma}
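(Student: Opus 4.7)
The plan is to exploit the fact that, by construction (see \Cref{line:condition} and \Cref{eq:definition of r*} in \Cref{alg:framework}), for each state $s \in \mS_h$ the algorithm performs $R$ \emph{independent} invocations of $\textsc{CCE-Approx}_h(\bar \pi_t, \bar V_t, t)$, obtaining random variables $\Gap_1^i(s), \ldots, \Gap_R^i(s)$ that are i.i.d.\ across $r \in [R]$ (they use fresh trajectories, fresh internal randomness, and the same inputs $\bar \pi_t, \bar V_t, t$). Thus $\E[\sum_i \Gap_r^i(s)]$ does not depend on $r$, and we can denote this common expectation by $\mu(s) \triangleq \E[\sum_i \Gap^i(s)]$. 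The assigned gap satisfies $\sum_i \Gap_t^i(s) = \min_{r \in [R]} \sum_i \Gap_r^i(s)$ by the definition of $r^\ast(s)$.

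First I would fix a single $s \in \mS_h$ and apply Markov's inequality to each individual run: since $\Gap_r^i(s) \ge 0$, for every $r \in [R]$ we have $\Pr\{\sum_i \Gap_r^i(s) > 2 \mu(s)\} \le \tfrac{1}{2}$. By independence across $r$, the probability that \emph{all} $R$ runs overshoot simultaneously is at most $2^{-R}$. Consequently, the minimum $\min_r \sum_i \Gap_r^i(s)$, which equals $\sum_i \Gap_t^i(s)$, is at most $2\mu(s)$ with probability $\ge 1 - 2^{-R}$.

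Next I would take a union bound over all $s \in \mS_h$: the failure probability is at most $|\mS_h| \cdot 2^{-R} \le S \cdot 2^{-R}$. Choosing $R = \O(\log(S/\delta))$ (i.e., $R = \lceil \log_2(S/\delta) \rceil$) then drives the total failure probability below $\delta$, yielding the claimed uniform bound $\sum_i \Gap_t^i(s) \le 2\mu(s) = 2\E[\sum_i \Gap_t^i(s)]$ for all $s \in \mS_h$ simultaneously.

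The main subtlety, and indeed where the original proof was erroneous (cf.\ the acknowledgement to \citet{fan2024rl}), is the union bound over $\mS_h$: the naive choice $R = \O(\log(1/\delta))$ suffices only pointwise in $s$, not uniformly. Fixing this requires enlarging $R$ by a factor of $\log S$, which percolates into the final sample complexity as the extra $\log S$ factor flagged in the footnote. Aside from this, the argument is a clean ``median-of-means'' style boosting via independent repetitions, and no further technical machinery is needed; the independence of the $R$ calls (each drawing its own trajectories from the environment) is what makes the $2^{-R}$ product valid.
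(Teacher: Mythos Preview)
Your proposal is correct and follows essentially the same argument as the paper: Markov's inequality on each of the $R$ independent runs, a product bound $2^{-R}$ for the minimum over runs, and a union bound over $s\in\mS_h$ that forces $R=\O(\log(S/\delta))$. You also correctly identify the subtlety that was the source of the original error (the missing $\log S$ in $R$), which the paper acknowledges via the footnote and the acknowledgement to \citet{fan2024rl}.
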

\begin{proof}
Denote the $R$ policy-\textsc{Gap} pairs yielded as $(\tilde \pi_1,\Gap_1),(\tilde \pi_2,\Gap_2),\ldots,(\tilde \pi_R,\Gap_R)$. By definition, $(\tilde \pi_t(s),\Gap_t(s))=(\tilde \pi_{r^\ast(s)}(s),\Gap_{r^\ast(s)}(s))$ where $r^\ast(s)$ is defined as (in \Cref{eq:definition of r*})
\begin{equation*}
r^\ast(s)=\argmin_{r\in [R]}\sum_{i=1}^m \Gap_r^i(s).
\end{equation*}

By Markov inequality, for each $r\in [R]$, $\Pr\left \{\sum_{i=1}^m \Gap_r^i(s)>2\sum_{i=1}^m \E[\Gap^i(s)]\right \}\le \frac{1}{2}$. Thus, as $R_2=\log \frac S\delta$ and $r^\ast(s)$ defined as the $r$ with smallest $\sum_{i=1}^m \Gap_r^i(s)$ in \Cref{eq:definition of r*}, we have $\sum_{i=1}^m \Gap_{r^\ast(s)}^i(s)\le 2\sum_{i=1}^m \E[\Gap^i(s)]$ with probability $1-\frac \delta S$.
Taking a union bound over all $s\in \mS_h$ gives our conclusion that $\sum_{i=1}^m \Gap_r^i(s)\le 2\sum_{i=1}^m \E[\Gap^i(s)]$ for all $s\in \mS_h$ \textit{w.p.} $1-\delta$.
\end{proof}

\section{Analysis of the Improved \textsc{CCE-Approx} Subroutine}
\label{sec:appendix CCE-Approx}

\begin{theorem}[\textsc{Gap} is \textit{w.h.p.} Pessimistic; Formal Version of \Cref{lem:main theorem CCE-Approx}]\label{lem:main theorem CCE-Approx appendix}
Suppose that $\eta=\Omega(\max\{\frac{\sqrt \gamma}{H},\frac \gamma {\beta_1+\beta_2}\})$, ${\beta_1}=\tilde \Omega(\frac{dH}{\sqrt K})$, $\beta_2=\tilde \Omega(\frac HK)$, and $\gamma=\Otil(\frac dK)$. For each execution of $(\tilde \pi,\Gap)=\textsc{CCE-Approx}_h(\bar \pi,\bar V,K)$,
\begin{equation}
\max_{\pi_\ast^i\in \Pi_h^i}\left \{\left (\E_{a\sim \tilde \pi}-\E_{a\sim \pi_\ast^i\times \tilde \pi^{-i}}\right )\left [\big (\ell^i+\P_{h+1}\bar V^i\big )(s,a)\right ]\right \} \le \Gap^i(s),\quad \forall i\in [m], s\in \mS_h,\label{eq:new gap requirement CCE-Approx}
\end{equation}
\textit{w.p.} $1-\O(\delta)$, where $\textsc{Gap}^i(s)$ is defined as (the names are in correspondence to those in \Cref{eq:reg decomposition main text})
\begin{align}
&\quad K\cdot \Gap^i(s)\triangleq \underbrace{\frac{\log \lvert \mA^i\rvert}{\eta}+2\eta \sum_{k=1}^K \sum_{a\in \mA^i} \pi_k^i(a\mid s) \hat Q_k^i(s,a)^2}_{\textsc{Reg-Term}}+ \nonumber\\
&\quad \underbrace{8\sqrt 2\sqrt{2dH^2\sum_{k=1}^K \left \lVert \E_{a\sim \pi_k^i(\cdot \mid s)}\left [{\bm \phi}(s,a)\right ] \right \rVert_{\hat \Sigma_{t,i}^\dagger}^2+\sum_{k=1}^K \left (\E_{a\sim \pi_k^i(\cdot \mid s)}[{\bm \phi}(s,a)^\trans]\hat {\bm \theta}_k^i\right )^2}\log \frac{4KH}{\gamma \delta}}_{\textsc{Bias-1}}+ \nonumber \\
&\quad \underbrace{\O\left (\frac d {\beta_1} \log \frac{dK}{\delta}\right )}_{\textsc{Bias-2}+\textsc{Bonus-2}}+\underbrace{\sum_{k=1}^K \E_{a\sim \pi_k^i(\cdot \mid s)}[B_k^i (s,a)]}_{\textsc{Bonus-1}},\quad \forall i\in [m],s\in \mS_h.\label{eq:definition of gap}
\end{align}
\end{theorem}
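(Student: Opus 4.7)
The plan is to verify the regret-style inequality in \Cref{eq:condition of gap} (which is equivalent to \Cref{eq:new gap requirement CCE-Approx}) by expanding the LHS via the decomposition in \Cref{eq:reg decomposition main text} and showing that each of the five terms $\textsc{Reg-Term}, \textsc{Mag-Reduce}, \textsc{Bias-1}, \textsc{Bias-2}, \textsc{Bonus-1}-\textsc{Bonus-2}$ is absorbed by the corresponding summand in the definition of $\Gap^i(s)$ in \Cref{eq:definition of gap}. The proof will fix an agent $i\in [m]$ and a state $s\in \mS_h$ throughout, and conclude by a union bound over all $(i,s)$ pairs.

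For the $\textsc{Reg-Term}$, the idea is to invoke the standard EXP3 bound (\Cref{lem:exp3}) on the loss sequence $\hat Q_k^i(s,\cdot)-B_k^i(s,\cdot)$ fed into the update \Cref{eq:exp3}. The nontrivial step is verifying the admissibility condition $\hat Q_k^i-B_k^i\ge -1/\eta$ needed by EXP3 despite our aggressive regularizer $\gamma=\Otil(d/K)$: here the Magnitude-Reduced Estimator (\Cref{lem:magnitude reduced}) kicks in and guarantees $\hat Q_k^i\ge \hat m_k^i\ge -\Otil(1/\sqrt K)$, while the chosen scales $\beta_1=\tilde\Omega(dH/\sqrt K)$ and $\beta_2=\tilde\Omega(H/K)$ keep $B_k^i=\Otil(1/\sqrt K)$; the condition then holds as long as $\eta=\Omega(\sqrt{\gamma}/H)$ and $\eta=\Omega(\gamma/(\beta_1+\beta_2))$, which are exactly our hypotheses. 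The squared term is then dominated by $2\eta\sum_k \E_{a\sim \pi_k^i}[\hat Q_k^i(s,a)^2]$ up to routine manipulation of the bonus-squared contribution.

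For $\textsc{Bias-1}$, the plan is to split $\theta_k^i-\hat\theta_k^i=(\theta_k^i-\E[\hat\theta_k^i])+(\E[\hat\theta_k^i]-\hat\theta_k^i)$. The intrinsic-bias part is controlled by a standard linear-MDP calculation leveraging $\gamma I$ in \Cref{eq:definition of Sigma hat} and $\lVert \bm\theta_k^i\rVert_2\le \sqrt{d}H$, which is absorbed as a lower-order contribution. The estimation-error part is a martingale $X_k=\E_{a\sim \pi_k^i(\cdot\mid s)}[\bm\phi(s,a)^\trans]\hat{\bm\theta}_k^i$, to which I would apply the Adaptive Freedman Inequality \Cref{eq:more adaptive Freedman main text} (\Cref{lem:more adaptive Freedman}) rather than classical Freedman — the latter would suffer a prohibitive $\sup_k|X_k|\lesssim H\gamma^{-1}=\Otil(K)$ factor. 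Bounding the conditional second moment by $2dH^2\lVert \E_{a\sim \pi_k^i}[\bm\phi(s,a)]\rVert_{\hat\Sigma_{t,i}^\dagger}^2$ and retaining $X_k^2$ as the realized square gives exactly the $\textsc{Bias-1}$ summand in \Cref{eq:definition of gap}, which is \emph{observable} and hence admissible in a stochastic $\Gap$. The $\textsc{Mag-Reduce}$ term is handled by an analogous Adaptive Freedman argument on the centered sum of differences between $\hat Q_k^i$ and $\bm\phi^\trans\hat{\bm\theta}_k^i$, using that $(s_\kappa^{\text{mag}},a_{\kappa,i}^{\text{mag}})$ in \Cref{eq:definition of m hat} are i.i.d.\ copies of $(s_{k,h},a_{k,h}^i)$ conditional on $\pi_k$; because $|\hat m_k^i|=\Otil(1/\sqrt K)$ by \Cref{lem:magnitude reduced}, this concentration is strictly smaller than that of $\textsc{Bias-1}$ and is swallowed into the same summand.

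The hard part will be the cancellation $\textsc{Bias-2}-\textsc{Bonus-2}\le \Otil(d/\beta_1 \cdot \log(dK/\delta))$, which is the heart of the \emph{action-dependent bonus} mechanism in \Cref{sec:action-dependent bonus}. Writing $v_k^i(s,a)=\bm\phi(s,a)^\trans(\hat{\bm\theta}_k^i-\E[\hat{\bm\theta}_k^i])$, I would verify (i) the pointwise concentration $|v_k^i(s,a)|\le H\sum_j \bm\phi(s,a)[j]\cdot \sup_{(s',a')}(\hat\Sigma_{t,i}^\dagger\bm\phi(s',a'))[j]$ a.s.\ and conditional second moment bounded by $H^2\lVert \bm\phi(s,a)\rVert_{\hat\Sigma_{t,i}^\dagger}^2$; (ii) apply the Adaptive Freedman Inequality (\Cref{lem:adaptive Freedman}) to obtain $\sum_k v_k^i(s,a_\ast^i)\lesssim \beta_1\sum_k \lVert \bm\phi(s,a_\ast^i)\rVert_{\hat\Sigma_{t,i}^\dagger}^2+\beta_2\sum_k V^i(s,a_\ast^i)+ \O(\log(1/\delta))$ with high probability; and (iii) take a union bound over $a_\ast^i\in \mA^i$ at a cost of only $\log\lvert \mA^i\rvert$. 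The action-dependent structure of $B_k^i$ is exactly designed so that the RHS of (ii) equals $\sum_k B_k^i(s,a_\ast^i)$, yielding the pointwise cancellation. The residual $\Otil(d/\beta_1\log(dK/\delta))$ comes from converting the Adaptive-Freedman constants using $\lVert \hat\Sigma_{t,i}^\dagger\rVert_2\le \gamma^{-1}=\Otil(K/d)$ and the prescribed $\beta_1$. The subtlety is threading the dependence among $\beta_1,\beta_2,\gamma,\eta$ so that the admissibility condition for EXP3, the quadratic variation bound for Adaptive Freedman, and the cancellation of $\textsc{Bias-2}$ all hold simultaneously; the classical Freedman inequality would break all three because $\sup_{k,s,a}|v_k^i(s,a)|$ can be of order $K$, which is precisely why \Cref{lem:new main theorem}'s loosening from a deterministic to a stochastic $\Gap$ (and the action-dependent bonus) is indispensable. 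Finally, $\textsc{Bonus-1}$ requires no concentration since $\pi_k^i$ and $B_k^i$ are both observable, and enters $\Gap$ by definition.
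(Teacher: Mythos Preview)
Your plan matches the paper's proof, which proceeds exactly via the decomposition in \Cref{eq:reg decomposition main text} and handles each term by the lemmas you outline (EXP3 for \textsc{Reg-Term}, intrinsic-bias plus Adaptive Freedman for \textsc{Bias-1}, action-dependent bonuses with Adaptive Freedman for \textsc{Bias-2}$-$\textsc{Bonus-2}, and a martingale telescoping for \textsc{Mag-Reduce}). Two quantitative points deserve correction, though neither changes the approach.

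First, the magnitudes of $\hat m_k^i$ and $B_k^i$ are inverted in your write-up: the paper shows $|\hat m_k^i(s,a)|\le \sqrt{3}H/\sqrt\gamma=\Otil(H\sqrt{K/d})$ and $B_k^i(s,a)\le(\beta_1+\beta_2)\gamma^{-1}=\Otil(H\sqrt K)$, both of order $\sqrt K$ rather than $1/\sqrt K$. This is exactly why the hypothesis $\eta=\Omega(\max\{\sqrt\gamma/H,\gamma/(\beta_1+\beta_2)\})$ is needed for the EXP3 admissibility condition $\hat Q_k^i-B_k^i\ge -1/\eta$; with your stated bounds that condition would be vacuous. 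Second, the intrinsic bias in \textsc{Bias-1} is not merely ``lower order'': the paper (\Cref{lem:intrinsic bias}) obtains $\frac{\beta_1}{4}\sum_k\lVert\E_{a\sim\pi_k^i}[\bm\phi(s,a)]\rVert_{\hat\Sigma_{t,i}^\dagger}^2+\O(\frac{d}{\beta_1}\log\frac{dK}{\delta})$, the first piece being absorbed by \textsc{Bonus-1} (via $B_k^i\ge\beta_1\lVert\bm\phi\rVert_{\hat\Sigma_{t,i}^\dagger}^2$ and Jensen) and the second piece being precisely what populates the $\textsc{Bias-2}+\textsc{Bonus-2}$ summand in \Cref{eq:definition of gap}. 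Your \textsc{Mag-Reduce} sketch is also slightly off: the paper does not invoke smallness of $|\hat m_k^i|$ but rather telescopes into $K{+}1$ martingales (the i.i.d.\ samples $(s_\kappa^{\text{mag}},a_{\kappa,i}^{\text{mag}})$ and $(s_{k,h},a_{k,h}^i)$ all share a common conditional mean) and uses $\E[(\cdot)_-^2]\le\E[(\cdot)^2]$ to dominate each by the concentrations already handled in \textsc{Bias-1} and \textsc{Bias-2}. Finally, your conditional second moment for \textsc{Bias-2} is missing a $d$: the paper obtains $\E[X_k^2\mid\mathcal F_{k-1}]\le 2dH^2\lVert\bm\phi_\ast^i\rVert_{\hat\Sigma_{t,i}^\dagger}^2$ because $\lVert\bm\theta_k^i\rVert_2^2\le dH^2$.
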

\begin{proof}
To make the proof easy to read, we first restate \Cref{eq:new gap requirement CCE-Approx} using the notations of \textsc{CCE-Approx}. 
Let ${\bm \theta}_k^i$ be the Q-function kernel induced by the next-layer V-function $\bar V^i$ and $\pi_k^{-i}$, \textit{i.e.},
\begin{equation}\label{eq:definition of theta}
{\bm \phi}(s,a^i)^\trans {\bm \theta}_k^i=Q_k^i(s,a)\triangleq \E_{\bm a^{-i}\sim \pi_{k}^{-i}(\cdot \mid s)}\left [(\ell^i+\P \bar V^i)(s,\bm a)\right ],\quad \forall s\in \mS_h,a^i\in \mA^i.
\end{equation}
Then $\ell_{k,h}^i+\bar V^i(s_{k,h+1}^i)$ is a sample with mean $\bm \phi(s,a^i)^\trans \bm \theta_k^i$.

Let $\pi_\ast^i$ be the best-response policy of agent $i$ when facing $\tilde \pi^{-i}$ (which is the average policy for $K$ episodes, \textit{i.e.}, $\tilde \pi^{-i}=\frac 1K\sum_{k=1}^K \pi_k^{-i}$).
We need to ensure the following with probability $1-\delta$:
\begin{equation*}
\Gap^i(s)\ge \sum_{a\in \mA^i} (\tilde \pi^i(a\mid s)-\pi_\ast^i(a\mid s)) \left ({\bm \phi}(s,a)^\trans {\bm \theta}_k^i\right ),\quad \forall i\in [m], s\in \mS_h,
\end{equation*}
while $\Gap$ is allowed to involve data-dependent quantities that are available during run-time.
By plugging in the definition of $\tilde \pi^{-i}$ and decomposing the right-handed-side, this inequality becomes
\begin{align}
K\cdot \Gap^i(s)&\ge \sum_{k=1}^K \sum_{a\in \mA^i} (\pi_k^i(a\mid s)-\pi_\ast^i(a\mid s)) \left ({\bm \phi}(s,a)^\trans {\bm \theta}_k^i\right )\nonumber\\
&=\underbrace{\sum_{k=1}^K \sum_{a\in \mA^i} (\pi_k^i(a\mid s)-\pi_\ast^i(a\mid s)) \left (\hat Q_k^i(s,a)-B_k^i(s,a)\right )}_{\mathsc{Reg-Term}}+\nonumber\\
&\quad \underbrace{\sum_{k=1}^K \sum_{a\in \mA^i} \pi_k^i(a\mid s) {\bm \phi}(s,a)^\trans ({\bm \theta}_k^i-\hat {\bm \theta}_k^i)}_{\mathsc{Bias-1}}+\underbrace{\sum_{k=1}^K \sum_{a\in \mA^i} \pi_\ast^i(a\mid s) {\bm \phi}(s,a)^\trans (\hat {\bm \theta}_k^i-{\bm \theta}_k^i)}_{\mathsc{Bias-2}}+\nonumber\\
&\quad \underbrace{\sum_{k=1}^K \sum_{a\in \mA^i} \pi_k^i(a\mid s) B_k^i(s,a)}_{\mathsc{Bonus-1}}-\underbrace{\sum_{k=1}^K \sum_{a\in \mA^i} \pi_\ast^i(a\mid s) B_k^i(s,a)}_{\mathsc{Bonus-2}}+\nonumber\\
&\quad \underbrace{\sum_{k=1}^K \sum_{a\in \mA^i}(\pi_k^i(a\mid s)-\pi_\ast^i(a\mid s))\left ({\bm \phi}(s,a)^\trans \hat {\bm \theta}_k^i - \hat Q_k^i(s,a)\right )}_{\mathsc{Mag-Reduce}},\label{eq:condition of gap appendix}
\end{align}
for all player $i\in [m]$ and state $s\in \mS_h$, with probability $1-\delta$.

In \Cref{lem:reg-term,lem:Bias-1,lem:Bias-2+Bonus,lem:bonus-1,lem:mag-reduce}, we control each term in the RHS of \Cref{eq:condition of gap appendix} and show that \Cref{eq:condition of gap appendix} indeed holds with probability $1-\O(\delta)$ when \textsc{Gap} is defined in \Cref{eq:definition of gap}.
\end{proof}

\subsection{Bounding \textsc{Reg-Term} via EXP3 Regret Guarantee}
In EXP3, one typical requirement is that the loss vector $\hat y_k$ fed into EXP3 should satisfy $\hat y_k(a)\ge -1/\eta$ (see \Cref{lem:exp3}).
To comply with the condition, people usually control the Q-estimate via $\lvert {\bm \phi}(s,a)^\trans \hat{\bm \theta}_k^i\rvert\lesssim \lVert \hat \Sigma_{t,i}^\dagger\rVert_2\le \gamma^{-1}$ \citep{luo2021policy} and set $\eta\approx \gamma$, suffering loss of order $\Otil(\gamma^{-1})$.

However, $\gamma^{-1}$ can be prohibitively large when setting $\gamma=\Otil(K^{-1})$, which \Cref{lem:Lem 14 of Liu et al} requires.
Fortunately, thanks to the Magnitude-Reduced Estimator by \citet{dai2023refined}, $\hat Q_k^i\ge \hat m_k^i$ (defined in \Cref{eq:definition of Q hat}) can be bounded from below by $-\Otil(K^{-1/2})$ and thus we can pick the standard learning rate of $\eta=\Otil(K^{-1/2})$.
The other component in $\hat y_k$ is the bonuses, which is $B_k^i(s,a)\triangleq \beta_1 \lVert \bm \phi(s,a)\rVert_{\hat \Sigma_{t,i}^\dagger}^2 + \beta_2 \sum_{j=1}^d \bm \phi(s,a)[j]\times (\sup_{(s',a')\in \mS_h\times \mA^i} \hat \Sigma_{t,i}^\dagger \bm \phi(s',a'))[j]$ for all $k$.

\begin{theorem}\label{lem:reg-term}
When $\eta=\Omega(\max\{\frac{\sqrt \gamma}{H},\frac \gamma {\beta_1+\beta_2}\})$, with probability $1-\delta$, for all $i\in [m]$ and $s\in \mS_h$:
\begin{equation*}
\textsc{Reg-Term}\le \frac{\log \lvert \mA^i\rvert}{\eta}+2\eta \sum_{k=1}^K \sum_{a\in \mA^i} \pi_k^i(a\mid s) \hat Q_k^i(s,a)^2+2\eta \sum_{k=1}^K \sum_{a\in \mA^i} \pi_k^i(a\mid s) B_k^i(s,a)^2.
\end{equation*}
\end{theorem}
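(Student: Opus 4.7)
The plan is to interpret $\{\pi_k^i(\cdot\mid s)\}_{k=1}^K$ at each fixed $s\in \mS_h$ as the iterates of a standard EXP3 instance on action set $\mA^i$ with learning rate $\eta$ and surrogate losses $\hat y_k(a) \triangleq \hat Q_k^i(s,a) - B_k^i(s,a)$ (\textit{c.f.} \Cref{eq:exp3}), and then invoke the EXP3 regret lemma (\Cref{lem:exp3}). The claimed inequality follows after splitting the quadratic term via $(\hat Q_k^i - B_k^i)^2 \le 2(\hat Q_k^i)^2 + 2(B_k^i)^2$.

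The substantive step is verifying the EXP3 admissibility condition $\eta \hat y_k(a) \ge -1$ uniformly in $k, a, s$. I split this into a lower bound on $\hat Q_k^i$ and an upper bound on $B_k^i$. For the former, inspecting \Cref{eq:definition of Q hat,eq:definition of theta hat} yields
$$\bm\phi(s,a)^\trans \hat{\bm\theta}_k^i - H\bigl(\bm\phi(s,a)^\trans \hat\Sigma_{t,i}^\dagger \bm\phi(s_{k,h}^i, a_{k,h}^i)\bigr)_- = \bigl(\bm\phi(s,a)^\trans \hat\Sigma_{t,i}^\dagger \bm\phi(s_{k,h}^i, a_{k,h}^i)\bigr)\bigl(\ell_{k,h}^i + \bar V^i(s_{k,h+1}^i)\bigr) - H(\cdot)_- \ge 0,$$
using $\ell_{k,h}^i + \bar V^i(s_{k,h+1}^i) \in [0, H]$, so the $-H(\cdot)_-$ correction exactly absorbs the negative part of the leading inner product. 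Thus $\hat Q_k^i(s,a) \ge \hat m_k^i(s,a)$, and the magnitude-reduction concentration (\Cref{lem:magnitude reduced}, imported from Theorem 4.1 of \citet{dai2023refined}) gives $\hat m_k^i(s,a) \ge -\Otil(H\sqrt{\gamma})$ uniformly in $(s,a,k)$ with probability $1-\delta$; the assumed scaling of $\eta$ relative to $\sqrt\gamma/H$ then upgrades this to $\hat Q_k^i(s,a) \ge -\O(1/\eta)$.

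For the bonus, the spectral bound $\lVert \hat\Sigma_{t,i}^\dagger\rVert_2 \le \gamma^{-1}$ (which is immediate from the $\gamma I$ regularizer in \Cref{eq:definition of Sigma hat}) combined with $\lVert \bm\phi\rVert_2 \le 1$ yields $B_k^i(s,a) \le \O((\beta_1 + d\beta_2)/\gamma)$, and the assumed scaling of $\eta$ relative to $\gamma/(\beta_1+\beta_2)$ delivers $-B_k^i(s,a) \ge -\O(1/\eta)$. Combining the two gives $\hat y_k(a) \ge -1/\eta$ (with absolute constants absorbed into the $\Omega(\cdot)$), so that \Cref{lem:exp3} applies independently at each $s \in \mS_h$ (reusing the same high-probability event above), producing
$$\textsc{Reg-Term} \le \frac{\log \lvert \mA^i \rvert}{\eta} + \eta \sum_{k=1}^K \sum_{a\in \mA^i} \pi_k^i(a\mid s)\bigl(\hat Q_k^i(s,a) - B_k^i(s,a)\bigr)^2.$$
Applying $(x-y)^2 \le 2x^2 + 2y^2$ to the square closes the argument. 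The only stochastic failure is the concentration event for $\hat m_k^i$, hence the overall $1-\delta$ guarantee. The main obstacle is the uniform lower bound on $\hat m_k^i$ across the infinite set $\mS_h \times \mA^i$, but this is handled wholesale by \Cref{lem:magnitude reduced}; what remains is the parameter bookkeeping that reconciles the aggressive $\gamma = \Otil(d/K)$ choice with EXP3 admissibility, which is exactly what the stated conditions on $\eta, \beta_1, \beta_2, \gamma$ encode.
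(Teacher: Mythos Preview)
Your outline matches the paper's: establish $\hat Q_k^i(s,a) - B_k^i(s,a) \ge -1/\eta$, invoke \Cref{lem:exp3}, then split the square. Two steps in the admissibility verification, however, are not adequately supported.

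First, the lower bound on $\hat m_k^i$ does not follow from \Cref{lem:magnitude reduced}. That lemma only records the algebraic facts $\E[\hat Z]=\E[Z]$, $\E[\hat Z^2]\le 6\E[Z^2]$, $\hat Z\ge \E[(Z)_-]$; it gives no uniform high-probability control on the empirical average $\hat m_k^i$ defined in \Cref{eq:definition of m hat}. The paper instead bounds $(\hat m_k^i(s,a))^2$ directly: by Cauchy--Schwarz and $(x)_-^2\le x^2$,
\[
(\hat m_k^i(s,a))^2 \le H^2\, \bm\phi(s,a)^\trans \hat\Sigma_{t,i}^\dagger \,\tilde\Sigma_{t,i}^{\text{mag}}\, \hat\Sigma_{t,i}^\dagger \,\bm\phi(s,a),
\qquad \tilde\Sigma_{t,i}^{\text{mag}}\triangleq\tfrac1K\textstyle\sum_\kappa \bm\phi(s_\kappa^{\text{mag}},a_{\kappa,i}^{\text{mag}})\bm\phi(s_\kappa^{\text{mag}},a_{\kappa,i}^{\text{mag}})^\trans,
\]
and then applies the matrix concentration results \Cref{lem:second corol of Lem A.4 Dai} and \Cref{lem:Corol 10 of Liu et al} to sandwich $\tilde\Sigma_{t,i}^{\text{mag}}$ by a multiple of $(\tilde\Sigma_{t,i}^{\text{cov}}+\gamma I)=(\hat\Sigma_{t,i}^\dagger)^{-1}$, yielding $(\hat m_k^i)^2\le 3H^2/\gamma$ and hence $\hat m_k^i\ge -2H/\sqrt\gamma$. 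Note the bound is $-\O(H/\sqrt\gamma)$, not $-\Otil(H\sqrt\gamma)$ as you wrote; with $\gamma=\Otil(d/K)$ small these differ by a factor of $1/\gamma$, and it is the former that the hypothesis on $\eta$ relative to $\sqrt\gamma/H$ is designed to absorb.

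Second, your bound $B_k^i(s,a)\le\O((\beta_1+d\beta_2)/\gamma)$ is too loose to match the stated hypothesis $\eta=\Omega(\gamma/(\beta_1+\beta_2))$, which carries no $d$ on $\beta_2$. The paper removes the extraneous $d$ by the Cauchy--Schwarz step recorded in \Cref{eq:sum of sup in bonus}, which shows the $\beta_2$-part of the bonus is at most $\beta_2\sqrt{\gamma^{-1}}\,\lVert\bm\phi(s,a)\rVert_{\hat\Sigma_{t,i}^\dagger}\le \beta_2/\gamma$, so that $B_k^i(s,a)\le(\beta_1+\beta_2)/\gamma$. With the algorithm's specific tunings ($\beta_1\approx dH/\sqrt K$, $\beta_2\approx H/K$) the extra $d$ happens to be harmless since $d\beta_2\lesssim\beta_1$, but under the theorem's hypotheses alone your argument does not close.
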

\begin{proof}
The only thing we need to verify before invoking \Cref{lem:exp3} is that $\hat Q_k^i(s,a) - B_k^i(s,a)\ge -1/\eta$. We first show $\hat Q_k^i(s,a)\ge -1/\eta$.
Recall the definition of $\hat Q_k^i(s,a)$ in \Cref{eq:definition of Q hat}:
\begin{align*}
&\quad \hat Q_k^i(s,a)={\bm \phi}(s,a)^\trans \hat {\bm \theta}_k^i-H\left ({\bm \phi}(s,a)^\trans \hat \Sigma_{t,i}^\dagger {\bm \phi}(s_{k,h}^i,a_{k,h}^i)\right )_-+\hat m_k^i(s,a)\\
&={\bm \phi}(s,a)^\trans \hat \Sigma_{t,i}^\dagger ~ {\bm \phi}(s_{k,h}^i,a_{k,h}^i) ~ \left (\ell_{k,h}^i+\bar V^i(s_{k,h+1}^i)\right )-H\left ({\bm \phi}(s,a)^\trans \hat \Sigma_{t,i}^\dagger {\bm \phi}(s_{k,h}^i,a_{k,h}^i)\right )_-+\hat m_k^i(s,a).
\end{align*}

As $\lvert \ell_{k,h}^i+\bar V^i(s_{k,h+1}^i)\rvert\le H$, we know $\hat Q_k^i(s,a)\ge \hat m_k^i(s,a)$ as $x-(x)_-\ge 0$ holds for any $x\in \mathbb R$.
Thus, to lower bound $\hat Q_k^i(s,a)$, it suffices to lower bound $\hat m_k^i(s,a)$. Notice that
\begin{align*}
\left (\hat m_k^i(s,a)\right )^2
&=\left (\frac HK\sum_{\kappa=1}^K \left ({\bm \phi}(s,a)^\trans \hat \Sigma_{t,i}^\dagger {\bm \phi}(s_\kappa^\text{mag},\tilde a_{\kappa,i}^\text{mag})\right )_-\right )^2\\
&\overset{(a)}{\le} \frac{H^2}{K}\sum_{\kappa=1}^K \left ({\bm \phi}(s,a)^\trans \hat \Sigma_{t,i}^\dagger {\bm \phi}(s_\kappa^\text{mag},\tilde a_{\kappa,i}^\text{mag})\right )_-^2\\
&\overset{(b)}{\le} \frac{H^2}{K}\sum_{\kappa=1}^K \left ({\bm \phi}(s,a)^\trans \hat \Sigma_{t,i}^\dagger {\bm \phi}(s_\kappa^\text{mag},\tilde a_{\kappa,i}^\text{mag})\right )^2\\
&=H^2 {\bm \phi}(s,a)^\trans \hat \Sigma_{t,i}^\dagger \left (\frac 1K\sum_{\kappa=1}^K {\bm \phi}(s_\kappa^\text{mag},\tilde a_{\kappa,i}^\text{mag}){\bm \phi}(s_\kappa^\text{mag},\tilde a_{\kappa,i}^\text{mag})^\trans\right ) \hat \Sigma_{t,i}^\dagger{\bm \phi}(s,a).
\end{align*}
where (a) used Cauchy-Schwartz inequality and (b) used Jensen inequality and the fact that $(x)_-^2\le x^2$ for all $x\in \mathbb R$.
Let the true covariance of $\bar \pi$ of layer $h\in [H]$, and agent $i\in [m]$ be
\begin{equation}\label{eq:definition of covariance}
\Sigma_{t,i}=\E_{(s,a)\sim_h \bar \pi}\left [{\bm \phi}(s,a){\bm \phi}(s,a)^\trans \right ].
\end{equation}

The average matrix in the middle, namely
\begin{equation*}
\tilde \Sigma_{t,i}^{\text{mag}}\triangleq \frac 1K\sum_{\kappa=1}^K {\bm \phi}(s_\kappa^\text{mag},\tilde a_{\kappa,i}^\text{mag}){\bm \phi}(s_\kappa^\text{mag},\tilde a_{\kappa,i}^\text{mag})^\trans
\end{equation*}
is an empirical estimation of the true covariance $\Sigma_{t,i}$. Hence, by stochastic matrix concentration results stated as \Cref{lem:second corol of Lem A.4 Dai} of \Cref{sec:stochastic matrix concentration}, we know
\begin{equation*}
\tilde \Sigma_{t,i}^{\text{mag}}\preceq \frac 32 \Sigma_{t,i} + \frac 32 \frac{d}{K} \log \left (\frac{dK}{\delta}\right ) I \quad \text{with probability }1-\frac \delta K.
\end{equation*}

Meanwhile, the following matrix from \Cref{eq:definition of Sigma hat} (where we defined $\hat \Sigma_{t,i}^\dagger$) is yet another empirical estimation of $\Sigma_{t,i}$ that is independent to $\tilde \Sigma_{t,i}^{\text{mag}}$:
\begin{equation}\label{eq:definition of Sigma cov}
\tilde \Sigma_{t,i}^{\text{cov}}\triangleq \frac 1K\sum_{\kappa=1}^K {\bm \phi}(s_\kappa^\text{cov},\tilde a_{\kappa,i}^\text{cov}){\bm \phi}(s_\kappa^\text{cov},\tilde a_{\kappa,i}^\text{cov})^\trans.
\end{equation}

By similar arguments stated as \Cref{lem:Corol 10 of Liu et al} of \Cref{sec:stochastic matrix concentration}, we have
\begin{equation*}
\Sigma_{t,i}\preceq 2 \tilde \Sigma_{t,i}^{\text{cov}} + 3 \frac{d}{K} \log \left (\frac{dK}{\delta}\right ) I \quad \text{with probability }1-\frac \delta K.
\end{equation*}

Taking a union bound over all $k\in [K]$, the following holds for all $k$ with probability $1-2\delta$:
\begin{align*}
\left (\hat m_k^i(s,a)\right )^2&\le H^2 {\bm \phi}(s,a)^\trans \hat \Sigma_{t,i}^\dagger \tilde \Sigma_{t,i}^{\text{mag}} \hat \Sigma_{t,i}^\dagger {\bm \phi}(s,a)\\
&\le 3H^2 {\bm \phi}(s,a)^\trans \hat \Sigma_{t,i}^\dagger \left (\tilde \Sigma_{t,i}^{\text{cov}}+\gamma I\right ) \hat \Sigma_{t,i}^\dagger {\bm \phi}(s,a)\\
&= 3H^2 {\bm \phi}(s,a)^\trans \hat \Sigma_{t,i}^\dagger {\bm \phi}(s,a)\le 3 H^2 / \gamma.
\end{align*}

Therefore, we have $\hat Q_k^i(s,a)\ge - \hat m_k^i(s,a) \ge - 2 H / \sqrt \gamma$ with probability $1-\delta$, which is at least $-1/\eta$ with our choice of $\eta$.

For the bonus term $B_k^i$, we consider the two parts related to $\beta_1$ and $\beta_2$ separatedly. For the $\beta_1$-term, we have the following upper bound since $\hat \Sigma_{t,i}^\dagger=(\tilde \Sigma_{t,i}^{\text{cov}}+\gamma I)^{-1}\preceq \gamma^{-1} I$:
\begin{align*}
{\beta_1} \lVert {\bm \phi}(s,a)\rVert_{\hat \Sigma_{t,i}^\dagger}^{2}\le {\beta_1} \left \lVert \hat \Sigma_{t,i}^\dagger\right \rVert_2\le \frac{{\beta_1}}{\gamma},
\end{align*}

For the $\beta_2$-related term, notice that for any $j\in [d]$, we have
\begin{equation*}
(\bm \phi(s,a))[j]\times \sup_{(s',a')\in \mS_h\times \mA^i} (\hat \Sigma_{t,i}^\dagger \bm \phi(s',a'))[j]=\sup_{(s',a')\in \mS_h\times \mA^i} \left ((\bm \phi(s,a))^\trans \bm e_j \bm e_j^\trans \hat \Sigma_{t,i}^\dagger \bm \phi(s',a') \right ),
\end{equation*}
where $\bm e_j\in \mathbb R^d$ is the one-hot vector at the $j$-th coordinate. By Cauchy-Schwartz, this is further bounded by $\lVert (\bm \phi(s,a))^\trans \bm e_j \bm e_j^\trans\rVert_{\hat \Sigma_{t,i}^\dagger}\times \sup_{(s',a')\in \mS_h\times \mA^i} \lVert \bm \phi(s',a')\rVert_{\hat \Sigma_{t,i}^\dagger}$.
We also have $\lVert \bm \phi(s',a')\rVert_{\hat \Sigma_{t,i}^\dagger}\le \lVert \bm \phi(s',a')\rVert_2\times \sqrt{\lVert \hat \Sigma_{t,i}^\dagger\rVert_2}\le \sqrt{\gamma^{-1}}$ for any $(s',a')\in \mS_h\times \mA^i$. Thus
\begin{align}
&\quad \sum_{j=1}^d (\bm \phi(s,a))[j]\times \sup_{(s',a')\in \mS_h\times \mA^i} (\hat \Sigma_{t,i}^\dagger \bm \phi(s',a'))[j]
\le \sqrt{\gamma^{-1}} \sum_{j=1}^d \lVert (\bm \phi(s,a))^\trans \bm e_j \bm e_j^\trans\rVert_{\hat \Sigma_{t,i}^\dagger} \nonumber\\
&= \sqrt{\gamma^{-1}} \left \lVert \sum_{j=1}^d (\bm \phi(s,a))^\trans \bm e_j \bm e_j^\trans\right \rVert_{\hat \Sigma_{t,i}^\dagger}=\sqrt{\gamma^{-1}} \lVert \bm \phi(s,a)\rVert_{\hat \Sigma_{t,i}^\dagger}. \label{eq:sum of sup in bonus}
\end{align}

So the $\beta_2$-related term is controlled by $\beta_2 \gamma^{-1}$, which means $B_k^i(s,a)\le (\beta_1+\beta_2) \gamma^{-1}$.

Thus the condition that $\hat y_k(a)\ge -1/\eta$ holds once $\eta^{-1} \le \sqrt{\frac{3H^2}{\gamma}}+\frac{\beta_1+\beta_2} \gamma$, \textit{i.e.}, $\eta=\Omega(\max\{\frac{\sqrt \gamma}{H},\frac \gamma {\beta_1+\beta_2}\})$. Applying the EXP3 regret bound (\Cref{lem:exp3}) gives
\begin{align*}
\mathsc{Reg-Term}&\le \frac{\log \lvert \mA^i\rvert}{\eta}+\eta \sum_{k=1}^K \sum_{a\in \mA^i} \pi_k^i(a\mid s) \left (\hat Q_k^i(s,a)^2-B_k^i(s,a)\right )^2\\
&\le \frac{\log \lvert \mA^i\rvert}{\eta}+2\eta \sum_{k=1}^K \sum_{a\in \mA^i} \pi_k^i(a\mid s) \hat Q_k^i(s,a)^2+2 \eta \sum_{k=1}^K \sum_{a\in \mA^i} \pi_k^i(a\mid s) B_k^i(s,a)^2\\
&\le \frac{\log \lvert \mA^i\rvert}{\eta}+2\eta \sum_{k=1}^K \sum_{a\in \mA^i} \pi_k^i(a\mid s) \hat Q_k^i(s,a)^2+2 \sum_{k=1}^K \sum_{a\in \mA^i} \pi_k^i(a\mid s) B_k^i(s,a),
\end{align*}
where the last step uses $\eta B_k^i(s,a)\le 1$. All these terms are available during run-time, so the algorithm can include them into $\Gap_t^i(s)$.
\end{proof}

\subsection{Bounding \textsc{Bias-1} via Adaptive Freedman Inequality}\label{sec:Bias-1}
\begin{theorem}\label{lem:Bias-1}
With probability $1-2\delta$, we have the following for all $i\in [m]$ and $s\in \mS_h$:
\begin{align*}
\textsc{Bias-1}&\le \sum_{k=1}^K \frac {\beta_1} 4 \left \lVert \E_{a\sim \pi_k^i(\cdot \mid s)}\left [{\bm \phi}(s,a)^\trans\right ]\right \rVert_{\hat \Sigma_{t,i}^\dagger}^2+\O\left (\frac d {\beta_1} \log \frac{dK}{\delta}\right )+\\
&\quad 8\sqrt 2\sqrt{2dH^2\sum_{k=1}^K \left \lVert \E_{a\sim \pi_k^i(\cdot \mid s)}\left [{\bm \phi}(s,a)\right ] \right \rVert_{\hat \Sigma_{t,i}^\dagger}^2+\sum_{k=1}^K \left (\E_{a\sim \pi_k^i(\cdot \mid s)}[{\bm \phi}(s,a)^\trans]\hat {\bm \theta}_k^i\right )^2}\log \frac{4KH}{\gamma \delta}.
\end{align*}
\end{theorem}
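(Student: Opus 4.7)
The plan is to split $\textsc{Bias-1}$ into an \emph{intrinsic bias} piece (which quantifies the effect of the ridge $\gamma I$ in $\hat\Sigma_{t,i}^\dagger$) and an \emph{estimation error} martingale, then bound the two pieces by AM-GM and the Adaptive Freedman inequality of \Cref{lem:more adaptive Freedman}, respectively. Write $\bm\psi_k := \E_{a\sim\pi_k^i(\cdot\mid s)}[\bm\phi(s,a)]$, $X_k := \bm\psi_k^\trans \hat{\bm\theta}_k^i$, and $\mu_k := \E[X_k\mid \mathcal F_{k-1}]$, where $\mathcal F_{k-1}$ includes the random matrix $\hat\Sigma_{t,i}^\dagger$ (computed once at the start of the epoch from the $s^{\mathrm{cov}}$-samples). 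Because the trajectory drawn in round $k$ uses $\bar\pi$ for agent $i$ at layer $h$ and $\pi_k^{-i}$ for the remaining agents, a direct computation gives $\mu_k = \bm\psi_k^\trans \hat\Sigma_{t,i}^\dagger \Sigma_{t,i}\,\bm\theta_k^i$ with $\bm\theta_k^i$ as in \Cref{eq:definition of theta}. Decompose
\begin{equation*}
\textsc{Bias-1}
= \underbrace{\sum_{k=1}^K \bm\psi_k^\trans (I - \hat\Sigma_{t,i}^\dagger \Sigma_{t,i})\,\bm\theta_k^i}_{\text{Intrinsic}}
+ \underbrace{\sum_{k=1}^K (\mu_k - X_k)}_{\text{Martingale}}.
\end{equation*}

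For the intrinsic piece, I would rewrite $I-\hat\Sigma_{t,i}^\dagger\Sigma_{t,i}=\hat\Sigma_{t,i}^\dagger(\tilde\Sigma_{t,i}^{\mathrm{cov}}+\gamma I-\Sigma_{t,i})$, apply Cauchy--Schwarz in the $\hat\Sigma_{t,i}^\dagger$-norm, and then AM-GM with weight $\beta_1/4$, to obtain
\begin{equation*}
\text{Intrinsic}
\le \sum_k\frac{\beta_1}{4}\|\bm\psi_k\|_{\hat\Sigma_{t,i}^\dagger}^2
+ \frac{1}{\beta_1}\sum_k \bigl\| (\tilde\Sigma_{t,i}^{\mathrm{cov}}+\gamma I-\Sigma_{t,i})\bm\theta_k^i\bigr\|_{\hat\Sigma_{t,i}^\dagger}^2 .
\end{equation*}
Using the two-sided concentration of $\tilde\Sigma_{t,i}^{\mathrm{cov}}$ around $\Sigma_{t,i}$ from \Cref{lem:second corol of Lem A.4 Dai} and \Cref{lem:Corol 10 of Liu et al}, together with the choice $\gamma=\tilde\Theta(d/K)$, the operator $\tilde\Sigma_{t,i}^{\mathrm{cov}}+\gamma I-\Sigma_{t,i}$ is dominated by $O(\gamma)\,\hat\Sigma_{t,i}$ in $\|\cdot\|_{\hat\Sigma_{t,i}^\dagger}$, and $\|\bm\theta_k^i\|_2\le\sqrt d\,H$ by \Cref{assumption}; per round this contributes $O(\gamma\, d H^2/\beta_1)$, summing to the advertised $O((d/\beta_1)\log(dK/\delta))$ after absorbing the $\gamma K=\tilde O(d)$ factor.

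For the martingale piece I would invoke the adaptive form of Freedman stated as \Cref{eq:more adaptive Freedman main text}/\Cref{lem:more adaptive Freedman}, which yields (up to absolute constants)
\begin{equation*}
\sum_k(\mu_k-X_k)\;\lesssim\;\sqrt{\sum_k\bigl(\E[X_k^2\mid \mathcal F_{k-1}]+X_k^2\bigr)}\;\log\tfrac{KH}{\gamma\delta}.
\end{equation*}
The conditional variance is bounded by $\E[X_k^2\mid\mathcal F_{k-1}]\le H^2\,\bm\psi_k^\trans\hat\Sigma_{t,i}^\dagger\Sigma_{t,i}\hat\Sigma_{t,i}^\dagger\bm\psi_k$ using $|\ell_{k,h}^i+\bar V^i(s_{k,h+1}^i)|\le H$; combining with $\Sigma_{t,i}\preceq 2\hat\Sigma_{t,i}$ (another consequence of the covariance concentration under the choice of $\gamma$) gives $\E[X_k^2\mid\mathcal F_{k-1}]\le 2H^2\|\bm\psi_k\|_{\hat\Sigma_{t,i}^\dagger}^2$, with an extra factor of $d$ absorbed conservatively to match the statement. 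The $X_k^2$ term remains as is because it is observable during run-time (this is precisely the reason the paper can tolerate a stochastic $\Gap$). Two union bounds---one for the covariance concentration events and one for the Adaptive Freedman application over all $(i,s)$---give the final $1-2\delta$ probability.

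The main obstacle is the intrinsic bias: one has to exploit the concentration of $\tilde\Sigma_{t,i}^{\mathrm{cov}}$ towards $\Sigma_{t,i}$ in a way that survives multiplication by $\hat\Sigma_{t,i}^\dagger$ on \emph{both} sides, since a naive operator-norm bound would lose a factor $\gamma^{-1}=\tilde\Theta(K/d)$ and ruin the sample complexity. The Adaptive Freedman step is more routine, but it must be applied with the observable variance surrogate $\E[X_k^2\mid\mathcal F_{k-1}]+X_k^2$ rather than a deterministic upper bound, since---as emphasized in \Cref{sec:action-dependent bonus}---$X_k$ can swing up to $\tilde O(H/\gamma)=\tilde O(KH/d)$ and no deterministic Freedman range parameter would be tolerable. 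Everything else (EXP3-style terms and bonuses) is handled in the companion lemmas.
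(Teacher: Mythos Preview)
Your decomposition into an intrinsic-bias term and a martingale term, and your treatment of the martingale via \Cref{lem:more adaptive Freedman} together with $\Sigma_{t,i}\preceq 2(\tilde\Sigma_{t,i}^{\mathrm{cov}}+\gamma I)$, match the paper exactly (see \Cref{lem:martingale}); your observation that the conditional second moment gives $2H^2\|\bm\psi_k\|_{\hat\Sigma_{t,i}^\dagger}^2$ and that the extra $d$ in the statement is slack is also correct.

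The gap is in the intrinsic-bias step. The two additive corollaries you cite, \Cref{lem:Corol 10 of Liu et al} and \Cref{lem:second corol of Lem A.4 Dai}, only yield $\Sigma_{t,i}\preceq 2\tilde\Sigma_{t,i}^{\mathrm{cov}}+O(\gamma)I$ and $\tilde\Sigma_{t,i}^{\mathrm{cov}}\preceq\tfrac32\Sigma_{t,i}+O(\gamma)I$. From these you cannot conclude that $\tilde\Sigma_{t,i}^{\mathrm{cov}}+\gamma I-\Sigma_{t,i}$ is ``$O(\gamma)\hat\Sigma_{t,i}$'' in any useful sense: the additive bounds say the two matrices agree up to a \emph{constant} multiplicative factor plus $O(\gamma)I$, not up to a $(1+O(\gamma))$ factor. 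Concretely, the upper bound $\tilde\Sigma_{t,i}^{\mathrm{cov}}+\gamma I-\Sigma_{t,i}\preceq\tfrac12\Sigma_{t,i}+O(\gamma)I$ is of order $\Sigma_{t,i}$, not $O(\gamma)\hat\Sigma_{t,i}$, so after sandwiching with $\hat\Sigma_{t,i}^\dagger$ you would be left with an $O(1)$ term per round rather than $O(d/K)$. The paper instead invokes \Cref{lem:Lem 14 of Liu et al} (Lemma~14 of Liu et al.), which is proved from the finer square-root concentration of \Cref{lem:Lem A.4 of Dai et al} (namely $\tilde\Sigma-\Sigma\preceq\sqrt{d/K\,\log(d/\delta)}\,\Sigma^{1/2}$) and directly gives
\[
\bigl\|(\gamma I+\tilde\Sigma_{t,i}^{\mathrm{cov}}-\Sigma_{t,i})\bm\theta\bigr\|_{\hat\Sigma_{t,i}^\dagger}^2=\O\!\left(\tfrac{d}{K}\log\tfrac{dK}{\delta}\right)\cdot\|\bm\theta\|_2^2.
\]
You correctly identified this step as the ``main obstacle'' and that a naive operator-norm argument loses $\gamma^{-1}$; the missing ingredient is precisely \Cref{lem:Lem 14 of Liu et al}, not the additive corollaries. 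Once you replace your appeal to \Cref{lem:Corol 10 of Liu et al} and \Cref{lem:second corol of Lem A.4 Dai} by \Cref{lem:Lem 14 of Liu et al}, your argument coincides with the paper's proof (\Cref{lem:intrinsic bias}).
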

\begin{proof}
Let $\{X_k\}_{k=1}^K$ be a sequence of random variables adapted to filtration $(\mathcal F_k)_{k=0}^K$ where
\begin{equation*}
X_k=\E_{a\sim \pi_k^i(\cdot \mid s)}\left [{\bm \phi}(s,a)^\trans\right ]\hat {\bm \theta}_k^i,\quad \forall 1\le k\le K.
\end{equation*}

Let $\mu_k=\E[X_k\mid \mathcal F_{k-1}]$ be the conditional expectations. Then $\{X_k-\mu_k\}_{k=1}^K$ forms a martingale difference sequence.
We divide \textsc{Bias-1} into two parts, one for the intrinsic bias of $\hat {\bm \theta}_k^i$ (how $\E[\hat {\bm \theta}_k^i]$ differs from ${\bm \theta}_k^i$) and the other for the estimation error (how $\hat {\bm \theta}_k^i$ differs from $\E[\hat {\bm \theta}_k^i]$). Namely,
\begin{align}
&\quad \mathsc{Bias-1}
=\sum_{k=1}^K \E_{a\sim \pi_k^i(\cdot \mid s)} \left [{\bm \phi}(s,a)^\trans\right ] ({\bm \theta}_k^i-\hat {\bm \theta}_k^i) \nonumber\\
&=\sum_{k=1}^K \left (\E_{a\sim \pi_k^i(\cdot \mid s)} \left [{\bm \phi}(s,a)^\trans\right ] {\bm \theta}_k^i-\mu_k\right )+\sum_{k=1}^K (\mu_k-X_k). \label{eq:Bias-1 decomposition}
\end{align}

The first term is a standard term appearing in regret-minimization analyses of single-agent RL. In \Cref{lem:intrinsic bias}, we control it in analog to Lemma D.2 of \citet{luo2021policy}, but invoking the new covariance estimation analyses by \citet{liu2023bypassing} (which we restated in \Cref{sec:stochastic matrix concentration}).

The second term is the main obstacle stopping people from obtaining high-probability regret bounds for adversarial contextual linear bandits.
While we are also unable to provide a deterministic high-probability upper bound, thanks to our Improved \texttt{AVLPR} framework (see the discussions after \Cref{lem:new main theorem}), \textit{data-dependent} high-probability bounds are allowed. This is yielded in \Cref{lem:martingale} by developing a variant of the Adaptive Freedman Inequality proposed by \citet{lee2020bias} and improved by \citet{zimmert2022return} (the variant can be found in \Cref{sec:more adaptive Freedman}).
\end{proof}

\subsubsection{Controlling Intrinsic Bias}
\begin{lemma}\label{lem:intrinsic bias}
With probability $1-\delta$, for any $i\in [m]$ and $s\in \mS_h$, we have
\begin{equation*}
\sum_{k=1}^K \left (\E_{a\sim \pi_k^i(\cdot \mid s)} \left [{\bm \phi}(s,a)^\trans\right ] {\bm \theta}_k^i-\mu_k\right )= \sum_{k=1}^K \frac {\beta_1} 4 \left \lVert \E_{a\sim \pi_k^i(\cdot \mid s)}\left [{\bm \phi}(s,a)^\trans\right ]\right \rVert_{\hat \Sigma_{t,i}^\dagger}^2+\O\left (\frac d {\beta_1} \log \frac{dK}{\delta}\right ).
\end{equation*}
\end{lemma}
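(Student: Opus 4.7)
My plan starts by obtaining a closed-form expression for $\mu_k=\E[X_k\mid \mathcal F_{k-1}]$. Because $\hat \Sigma_{t,i}^\dagger$ is computed from the separate covariance-estimation samples $\{(s_\kappa^{\text{cov}},a_{\kappa,i}^{\text{cov}})\}_{\kappa=1}^K$ (\Cref{eq:definition of Sigma hat}), it is independent of the single trajectory $(s_{k,h}^i,a_{k,h}^i,\ell_{k,h}^i,s_{k,h+1}^i)$ used to form $\hat{\bm \theta}_k^i$ given $\mathcal F_{k-1}$. Marginalizing over $\bm a_{k,h}^{-i}\sim \pi_k^{-i}(\cdot \mid s_{k,h}^i)$ and applying the linearity of \Cref{assumption}, the conditional mean of $\bm \phi(s_{k,h}^i,a_{k,h}^i)\bigl(\ell_{k,h}^i+\bar V^i(s_{k,h+1}^i)\bigr)$ equals $\Sigma_{t,i}{\bm \theta}_k^i$. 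Writing $\bm \phi_k \triangleq \E_{a\sim \pi_k^i(\cdot \mid s)}[\bm \phi(s,a)]$ for brevity, the $k$-th summand then collapses to the (conditionally) deterministic quantity $\bm \phi_k^\trans (I-\hat \Sigma_{t,i}^\dagger \Sigma_{t,i}){\bm \theta}_k^i$.

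I would then algebraically decompose
\begin{equation*}
I-\hat \Sigma_{t,i}^\dagger \Sigma_{t,i}=\gamma \hat \Sigma_{t,i}^\dagger + \hat \Sigma_{t,i}^\dagger \bigl(\tilde \Sigma_{t,i}^{\text{cov}}-\Sigma_{t,i}\bigr),
\end{equation*}
which follows from the identity $\hat \Sigma_{t,i}^\dagger (\tilde \Sigma_{t,i}^{\text{cov}}+\gamma I)=I$. To each of the two pieces I apply the Cauchy--Schwarz/AM-GM inequality $\bm u^\trans M \bm v\le \tfrac{\beta_1}{8}\|\bm u\|_M^2+\tfrac{2}{\beta_1}\|\bm v\|_M^2$ (valid for any PSD $M$), instantiated with $\bm u=\bm \phi_k$, factoring $\hat\Sigma_{t,i}^\dagger(\tilde\Sigma_{t,i}^{\text{cov}}-\Sigma_{t,i})=\hat\Sigma_{t,i}^{\dagger,1/2}\cdot\hat\Sigma_{t,i}^{\dagger,1/2}(\tilde\Sigma_{t,i}^{\text{cov}}-\Sigma_{t,i})$ in the second piece. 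Summing the two $\bm \phi_k$-contributions over $k$ produces exactly the target $\tfrac{\beta_1}{4}\sum_k\|\bm \phi_k\|_{\hat \Sigma_{t,i}^\dagger}^2$ term in the lemma, while the two remainders take the form \textit{(i)} $\tfrac{2\gamma^2}{\beta_1}\|\bm \theta_k^i\|_{\hat \Sigma_{t,i}^\dagger}^2$ and \textit{(ii)} $\tfrac{2}{\beta_1}\bm \theta_k^{i\trans}(\tilde \Sigma_{t,i}^{\text{cov}}-\Sigma_{t,i})\hat \Sigma_{t,i}^\dagger(\tilde \Sigma_{t,i}^{\text{cov}}-\Sigma_{t,i})\bm \theta_k^i$, both involving only the bounded deterministic kernel $\bm \theta_k^i$.

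Remainder \textit{(i)} is handled by the crude spectral bound $\|\bm \theta_k^i\|_{\hat \Sigma_{t,i}^\dagger}^2\le \|\bm \theta_k^i\|_2^2/\gamma\le dH^2/\gamma$; summed over $k$ with $\gamma=\Otil(d/K)$, it slides into the stated $\O(\tfrac{d}{\beta_1}\log\tfrac{dK}{\delta})$ budget. The main obstacle is remainder \textit{(ii)}: the naive spectral bound $\|\tilde \Sigma_{t,i}^{\text{cov}}-\Sigma_{t,i}\|_2=\Otil(\sqrt{d/K})$ combined with $\|\hat\Sigma_{t,i}^\dagger\|_2\le\gamma^{-1}$ would cost an extra factor of order $K$, blowing past the allowed budget by a polynomial factor. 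To control it at the correct $\Otil(1/K)$-per-episode rate, I would invoke the refined trace-based stochastic matrix concentration of \citet{liu2023bypassing} collected in \Cref{sec:stochastic matrix concentration}, which bounds quadratic forms in the $\hat\Sigma^\dagger$-whitened error $(\tilde\Sigma_{t,i}^{\text{cov}}-\Sigma_{t,i})$ at a rate sufficient to compress $\sum_k \bm\theta_k^{i\trans}(\tilde\Sigma^{\text{cov}}-\Sigma)\hat\Sigma^\dagger(\tilde\Sigma^{\text{cov}}-\Sigma)\bm\theta_k^i$ into the same $\O(d/\beta_1\log(dK/\delta))$ budget. A final union bound across this event and the (high-probability) events used by the matrix-concentration decomposition yields the claimed probability-$(1-\delta)$ guarantee.
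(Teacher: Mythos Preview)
Your proposal is correct and follows essentially the same route as the paper. Both proofs compute $\mu_k=\bm\phi_k^\trans\hat\Sigma_{t,i}^\dagger\Sigma_{t,i}\bm\theta_k^i$, rewrite the summand as $\bm\phi_k^\trans\hat\Sigma_{t,i}^\dagger(\gamma I+\tilde\Sigma_{t,i}^{\text{cov}}-\Sigma_{t,i})\bm\theta_k^i$, apply Cauchy--Schwarz followed by AM--GM to isolate the $\tfrac{\beta_1}{4}\lVert\bm\phi_k\rVert_{\hat\Sigma_{t,i}^\dagger}^2$ term, and then invoke the matrix-concentration result of \citet{liu2023bypassing} (stated in the paper as \Cref{lem:Lem 14 of Liu et al}) to bound the residual quadratic form. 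The only cosmetic difference is that the paper keeps $(\gamma I+\tilde\Sigma_{t,i}^{\text{cov}}-\Sigma_{t,i})$ together and applies \Cref{lem:Lem 14 of Liu et al} once, whereas you split off the $\gamma I$ piece and handle it with the crude spectral bound; this introduces a harmless extra $dH^2$ factor in your remainder~(i) but does not change the argument's substance.
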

\begin{proof}
The conditional expectation $\mu_k$ can be directly calculated as
\begin{align*}
\mu_k=\E_{\textsc{Gap}}\left [X_k\mid \mathcal F_{k-1}\right ]&=\E_{a\sim \pi_k^i(\cdot \mid s)}\left [{\bm \phi}(s,a)^\trans\right ]\E_{\textsc{Gap}}\left [\hat \Sigma_{t,i}^\dagger~{\bm \phi}(s_{k,h}^i,a_{k,h}^i) ~ \left (\ell_{k,h}^i+\bar V^i(s_{k,h+1}^i)\right )\right ]\\
&\overset{(a)}{=}\E_{a\sim \pi_k^i(\cdot \mid s)}\left [{\bm \phi}(s,a)^\trans\right ] \hat \Sigma_{t,i}^\dagger \E_{\textsc{Gap}}\left [{\bm \phi}(s_{k,h}^i,a_{k,h}^i) {\bm \phi}(s_{k,h}^i,a_{k,h}^i)^\trans {\bm \theta}_k^i\right ]\\
&\overset{(b)}{=}\E_{a\sim \pi_k^i(\cdot \mid s)}\left [{\bm \phi}(s,a)^\trans\right ] \hat \Sigma_{t,i}^\dagger \Sigma_{t,i} {\bm \theta}_k^i,
\end{align*}
where (a) uses the independence between $\hat \Sigma_{t,i}^\dagger$ and the trajectory $(s_{k,h}^i,a_{k,h}^i)$, and (b) uses the definition of $\Sigma_{t,i}$ in \Cref{eq:definition of covariance}.

To handle the first term of \Cref{eq:Bias-1 decomposition}, we use Cauchy-Schwartz inequality, triangle inequality, and AM-GM inequality (the calculation follows Lemma D.2 of \citet{luo2021policy}).
\begin{align*}
\E_{a\sim \pi_k^i(\cdot \mid s)}\left [{\bm \phi}(s,a)^\trans\right ] {\bm \theta}_k^i-\mu_k
&=\E_{a\sim \pi_k^i(\cdot \mid s)}\left [{\bm \phi}(s,a)^\trans\right ] (I-\hat \Sigma_{t,i}^\dagger \Sigma_{t,i}) {\bm \theta}_k^i\\
&=\E_{a\sim \pi_k^i(\cdot \mid s)}\left [{\bm \phi}(s,a)^\trans\right ] \hat \Sigma_{t,i}^\dagger (\gamma I+\tilde \Sigma_{t,i}^{\text{cov}}-\Sigma_{t,i}) {\bm \theta}_k^i\\
&\le \left \lVert \E_{a\sim \pi_k^i(\cdot \mid s)}\left [{\bm \phi}(s,a)^\trans\right ]\right \rVert_{\hat \Sigma_{t,i}^\dagger} \times \left \lVert(\gamma I + \tilde \Sigma_{t,i}^{\text{cov}}-\Sigma_{t,i}) {\bm \theta}_k^i\right \rVert_{\hat \Sigma_{t,i}^\dagger}\\
&\le \frac {\beta_1} 4 \left \lVert \E_{a\sim \pi_k^i(\cdot \mid s)}\left [{\bm \phi}(s,a)^\trans\right ]\right \rVert_{\hat \Sigma_{t,i}^\dagger}^2 + \frac 2{\beta_1} \left \lVert (\gamma I + \tilde \Sigma_{t,i}^{\text{cov}}-\Sigma_{t,i}) {\bm \theta}_k^i\right \rVert_{\hat \Sigma_{t,i}^\dagger}^2,
\end{align*}
where $\tilde \Sigma_{t,i}^{\text{cov}}$ is defined in \Cref{eq:definition of Sigma cov} such that $\hat \Sigma_{t,i}^\dagger=(\tilde \Sigma_{t,i}^{\text{cov}}+\gamma I)^{-1}$.
The first term directly goes to $\Gap$ as it is available during run-time.
The second term can be controlled by the following inequality \citep[Lemma 14]{liu2023bypassing} which we include as \Cref{lem:Lem 14 of Liu et al}:
\begin{equation}\label{eq:Lem 14 of Liu et al}
\frac 2{\beta_1} \left \lVert (\gamma I + \tilde \Sigma_{t,i}^{\text{cov}}-\Sigma_{t,i}) {\bm \theta}_k^i\right \rVert_{\hat \Sigma_{t,i}^\dagger}^2
= \O\left (\frac 1 {\beta_1} \frac{d}{K} \log \frac{dK}{\delta}\right ),\quad \text{with probability }1-\frac \delta K,
\end{equation}
where we plugged in the definition of $\gamma$ that $\gamma=\frac{5d}{K}\log \frac{6d}{\delta}$.
Conditioning on the good events in \Cref{eq:Lem 14 of Liu et al} and taking a union bound over all $k\in [K]$, our conclusion follows.
\end{proof}

\subsubsection{Controlling Estimation Error}\label{sec:Bias-1 martingale}
\begin{lemma}\label{lem:martingale}
With probability $1-2\delta$, for all $i\in [m]$ and $s\in \mS_h$, we have
\begin{equation*}
\left \lvert \sum_{k=1}^K(X_k-\mu_k)\right \rvert\le 8\sqrt 2\sqrt{2dH^2\sum_{k=1}^K \left \lVert \E_{a\sim \pi_k^i(\cdot \mid s)}\left [{\bm \phi}(s,a)\right ] \right \rVert_{\hat \Sigma_{t,i}^\dagger}^2+\sum_{k=1}^K \left (\E_{a\sim \pi_k^i(\cdot \mid s)}[{\bm \phi}(s,a)^\trans]\hat {\bm \theta}_k^i\right )^2}\log \frac{4KH}{\gamma \delta}.
\end{equation*}
\end{lemma}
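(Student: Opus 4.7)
The plan is to treat $\{X_k-\mu_k\}_{k=1}^K$ as a bounded martingale difference sequence adapted to $(\mathcal F_k)_{k=0}^K$ and apply the data-dependent variant of Freedman's inequality stated as \Cref{lem:more adaptive Freedman}. A classical Freedman bound is unaffordable here because $|X_k-\mu_k|\le 2H/\gamma = \Otil(HK/d)$ under the aggressive choice $\gamma=\Otil(d/K)$, so the ``$R\log(1/\delta)$'' tail term would dominate; the adaptive variant instead replaces this by $\sqrt{\sum_k X_k^2}\log(R/\delta)$, which directly matches the second summand $\sum_k(\E_{a\sim\pi_k^i(\cdot\mid s)}[\bm\phi(s,a)^\trans]\hat{\bm\theta}_k^i)^2 = \sum_k X_k^2$ inside the square-root of the claim.

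First, I would verify the almost-sure range: $\lVert\bm\phi\rVert_2\le 1$, $\lVert\hat\Sigma_{t,i}^\dagger\rVert_2\le\gamma^{-1}$, and $\lvert\ell_{k,h}^i+\bar V^i(s_{k,h+1}^i)\rvert\le H$ together yield $|X_k|\le H/\gamma$ and hence $|X_k-\mu_k|\le 2H/\gamma$. Plugging this range into \Cref{lem:more adaptive Freedman} gives, with probability at least $1-\delta$,
\begin{equation*}
\left\lvert\sum_{k=1}^K(X_k-\mu_k)\right\rvert \lesssim \sqrt{\sum_{k=1}^K\bigl(\E[X_k^2\mid\mathcal F_{k-1}]+X_k^2\bigr)}\,\log\frac{KH}{\gamma\delta}.
\end{equation*}
The $\sum_k X_k^2$ term is already observable and matches the second summand in the claim; what remains is a data-dependent upper bound on $\sum_k\E[X_k^2\mid\mathcal F_{k-1}]$.

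Next, I would control $\E[X_k^2\mid\mathcal F_{k-1}]$. Writing $\bm u_k\triangleq\E_{a\sim\pi_k^i(\cdot\mid s)}[\bm\phi(s,a)]$, taking conditional expectation inside the square $X_k^2=(\bm u_k^\trans\hat\Sigma_{t,i}^\dagger\bm\phi(s_{k,h}^i,a_{k,h}^i))^2(\ell_{k,h}^i+\bar V^i(s_{k,h+1}^i))^2$, and using $|\ell+\bar V^i|\le H$ yields
\begin{equation*}
\E[X_k^2\mid\mathcal F_{k-1}]\le H^2\,\bm u_k^\trans\hat\Sigma_{t,i}^\dagger\,\Sigma_{t,i}\,\hat\Sigma_{t,i}^\dagger\,\bm u_k.
\end{equation*}
The covariance-concentration result \Cref{lem:Corol 10 of Liu et al}, together with the choice $\gamma=(5d/K)\log(6d/\delta)$, provides $\Sigma_{t,i}\preceq 2(\hat\Sigma_{t,i}^\dagger)^{-1}$ with probability at least $1-\delta/K$; combined with the PSD comparison $\hat\Sigma_{t,i}^\dagger\Sigma_{t,i}\hat\Sigma_{t,i}^\dagger\preceq 2\hat\Sigma_{t,i}^\dagger$ and a careful constant-tracking (which is where the $d$ factor enters, through the composition of the concentration slack and the definition of $\gamma$), this collapses to $\E[X_k^2\mid\mathcal F_{k-1}]\le 2dH^2\lVert\bm u_k\rVert_{\hat\Sigma_{t,i}^\dagger}^2$, which is exactly the first summand inside the square-root of the claim.

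Finally, I would union-bound the Freedman event over $i\in[m]$ and the covariance-estimation event (contributing the second $\delta$ and bringing the total failure probability to $2\delta$), and collect the universal constants into the $8\sqrt 2$ leading factor. The main obstacle is the uniform-in-$s$ requirement: the adaptive Freedman inequality is a per-sequence statement while $\mS_h$ may be infinite, so a naive union bound is impossible. The remedy is to observe that $X_k$ depends on $s$ only through the $d$-dimensional vector $\bm u_k(s)\in B_1(\mathbb R^d)$, so a standard $\varepsilon$-net argument on this bounded set inflates the log factor only by $\O(d\log(KH/(\gamma\varepsilon)))$; choosing $\varepsilon$ polynomially small keeps the final log factor at $\log(KH/(\gamma\delta))$ up to universal constants (consistent with the paper's convention of absorbing $\log\lvert\mS\rvert$ into logarithmic factors).
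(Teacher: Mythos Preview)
Your proposal is correct and follows essentially the same route as the paper: apply \Cref{lem:more adaptive Freedman}, bound the normalizing constant $C$ via $|X_k|\le H/\gamma$, and control $\E[X_k^2\mid\mathcal F_{k-1}]$ by showing $\E[\hat{\bm\theta}_k^i(\hat{\bm\theta}_k^i)^\trans]\preceq 2dH^2\hat\Sigma_{t,i}^\dagger$ using \Cref{lem:Corol 10 of Liu et al}. One remark: the paper does not carry out your $\varepsilon$-net step---it simply asserts the bound for all $s\in\mS_h$, consistent with its standing convention of absorbing a $\log S$ factor from a union bound over a finite state space; your net argument as sketched would need more care, since the martingale depends on $s$ through the entire adapted sequence $(\bm u_1(s),\ldots,\bm u_K(s))$ rather than a single vector in $B_1(\mathbb R^d)$, so a one-shot $d$-dimensional cover does not directly yield the uniform bound.
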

\begin{proof}
From the Adaptive Freedman Inequality (\Cref{lem:more adaptive Freedman} in \Cref{sec:more adaptive Freedman}), we have
\begin{equation}\label{eq:more adaptive Freedman}
\left \lvert \sum_{i=1}^n (X_i-\mu_i)\right \rvert\le 4\sqrt 2\sqrt{\sum_{i=1}^n \E[X_i^2\mid \mathcal F_{i-1}]+\sum_{i=1}^n X_i^2} \log \frac{C}{\delta},\quad \text{with probability }1-2\delta,
\end{equation}
where $C=2\sqrt 2\sqrt{\sum_{i=1}^n \E[X_i^2\mid \mathcal F_{i-1}]+\sum_{i=1}^n X_i^2}$. By definition of $X_i=\E_{a\sim \pi_k^i(\cdot \mid s)}[\bm \phi(s,a)^\trans]\hat{\bm \theta}_k^i$,
\begin{equation*}
X_i^2\le \lVert \bm \phi(s,a)\rVert_2^2 \times \lVert \hat{\bm \theta}_k^i\rVert_2^2\le \lVert \hat \Sigma_{t,i}^\dagger\rVert_2^2\times \lVert \bm \phi(s_{k,h}^i,a_{k,h}^i)\rVert_2^2\times \left \lvert \ell_{k,h}^i+\bar V^i (s_{k,h+1}^i)\right \rvert^2\le \gamma^{-2} H^2,
\end{equation*}
where we used $\hat \Sigma_{t,i}^\dagger \preceq \gamma^{-1}I$. Hence, $C\le 4 K H \gamma^{-1}$.

As $X_k^2$ is available during run-time, it only remains to control $\E[X_k^2\mid \mathcal F_{k-1}]$ to make \Cref{eq:more adaptive Freedman} calculable. By definition of $X_k=\E_{a\sim \pi_k^i(\cdot \mid s)}\left [{\bm \phi}(s,a)^\trans\right ] \hat {\bm \theta}_k^i$, we have
\begin{align*}
&\quad \E_{\textsc{Gap}}\left [X_k^2 \mid \mathcal F_{k-1}\right]=\E_{\textsc{Gap}}\left [\left (\E_{a\sim \pi_k^i(\cdot \mid s)}\left [{\bm \phi}(s,a)^\trans\right ] \hat {\bm \theta}_k^i \right )^2\right ]\\
&=\E_{a\sim \pi_k^i(\cdot \mid s)}\left [{\bm \phi}(s,a)^\trans\right ] \E_{\textsc{Gap}}\left [\hat {\bm \theta}_k^i (\hat {\bm \theta}_k^i)^\trans \right ] \E_{a\sim \pi_k^i(\cdot \mid s)}\left [{\bm \phi}(s,a)\right ].
\end{align*}

We focus on the expectation in the middle, \textit{i.e.}, $\E_{\textsc{Gap}}\left [\hat {\bm \theta}_k^i (\hat {\bm \theta}_k^i)^\trans \right ]$. Plugging in the definition of $\hat {\bm \theta}_k^i$:
\begin{align*}
\E_{\textsc{Gap}}\left [\hat {\bm \theta}_k^i (\hat {\bm \theta}_k^i)^\trans \right ]=\hat \Sigma_{t,i}^\dagger \E_{\textsc{Gap}}\left [ {\bm \phi}(s_{k,h}^i,a_{k,h}^i) {\bm \theta}_k^i ({\bm \theta}_k^i)^\trans {\bm \phi}(s_{k,h}^i,a_{k,h}^i)^\trans \right ]\hat \Sigma_{t,i}^\dagger\preceq dH^2 \hat \Sigma_{t,i}^\dagger \Sigma_{t,i} \hat \Sigma_{t,i}^\dagger.
\end{align*}

From \Cref{lem:Corol 10 of Liu et al}, $\Sigma_{t,i}\preceq 2 (\tilde \Sigma_{t,i}^{\text{cov}}+\gamma I)$ \textit{w.p.} $1-\frac \delta K$ when $\gamma \ge \frac{3d}{2K} \log \left (\frac{dK}\delta\right ) I$. Hence,
\begin{equation*}
\E_{\textsc{Gap}}\left [\hat {\bm \theta}_k^i (\hat {\bm \theta}_k^i)^\trans \right ]\preceq dH^2 \hat \Sigma_{t,i}^\dagger \Sigma_{t,i} \hat \Sigma_{t,i}^\dagger \preceq 2dH^2 \hat \Sigma_{t,i}^\dagger\quad \text{with probability }1-\frac \delta K.
\end{equation*}

Putting this into $\E_{\textsc{Gap}}\left [X_k^2 \mid \mathcal F_{k-1}\right]=\E_{a\sim \pi_k^i(\cdot \mid s)}\left [{\bm \phi}(s,a)^\trans\right ] \E_{\textsc{Gap}}\left [\hat {\bm \theta}_k^i (\hat {\bm \theta}_k^i)^\trans \right ] \E_{a\sim \pi_k^i(\cdot \mid s)}\left [{\bm \phi}(s,a)\right ]$ gives
\begin{equation}\label{eq:conditional variance}
\E_{\textsc{Gap}}\left [X_k^2 \mid \mathcal F_{k-1}\right]\le 2dH^2 \left \lVert \E_{a\sim \pi_k^i(\cdot \mid s)}\left [{\bm \phi}(s,a)\right ] \right \rVert_{\hat \Sigma_{t,i}^\dagger}^2.
\end{equation}

Our conclusion follows by combining \Cref{eq:conditional variance,eq:more adaptive Freedman}.
\end{proof}

\subsection{Cancelling \textsc{Bias-2} Using \textsc{Bonus-2}}
\textsc{Bias-2} looks pretty similar to \textsc{Bias-1}, except that we now have $\E_{a\sim \pi^i_{\color{blue} \ast}(\cdot \mid s)}\left [{\bm \phi}(s,a)^\trans \hat {\bm \theta}_k^i\right ]$ instead of $\E_{a\sim \pi^i_{\color{blue} k}(\cdot \mid s)}\left [{\bm \phi}(s,a)^\trans \hat {\bm \theta}_k^i\right ]$.
This subtle difference actually forbids us from handling \textsc{Bias-2} analogue to \textsc{Bias-1} as $\pi_\ast^i$ is unknown to the agent.
As we sketched in the main text, we also adopt the classical idea of using bonuses to cancel biases. However, as the maximum among $\E_{a\sim \pi_\ast^i(\cdot \mid s)}[\bm \phi(s,a)^\trans \hat {\bm \theta}_k^i]$ can be as large as $\lVert \hat{\Sigma}_{k,i}^\dagger\rVert_2\le \gamma^{-1}\approx \O(K)$, it can no longer be neglected like previous papers.

As mentioned in the main text, we use a state-action-wise bonus to cancel the maximum martingale difference term induced by Adaptive Freedman Inequality.
As \Cref{eq:condition of gap appendix} is linear in $\pi_\ast^i$, we only need to consider the $\pi_\ast^i(\cdot \mid s)$'s that are one-hot on some action $a_\ast^i\in \mA^i$. For notional simplicity, we abbreviate $\bm \phi_\ast^i=\bm \phi(s,a_\ast^i)$ when $s$ is clear from the context.

\begin{theorem}\label{lem:Bias-2+Bonus}
When ${\beta_1}=\tilde{\Omega}\left (\frac{dH}{\sqrt K}\right )$ and $\beta_2=\tilde \Omega(\frac HK)$, \textit{w.p.} $1-2\delta$, for all $i\in [m]$ and $s\in \mS_h$,
\begin{align*}
\textsc{Bias-2}+\textsc{Bonus-2}&=\O\left (\frac d {\beta_1} \log \frac{dK}{\delta}\right ).
\end{align*}
\end{theorem}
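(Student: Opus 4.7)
Since both $\textsc{Bias-2}$ and $\textsc{Bonus-2}$ are linear in $\pi_\ast^i$, I would first restrict to $\pi_\ast^i$ being the Dirac at an arbitrary fixed $a_\ast^i \in \mA^i$, abbreviate $\bm\phi_\ast^i \triangleq \bm\phi(s,a_\ast^i)$, and split
\begin{equation*}
\textsc{Bias-2} = \underbrace{\sum_{k=1}^K (\bm\phi_\ast^i)^\trans \bigl(\E[\hat{\bm\theta}_k^i \mid \mathcal F_{k-1}] - \bm\theta_k^i\bigr)}_{(\mathrm{II}):\ \text{intrinsic bias}} \;+\; \underbrace{\sum_{k=1}^K (\bm\phi_\ast^i)^\trans \bigl(\hat{\bm\theta}_k^i - \E[\hat{\bm\theta}_k^i \mid \mathcal F_{k-1}]\bigr)}_{(\mathrm{I}):\ \text{martingale}}.
\end{equation*}
Note also that $\sum_{k=1}^K B_k^i(s,a_\ast^i) = K\beta_1\|\bm\phi_\ast^i\|_{\hat\Sigma_{t,i}^\dagger}^2 + K\beta_2\sum_{j=1}^d\bm\phi_\ast^i[j]\,\sup_{(s',a')\in\mS_h\times\mA^i}(\hat\Sigma_{t,i}^\dagger\bm\phi(s',a'))[j]$, since $B_k^i$ does not depend on $k$ within one call to \textsc{CCE-Approx}. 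The strategy is therefore to bound $(\mathrm{I})$ and $(\mathrm{II})$ by sums of a $\tfrac{\beta_1}{4}K\|\bm\phi_\ast^i\|_{\hat\Sigma_{t,i}^\dagger}^2$ term (absorbed by the $\beta_1$-part of the bonus), a term of the form $H\log\tfrac 1\delta\cdot\sum_j\bm\phi_\ast^i[j]\sup(\hat\Sigma_{t,i}^\dagger\bm\phi)[j]$ (absorbed by the $\beta_2$-part when $\beta_2=\tilde\Omega(H/K)$), and a residue of the desired $\O(d/\beta_1\log(dK/\delta))$ order.

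The intrinsic bias $(\mathrm{II})$ is handled verbatim to \Cref{lem:intrinsic bias}: applying Cauchy--Schwartz in the $\hat\Sigma_{t,i}^\dagger$-norm followed by AM--GM and the covariance-estimation bound \Cref{lem:Lem 14 of Liu et al}, with probability $1-\delta$ it is bounded by $\tfrac{\beta_1}{4}\sum_k\|\bm\phi_\ast^i\|_{\hat\Sigma_{t,i}^\dagger}^2 + \O(\tfrac{d}{\beta_1}\log\tfrac{dK}{\delta})$, and the first summand is a quarter of the $\beta_1$-part of $\sum_k B_k^i(s,a_\ast^i)$.

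The martingale $(\mathrm{I})$ is the main obstacle, and is exactly where the action-dependent bonus design of \Cref{sec:action-dependent bonus} is essential: under the aggressive tuning $\gamma=\tilde O(d/K)$, each summand $Y_k := (\bm\phi_\ast^i)^\trans(\hat{\bm\theta}_k^i - \E[\hat{\bm\theta}_k^i\mid\mathcal F_{k-1}])$ admits only the uniform bound $|Y_k|\lesssim H/\gamma = \tilde O(HK/d)$, so the classical Freedman inequality would inject an $\tilde O(HK/d)$ bonus that is incompatible with the $\tilde O(\sqrt K)$ target. Instead, I would invoke the Zimmert--Return style adaptive Freedman inequality (\Cref{lem:adaptive Freedman}), which given any almost-sure bound $|Y_k|\le V^i(s,a_\ast^i)$ gives, with probability $1-\delta$,
\begin{equation*}
|(\mathrm{I})| \;\lesssim\; \sqrt{\textstyle\sum_{k=1}^K \E[Y_k^2 \mid \mathcal F_{k-1}]}\cdot\log\tfrac 1\delta \;+\; V^i(s,a_\ast^i)\log\tfrac 1\delta,
\end{equation*}
where crucially $V^i(s,a_\ast^i)$ enters only \emph{linearly}. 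Expanding $Y_k = (\bm\phi_\ast^i)^\trans\hat\Sigma_{t,i}^\dagger\bm\phi(s_{k,h}^i,a_{k,h}^i)(\ell_{k,h}^i+\bar V^i(s_{k,h+1}^i))$ componentwise and using $|\ell+\bar V|\le H$ yields $|Y_k|\le H\sum_j\bm\phi_\ast^i[j]\sup_{(s',a')}(\hat\Sigma_{t,i}^\dagger\bm\phi(s',a'))[j]$ (signs handled as in \Cref{sec:action-dependent bonus}), whose product form is precisely matched by the $\beta_2$-part of the bonus; the conditional-variance computation in \Cref{lem:martingale} transfers directly to give $\E[Y_k^2\mid\mathcal F_{k-1}]\le 2dH^2\|\bm\phi_\ast^i\|_{\hat\Sigma_{t,i}^\dagger}^2$.

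Combining everything: AM--GM splits the square-root variance term into $\tfrac{\beta_1}{4}K\|\bm\phi_\ast^i\|_{\hat\Sigma_{t,i}^\dagger}^2 + \tilde O(dH^2/\beta_1)$; together with the matching $\tfrac{\beta_1}{4}$-term from $(\mathrm{II})$, these cancel against the $\beta_1$-part $K\beta_1\|\bm\phi_\ast^i\|_{\hat\Sigma_{t,i}^\dagger}^2$ of $\sum_k B_k^i(s,a_\ast^i)$. The linear $V^i(s,a_\ast^i)\log(1/\delta)$ contribution is absorbed by the $\beta_2$-part of $\sum_k B_k^i(s,a_\ast^i)$ precisely when $K\beta_2\gtrsim H\log(1/\delta)$, i.e.\ when $\beta_2=\tilde\Omega(H/K)$, matching the hypothesis. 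The remaining residue is the $\O(d/\beta_1\log(dK/\delta))$ from $(\mathrm{II})$ (absorbing the identical-asymptotic-form $\tilde O(dH^2/\beta_1)$), and a union bound over the intrinsic-bias and adaptive-Freedman events yields the claimed $1-2\delta$ probability. The heart of the argument, and the step I expect to require the most care in the full proof, is the componentwise domination $|Y_k|\le H\sum_j\bm\phi_\ast^i[j]\sup(\hat\Sigma_{t,i}^\dagger\bm\phi)[j]$ needed to make the bonus product form tight.
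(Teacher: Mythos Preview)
Your proposal follows the same overall architecture as the paper: restrict to a Dirac $\pi_\ast^i$, split $\textsc{Bias-2}$ into an intrinsic-bias part (handled exactly as in \Cref{lem:intrinsic bias}) and a martingale part, apply the adaptive Freedman inequality (\Cref{lem:adaptive Freedman}) to the martingale, bound the conditional variance by $2dH^2\lVert\bm\phi_\ast^i\rVert_{\hat\Sigma_{t,i}^\dagger}^2$, and absorb the $\max_k$ term into the $\beta_2$-part of the bonus via the coordinate-wise domination. All of this matches the paper's \Cref{lem:Bias-2} and its proof.

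The one substantive divergence---and a genuine gap---is your treatment of the square-root variance term $3\sqrt{2dH^2 K\lVert\bm\phi_\ast^i\rVert_{\hat\Sigma_{t,i}^\dagger}^2}\log(C/\delta)$. Your AM--GM split produces $\tfrac{\beta_1}{4}K\lVert\bm\phi_\ast^i\rVert_{\hat\Sigma_{t,i}^\dagger}^2$ plus a residual of order $dH^2\log^2(C/\delta)/\beta_1$. That residual is \emph{not} $\O(\tfrac{d}{\beta_1}\log\tfrac{dK}{\delta})$: it is larger by a factor of $H^2\log(C/\delta)$, and your claim that the two are ``identical-asymptotic-form'' is incorrect because $H$ is not a logarithmic quantity. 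The paper avoids any residual from this term by absorbing it \emph{entirely} into the remaining $\tfrac{3}{4}\beta_1 K\lVert\bm\phi_\ast^i\rVert_{\hat\Sigma_{t,i}^\dagger}^2$ of the $\beta_1$-bonus, via the lower bound $K\lVert\bm\phi_\ast^i\rVert_{\hat\Sigma_{t,i}^\dagger}^2\ge \tfrac{K}{(1+\gamma)\sqrt d}$ (using $\hat\Sigma_{t,i}^\dagger\succeq (1+\gamma)^{-1}I$ together with the tacit assumption $\lVert\bm\phi\rVert_2\ge 1/\sqrt d$); this direct-absorption step is precisely where the hypothesis $\beta_1=\tilde\Omega(dH/\sqrt K)$ is exercised, in contrast to your AM--GM which never uses the specific size of $\beta_1$. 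Your route would still be adequate for the downstream sample-complexity bound (the stray $H^2$ is harmless in \Cref{lem:summation of Gap}) and has the virtue of not relying on a feature-norm lower bound, but it does not establish the theorem as stated.
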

\begin{proof}
Imitating the analysis in \Cref{sec:Bias-1} but applying the original Adaptive Freedman Inequality (\Cref{lem:adaptive Freedman}) instead of our \Cref{lem:more adaptive Freedman} gives \Cref{lem:Bias-2}, \textit{i.e.},
\begin{equation}\label{eq:bias-2 decomposition}
\mathsc{Bias-2}\le \frac {\beta_1} 4 \lVert {\bm \phi}_\ast^i\rVert_{\hat \Sigma_{t,i}^\dagger}^2 + \O\left (\frac d {\beta_1} \log \frac{dK}{\delta}\right ) + 3\sqrt{2dH^2 \lVert {\bm \phi}_\ast^i\rVert_{K \hat \Sigma_{t,i}^\dagger}^2} \log \frac{4KH}{\gamma \delta} + 2\max_{k\in [K]} (\bm \phi_\ast^i)^\trans \hat {\bm \theta}_k^i \log \frac{4KH}{\gamma\delta}.
\end{equation}

By definition of \textsc{Bonus-2}, we have
\begin{align}
\mathsc{Bonus-2}&=\sum_{a\in \mA^i} \pi_\ast^i(a\mid s) \left (\sum_{k=1}^K {\beta_1} \lVert {\bm \phi}(s,a)\rVert_{\hat \Sigma_{t,i}^\dagger}^{2}+\beta_2 \sup_{(s',a')\in \mS_h\times \mA^i} \bm \phi(s,a)^\trans \hat \Sigma_{t,i}^\dagger \bm \phi(s',a')\right ) \nonumber\\
&={\beta_1} \lVert \bm \phi_\ast^i \rVert_{K \hat \Sigma_{t,i}^\dagger}^2 + K \beta_2 \sum_{j=1}^d \bm \phi_\ast^i[j] \times \sup_{(s',a')\in \mS_h\times \mA^i} (\hat \Sigma_{t,i}^\dagger \bm \phi(s',a'))[j]. \label{eq:negative bonus}
\end{align}

So we only need to control \Cref{eq:bias-2 decomposition} using $-(\text{\Cref{eq:negative bonus}})$.
The first term in \Cref{eq:bias-2 decomposition} is already contained in \Cref{eq:negative bonus}, while the second term is a constant.
For the third term, we would like to control it using the remaining $\frac 34 {\beta_1} \lVert \bm \phi_\ast^i\rVert_{K\hat \Sigma_{t,i}^\dagger}^2$, \textit{i.e.}, we show
\begin{equation*}
4\sqrt 2 \sqrt{2dH^2 \lVert {\bm \phi}_\ast^i\rVert_{K\hat \Sigma_{t,i}^\dagger}^2}\log \frac{4KH}{\gamma \delta}\le \frac 34 {\beta_1} \lVert \bm \phi_\ast^i\rVert_{K\hat \Sigma_{t,i}^\dagger}^2.
\end{equation*}
In other words, we would like to control $\frac{1024}{9}dH^2 \log^2 \frac{4H}{\gamma \delta}\lVert {\bm \phi}_\ast^i\rVert_{K\hat \Sigma_{t,i}^\dagger}^2$ by ${\beta_1}^2 \lVert {\bm \phi}_\ast^i\rVert_{K\hat \Sigma_{t,i}^\dagger}^4$. Equivalently,
\begin{equation*}
\frac{1024}{9}dH^2 \log^2 \frac{4H}{\gamma \delta}{\beta_1}^{-2}\le \lVert {\bm \phi}_\ast^i\rVert_{\hat \Sigma_{t,i}^\dagger}^2=K({\bm \phi}_\ast^i)^\trans \hat \Sigma_{t,i}^\dagger {\bm \phi}_\ast^i.
\end{equation*}

As $\hat \Sigma_{t,i}^\dagger=(\tilde \Sigma_{t,i}^{\text{cov}}+\gamma I)^{-1}\succeq (1+\gamma)^{-1} I$, $\lVert {\bm \phi}_\ast^i\rVert_{K\hat \Sigma_{t,i}^\dagger}^2\ge \frac{K}{1+\gamma}\lVert {\bm \phi}_\ast^i\rVert_2^2\ge\frac{K}{1+\gamma}\frac{1}{\sqrt d}$ (recall our assumption that $\lVert \bm \phi\rVert_2\ge \frac{1}{\sqrt d}$).
As $\gamma\le 1$, this inequality is ensured so long as
\begin{equation*}
\frac{1024}{9}dH^2 \log^2 \frac{4H}{\gamma \delta}\times 2\sqrt d\times {\beta_1}^{-2}\le K\Longleftarrow {\beta_1}\ge \frac{64 dH\log \frac{4KH}{\gamma \delta}}{3\sqrt K}=\tilde \Omega\left (\frac{dH}{\sqrt K}\right ).
\end{equation*}

For the last term, by definition of $\hat{\bm \theta}_k^i$, it's covered by the second part in \Cref{eq:negative bonus} once $K\beta_2\ge 2H\log \frac{4KH}{\gamma\delta}$ as $(\bm \phi_\ast^i)^\trans \hat {\bm \theta}_k^i\le H (\bm \phi_\ast^i)^\trans \hat \Sigma_{t,i}^\dagger \bm \phi(s_{k,h},a_{k,h}^i)$ where $(s_{k,h},a_{k,h}^i)\in \mS_h\times \mA^i$, which means $\sum_{j=1}^d \bm \phi_\ast^i[j] \times \sup_{(s',a')\in \mS_h\times \mA^i} (\hat \Sigma_{t,i}^\dagger \bm \phi(s',a'))[j]$ in \Cref{eq:negative bonus} covers $(\bm \phi_\ast^i)^\trans \hat \Sigma_{t,i}^\dagger \bm \phi(s_{k,h},a_{k,h}^i)$. Hence our conclusion follows given that $\beta_1=\tilde{\Omega}\left (\frac{dH}{\sqrt K}\right )$ and $\beta_2=\tilde{\Omega}(\frac HK)$.
\end{proof}

\begin{lemma}\label{lem:Bias-2}
With probability $1-2\delta$, for all $i\in [m]$ and $s\in \mS_h$, we have
\begin{align*}
\mathsc{Bias-2}&\le \frac {\beta_1} 4 \sum_{k=1}^K \lVert {\bm \phi}_\ast^i \rVert_{\hat \Sigma_{t,i}^\dagger}^2 + \O\left (\frac d {\beta_1} \log \frac{dK}{\delta}\right )+\\
&\quad 3\sqrt{2dH^2\sum_{k=1}^K \lVert {\bm \phi}_\ast^i\rVert_{\hat \Sigma_{t,i}^\dagger}^2}\log \frac{4KH}{\gamma\delta}+2\max_{k\in [K]} (\bm \phi_\ast^i)^\trans \hat {\bm \theta}_k^i \log \frac{4KH}{\gamma\delta}.
\end{align*}
\end{lemma}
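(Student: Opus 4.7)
The plan is to mirror the Bias-1 analysis in \Cref{lem:Bias-1} almost step-for-step, with one deliberate conceptual change: here we apply the \emph{standard} Adaptive Freedman Inequality (\Cref{lem:adaptive Freedman}) in place of the data-dependent variant (\Cref{lem:more adaptive Freedman}) that powered \Cref{lem:martingale}. Since \Cref{eq:condition of gap appendix} is linear in $\pi_\ast^i$, it suffices to take $\pi_\ast^i(\cdot\mid s)$ to be a point mass on some $a_\ast^i \in \mA^i$, in which case $\textsc{Bias-2} = \sum_{k=1}^K (\bm \phi_\ast^i)^\trans(\hat{\bm \theta}_k^i - \bm \theta_k^i)$. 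I would define $Y_k = (\bm \phi_\ast^i)^\trans \hat{\bm \theta}_k^i$ and $\nu_k = \E[Y_k \mid \mathcal F_{k-1}]$, and decompose
\begin{equation*}
\textsc{Bias-2} = \underbrace{\sum_{k=1}^K \bigl(\nu_k - (\bm \phi_\ast^i)^\trans \bm \theta_k^i\bigr)}_{\text{intrinsic bias}} \;+\; \underbrace{\sum_{k=1}^K (Y_k - \nu_k)}_{\text{estimation error}},
\end{equation*}
which is the exact analogue of \Cref{eq:Bias-1 decomposition} with the policy-averaged feature replaced by $\bm \phi_\ast^i$.

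For the intrinsic bias I would reuse the calculation in \Cref{lem:intrinsic bias} essentially verbatim: expand $\nu_k - (\bm \phi_\ast^i)^\trans \bm \theta_k^i = (\bm \phi_\ast^i)^\trans \hat \Sigma_{t,i}^\dagger (\gamma I + \tilde \Sigma_{t,i}^{\text{cov}} - \Sigma_{t,i}) \bm \theta_k^i$, apply Cauchy--Schwarz in the $\hat \Sigma_{t,i}^\dagger$ metric, split via AM--GM with coefficient $\beta_1/4$, and then bound the residual $\tfrac{2}{\beta_1}\lVert(\gamma I + \tilde \Sigma_{t,i}^{\text{cov}} - \Sigma_{t,i})\bm \theta_k^i\rVert^2_{\hat \Sigma_{t,i}^\dagger}$ by the covariance-concentration estimate \Cref{lem:Lem 14 of Liu et al}. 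Summing over $k \in [K]$ and union-bounding the failure event over all $k$ yields exactly the first two summands of the target bound, namely $\tfrac{\beta_1}{4}\sum_k \lVert \bm \phi_\ast^i\rVert^2_{\hat \Sigma_{t,i}^\dagger}$ together with the $\O\!\bigl(\tfrac{d}{\beta_1}\log\tfrac{dK}{\delta}\bigr)$ slack.

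For the estimation error I would invoke the standard Adaptive Freedman Inequality on the martingale differences $\{Y_k - \nu_k\}_{k=1}^K$, which produces a bound of the form $\sqrt{\sum_k \E[Y_k^2 \mid \mathcal F_{k-1}]} \cdot \log(\cdot) + \max_k |Y_k|\cdot \log(\cdot)$. The conditional second moment is computed exactly as in \Cref{eq:conditional variance}, but with the one-hot $\bm \phi_\ast^i$ in place of $\E_{a\sim \pi_k^i(\cdot\mid s)}[\bm \phi(s,a)]$: combining independence of $\hat \Sigma_{t,i}^\dagger$ from the episode-$k$ trajectory with the multiplicative covariance bound \Cref{lem:Corol 10 of Liu et al} gives $\E\!\bigl[\hat{\bm \theta}_k^i(\hat{\bm \theta}_k^i)^\trans \mid \mathcal F_{k-1}\bigr] \preceq 2dH^2 \hat \Sigma_{t,i}^\dagger$, hence $\E[Y_k^2 \mid \mathcal F_{k-1}] \le 2dH^2\lVert\bm \phi_\ast^i\rVert^2_{\hat \Sigma_{t,i}^\dagger}$. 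The crude range $|Y_k|\le \gamma^{-1}H$ controls the truncation argument inside \Cref{lem:adaptive Freedman} and determines the logarithm factor as $\log\tfrac{4KH}{\gamma \delta}$; the universal constants $3\sqrt{2}$ and $2$ appearing in the statement fall out of the inequality directly.

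The key design choice, and what distinguishes this proof from \Cref{lem:martingale}, is that we \emph{keep} the $\max_k (\bm \phi_\ast^i)^\trans \hat{\bm \theta}_k^i$ range term rather than converting it to an observable $\sqrt{\sum_k Y_k^2}$: since $\pi_\ast^i$ is unknown, a data-dependent bound would be useless inside $\Gap$; instead the wrapper \Cref{lem:Bias-2+Bonus} cancels this maximum against the $\beta_2$-component of $B_k^i$, which is exactly why the action-dependent bonus $\sum_j \bm \phi(s,a)[j]\cdot\sup_{(s',a')}(\hat \Sigma_{t,i}^\dagger \bm \phi(s',a'))[j]$ was engineered to dominate $(\bm \phi_\ast^i)^\trans \hat{\bm \theta}_k^i$ uniformly in $k$ and $a_\ast^i$. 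The main technical care is therefore constant-tracking: ensuring the $\tfrac{\beta_1}{4}$ coefficient, the $3$ and $2$ prefactors, and the $\log\tfrac{4KH}{\gamma\delta}$ factor line up so that \Cref{lem:Bias-2+Bonus} can cleanly absorb each term with a matching piece of $\textsc{Bonus-2}$. Uniformity over $s \in \mS_h$ and $i \in [m]$ is handled by a standard union bound absorbed into the logarithmic factor, since every appearance of $(s, a_\ast^i)$ enters only through the $d$-dimensional feature $\bm \phi_\ast^i$ and the per-episode failure probabilities sum to $\O(\delta)$.
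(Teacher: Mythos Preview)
Your proposal is correct and follows essentially the same route as the paper's proof: the same decomposition into intrinsic bias and estimation error, the same handling of the intrinsic bias via Cauchy--Schwarz/AM--GM and \Cref{lem:Lem 14 of Liu et al}, and the same application of the standard Adaptive Freedman Inequality (\Cref{lem:adaptive Freedman}) to the martingale part with the conditional-variance bound $\E[Y_k^2\mid\mathcal F_{k-1}]\le 2dH^2\lVert\bm\phi_\ast^i\rVert_{\hat\Sigma_{t,i}^\dagger}^2$ and the crude range estimate $|Y_k|\le H\gamma^{-1}$ to control $C$. One small slip: \Cref{lem:adaptive Freedman} gives $\max_k Y_k$ (not $\max_k|Y_k|$) in the range term, which is exactly what appears in the target bound as $\max_k(\bm\phi_\ast^i)^\trans\hat{\bm\theta}_k^i$; and the leading constant on the square-root term is $3$ (with the factor $2$ coming from inside the radical via $2dH^2$), not $3\sqrt{2}$.
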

\begin{proof}
Imitating \Cref{sec:Bias-1}, we also decompose $\textsc{Bias-2}$ into intrinsic bias and estimation error. Let $X_k=(\bm \phi_\ast^i)^\trans \hat{\bm \theta}_k^i$ and $\mu_k=\E[X_k\mid \mathcal F_{k-1}]$, we have
\begin{align*}
\textsc{Bias-2}=\sum_{k=1}^K (\bm \phi_\ast^i)^\trans (\hat{\bm \theta}_k^i - {\bm \theta}_k^i)=\sum_{k=1}^K \left (\mu_k - (\bm \phi_\ast^i)^\trans {\bm \theta}_k^i\right ) + \sum_{k=1}^K (X_k-\mu_k).
\end{align*}

The first term is the same as \Cref{lem:intrinsic bias}: Concluding that $\mu_k=({\bm \phi}_\ast^i)^\trans \hat \Sigma_{t,i}^\dagger \Sigma_{t,i} {\bm \theta_k^i}$ and then applying Cauchy-Schwartz inequality, triangle inequality, and AM-GM inequality gives
\begin{equation*}
\sum_{k=1}^K \left (\mu_k - (\bm \phi_\ast^i)^\trans {\bm \theta}_k^i\right )\le \frac {\beta_1} 4 \sum_{k=1}^K \lVert {\bm \phi}_\ast^i \rVert_{\hat \Sigma_{t,i}^\dagger}^2 + \O\left (\frac d {\beta_1} \log \frac{dK}{\delta}\right ),\quad \text{with probability }1-\delta.
\end{equation*}

For the second term, the proof is also similar to \Cref{lem:martingale}. The only difference is that instead of our \Cref{lem:more adaptive Freedman}, we now apply the original Adaptive Freedman Inequality (in \Cref{lem:adaptive Freedman}). We get
\begin{equation*}
\sum_{k=1}^K (X_k-\mu_k)\le 3\sqrt{\sum_{k=1}^K \E[X_k^2\mid \mathcal F_{k-1}]}\log \frac C\delta + 2\max_{k\in [K]} X_k \log \frac C \delta,\quad \text{with probability }1-\delta,
\end{equation*}
where $C=2\max \{1,\sqrt{\sum_{k=1}^K \E[X_k^2\mid \mathcal F_{k-1}]},\max_{k\in [K]}X_k\}$. Following the calculations in \Cref{lem:martingale}, $C$ is bounded by $4H \gamma^{-1}$ and $\E[X_k^2\mid \mathcal F_{k-1}]\le 2dH^2\lVert {\bm \phi}_\ast^i\rVert_{\hat \Sigma_{t,i}^\dagger}^2$. The maximum part is directly contained in our conclusion by noticing that $X_k=(\bm \phi_\ast^i)^\trans \hat{\bm \theta}_k^i$.
\end{proof}

\subsection{Putting \textsc{Bonus-1} into \textsc{Gap} Directly}
The two components in the \textsc{Bonus-1} term, namely $\pi_k^i$ and $B_k^i$, are both known during run-time. So we trivially have the following theorem:
\begin{theorem}\label{lem:bonus-1}
For all $i\in [m]$ and $s\in \mS_h$, we have
\begin{equation*}
\textsc{Bonus-1}\le \sum_{k=1}^K \sum_{a\in \mA^i} \pi_k^i(a\mid s) B_k^i(s,a).
\end{equation*}
\end{theorem}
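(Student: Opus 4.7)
The plan is to observe that this statement is essentially a tautology in disguise, and the proof amounts to unfolding a definition. Specifically, in the regret decomposition \Cref{eq:condition of gap appendix}, the term $\textsc{Bonus-1}$ was introduced exactly as
\begin{equation*}
\textsc{Bonus-1} = \sum_{k=1}^K \sum_{a\in \mA^i} \pi_k^i(a\mid s) B_k^i(s,a),
\end{equation*}
which is precisely the right-hand side of the claimed inequality. Hence the bound holds with equality, and no probabilistic or analytic argument is required.

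The conceptual point of stating this as a lemma is to emphasize the asymmetry with $\textsc{Bonus-2}$: whereas $\textsc{Bonus-2}$ is weighted by the unknown best-response distribution $\pi_\ast^i(\cdot\mid s)$ and therefore must be cancelled against $\textsc{Bias-2}$ through a carefully designed action-dependent bonus (as done in \Cref{lem:Bias-2+Bonus}), the weights in $\textsc{Bonus-1}$ are the learner's own policy $\pi_k^i(\cdot\mid s)$, which is available at runtime. Likewise, $B_k^i(s,a)$ is computed explicitly by the algorithm from $\hat\Sigma_{t,i}^\dagger$ and $\bm \phi(s,a)$. Thus every quantity on the right-hand side is measurable with respect to the data observed by agent $i$ up to episode $k$, and can be incorporated directly into the data-dependent estimator $\Gap^i(s)$ defined in \Cref{eq:definition of gap}.

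Concretely, the plan for the write-up is: (i) restate the definition of $\textsc{Bonus-1}$ from \Cref{eq:condition of gap appendix}; (ii) note that the claimed inequality is in fact an equality by inspection; and (iii) remark that since each term is observable during execution, this contribution can be appended verbatim to $\Gap^i(s)$, in contrast to the treatment of the bias terms which required Adaptive Freedman-type concentration inequalities. There is no main obstacle in this step; its role in the overall argument is purely bookkeeping, isolating one of the five summands of the decomposition and passing it through to the definition of $\Gap$ so that the remaining analysis can focus on the nontrivial bias and magnitude-reduction terms.
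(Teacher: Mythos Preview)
Your proposal is correct and matches the paper's own proof, which is literally the single sentence ``This is the definition of $\textsc{Bonus-1}$.'' Your additional commentary on the asymmetry with $\textsc{Bonus-2}$ and the measurability of each term is accurate and useful context, though it goes beyond what the paper itself records.
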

\begin{proof}
This is the definition of $\textsc{Bonus-1}$.
\end{proof}

\subsection{Bounding \textsc{Mag-Reduce} via Martingale Properties}
\begin{theorem}\label{lem:mag-reduce}
\textsc{Mag-Reduce} is bounded by the sum of RHS of \Cref{lem:martingale,lem:Bias-2} \textit{w.p.} $1-\Otil(\delta)$.
\end{theorem}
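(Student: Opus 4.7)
The plan is to exploit the same decomposition-plus-Adaptive-Freedman structure used for \textsc{Bias-1} and \textsc{Bias-2}, leveraging the fact that $\hat m_k^i$ and the online samples $(s_{k,h}^i,a_{k,h}^i)$ are drawn from the same $\bar\pi$-induced marginal on layer $h$. Concretely, by the definition of $\hat Q_k^i$ in \Cref{eq:definition of Q hat},
\begin{equation*}
\bm\phi(s,a)^\trans\hat{\bm\theta}_k^i-\hat Q_k^i(s,a)
= H\bigl(\bm\phi(s,a)^\trans\hat\Sigma_{t,i}^\dagger\bm\phi(s_{k,h}^i,a_{k,h}^i)\bigr)_- - \hat m_k^i(s,a).
\end{equation*}
Since $\hat\Sigma_{t,i}^\dagger$ is built from the independent \textsc{cov}-block and both $(s_\kappa^{\text{mag}},a_{\kappa,i}^{\text{mag}})$ and $(s_{k,h}^i,a_{k,h}^i)$ share the same conditional law given $\hat\Sigma_{t,i}^\dagger$, the quantity $\mu^i(s,a):=H\,\E[(\bm\phi(s,a)^\trans\hat\Sigma_{t,i}^\dagger\bm\phi(s',a'))_-\mid \hat\Sigma_{t,i}^\dagger]$ is the common conditional mean of both $Z_k^i(s,a):=H(\bm\phi(s,a)^\trans\hat\Sigma_{t,i}^\dagger\bm\phi(s_{k,h}^i,a_{k,h}^i))_-$ and each summand of $\hat m_k^i(s,a)$.

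Add and subtract $\mu^i(s,a)$ to split \textsc{Mag-Reduce} into an ``online'' martingale piece $\sum_k\sum_a(\pi_k^i-\pi_\ast^i)(Z_k^i(s,a)-\mu^i(s,a))$ and an ``offline'' piece $\sum_k\sum_a(\pi_k^i-\pi_\ast^i)(\mu^i(s,a)-\hat m_k^i(s,a))$. Further split each piece according to whether $\pi_k^i$ or $\pi_\ast^i$ multiplies it, thereby producing four scalar martingales structurally identical to those analyzed in \Cref{sec:Bias-1 martingale} and \Cref{lem:Bias-2}: two of the form $\sum_k \E_{a\sim\pi_k^i(\cdot\mid s)}[\,\cdot\,]$, amenable to \Cref{lem:more adaptive Freedman}, and two of the form $\sum_k(\bm\phi_\ast^i)^\trans[\,\cdot\,]$, amenable to the original Adaptive Freedman in \Cref{lem:adaptive Freedman} (with the same enumeration over $a_\ast^i\in\mA^i$ used in \Cref{lem:Bias-2}).

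The key comparison step is to observe that $|(x)_-|\le|x|$, so both the almost-sure upper bound and the conditional second moment of $Z_k^i(s,a)$ (respectively of the $\hat m$-summands) are dominated, up to a factor of $H$ that is already present in \Cref{lem:martingale,lem:Bias-2}, by the corresponding quantities $\E[X_k^2\mid\mathcal F_{k-1}]$ and $|X_k|$ computed in \Cref{sec:Bias-1 martingale} (namely, $2dH^2\lVert\E_{a\sim\pi_k^i(\cdot\mid s)}[\bm\phi(s,a)]\rVert_{\hat\Sigma_{t,i}^\dagger}^2$ and $\lVert\bm\phi(s_{k,h}^i,a_{k,h}^i)\rVert_{\hat\Sigma_{t,i}^\dagger}\cdot H\gamma^{-1}$, and analogously the $\bm\phi_\ast^i$-versions from \Cref{lem:Bias-2}). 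Therefore, applying the adaptive Freedman inequalities yields concentration bounds that are absorbed termwise into the RHS of \Cref{lem:martingale,lem:Bias-2}, giving the stated claim after a union bound over agents $i\in[m]$ and states $s\in\mS_h$, which only contributes logarithmic factors hidden in $\Otil(\delta)$.

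The main obstacle will be bookkeeping of the conditioning structure across the three independently-collected data blocks (\textsc{cov}, \textsc{mag}, and the online trajectory), since the entire point of \citet{dai2023refined}'s magnitude-reduced construction is that this independence makes $\E[\hat Q_k^i\mid\mathcal F_{k-1}]=\bm\phi^\trans\bm\theta_k^i$, and here we need the slightly different statement that $Z_k^i-\mu^i$ and $\hat m_k^i-\mu^i$ individually form zero-mean sequences adapted to the right filtrations. A secondary nuisance is that \textsc{Mag-Reduce} contains a $\pi_\ast^i$-factor with unknown $\pi_\ast^i$, so the corresponding pieces must go through the same supremum-over-$a_\ast^i\in\mA^i$ route as \Cref{lem:Bias-2}; however, because the resulting pointwise variance and range bounds are strictly smaller than those driving \Cref{lem:Bias-2}, no new \textit{action-dependent bonus} is needed to cancel them --- the existing bonuses in $B_k^i(s,a)$ already suffice via the same arithmetic used in \Cref{lem:Bias-2+Bonus}.
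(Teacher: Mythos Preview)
Your proposal is essentially the same approach as the paper's: both expand $\bm\phi^\trans\hat{\bm\theta}_k^i-\hat Q_k^i$, observe that the online sample $(s_{k,h}^i,a_{k,h}^i)$ and the mag-block samples $(s_\kappa^{\text{mag}},a_{\kappa,i}^{\text{mag}})$ share a common conditional mean given $\hat\Sigma_{t,i}^\dagger$, center by that mean to obtain martingale pieces, split each into a $\pi_k^i$-part handled via \Cref{lem:more adaptive Freedman} and a $\pi_\ast^i$-part handled via \Cref{lem:adaptive Freedman}, and then use $(x)_-^2\le x^2$ and $|(x)_-|\le|x|$ to dominate the resulting variance and range terms by those already appearing in \Cref{lem:martingale,lem:Bias-2}. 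The paper phrases the decomposition as ``$K+1$ martingales'' via telescoping while you phrase it as four pieces (online/offline $\times$ $\pi_k^i/\pi_\ast^i$), but the content is the same; your explicit flag about the filtration bookkeeping across the \textsc{cov}/\textsc{mag}/online blocks is exactly the point the paper compresses into the single sentence about common means.
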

\begin{proof}
By definition of $\hat Q_k^i(s,a)$ in \Cref{eq:definition of Q hat}, we have
\begin{align*}
&\quad \textsc{Mag-Reduce}=\sum_{k=1}^K \sum_{a\in \mA^i}(\pi_k^i(a\mid s)-\pi_\ast^i(a\mid s))\bigg ({\bm \phi}(s,a)^\trans \hat {\bm \theta}_k^i - \hat Q_k^i(s,a)\bigg )\\
&=\sum_{k=1}^K \sum_{a\in \mA^i}(\pi_k^i(a\mid s)-\pi_\ast^i(a\mid s))\bigg (H\left ({\bm \phi}(s,a)^\trans \hat \Sigma_{t,i}^\dagger {\bm \phi}(s_{k,h},a_{k,h}^i)\right )_--\frac HK \sum_{\kappa=1}^K \left ({\bm \phi}(s,a)^\trans \hat \Sigma_{t,i}^\dagger {\bm \phi}(s_{\kappa}^{\text{mag}},a_{\kappa,i}^{\text{mag}})\right )_-\bigg )
\end{align*}

As $(s_{k,h},a_{k,h}^i)$ and $(s_\kappa^{\text{mag}},a_{\kappa,i}^{\text{mag}})$ are both sampled from $\bar \pi$, all these $(\cdot)_-$'s are common mean. Thus, by telescoping, we can decompose \textsc{Mag-Reduce} into the sum of $K+1$ martingales.

It suffices to consider only one of them, for example,
\begin{align}
\sum_{k=1}^K \sum_{a\in \mA^i}(\pi_k^i(a\mid s)-\pi_\ast^i(a\mid s))H\left (\left ({\bm \phi}(s,a)^\trans \hat \Sigma_{t,i}^\dagger {\bm \phi}(s_{k,h},a_{k,h}^i)\right )_--\E_{(s',a')\sim_h^i \bar \pi}\bigg [\left ({\bm \phi}(s,a)^\trans \hat \Sigma_{t,i}^\dagger {\bm \phi}(s',a')\right )_-\bigg ]\right ), \label{eq:martingales in Mag-Reduce}
\end{align}
where $(s,a)\sim_h^i \pi$ means the $h$-th layer state and the $h$-th layer $i$-th agent action sampled from $\pi$.

Again, there are two components in \Cref{eq:martingales in Mag-Reduce}, one related to $\pi_k^i$ and the other related to $\pi_\ast^i$. Fortunately, they can be handled pretty similarly to what we did in \Cref{lem:Bias-1} and \Cref{lem:Bias-2+Bonus}: For the $\pi_k^i$ part, applying \Cref{lem:more adaptive Freedman} as in \Cref{sec:Bias-1 martingale}, we have
\begin{align*}
&\quad \sum_{k=1}^K \sum_{a\in \mA^i}\pi_k^i(a\mid s) H\left (\left ({\bm \phi}(s,a)^\trans \hat \Sigma_{t,i}^\dagger {\bm \phi}(s_{k,h},a_{k,h}^i)\right )_--\E_{(s',a')\sim_h^i \bar \pi}\bigg [\left ({\bm \phi}(s,a)^\trans \hat \Sigma_{t,i}^\dagger {\bm \phi}(s',a')\right )_-\bigg ]\right )\\
&\le \Otil\left (H\sqrt{\sum_{k=1}^K \E_{\textsc{Gap}}\left [\left (\E_{a\sim \pi_k^i(\cdot \mid s)}\left [\left ({\bm \phi}(s,a)^\trans \hat \Sigma_{t,i}^\dagger {\bm \phi}(s_{k,h}^i,a_{k,h}^i)\right )_-\right ]\right )^2\right ]}+\right .\\
&\quad \qquad \left .H\sqrt{\sum_{k=1}^K \left (\E_{a\sim \pi_k^i(\cdot \mid s)}\left [\left ({\bm \phi}(s,a)^\trans \hat \Sigma_{t,i}^\dagger {\bm \phi}(s_{k,h}^i,a_{k,h}^i)\right )_-\right ]\right )^2}\right ).
\end{align*}
Note that $\E[(\cdot)_-]^2\le \E[(\cdot)_-^2]\le \E[(\cdot)^2]$, it becomes identical to the conclusion of \Cref{lem:martingale}. Thus, this component only causes a $\Otil(1)$ contribution to the final \textsc{Gap} and can be neglected.

Similarly, for $\pi_\ast^i$, we apply \Cref{lem:adaptive Freedman} like we did in \Cref{lem:Bias-2}. We have
\begin{align*}
&\quad -\sum_{k=1}^K \sum_{a\in \mA^i} \pi_\ast^i(a\mid s)H\left (\left ({\bm \phi}(s,a)^\trans \hat \Sigma_{t,i}^\dagger {\bm \phi}(s_{k,h},a_{k,h}^i)\right )_--\E_{(s',a')\sim_h^i \bar \pi}\bigg [\left ({\bm \phi}(s,a)^\trans \hat \Sigma_{t,i}^\dagger {\bm \phi}(s',a')\right )_-\bigg ]\right )\\
&\le \Otil\left (H\sqrt{\sum_{k=1}^K \E_{\textsc{Gap}}\left [\left (\E_{a\sim \pi_\ast^i(\cdot \mid s)}\left [\left ({\bm \phi}(s,a)^\trans \hat \Sigma_{t,i}^\dagger {\bm \phi}(s_{k,h}^i,a_{k,h}^i)\right )_-\right ]\right )^2\right ]}+\right . \\
&\quad \qquad \left .H \max_{k\in [K]} \left (-\E_{a\sim \pi_\ast^i(\cdot \mid s)}\left [\left ({\bm \phi}(s,a)^\trans \hat \Sigma_{t,i}^\dagger {\bm \phi}(s_{k,h}^i,a_{k,h}^i)\right )_-\right ]\right )\right ).
\end{align*}
Again, we have $\E[(\cdot)_-]^2\le \E[(\cdot)_-^2]\le \E[(\cdot)^2]$ and also $\lvert -\E[(\cdot)_-]\rvert\le \E[(\cdot)]$. Thus, this part also produces the same result as \Cref{lem:Bias-2}.
\end{proof}

\section{Controlling the Expectation of \textsc{Gap} Using Potentials}\label{sec:expectation of Gap}
In this section, we verify \Cref{eq:new expectation of Gap appendix}, \textit{i.e.}, prove \Cref{lem:main theorem CCE-Approx second half main text}.
\begin{theorem}[\Cref{alg:linear case} Allows a Potential Function; Formal Version of \Cref{lem:main theorem CCE-Approx second half main text}]\label{lem:main theorem CCE-Approx second half}
With probability $1-\delta$, under the conditions of \Cref{lem:main theorem CCE-Approx appendix}, it is possible to give a tuning of \Cref{alg:linear case} such that
\begin{equation*}
\sum_{t=1}^T \sum_{i=1}^m \E_{s\sim_h \tilde \pi_t} \left [\E_{\textsc{Gap}}[\Gap_t^i(s)]\right ]=\Otil(m d^2 H \sqrt T),\quad \text{with probability }1-\Otil(\delta).
\end{equation*}

In other words, \Cref{eq:new expectation of Gap appendix} is ensured by picking $L=d^4 H^2$.
\end{theorem}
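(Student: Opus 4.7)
The plan is to prove \Cref{lem:main theorem CCE-Approx second half} by first showing that, under the parameter choices $\eta=\Theta(K^{-1/2}/H)$, $\beta_1=\Otil(dHK^{-1/2})$, $\beta_2=\Otil(HK^{-1})$, and $\gamma=\Otil(dK^{-1})$ (with $K=T$), each of the four summands in the $\Gap_t^i(s)$ expression \Cref{eq:definition of gap} is dominated in expectation by
\[
\Otil\!\left(\sqrt K\, d H\right)\cdot \E_{a\sim \tilde \pi_t^i(\cdot\mid s)}\!\left[\lVert\bm\phi(s,a)\rVert_{\hat\Sigma_{t,i}^\dagger}^{2}\right] \,+\, \Otil(1).
\]
Then I would reduce the claim to bounding the cumulative sum of these covariance inner products and invoke an elliptical-potential-style lemma.

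For the per-epoch reduction I will handle the four terms separately. \textsc{Reg-Term} decomposes as $\eta^{-1}\log\lvert\mA^i\rvert+\eta\sum_k\E_{\pi_k^i}[\hat Q_k^{i}(s,a)^2]$; the second-moment calculation already performed in \Cref{lem:martingale} (which showed $\E[\hat{\bm\theta}_k^i(\hat{\bm\theta}_k^i)^\trans]\preceq 2dH^2\hat\Sigma_{t,i}^\dagger$) yields $\E_{\pi_k^i}[\hat Q_k^i(s,a)^2]\lesssim dH^2\E_{\pi_k^i}\lVert\bm\phi\rVert_{\hat\Sigma_{t,i}^\dagger}^{2}$, so this term is of the desired form. \textsc{Bias-1} has a square-root structure; Jensen plus $\E[X_k^2\mid\mathcal F_{k-1}]\le 2dH^2\lVert\E_{\pi_k^i}\bm\phi\rVert_{\hat\Sigma_{t,i}^\dagger}^{2}$ gives the same scaling. \textsc{Bias-2}$+$\textsc{Bonus-2} contributes only the deterministic $\O(d\beta_1^{-1}\log(dK/\delta))=\Otil(\sqrt K)$. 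Finally, \textsc{Bonus-1} is $K\beta_1\E_{\pi_k^i}\lVert\bm\phi\rVert_{\hat\Sigma_{t,i}^\dagger}^{2}$ plus a $\beta_2$-part which, by the sup-bound \Cref{eq:sum of sup in bonus} and Jensen, is at most $K\beta_2\sqrt{\gamma^{-1}}\sqrt{\E_{\pi_k^i}\lVert\bm\phi\rVert_{\hat\Sigma_{t,i}^\dagger}^{2}}$, again subsumed.

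After this reduction the task becomes bounding $\sum_{t=1}^T\sum_{i=1}^m \langle \Cov_h^i(\tilde\pi_t),\hat\Sigma_{t,i}^\dagger\rangle$, where $\Cov_h^i(\pi)=\E_{s\sim_h\pi}\E_{a\sim\pi^i(s)}[\bm\phi(s,a)\bm\phi(s,a)^\trans]$. Writing $n_t$ for the number of epochs that share the same $\tilde\pi_t$ through the lazy-update mechanism, the empirical inverse covariance $\hat\Sigma_{t,i}^\dagger$ is, by \Cref{lem:second corol of Lem A.4 Dai} and \Cref{lem:Corol 10 of Liu et al}, within a constant factor of $\bigl(\sum_{s<t}n_s\Cov_h^i(\tilde\pi_s)+\gamma I\bigr)^{-1}$ in the sense required to upper bound the trace-type sum. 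An application of Lemma~11 of \citet{zanette2022stabilizing} (the matrix analogue of the elliptical potential lemma) then gives $\sum_t n_t\langle \Cov_h^i(\tilde\pi_t),\hat\Sigma_{t,i}^\dagger\rangle = \Otil(d)$ per agent, so the double sum is $\Otil(md)$ and the final bound becomes $\Otil(md^2 H\sqrt T)$ as claimed, which corresponds to $L=d^4H^2$ in \Cref{eq:new expectation of Gap appendix}.

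The main obstacle I expect is the interaction with the lazy-update rule: during an epoch $t$ where \Cref{line:condition} is not violated, both $\tilde\pi_t$ and $\Gap_t$ are copied from the last trigger time $t_0$, so $\E_{\Gap}[\Gap_t^i(s)]$ evaluates $\hat\Sigma_{t_0,i}^\dagger$ rather than $\hat\Sigma_{t,i}^\dagger$, while the outer expectation is taken under the unchanged $\tilde\pi_t=\tilde\pi_{t_0}$. Precisely here the potential \Cref{eq:definition of potential} pays off: $\Psi_{t,h}^i\le\Psi_{t_0,h}^i+1$ is equivalent to a bounded-log-determinant increment, which by a Sherman--Morrison/matrix-determinant-lemma argument implies $\hat\Sigma_{t_0,i}^\dagger\preceq C\hat\Sigma_{t,i}^\dagger$ for an absolute constant, so the two covariance factors are interchangeable up to constants. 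Handling this equivalence carefully — and verifying that the factor $n_t$ appearing when one collapses the $n_t$ identical epochs matches the $n_t$ in the potential-lemma denominator — is where the bulk of the technical care will go; once that is done, summation and the choice $K=T$ deliver $\Otil(md^2H\sqrt T)$ with probability $1-\Otil(\delta)$.
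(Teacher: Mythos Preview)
Your high-level plan matches the paper's two-step argument (per-epoch bound on $\E_{\Gap}[\Gap_t^i(s)]$ via \Cref{lem:Reg-Term expectation,lem:Bias-1 Expectation,lem:Bias-2 Expectation,lem:Bonus-1 Expectation}, then a potential-lemma summation), and your term-by-term reductions are essentially what the paper does. The substantive gap is in the summation step.

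First, a framework issue: \Cref{alg:framework} invokes $\textsc{CCE-Approx}_h(\bar\pi_t,\bar V_t,t)$, so $K=t$ in epoch $t$; you may not set $K=T$. More importantly, your identification of $\hat\Sigma_{t,i}^\dagger$ with $(\sum_{s<t}n_s\Cov_h^i(\tilde\pi_s)+\gamma I)^{-1}$ is off by a factor of $t$. Because $\bar\pi_t$ is the \emph{uniform mixture} of $\tilde\pi_0,\ldots,\tilde\pi_{t-1}$, its covariance is $\frac{1}{t}\sum_{s<t}\Cov_h^i(\tilde\pi_s)$, and therefore (using \Cref{lem:Corol 10 of Liu et al} and $\gamma=\Otil(d/t)$) one only gets
\[
\frac{1}{t}\hat\Sigma_{t,i}^\dagger \preceq C\Bigl(\sum_{s<t}n_s\Cov_h^i(\tilde\pi_s)+\Otil(d)I\Bigr)^{-1}.
\]
Hence the quantity bounded by $\Otil(d)$ via \Cref{lem:Lemma 11 of Zanette et al} is $\sum_\tau n_\tau\langle\Cov_h^i(\tilde\pi_\tau),\tfrac{1}{\tau}\hat\Sigma_{\tau,i}^\dagger\rangle$ (this is \Cref{lem:potential sum}), not $\sum_\tau n_\tau\langle\Cov_h^i(\tilde\pi_\tau),\hat\Sigma_{\tau,i}^\dagger\rangle$ as you wrote; the latter is typically $\Theta(T)$ (take $d=1$ and all covariances equal). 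This is precisely why \Cref{lem:summation of Gap} first extracts a factor $\sup_t(t\beta_{1,t})=\Otil(dH\sqrt T)$ before invoking the potential bound, and that $\sup$ is where the $\sqrt T$ in the target $md^2H\sqrt T$ actually originates. Your chain $\frac{dH}{\sqrt K}\sum_\tau n_\tau(\text{cov}_\tau)$ with the incorrect $\Otil(d)$ would yield something of order $d^2H/\sqrt T$, which cannot be right.

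A smaller point: the potential $\Psi_{t,h}^i$ in \Cref{eq:definition of potential} is a sum of sampled squared norms, not a log-determinant, and the paper does not argue that $\hat\Sigma_{t_0,i}^\dagger\preceq C\hat\Sigma_{t,i}^\dagger$ via Sherman--Morrison. Instead, \Cref{lem:relative concentration in potential} uses the relative-concentration lemma of \citet{cui2023breaking} to show $n_\tau\langle\Cov_h^i(\tilde\pi_\tau),\hat\Sigma_{\tau,i}^\dagger\rangle=\Otil(1)$ at each trigger epoch, which is the boundedness hypothesis ($L=\Otil(1)$) needed to apply \Cref{lem:Lemma 11 of Zanette et al} to the $\tfrac{1}{\tau}$-weighted sum.
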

\begin{proof}
The proof is divided into two parts.
In \Cref{lem:expectation of Gap}, we calculate $\E_{s\sim_h \tilde \pi}[\E_{\Gap}[\Gap_t^i(s)]]$ for any roll-in policy $\tilde \pi$ (although we only use $\tilde \pi_t$ as the roll-in policy, it is unknown at the point when $\textsc{CCE-Approx}_h$ is executed, because we iterated $h=H,H-1,\ldots,1$).
Then, in \Cref{lem:summation of Gap}, we control their summation using the definition of potential functions in \Cref{eq:definition of potential} and also borrowing techniques from \citep{zanette2022stabilizing,cui2023breaking}.
\end{proof}

\subsection{Calculating the Expectation of \textsc{Gap} \textit{w.r.t.} Any $\tilde \pi$}
\begin{theorem}\label{lem:expectation of Gap}
Consider a single agent $i\in [m]$, epoch $t\in [T]$, and layer $h\in [H]$. For any outcome of $\textsc{CCE-Approx}_h$ with $K$ set as $t$. Then for any ``roll-in'' policy $\tilde \pi$ (which is chosen as $\tilde \pi_t$ in \Cref{eq:new expectation of Gap appendix}), under the conditions in \Cref{lem:main theorem CCE-Approx appendix}, \textit{i.e.}, setting $\eta=\Omega(\max\{\frac{\sqrt \gamma}{H},\frac \gamma {\beta_1+\beta_2}\})$, ${\beta_1}=\tilde \Omega(\frac{dH}{\sqrt K})$, $\beta_2=\tilde \Omega(\frac HK)$, and $\gamma=\Otil(\frac dK)$ for the execution of \textsc{CCE-Approx} with $K\gets t$, we have
\begin{align*}
&\quad \E_{s\sim_h \tilde \pi} \left [\E_{\Gap}[t\times \Gap_t^i(s)]\right ]\\
&=\Otil \Bigg (\eta^{-1}+\eta H^2 t \E_{s\sim_h \tilde \pi}\left [\E_{a\sim \tilde \pi_t^i(\cdot \mid s)}\left [\lVert \bm \phi(s,a)\rVert_{\hat \Sigma_{t,i}^\dagger}^2\right ]\right ]+\sqrt d H \sqrt{t\E_{s\sim_h \tilde \pi}\left [\E_{a\sim \tilde \pi_t^i(\cdot \mid s)}\left [\lVert \bm \phi(s,a)\rVert_{\hat \Sigma_{t,i}^\dagger}^2\right ]\right ]}\\
&\qquad \qquad \frac{d}{\beta_1} + \beta_1 t \E_{s\sim_h \tilde \pi}\left [\E_{a\sim \tilde \pi_t^i(\cdot \mid s)}\left [\lVert \bm \phi(s,a)\rVert_{\hat \Sigma_{t,i}^\dagger}^2\right ]\right ]+\beta_2 \sqrt{\gamma^{-1} t \E_{s\sim_h \tilde \pi}\left [\E_{a\sim \tilde \pi_t^i(\cdot \mid a)}\left [\lVert \bm \phi(s,a)\rVert_{\hat \Sigma_{t,i}^\dagger}^2\right ]\right ]} \Bigg ).
\end{align*}
Note that, although in \Cref{alg:framework} we mixed up $\Gap$'s from different $r$'s, all $\Gap$'s are \textit{i.i.d.} samples from the same distribution and thus $\E[\Gap^i(s)]$ does not depend on the choice of $r^\ast(s)$ in \Cref{eq:definition of r*}.
\end{theorem}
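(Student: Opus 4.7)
The plan is to bound the expectation of each of the five summands in the definition of $K \cdot \Gap^i(s)$ from \Cref{eq:definition of gap} separately (with $K$ set to $t$), then combine. The key identity used throughout is that $\tilde\pi_t^i = \frac{1}{t}\sum_{k=1}^t \pi_k^i$, so that $\sum_{k=1}^t \E_{a\sim\pi_k^i(\cdot\mid s)}[f(s,a)] = t \cdot \E_{a\sim\tilde\pi_t^i(\cdot\mid s)}[f(s,a)]$ for any $f$; after this conversion, taking $\E_{s\sim_h\tilde\pi}[\cdot]$ on both sides produces the expectations that appear in the claimed bound. The first and fourth summands $(\log\lvert\mathcal{A}^i\rvert)/\eta$ and $\O(d/\beta_1 \log(dt/\delta))$ are deterministic and contribute the $\eta^{-1}$ and $d/\beta_1$ terms directly.

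For the second summand $2\eta\sum_k \E_{a\sim \pi_k^i(\cdot\mid s)}[\hat Q_k^i(s,a)^2]$, I expand using \Cref{eq:definition of Q hat} and control each piece. For $\bm\phi(s,a)^\trans \hat{\bm\theta}_k^i$, independence between $\hat\Sigma_{t,i}^\dagger$ and the $k$-th trajectory gives $\E_\Gap[(\bm\phi(s,a)^\trans \hat{\bm\theta}_k^i)^2] \le H^2\, \bm\phi(s,a)^\trans \hat\Sigma_{t,i}^\dagger \Sigma_{t,i} \hat\Sigma_{t,i}^\dagger \bm\phi(s,a)$; applying $\Sigma_{t,i}\preceq 2(\tilde\Sigma_{t,i}^{\text{cov}}+\gamma I)$ (\Cref{lem:Corol 10 of Liu et al}) gives $\lesssim H^2 \lVert\bm\phi(s,a)\rVert_{\hat\Sigma_{t,i}^\dagger}^2$. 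The two $(\cdot)_-$ corrections and $\hat m_k^i$ are controlled identically, because each is pointwise bounded by $H\lvert \bm\phi(s,a)^\trans \hat\Sigma_{t,i}^\dagger \bm\phi(s',a')\rvert$ for some $(s',a')$ drawn from $\bar\pi$, and thus admits the same covariance-based bound. Summing over $k$ and applying the $\tilde\pi_t^i$ identity yields the $\eta H^2 t\, \E_{s}[\E_{\tilde\pi_t^i}[\lVert\bm\phi\rVert_{\hat\Sigma_{t,i}^\dagger}^2]]$ term.

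For the third summand (the square root), I apply Jensen's inequality $\sqrt{\sum_k X_k}\le \sqrt{t}\sqrt{\frac 1 t\sum_k X_k}$, then move the expectation inside via $\lVert \E_{\pi_k^i}[\bm\phi]\rVert_{\hat\Sigma_{t,i}^\dagger}^2 \le \E_{\pi_k^i}[\lVert\bm\phi\rVert_{\hat\Sigma_{t,i}^\dagger}^2]$ and $(\E_{\pi_k^i}[\bm\phi^\trans]\hat{\bm\theta}_k^i)^2 \le \E_{\pi_k^i}[(\bm\phi^\trans\hat{\bm\theta}_k^i)^2]$, whose conditional expectation was already computed to be $\O(H^2 \lVert\bm\phi\rVert_{\hat\Sigma_{t,i}^\dagger}^2)$ in \Cref{lem:martingale}. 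Absorbing a further $\sqrt{\E[\cdot]}$ under Jensen produces exactly $\sqrt d H \sqrt{t\, \E_{s}\E_{\tilde\pi_t^i}[\lVert\bm\phi\rVert_{\hat\Sigma_{t,i}^\dagger}^2]}$. For the fifth summand, I split $B_k^i(s,a)$ into its $\beta_1$- and $\beta_2$-components. The $\beta_1$-component is $\beta_1 \lVert\bm\phi(s,a)\rVert_{\hat\Sigma_{t,i}^\dagger}^2$ and sums to $\beta_1 t\, \E_{s}\E_{\tilde\pi_t^i}[\lVert\bm\phi\rVert_{\hat\Sigma_{t,i}^\dagger}^2]$; for the $\beta_2$-component I invoke the deterministic bound \Cref{eq:sum of sup in bonus}, namely $\sum_j \bm\phi[j]\sup (\hat\Sigma_{t,i}^\dagger\bm\phi(s',a'))[j]\le \sqrt{\gamma^{-1}}\lVert\bm\phi\rVert_{\hat\Sigma_{t,i}^\dagger}$, and then Jensen on the linear-in-$\lVert\bm\phi\rVert$ expression yields $\lesssim \beta_2 \sqrt{\gamma^{-1} t\, \E_{s}\E_{\tilde\pi_t^i}[\lVert\bm\phi\rVert_{\hat\Sigma_{t,i}^\dagger}^2]}$.

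The main obstacle I anticipate is the second step: bounding $\E_\Gap[\hat Q_k^i(s,a)^2]$ cleanly by $H^2\lVert\bm\phi\rVert_{\hat\Sigma_{t,i}^\dagger}^2$ without paying extra $\gamma^{-1}$ factors. This is delicate because $\hat Q_k^i$ includes the magnitude-reduction correction $\hat m_k^i$, whose definition involves an independent sample average over $K$ indices; one has to argue that the $(\cdot)_-$ operators only shrink magnitudes, and that the independence between $\hat\Sigma_{t,i}^\dagger$ (built from the $\text{cov}$-dataset), the trajectory $(s_{k,h},a_{k,h}^i)$, and the $\text{mag}$-dataset lets us reduce each piece to $H^2\,\bm\phi^\trans\hat\Sigma_{t,i}^\dagger\Sigma_{t,i}\hat\Sigma_{t,i}^\dagger\bm\phi$ via one application of \Cref{lem:Corol 10 of Liu et al}. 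Lastly, as noted in the theorem statement, the mixing over $r^\ast(s)$ in \Cref{eq:definition of r*} is immaterial here since every $\Gap$ produced by $\textsc{CCE-Approx}_h$ is drawn from an identical distribution, so $\E_\Gap[\Gap^i(s)]$ does not depend on $r^\ast(s)$.
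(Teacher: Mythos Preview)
Your proposal is correct and follows essentially the same term-by-term strategy as the paper's proof (\Cref{lem:Reg-Term expectation,lem:Bias-1 Expectation,lem:Bias-2 Expectation,lem:Bonus-1 Expectation}): the same use of $\Sigma_{t,i}\preceq 2(\hat\Sigma_{t,i}^\dagger)^{-1}$ via \Cref{lem:Corol 10 of Liu et al}, the same Jensen/Cauchy--Schwarz manipulations for the \textsc{Bias-1} square root, and the same invocation of \Cref{eq:sum of sup in bonus} for the $\beta_2$-bonus. The one place you diverge slightly is the \textsc{Reg-Term}: the paper dispatches $\E_{\Gap}[\hat Q_k^i(s,a)^2]=\O(\E[(\bm\phi(s,a)^\trans\hat{\bm\theta}_k^i)^2])$ in one line by citing the abstract magnitude-reduced estimator property (\Cref{lem:magnitude reduced}), whereas you unpack $\hat Q_k^i$ into its three pieces and bound each via the covariance argument---your route is a touch more explicit (and arguably cleaner, since $\hat Q_k^i$ uses the empirical $\hat m_k^i$ rather than the exact $\E[(Z)_-]$ that \Cref{lem:magnitude reduced} assumes), but both land at the same $\Otil(\eta H^2 t\,\E[\lVert\bm\phi\rVert_{\hat\Sigma_{t,i}^\dagger}^2])$.
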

\begin{proof}
Recall the definition of $\Gap_t^i(s)$ from \Cref{eq:definition of gap}, we have the following decomposition:
\begin{align}
&\quad \E_{s\sim_h \tilde \pi}\left [\E_{\textsc{Gap}}\left [K\times \Gap_t^i(s)\right ]\right ]= 
\underbrace{\frac{\log \lvert \mA^i\rvert}{\eta}+2\eta \E_{s\sim_h \tilde \pi}\left [\E_{\textsc{Gap}}\left [\sum_{k=1}^K \sum_{a\in \mA^i} \pi_k^i(a\mid s) \hat Q_k^i(s,a)^2\right ]\right ]}_{\textsc{Reg-Term}}+ \nonumber\\
&\quad \underbrace{8\sqrt 2\E_{s\sim_h \tilde \pi}\left [\E_{\textsc{Gap}}\left [\sqrt{2dH^2\sum_{k=1}^K \left \lVert \E_{a\sim \pi_k^i(\cdot \mid s)}\left [{\bm \phi}(s,a)\right ] \right \rVert_{\hat \Sigma_{t,i}^\dagger}^2+\sum_{k=1}^K \left (\E_{a\sim \pi_k^i(\cdot \mid s)}[{\bm \phi}(s,a)^\trans]\hat {\bm \theta}_k^i\right )^2}\log \frac{4KH}{\gamma \delta}\right ]\right ]}_{\textsc{Bias-1}}+ \nonumber \\
&\quad \underbrace{\O\left (\frac d {\beta_1} \log \frac{dK}{\delta}\right )}_{\textsc{Bias-2}+\textsc{Bonus-2}}+ \nonumber\\
&\quad \underbrace{\E_{s\sim_h \tilde \pi}\left [\E_{\textsc{Gap}}\left [\sum_{k=1}^K \E_{a\sim \pi_k^i(\cdot \mid s)}\left [{\beta_1} \lVert \bm \phi(s,a)\rVert_{\hat \Sigma_{t,i}^\dagger}^2 + \beta_2 \sum_{j=1}^d \bm \phi(s,a)[j] \times \sup_{(s',a')\in \mS_h\times \mA^i} (\hat \Sigma_{t,i}^\dagger \bm \phi(s',a'))[j]\right ]\right ]\right ]}_{\textsc{Bonus-1}}.\label{eq:Gap expectation}
\end{align}

We then go by these terms one by one in \Cref{lem:Reg-Term expectation,lem:Bias-1 Expectation,lem:Bias-2 Expectation,lem:Bonus-1 Expectation} and give the conclusion.
\end{proof}

\begin{lemma}\label{lem:Reg-Term expectation}
Consider the \textsc{Reg-Term} part in \Cref{eq:Gap expectation}, we have
\begin{equation*}
\frac{\log \lvert \mA^i\rvert}{\eta}+2\eta \E_{s\sim_h \tilde \pi}\left [\E_{\textsc{Gap}}\left [\sum_{k=1}^K \sum_{a\in \mA^i} \pi_k^i(a\mid s) \hat Q_k^i(s,a)^2\right ]\right ]\le \Otil\left (\eta^{-1}+\eta H^2 t \E_{(s,a)\sim_h^i \tilde \pi}\left [\lVert \bm \phi(s,a)\rVert_{\hat \Sigma_{t,i}^\dagger}^2\right ]\right ),
\end{equation*}
where $\sim_h^i$ stands for the state-action pair that agent $i$ observes in layer $h$.
\end{lemma}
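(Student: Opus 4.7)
The plan is to expand $\hat Q_k^i(s,a)^2$ via its definition in \Cref{eq:definition of Q hat} and the elementary inequality $(x+y+z)^2\le 3(x^2+y^2+z^2)$, decomposing it into three contributions: $(\bm\phi(s,a)^\trans \hat{\bm\theta}_k^i)^2$, $H^2\bigl((\bm\phi(s,a)^\trans\hat\Sigma_{t,i}^\dagger\bm\phi(s_{k,h}^i,a_{k,h}^i))_-\bigr)^2$, and $\hat m_k^i(s,a)^2$. The $\eta^{-1}\log\lvert\mA^i\rvert$ term on the LHS already matches the $\eta^{-1}$ factor on the RHS (with the logarithm absorbed into $\Otil$), so it remains to control, for each $k\in[K]$ and each state $s\in\mS_h$, the conditional expectation $\E[\sum_{a\in\mA^i}\pi_k^i(a\mid s)\hat Q_k^i(s,a)^2\mid \mathcal F_{k-1}]$ by $\Otil(H^2)\cdot \E_{a\sim\pi_k^i(\cdot\mid s)}[\lVert\bm\phi(s,a)\rVert_{\hat\Sigma_{t,i}^\dagger}^2]$. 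Summing over $k$ and then using the identity $\tilde\pi^i(\cdot\mid s)=\frac{1}{K}\sum_{k=1}^K \pi_k^i(\cdot\mid s)$ from \Cref{line:return of CCE-Approx} folds the $\frac 1K\sum_k$ into an expectation over $a\sim\tilde\pi^i(\cdot\mid s)$, yielding the claimed $H^2 t$ factor (recalling $K=t$).

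For each of the three sub-terms, the main tool is conditioning on the $\sigma$-field $\mathcal F_{k-1}$, which already determines $\hat\Sigma_{t,i}^\dagger$ (built during the ``covariance'' phase) and $\hat m_k^i$ (built during the ``magnitude'' phase), so that $(s_{k,h}^i,a_{k,h}^i)$ is drawn independently from $\bar\pi$ with conditional second-moment matrix $\Sigma_{t,i}$ as in \Cref{eq:definition of covariance}. For the first sub-term I bound $(\ell_{k,h}^i+\bar V^i(s_{k,h+1}^i))^2\le H^2$ and compute $\E[(\bm\phi(s,a)^\trans \hat{\bm\theta}_k^i)^2\mid\mathcal F_{k-1}]\le H^2\bm\phi(s,a)^\trans\hat\Sigma_{t,i}^\dagger\Sigma_{t,i}\hat\Sigma_{t,i}^\dagger\bm\phi(s,a)$. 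Applying the multiplicative matrix bound $\Sigma_{t,i}\preceq 2(\tilde\Sigma_{t,i}^{\text{cov}}+\gamma I)=2(\hat\Sigma_{t,i}^\dagger)^{-1}$ from \Cref{lem:Corol 10 of Liu et al} (valid \textit{w.p.} $1-\delta/K$ under our choice of $\gamma$), one copy of $\hat\Sigma_{t,i}^\dagger$ cancels and leaves $2H^2\lVert\bm\phi(s,a)\rVert_{\hat\Sigma_{t,i}^\dagger}^2$. The second sub-term is dominated by the same expression using $(x)_-^2\le x^2$. For the third sub-term, I reuse the pointwise bound $\hat m_k^i(s,a)^2\le 3H^2\lVert\bm\phi(s,a)\rVert_{\hat\Sigma_{t,i}^\dagger}^2$ that was already derived inside the proof of \Cref{lem:reg-term} (it was proved there uniformly in $k$ \textit{w.h.p.} via an analogous matrix concentration on the magnitude phase via \Cref{lem:second corol of Lem A.4 Dai}).

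A final union bound over $k\in[K]$ for the two matrix-concentration events produces an overall $1-\Otil(\delta)$ failure probability, after which I multiply by $2\eta$ and reassemble with the $\eta^{-1}\log\lvert\mA^i\rvert$ term to conclude. The main obstacle is bookkeeping the independence structure: one must verify carefully that at the instant the conditional expectation of $\bm\phi(s_{k,h}^i,a_{k,h}^i)\bm\phi(s_{k,h}^i,a_{k,h}^i)^\trans$ is taken, the matrix $\hat\Sigma_{t,i}^\dagger$ is indeed $\mathcal F_{k-1}$-measurable and independent of the new sample; this follows because \Cref{alg:linear case} runs the entire ``covariance'' and ``magnitude'' phases (which together fix $\hat\Sigma_{t,i}^\dagger$ and $\hat m_k^i$) strictly before any of the $K$ learning episodes begin. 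Given this independence, everything else reduces to a direct algebraic calculation.
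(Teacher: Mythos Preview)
Your proposal is correct and follows essentially the same route as the paper: bound $\E[\hat Q_k^i(s,a)^2]$ by $\O(H^2)\lVert\bm\phi(s,a)\rVert_{\hat\Sigma_{t,i}^\dagger}^2$ via the matrix concentration $\Sigma_{t,i}\preceq 2(\hat\Sigma_{t,i}^\dagger)^{-1}$ from \Cref{lem:Corol 10 of Liu et al}, then fold the average $\frac1K\sum_k\pi_k^i$ into $\tilde\pi_t^i$. The only cosmetic difference is that the paper invokes \Cref{lem:magnitude reduced} to compress your three-term decomposition into the single inequality $\E[\hat Q_k^i(s,a)^2]=\O(\E[(\bm\phi(s,a)^\trans\hat{\bm\theta}_k^i)^2])$, after which the remaining computation coincides with your first sub-term; if anything your unpacking is slightly more careful, since $\hat Q_k^i$ does not literally match the template of \Cref{lem:magnitude reduced} (the correction uses $H(\cdot)_-$ and an empirical $\hat m_k^i$ rather than $(Z)_-$ and $\E[(Z)_-]$).
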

\begin{proof}
The first component is already a constant and only contributes $\Otil(\eta^{-1})$ to the final bound. For the second component, we invoke the property of the Magnitude-Reduced Estimator $\hat Q_k^i(s,a)$ by \citet{dai2023refined} (which we summarize as \Cref{lem:magnitude reduced}) and conclude $\E[\hat Q_k^i(s,a)^2]=\O(\E[(\bm \phi(s,a) \hat{\bm \theta}_k^i)^2])$, where the expectation is only taken \textit{w.r.t.} the randomness in \Cref{eq:definition of Q hat}. Thus
\begin{align*}
&\quad 2\eta \E_{s\sim_h \tilde \pi}\left [\E_{\textsc{Gap}}\left [\sum_{k=1}^K \sum_{a\in \mA^i} \pi_k^i(a\mid s) \hat Q_k^i(s,a)^2\right ]\right ]
=2\eta \E_{s\sim_h \tilde \pi}\left [\E_{\textsc{Gap}}\left [\sum_{k=1}^K \sum_{a\in \mA^i} \pi_k^i(a\mid s) \left ({\bm \phi}(s,a)^\trans \hat {\bm \theta}_k^i\right )^2\right ]\right ]\\
&\le 2\eta H^2 \sum_{k=1}^K \E_{s\sim_h \tilde \pi}\left [\E_{a \sim \pi_k^i(\cdot \mid s)} \left [ \E_{\textsc{Gap}}\left [{\bm \phi}(s_{k,h}^i,a_{k,h}^i)^\trans \hat \Sigma_{t,i}^\dagger {\bm \phi}(s,a) {\bm \phi}(s,a)^\trans \hat \Sigma_{t,i}^\dagger {\bm \phi}(s_{k,h}^i,a_{k,h}^i) \right ]\right ]\right ]\\
&\overset{(a)}{=}2\eta H^2 \sum_{k=1}^K \E_{s\sim_h \tilde \pi}\left [\E_{a \sim \pi_k^i(\cdot \mid s)} \left [ \langle \hat \Sigma_{t,i}^\dagger \Sigma_{t,i} \hat \Sigma_{t,i}^\dagger, {\bm \phi}(s,a) {\bm \phi}(s,a)^\trans\rangle \right ]\right ]\\
&\overset{(b)}{\le} 2\eta H^2 K \E_{s\sim_h \tilde \pi}\left [\E_{a\sim \tilde \pi_t^i(\cdot \mid s)}\left [\lVert \bm \phi(s,a)\rVert_{\hat \Sigma_{t,i}^\dagger}^2\right ]\right ],
\end{align*}
where (a) uses the fact that $(s_{k,h}^i,a_{k,h}^i)$ are sampled from $\bar \pi_t$ (recall the definition of $\hat \Sigma_{t,i}^\dagger$ in \Cref{eq:definition of Sigma hat}), and (b) uses \Cref{lem:Corol 10 of Liu et al} from \citet{liu2023bypassing} (which gives $\Sigma_{t,i}\preceq (\hat \Sigma_{t,i}^\dagger)^{-1}$) together with the fact that $\tilde \pi_t\triangleq \frac 1K \sum_{k=1}^K \pi_k$ (for those states in $\mS_h$). Recall the configuration that $K=t$ in \textsc{CCE-Approx}, the above quantity is $\Otil\left (\eta H^2 t \E_{s\sim_h \tilde \pi}\left [\E_{a\sim \tilde \pi_t^i(\cdot \mid s)}\left [\lVert \bm \phi(s,a)\rVert_{\hat \Sigma_{t,i}^\dagger}^2\right ]\right ]\right )$.
\end{proof}

\begin{lemma}\label{lem:Bias-1 Expectation}
Consider the \textsc{Bias-1} part in \Cref{eq:Gap expectation}, we have
\begin{align*}
&\quad 8\sqrt 2\E_{s\sim_h \tilde \pi}\left [\E_{\textsc{Gap}}\left [\sqrt{2dH^2\sum_{k=1}^K \left \lVert \E_{a\sim \pi_k^i(\cdot \mid s)}\left [{\bm \phi}(s,a)\right ] \right \rVert_{\hat \Sigma_{t,i}^\dagger}^2+\sum_{k=1}^K \left (\E_{a\sim \pi_k^i(\cdot \mid s)}[{\bm \phi}(s,a)^\trans]\hat {\bm \theta}_k^i\right )^2}\log \frac{4KH}{\gamma \delta}\right ]\right ]\\
&=\Otil\left (\sqrt d H \sqrt{t\E_{s\sim_h \tilde \pi}\left [\E_{a\sim \tilde \pi_t^i(\cdot \mid s)}\left [\lVert \bm \phi(s,a)\rVert_{\hat \Sigma_{t,i}^\dagger}^2\right ]\right ]}\right ).
\end{align*}
\end{lemma}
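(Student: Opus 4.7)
The plan is to apply Jensen's inequality twice -- once to push $\E_{\Gap}$ and $\E_{s\sim_h \tilde\pi}$ inside the square root (since $\sqrt{\cdot}$ is concave), and once to push the per-episode expectation $\E_{a\sim \pi_k^i(\cdot\mid s)}[\bm\phi(s,a)]$ inside the $\lVert\cdot\rVert^2_{\hat\Sigma_{t,i}^\dagger}$ norm (since $x\mapsto \lVert x\rVert^2_M$ is convex for $M\succeq 0$). After these moves, both summands inside the square root reduce to quantities of the form $\sum_k \E_{a\sim \pi_k^i}[\lVert\bm\phi(s,a)\rVert^2_{\hat\Sigma_{t,i}^\dagger}]$, which by the definition $\tilde\pi_t^i = \frac{1}{K}\sum_k \pi_k^i$ collapses to $K\cdot \E_{a\sim \tilde\pi_t^i}[\lVert\bm\phi(s,a)\rVert^2_{\hat\Sigma_{t,i}^\dagger}]$, giving exactly the quantity appearing under the square root in the claimed bound.

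More concretely, the first step is to bound the first summand inside the square root: by Jensen,
\begin{equation*}
\left\lVert \E_{a\sim \pi_k^i(\cdot\mid s)}[\bm\phi(s,a)]\right\rVert^2_{\hat\Sigma_{t,i}^\dagger} \le \E_{a\sim \pi_k^i(\cdot\mid s)}\left[\lVert\bm\phi(s,a)\rVert^2_{\hat\Sigma_{t,i}^\dagger}\right],
\end{equation*}
so summing over $k$ and applying $\tilde\pi_t^i=\frac{1}{K}\sum_k \pi_k^i$ gives $K\,\E_{a\sim \tilde\pi_t^i(\cdot\mid s)}[\lVert\bm\phi(s,a)\rVert^2_{\hat\Sigma_{t,i}^\dagger}]$. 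Multiplied by the prefactor $2dH^2$, this produces a term of order $\tilde O(dH^2\, K\, \E_{a\sim \tilde\pi_t^i}[\lVert\bm\phi\rVert^2_{\hat\Sigma_{t,i}^\dagger}])$.

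For the second summand $\sum_k (\E_{a\sim \pi_k^i}[\bm\phi(s,a)^\trans]\hat{\bm\theta}_k^i)^2$, I would take the $\E_{\Gap}$ inside the square root via Jensen and then use exactly the covariance estimate already established in the proof of \Cref{lem:martingale}, namely $\E_{\Gap}[\hat{\bm\theta}_k^i(\hat{\bm\theta}_k^i)^\trans]\preceq 2dH^2\,\hat\Sigma_{t,i}^\dagger$ \textit{w.h.p.} Applied to the rank-one form, this yields
\begin{equation*}
\E_{\Gap}\!\left[\big(\E_{a\sim \pi_k^i(\cdot\mid s)}[\bm\phi(s,a)^\trans]\hat{\bm\theta}_k^i\big)^2\right]\le 2dH^2\left\lVert \E_{a\sim \pi_k^i(\cdot\mid s)}[\bm\phi(s,a)]\right\rVert^2_{\hat\Sigma_{t,i}^\dagger},
\end{equation*}
which reduces the second summand to a constant multiple of the first and thus gives the same order after the same Jensen--averaging argument.

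Combining the two summands inside the square root produces $\tilde O(dH^2\, t\, \E_{s\sim_h\tilde\pi}[\E_{a\sim \tilde\pi_t^i}[\lVert\bm\phi(s,a)\rVert^2_{\hat\Sigma_{t,i}^\dagger}]])$ after also pushing $\E_{s\sim_h\tilde\pi}$ through the (concave) square root via Jensen, and the stated bound of $\tilde O(\sqrt d\, H\, \sqrt{t\,\E_{s\sim_h\tilde\pi}[\E_{a\sim \tilde\pi_t^i}[\lVert\bm\phi\rVert^2_{\hat\Sigma_{t,i}^\dagger}]]})$ follows immediately by taking square roots. The only step that is not a purely mechanical application of Jensen is the covariance bound $\E[\hat{\bm\theta}_k^i(\hat{\bm\theta}_k^i)^\trans]\preceq 2dH^2\hat\Sigma_{t,i}^\dagger$; since this was already derived in \Cref{lem:martingale} as a high-probability event, the remaining concern is simply to note that the failure probability is absorbed into the $\tilde O$ notation and does not affect the rate.
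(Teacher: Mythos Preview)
Your proposal is correct and follows essentially the same approach as the paper's proof: both use Jensen's inequality to push the outer expectations through the concave square root, invoke the covariance bound $\E_{\textsc{Gap}}[\hat{\bm\theta}_k^i(\hat{\bm\theta}_k^i)^\trans]\preceq 2dH^2\hat\Sigma_{t,i}^\dagger$ from \Cref{lem:martingale} to reduce the second summand to the first, and then collapse $\frac{1}{K}\sum_k \pi_k^i$ into $\tilde\pi_t^i$ with $K=t$. The only cosmetic difference is that the paper first splits $\sqrt{X+Y}\le \sqrt{X}+\sqrt{Y}$ before applying Jensen, whereas you bound both summands inside the root directly; the resulting bound and all key ingredients are identical.
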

\begin{proof}
As $\sqrt{X+Y}\le \sqrt X+\sqrt Y$, we can write (ignoring constants and logarithmic factors)
\begin{align*}
\textsc{Bias-1}=\Otil\Bigg (&\sqrt dH \E_{s\sim_h \tilde \pi}\left [\E_{\textsc{Gap}}\left [\sqrt{\sum_{k=1}^K \left \lVert \E_{a\sim \pi_k^i(\cdot \mid s)}[\bm \phi(s,a)] \right \rVert_{\hat \Sigma_{t,i}^\dagger}^2}\right ]\right ]+\\
&\E_{s\sim_h \tilde \pi}\left [\E_{\textsc{Gap}}\left [ \sqrt{\sum_{k=1}^K \left ( \E_{a\sim \pi_k^i(\cdot \mid s)}[\bm \phi(s,a)]^\trans \hat{\bm \theta}_k^i \right )^2}\right ]\right ]\Bigg ).
\end{align*}

According to the calculations in \Cref{lem:martingale} on $\E[X_k^2\mid \mathcal F_{k-1}]$, the second term is exactly bounded by the first one. Utilizing the fact that $\E[\sqrt X]\le \sqrt{\E[X]}$, we have
\begin{align*}
\textsc{Bias-1}&\le \Otil\left (\sqrt dH\sqrt{\E_{s\sim_h \tilde \pi}\left [\E_{\textsc{Gap}}\left [\sum_{k=1}^K \left \lVert \E_{a\sim \pi_k^i(\cdot \mid s)}[\bm \phi(s,a)] \right \rVert_{\hat \Sigma_{t,i}^\dagger}^2\right ]\right ]}\right )\\
&=\Otil\left (\sqrt dH\sqrt{\sum_{k=1}^K \left \langle \E_{s\sim_h \tilde \pi}\left [ \E_{a\sim \pi_k^i(\cdot \mid s)}[\bm \phi(s,a)\bm \phi(s,a)^\trans]\right ],\hat \Sigma_{t,i}^\dagger\right \rangle}\right )\\
&=\Otil\left (\sqrt dH\sqrt{t \E_{s\sim_h \tilde \pi}\left [\E_{a\sim \tilde \pi_t^i(\cdot \mid s)}\left [\lVert \bm \phi(s,a)\rVert_{\hat \Sigma_{t,i}^\dagger}^2\right ]\right ]}\right ),
\end{align*}
where the last line again uses the configuration $K=t$ and the definition of $\tilde \pi_t$.
\end{proof}

\begin{lemma}\label{lem:Bias-2 Expectation}
The \textsc{Bias-2} + \textsc{Bonus-2} part in \Cref{eq:Gap expectation} is of order $\Otil(d{\beta_1}^{-1})$.
\end{lemma}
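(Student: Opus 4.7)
The plan is essentially a one-line observation. Recall that in the definition of $K \cdot \Gap_t^i(s)$ (Eq.~\eqref{eq:definition of gap}), the summand labeled $\textsc{Bias-2}+\textsc{Bonus-2}$ was already set to the purely deterministic quantity $\O(d\beta_1^{-1}\log(dK/\delta))$, as established in Theorem~\ref{lem:Bias-2+Bonus}. This is by design: the whole point of introducing the action-dependent bonus $B_k^i(s,a)$ with parameter $\beta_2$ was to absorb the only $\pi_\ast^i$-dependent (and hence state-sensitive) piece of $\textsc{Bias-2}$, leaving behind a scalar that depends only on $d$, $\beta_1$, $K$, and $\delta$.

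Given that, the proof of Lemma~\ref{lem:Bias-2 Expectation} reduces to the observation that this expression contains no dependence on $s \in \mS_h$ and no random variable sampled during the execution of \textsc{CCE-Approx}. Therefore the outer expectation $\E_{s\sim_h \tilde\pi}$ and the inner expectation $\E_{\textsc{Gap}}$ both act trivially, and the bound passes through unchanged: the contribution to $\E_{s\sim_h \tilde\pi}\bigl[\E_{\textsc{Gap}}[K\cdot \Gap_t^i(s)]\bigr]$ is exactly $\O(d\beta_1^{-1}\log(dK/\delta)) = \Otil(d/\beta_1)$, matching the stated claim.

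There is no substantive obstacle here. Unlike Lemmas~\ref{lem:Reg-Term expectation}, \ref{lem:Bias-1 Expectation}, and \ref{lem:Bonus-1 Expectation}, which all involve genuinely random quantities (products against $\hat\Sigma_{t,i}^\dagger$, square roots of conditional variances, or supremum-based bonus terms) and therefore require Jensen's inequality, the identity $\Sigma_{t,i}\preceq (\hat\Sigma_{t,i}^\dagger)^{-1}$ from \Cref{lem:Corol 10 of Liu et al}, and the relationship $\tilde\pi_t = \frac{1}{K}\sum_k \pi_k$, this lemma is a bookkeeping step. The only reason it warrants a separate statement is to preserve parallel structure with the other three summands in the proof of Theorem~\ref{lem:expectation of Gap}. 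So my proposed ``proof'' is simply: invoke Theorem~\ref{lem:Bias-2+Bonus}, note that the displayed upper bound is a deterministic constant, and conclude that nested expectations leave it invariant.
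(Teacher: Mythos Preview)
Your proposal is correct and matches the paper's own proof, which is the single sentence ``This part is already a constant in \Cref{eq:new expectation of Gap appendix}.'' You have simply spelled out the same observation in more detail.
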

\begin{proof}
This part is already a constant in \Cref{eq:new expectation of Gap appendix}
\end{proof}

\begin{lemma}\label{lem:Bonus-1 Expectation}
The \textsc{Bonus-1} term in \Cref{eq:Gap expectation} is bounded by
\begin{align*}
&\quad \E_{s\sim_h \tilde \pi}\left [\E_{\textsc{Gap}}\left [\sum_{k=1}^K \E_{a\sim \pi_k^i(\cdot \mid s)}\left [{\beta_1} \lVert \bm \phi(s,a)\rVert_{\hat \Sigma_{t,i}^\dagger}^2 + \beta_2 \sum_{j=1}^d \bm \phi(s,a)[j] \times \sup_{(s',a')\in \mS_h\times \mA^i} (\hat \Sigma_{t,i}^\dagger \bm \phi(s',a'))[j]\right ]\right ]\right ]\\
&\le \beta_1 t \E_{s\sim_h \tilde \pi}\left [\E_{a\sim \tilde \pi_t^i(\cdot \mid s)}\left [\lVert \bm \phi(s,a)\rVert_{\hat \Sigma_{t,i}^\dagger}^2\right ]\right ]+\beta_2 \sqrt{\gamma^{-1} t \E_{s\sim_h \tilde \pi}\left [\E_{a\sim \tilde \pi_t^i(\cdot \mid a)}\left [\lVert \bm \phi(s,a)\rVert_{\hat \Sigma_{t,i}^\dagger}^2\right ]\right ]}.
\end{align*}
\end{lemma}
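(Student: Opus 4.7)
The claim decomposes cleanly along the two summands of $B_k^i(s,a)=\beta_1\lVert\bm\phi(s,a)\rVert^2_{\hat\Sigma_{t,i}^\dagger}+\beta_2\sum_{j=1}^d \bm\phi(s,a)[j]\times \sup_{(s',a')\in\mS_h\times\mA^i}(\hat\Sigma_{t,i}^\dagger\bm\phi(s',a'))[j]$, which the plan is to handle separately. A helpful preliminary observation is that, by \Cref{line:return of CCE-Approx} of \Cref{alg:linear case}, $\tilde\pi_t=\tfrac{1}{K}\sum_{k=1}^K\pi_k$ on layer $h$, so any linear functional of the $\pi_k^i$'s collapses into a single expectation under $\tilde\pi_t^i$. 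Since the inner $\E_{\textsc{Gap}}$ here only touches deterministic (runtime-observable) quantities, no further concentration machinery is needed; Jensen's inequality will carry the rest.

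For the $\beta_1$-quadratic piece, I would write $\sum_{k=1}^K\E_{a\sim\pi_k^i(\cdot\mid s)}[\lVert\bm\phi(s,a)\rVert^2_{\hat\Sigma_{t,i}^\dagger}]=K\E_{a\sim\tilde\pi_t^i(\cdot\mid s)}[\lVert\bm\phi(s,a)\rVert^2_{\hat\Sigma_{t,i}^\dagger}]$ directly by the above observation, then take the outer expectation over $s\sim_h\tilde\pi$ and substitute $K=t$ (the parameter choice of \Cref{lem:expectation of Gap}). This produces the first summand of the claim.

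For the $\beta_2$-supremum piece, the plan is to first invoke the pointwise inequality that was already established inside the proof of \Cref{lem:reg-term}, namely \Cref{eq:sum of sup in bonus}, which states $\sum_{j=1}^d\bm\phi(s,a)[j]\sup_{(s',a')}(\hat\Sigma_{t,i}^\dagger\bm\phi(s',a'))[j]\le\sqrt{\gamma^{-1}}\lVert\bm\phi(s,a)\rVert_{\hat\Sigma_{t,i}^\dagger}$. This converts the awkward worst-case supremum into the familiar $\hat\Sigma_{t,i}^\dagger$-norm. Then, exactly as above, the sum over $k$ collapses to $K\E_{a\sim\tilde\pi_t^i}[\lVert\bm\phi(s,a)\rVert_{\hat\Sigma_{t,i}^\dagger}]$, and applying Jensen's inequality to the concave square root (once inside $\E_a$ and once inside $\E_{s\sim_h\tilde\pi}$) upgrades this to the expected square, yielding the second summand of the claim up to the appropriate factor of $K=t$.

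The only conceptually interesting step is the sup-to-norm inequality \Cref{eq:sum of sup in bonus}: this is precisely where the \emph{action-dependent bonus} philosophy pays off, converting a state-action-wise quantity that can be pointwise as large as $\Otil(K)$ into a quadratic form $\lVert\bm\phi(s,a)\rVert_{\hat\Sigma_{t,i}^\dagger}$ whose average under $\tilde\pi_t^i$ remains well-controlled by the log-determinant potential argument of \Cref{sec:expectation of Gap main text}. Everything else is an application of the convex combination identity for $\tilde\pi_t$ and Jensen, with no martingale or Freedman arguments required.
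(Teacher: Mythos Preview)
Your proposal is correct and follows essentially the same route as the paper's own proof: split into the $\beta_1$- and $\beta_2$-parts, collapse the sum over $k$ via $\tilde\pi_t^i=\tfrac1K\sum_k\pi_k^i$, and for the $\beta_2$-part invoke \Cref{eq:sum of sup in bonus} followed by Jensen to pass to the squared norm. The only cosmetic difference is that the paper phrases the last step as ``Cauchy--Schwarz and $\E[\sqrt X]\le\sqrt{\E[X]}$'' whereas you use repeated Jensen, which yields the same bound.
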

\begin{proof}
For the $\beta_1$-part, we have
\begin{align*}
&\quad \E_{s\sim_h \tilde \pi}\left [\E_{\textsc{Gap}}\left [\sum_{k=1} \E_{a\sim \pi_k^i(\cdot \mid s)}\left [{\beta_1} \lVert \bm \phi(s,a)\rVert_{\hat \Sigma_{t,i}^\dagger}^2\right ]\right ]\right ] \\
&={\beta_1} \E_{\textsc{Gap}}\left [\sum_{k=1}^K \left \langle \E_{s\sim_h \tilde \pi}\left [\E_{a\sim \pi_k^i(\cdot \mid s)}\left [\bm \phi(s,a) \bm \phi(s,a)^\trans\right ],\hat \Sigma_{t,i}^\dagger\right ]\right \rangle\right ] \\
&=\beta_1 t \E_{s\sim_h \tilde \pi}\left [\E_{a\sim \tilde \pi_t^i(\cdot \mid s)}\left [\lVert \bm \phi(s,a)\rVert_{\hat \Sigma_{t,i}^\dagger}^2\right ]\right ],
\end{align*}
which directly becomes the first part in the conclusion (where, again, we used the configuration that $K=t$ and also the definition of $\tilde \pi_t$).

For the $\beta_2$-part, from the calculations in \Cref{eq:sum of sup in bonus}, we know
\begin{equation*}
\sum_{j=1}^d (\bm \phi(s,a))[j]\times \sup_{(s',a')\in \mS_h\times \mA^i} (\hat \Sigma_{t,i}^\dagger \bm \phi(s',a'))[j]\le \sqrt{\gamma^{-1}} \lVert \bm \phi(s,a)\rVert_{\hat \Sigma_{t,i}^\dagger}.
\end{equation*}

Utilizing Cauchy-Schwartz inequality and the fact that $\E[\sqrt X]\le \sqrt{\E[X]}$, we can get
\begin{align*}
&\quad \E_{s\sim_h \tilde \pi}\left [\E_{\textsc{Gap}}\left [\sum_{k=1}^K \E_{a\sim \pi_k^i(\cdot \mid s)}\left [\beta_2 \sum_{j=1}^d \bm \phi(s,a)[j] \times \sup_{(s',a')\in \mS_h\times \mA^i} (\hat \Sigma_{t,i}^\dagger \bm \phi(s',a'))[j]\right ]\right ]\right ]\\
&\le \beta_2 \sqrt{\gamma^{-1}} \sqrt{K \E_{s\sim_h \tilde \pi}\left [\E_{\textsc{Gap}}\left [\sum_{k=1}^K \E_{a\sim \pi_k^i(\cdot \mid s)}\left [\lVert \bm \phi(s,a)\rVert_{\hat \Sigma_{t,i}^\dagger}^2\right ]\right ]\right ]}\\
&=\beta_2 t \sqrt{\gamma^{-1} \E_{s\sim_h \tilde \pi}\left [\E_{a\sim \tilde \pi_t^i(\cdot \mid a)}\left [\lVert \bm \phi(s,a)\rVert_{\hat \Sigma_{t,i}^\dagger}^2\right ]\right ]}.
\end{align*}

Putting two parts together gives our conclusion.
\end{proof}

\subsection{Summing Up $\E[\Gap]$'s Using Potentials}
\begin{theorem}\label{lem:summation of Gap}
Under the conditions of \Cref{lem:main theorem CCE-Approx appendix}, \textit{i.e.}, setting $\eta=\Omega(\max\{\frac{\sqrt \gamma}{H},\frac \gamma {\beta_1+\beta_2}\})$, ${\beta_1}=\tilde \Omega(\frac{dH}{\sqrt K})$, $\beta_2=\tilde \Omega(\frac HK)$, and $\gamma=\Otil(\frac dK)$ for $\textsc{CCE-Approx}$ executions with parameter $K$ (which is set to $t$ in each epoch $t$ where \Cref{line:condition} is violated), we have
\begin{equation*}
\sum_{t=1}^T \sum_{i=1}^m \E_{s\sim_h \tilde \pi_t} \left [\E_{\textsc{Gap}}[\Gap_t^i(s)]\right ]=\Otil(m d^2 H \sqrt T),\quad \text{with probability }1-\Otil(\delta).
\end{equation*}
\end{theorem}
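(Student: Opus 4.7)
The plan is to combine the per-epoch bound from \Cref{lem:expectation of Gap} with a matrix elliptical potential argument that exploits the lazy-update structure, exactly as sketched in \Cref{sec:expectation of Gap main text}. First I would substitute the prescribed parameter choices $\eta = \tilde\Omega(\sqrt d / (H\sqrt K))$, $\beta_1 = \tilde\Omega(dH/\sqrt{K})$, $\beta_2 = \tilde\Omega(H/K)$, $\gamma = \Otil(d/K)$ (all with $K=t$) into the bound of \Cref{lem:expectation of Gap} taken at $\tilde \pi = \tilde \pi_t$, and divide through by $t$. Writing
\begin{equation*}
C_t^i := \E_{s \sim_h \tilde \pi_t}\left[\E_{a \sim \tilde \pi_t^i(\cdot \mid s)}\left[\lVert \bm \phi(s,a) \rVert_{\hat \Sigma_{t,i}^\dagger}^2\right]\right] = \left\langle \Cov_h^i(\tilde \pi_t),\, \hat \Sigma_{t,i}^\dagger\right\rangle
\end{equation*}
for the quadratic-form factor appearing in every term, the dominant contributions collapse (up to polylog factors) to
\begin{equation*}
\E_{s \sim_h \tilde \pi_t}\!\left[\E_{\Gap}[\Gap_t^i(s)]\right] \lesssim \frac{1}{\sqrt{t}} + \frac{d H \cdot C_t^i}{\sqrt{t}} + \frac{\sqrt{d}\, H \sqrt{C_t^i}}{\sqrt{t}}.
\end{equation*}
Summed over $t \in [T]$, the first term already contributes the desired $\sum_t 1/\sqrt t = \Otil(\sqrt T)$; a coarse $1/\sqrt t \le 1$ on the second gives $\sum_t C_t^i/\sqrt t \le \sum_t C_t^i$, and Cauchy-Schwarz on the third gives $\sum_t \sqrt{C_t^i/t} \le \sqrt{\log T}\sqrt{\sum_t C_t^i}$. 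So the entire burden reduces to bounding the scalar quantity $\sum_{t=1}^T C_t^i$ by $\Otil(d)$.

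To bound $\sum_t C_t^i$, I would exploit the lazy-update structure directly. Enumerate the update-epochs $\tau_1 < \tau_2 < \cdots$ in $\mI$ and let $n_j := \tau_{j+1} - \tau_j$ denote the block size, so that both $\tilde \pi_t = \tilde \pi_{\tau_j}$ and $\hat \Sigma_{t,i}^\dagger = \hat \Sigma_{\tau_j,i}^\dagger$ hold on each block $[\tau_j, \tau_{j+1})$ by construction of \Cref{line:condition} and by the recursive definition of $\hat \Sigma_{t,i}^\dagger$ in \Cref{lem:main theorem CCE-Approx second half main text}. This collapses the sum to
\begin{equation*}
\sum_{t=1}^T C_t^i \;=\; \sum_{j}\, n_j\, \left\langle \Cov_h^i(\tilde \pi_{\tau_j}),\; \hat \Sigma_{\tau_j,i}^\dagger\right\rangle.
\end{equation*}
Using \Cref{lem:Corol 10 of Liu et al} to replace the empirical covariance $\tilde \Sigma_{\tau_j,i}^{\text{cov}}$ inside $\hat \Sigma_{\tau_j,i}^\dagger$ by the true covariance $\Cov(\bar \pi_{\tau_j}) = \tau_j^{-1}\sum_{s<\tau_j}\Cov_h^i(\tilde \pi_s)$, and then unfolding $\bar \pi_{\tau_j}$ via the block decomposition, I would obtain (up to constants and regularization absorption) $\hat \Sigma_{\tau_j,i}^\dagger \preceq \big(\sum_{k<j} n_k \Cov_h^i(\tilde \pi_{\tau_k})\big)^{-1}$. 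Substituting and invoking the matrix elliptical potential lemma (Lemma 11 of \citet{zanette2022stabilizing}) on the resulting telescoping series $\sum_j \langle n_j \Cov_h^i(\tilde \pi_{\tau_j}),\, (\sum_{k<j} n_k \Cov_h^i(\tilde \pi_{\tau_k}))^{-1}\rangle$ yields the claimed $\Otil(d)$ bound.

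Collecting everything, $\sum_{t,i}\E_{s\sim_h \tilde \pi_t}[\E[\Gap_t^i(s)]] = \Otil\bigl(m\sqrt T + m \cdot dH \cdot d + m \cdot \sqrt d\, H\sqrt{d\log T}\bigr) = \Otil(md^2H\sqrt T)$, using $\sqrt T + d^2H \le 2d^2H\sqrt T$. The main obstacle will be the matrix-potential step: one must verify that Lemma 11 of \citet{zanette2022stabilizing} still produces an $\Otil(d)$ bound in the presence of \emph{(i)} the $\tau_j\gamma I$ regularizer that scales with $\tau_j$, \emph{(ii)} block multiplicities $n_j$ which can individually be as large as $T$, and \emph{(iii)} the concentration slack between $\tilde \Sigma_{\tau_j,i}^{\text{cov}}$ and $\Cov(\bar \pi_{\tau_j})$, which in turn requires a union bound over the $\Otil(d_{\text{replay}} \log T)$ values of $\tau_j$ and must be folded into a single $1-\Otil(\delta)$ high-probability statement matching the theorem.
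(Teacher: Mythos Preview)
Your overall plan mirrors the paper's, but there is a genuine gap in the key step where you bound $\sum_t C_t^i$. You claim that after replacing the empirical covariance by the true one, ``up to constants and regularization absorption'' you obtain
\[
\hat\Sigma_{\tau_j,i}^\dagger \;\preceq\; \Big(\textstyle\sum_{k<j} n_k\,\Cov_h^i(\tilde\pi_{\tau_k})\Big)^{-1}.
\]
This is off by a factor of $\tau_j$. By definition (\Cref{eq:definition of Sigma hat} with $K=\tau_j$), $\hat\Sigma_{\tau_j,i}^\dagger$ is the inverse of an \emph{average} covariance plus $\gamma I$, so concentrating to the true covariance $\Cov(\bar\pi_{\tau_j})=\tau_j^{-1}\sum_{s<\tau_j}\Cov_h^i(\tilde\pi_s)$ gives
\[
\hat\Sigma_{\tau_j,i}^\dagger \;\approx\; \Big(\tfrac{1}{\tau_j}\textstyle\sum_{k<j} n_k\,\Cov_h^i(\tilde\pi_{\tau_k})+\gamma I\Big)^{-1}
\;=\; \tau_j\cdot\Big(\textstyle\sum_{k<j} n_k\,\Cov_h^i(\tilde\pi_{\tau_k})+\tau_j\gamma I\Big)^{-1}.
\]
The missing $\tau_j$ cannot be absorbed into the regularizer. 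Concretely, take $d=1$ with all feature covariances equal to $1$: then $C_t^i\approx 1$ for every $t$, so $\sum_{t=1}^T C_t^i\approx T$, not $\Otil(d)$. With $\sum_t C_t^i=\Theta(T)$, your second term becomes $m\cdot dH\cdot T$, which is far larger than the target $md^2H\sqrt T$.

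The fix the paper uses is precisely the one you threw away with the crude bound $1/\sqrt t\le 1$: keep the $1/t$ factor attached to the matrix term. Since $\Gap_t=\Gap_{I_t}$ carries the parameters from epoch $I_t$, the paper does \emph{not} substitute $\eta,\beta_1$ upfront. It writes $(\eta_{I_t}H^2+\beta_{1,I_t})\,C_{I_t}^i = \big(I_t\eta_{I_t}H^2+I_t\beta_{1,I_t}\big)\cdot \tfrac{1}{I_t}C_{I_t}^i$, pulls $\sup_t(t\eta_t H^2+t\beta_{1,t})=\Otil(dH\sqrt T)$ outside, and then bounds $\sum_t \tfrac{1}{I_t}C_{I_t}^i=\sum_j \tfrac{n_j}{\tau_j}C_{\tau_j}^i$. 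Now $\tfrac{1}{\tau_j}\hat\Sigma_{\tau_j,i}^\dagger\preceq(\sum_{k<j}n_k\Cov+\tilde\Theta(d)I)^{-1}$ \emph{does} hold, and the elliptical potential lemma applies to give $\Otil(d)$. A second ingredient you only flag as an ``obstacle'' is in fact essential: applying Lemma~11 of \citet{zanette2022stabilizing} requires the precondition $n_j\langle\Cov_h^i(\tilde\pi_{\tau_j}),(\sum_{k<j}n_k\Cov)^{-1}\rangle\le L=\Otil(1)$, which the paper establishes separately (\Cref{lem:relative concentration in potential}) using the design of the potential function~\eqref{eq:definition of potential} together with the relative concentration bound (\Cref{lem:relative concentration}).
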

\begin{proof}
For any $t$, let the last time \Cref{line:condition} that was violated be $I_t$. Then $\tilde \pi_t=\tilde \pi_{I_t}$ and $\hat \Sigma_{t,i}^\dagger=\hat \Sigma_{I_t,i}^\dagger$ by \Cref{line:condition}. For any $\tau$, we denote $n_\tau$ as the number of indices such that $I_t=\tau$. 
Throughout the proof, we use $\eta_t,\beta_{1,t},\beta_{2,t},\gamma_t$ to denote the $\eta,\beta_1,\beta_2,\gamma$ used by \textsc{CCE-Approx} in the $t$-th epoch, respectively. Recall the conclusion from \Cref{lem:expectation of Gap} (note that the LHS of \Cref{lem:expectation of Gap} is $\E[t\times \Gap_t^i(s)]$),
\begin{align*}
&\quad \sum_{t=1}^T \sum_{i=1}^m \E_{s\sim_h \tilde \pi_t} \left [\E_{\textsc{Gap}}[\Gap_t^i(s)]\right ]=\sum_{t=1}^T \sum_{i=1}^m \E_{s\sim_h \tilde \pi_{I_t}} \left [\E_{\textsc{Gap}}[\Gap_{I_t}^i(s)]\right ]\\
&=\sum_{t=1}^T \sum_{i=1}^m \Otil \Bigg (\frac{\eta_{I_t}^{-1}}{I_t}+\frac{d\beta_{1,I_t}^{-1}}{I_t}+\left (\eta_{I_t} H^2+\beta_{1,I_t}\right ) \E_{s\sim_h \tilde \pi_{I_t}}\left [\E_{a\sim \tilde \pi_{I_t}^i(\cdot \mid s)}\left [\lVert \bm \phi(s,a)\rVert_{\hat \Sigma_{I_t,i}^\dagger}^2\right ]\right ]+\\
&\qquad \qquad \qquad \left (H\sqrt{\frac{d}{I_t}}+\beta_{2,I_t} \sqrt{\gamma_{I_t}^{-1}}\right ) \sqrt{\E_{s\sim_h \tilde \pi_{I_t}}\left [\E_{a\sim \tilde \pi_{I_t}^i(\cdot \mid s)}\left [\lVert \bm \phi(s,a)\rVert_{\hat \Sigma_{I_t,i}^\dagger}^2\right ]\right ]} \Bigg ).
\end{align*}

We focus on a single agent $i\in [m]$.
For notational simplicity, we denote $(s,a)\sim_h^i \pi$ as the state-action pair that agent $i\in [m]$ observes in layer $h\in [H]$.
The first two terms are bounded by $\Otil(\sum_{t=1}^T (\eta_t^{-1} /t + d \beta_{1,t}^{-1} / t))$.
For the third term, we replace the coefficients with a sup:
\begin{align*}
&\quad \sum_{t=1}^T \Otil\left (\eta_{I_t} H^2 + \beta_{1,I_t} \right ) \E_{s\sim_h \tilde \pi_{I_t}}\left [\E_{a\sim \tilde \pi_{I_t}^i(\cdot \mid s)}\left [\lVert \bm \phi(s,a)\rVert_{\hat \Sigma_{{I_t},i}^\dagger}^2\right ]\right ]\\
&\le \sup_{t\in [T]} \left (t\eta_t H^2 + t \beta_{1,t}\right ) \Otil\left (\sum_{t=1}^T n_t \E_{(s,a)\sim_h^i \tilde \pi_t} \left [\frac 1t \lVert \bm \phi(s,a)\rVert_{\hat \Sigma_{t,i}^\dagger}^2\right ]\right ),
\end{align*}
where, for simplicity, we define $n_t$ as zero for those $t$ where \Cref{line:condition} isn't violated. Similarly, for the last term, we replace the coefficients with a sup and then apply Cauchy-Schwartz. We get
\begin{align*}
&\quad \sum_{t=1}^T \Otil\left (H\sqrt{\frac{d}{I_t}}+\beta_{2,I_t} \sqrt{\gamma_{I_t}^{-1}}\right ) \sqrt{\E_{s\sim_h \tilde \pi_{I_t}}\left [\E_{a\sim \tilde \pi_{I_t}^i(\cdot \mid s)}\left [\lVert \bm \phi(s,a)\rVert_{\hat \Sigma_{I_t,i}^\dagger}^2\right ]\right ]}\\
&\le \sup_{t\in [T]} \left (\sqrt dH + \beta_{2,t} \sqrt{\gamma_t^{-1} t}\right )\sum_{t=1}^T \sqrt{\E_{s\sim_h \tilde \pi_{I_t}}\left [\E_{a\sim \tilde \pi_{I_t}^i(\cdot \mid s)}\left [\lVert \bm \phi(s,a)\rVert_{\hat \Sigma_{I_t,i}^\dagger}^2\right ]\right ]}\\
&\le \sup_{t\in [T]} \left (\sqrt dH + \beta_{2,t} \sqrt{\gamma_t^{-1} t}\right ) \sqrt T \sqrt{\sum_{t=1}^T n_t \E_{(s,a)\sim_h^i \tilde \pi_t} \left [\frac 1t \lVert \bm \phi(s,a)\rVert_{\hat \Sigma_{t,i}^\dagger}^2\right ]}.
\end{align*}

Thus the only thing we need to do before concluding the proof is to show that
\begin{equation}\label{eq:matrix version summation lemma}
\sum_{t=1}^T n_t \E_{(s,a)\sim_h^i \tilde \pi_t} \left [\frac 1t \lVert \bm \phi(s,a)\rVert_{\hat \Sigma_{t,i}^\dagger}^2\right ]\text{ is small}.
\end{equation}

Indeed, this quantity is of order $\Otil(d)$ for all $i\in [m]$ and $h\in [H]$ \textit{w.p.} $1-\Otil(\delta)$: In \Cref{lem:potential sum}, we imitate Lemma 10 of \citet{cui2023breaking} and conclude that \Cref{eq:matrix version summation lemma} is of order $\Otil(d)$ for any fixed $i\in [m]$ and $h\in [H]$ \textit{w.p.} $1-\delta/(2mH)$; taking a union bound then gives the aforementioned fact.

To summarize, we have
\begin{align*}
&\quad \sum_{t=1}^T \sum_{i=1}^m \E_{s\sim_h \tilde \pi_t} \left [\E_{\Gap}[\Gap_t^i(s)]\right ]\\
&=\Otil\left (\sum_{t=1}^T \left (\frac{\eta_t^{-1}}{t}+d\frac{\beta_{1,t}^{-1}}{t}\right )+\sup_{t\in [T]}\left (t\eta_t H^2 + t\beta_{1,t}\right )d + \sup_{t\in [T]}\left (\sqrt dH + \beta_{2,t}\sqrt{\gamma_t^{-1} t}\right )\sqrt{dT}\right ),
\end{align*}
and we need to ensure that $\eta_t=\Omega(\max\{\frac{\sqrt \gamma_t}{H},\frac{\gamma_t}{\beta_{1,t}+\beta_{2,t}}\})$, ${\beta_{1,t}}=\tilde \Omega(\frac{dH}{\sqrt t})$, $\beta_{2,t}=\tilde \Omega(\frac Ht)$, and $\gamma_t=\Otil(\frac dt)$. Setting $\eta_t=\tilde{\Theta}(\frac{\sqrt d}{\sqrt t H})$, ${\beta_{1,t}}=\tilde \Omega(\frac{dH}{\sqrt t})$, $\beta_{2,t}=\tilde \Omega(\frac Ht)$, and $\gamma_t=\Otil(\frac dt)$ gives
\begin{align*}
&\quad \sum_{t=1}^T \sum_{i=1}^m \E_{s\sim_h \tilde \pi_t} \left [\E_{\Gap}[\Gap_t^i(s)]\right ]\\
&\le m\times \Otil\left (\sum_{t=1}^T \left (\frac{H}{\sqrt{dt}}+\frac{1}{H\sqrt t}\right )+(\sqrt d H \sqrt T + d H \sqrt T) d + (\sqrt dH + H/\sqrt d)\sqrt{dT}\right )\\
&=\Otil(m d^2 H \sqrt T),
\end{align*}
as claimed.
\end{proof}

\begin{lemma}\label{lem:potential sum}
For any agent $i\in [m]$ and layer $h\in [H]$, \textit{w.p.} $1-\delta/(2mH)$, we have
\begin{equation*}
\sum_{t=1}^T n_t \E_{(s,a)\sim_h^i \tilde \pi_t} \left [\frac 1t \lVert \bm \phi(s,a)\rVert_{\hat \Sigma_{t,i}^\dagger}^2\right ]=\Otil(d).
\end{equation*}
\end{lemma}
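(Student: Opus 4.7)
The plan is to cast the target sum as a standard elliptical potential and invoke a logarithmic bound. Fix $i\in [m]$ and $h\in [H]$. Since $n_t=0$ unless \Cref{line:condition} is violated at epoch $t$, the sum effectively ranges over the violated set $\mI$, and for $\tau\in \mI$ we have
\begin{equation*}
\sum_{t=1}^T n_t \E_{(s,a)\sim_h^i \tilde\pi_t}\!\left[\tfrac{1}{t}\lVert \bm\phi(s,a)\rVert_{\hat\Sigma_{t,i}^\dagger}^2\right] = \sum_{\tau\in\mI} \tfrac{n_\tau}{\tau}\langle \Cov_h^i(\tilde\pi_\tau),\hat\Sigma_{\tau,i}^\dagger\rangle,
\end{equation*}
where $\Cov_h^i(\pi):=\E_{(s,a)\sim_h^i\pi}[\bm\phi(s,a)\bm\phi(s,a)^\trans]$. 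The key observation is that the roll-in policy $\bar\pi_\tau=\tfrac{1}{\tau}\sum_{s<\tau}\tilde\pi_s$ inherits a covariance that is exactly the properly-weighted history of past iterate covariances. Concretely, writing $M_{\tau^-}:=\sum_{\sigma\in\mI,\sigma<\tau} n_\sigma \Cov_h^i(\tilde\pi_\sigma)$, the true covariance of $\bar\pi_\tau$ equals $\Sigma_{\tau,i} = \tfrac{1}{\tau}M_{\tau^-}$.

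Next I would transfer the bound from the \emph{estimated} inverse covariance $\hat\Sigma_{\tau,i}^\dagger$ to the \emph{true} inverse $\Sigma_{\tau,i}^{-1}$ (up to regularisation). Invoking the matrix concentration result in \Cref{lem:Corol 10 of Liu et al} (which only uses $\gamma_\tau=\tilde\Theta(d/\tau)$), with probability $1-\delta/(2mHT)$ one has $\tilde\Sigma_{\tau,i}^{\text{cov}}+\gamma_\tau I \succeq \tfrac{1}{2}(\Sigma_{\tau,i}+\gamma_\tau' I)$ for a suitably adjusted $\gamma_\tau'=\Theta(\gamma_\tau)$, so $\hat\Sigma_{\tau,i}^\dagger \preceq 2(\Sigma_{\tau,i}+\gamma_\tau' I)^{-1}$. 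Substituting in and using $\Sigma_{\tau,i}+\gamma_\tau' I = \tfrac{1}{\tau}(M_{\tau^-}+\gamma_\tau'\tau I)$ and the choice $\gamma_\tau'\tau=\tilde O(d)$, the sum is controlled by
\begin{equation*}
2\sum_{\tau\in\mI}\langle n_\tau \Cov_h^i(\tilde\pi_\tau),\, (M_{\tau^-}+c d I)^{-1}\rangle,
\end{equation*}
for some logarithmic constant $c$. A union bound over $\tau\in\mI$ (noting $|\mI|\le T$) makes the concentration hold uniformly with probability $1-\delta/(2mH)$.

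The remaining sum has exactly the form of an elliptical potential. Setting $M_\tau:=M_{\tau^-}+n_\tau\Cov_h^i(\tilde\pi_\tau)+c d I$, the increments $M_\tau-M_{(\tau^-)}=n_\tau\Cov_h^i(\tilde\pi_\tau)$ are p.s.d. and bounded (since $\lVert\bm\phi\rVert_2\le 1$, each $\Cov_h^i$ has trace at most $1$, and $n_\tau\le T$). Applying the standard elliptical-potential / ``telescoping log-determinant'' inequality (Lemma~11 of \citet{zanette2022stabilizing}, or equivalently the argument in Lemma~10 of \citet{cui2023breaking}), we get
\begin{equation*}
\sum_{\tau\in\mI}\langle n_\tau \Cov_h^i(\tilde\pi_\tau),(M_{\tau^-}+cdI)^{-1}\rangle \;\lesssim\; \log\frac{\det(M_{|\mI|})}{\det(cdI)} \;=\; \Otil(d),
\end{equation*}
using $\operatorname{Tr}(M_{|\mI|})\le \sum_\tau n_\tau + cd\cdot d = O(Td)$ and the AM-GM inequality $\log\det \le d\log(\operatorname{Tr}/d)$. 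Combining, the whole sum is $\Otil(d)$, as claimed.

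The main obstacle I anticipate is bookkeeping around the regularisation: the concentration bound produces a $\gamma I$ term that must interact cleanly with $\Sigma_{\tau,i}$ so that it becomes a uniform $cd\cdot I$ floor after multiplying by $\tau$, which is what lets the elliptical potential lemma apply with a finite log-determinant ratio. Provided $\gamma_\tau=\tilde\Theta(d/\tau)$ (as already mandated in \Cref{lem:main theorem CCE-Approx appendix}), this balance works out and the final $\Otil(d)$ bound follows.
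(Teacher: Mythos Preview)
Your reduction to the elliptical-potential sum $\sum_{\tau\in\mI}\langle n_\tau\Cov_h^i(\tilde\pi_\tau),(M_{\tau^-}+cdI)^{-1}\rangle$ via matrix concentration is correct and matches the paper's route. The gap is in the final step. The log-determinant inequality you cite (Lemma~11 of \citet{zanette2022stabilizing}) does \emph{not} give $\sum_\tau\langle A_\tau,V_{\tau^-}^{-1}\rangle\lesssim\log(\det V_T/\det V_0)$ unconditionally; it gives a bound of the form $L\cdot\log(\det V_T/\det V_0)$, where $L$ must be an a~priori upper bound on each term $\langle A_\tau,V_{\tau^-}^{-1}\rangle$. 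Your only control on the increments is ``$n_\tau\le T$'', which yields $L\le n_\tau/(cd)\le T/(cd)$ and hence a useless $\Otil(T)$ bound rather than $\Otil(d)$. Equivalently: the version of the elliptical potential that telescopes without any $L$ uses $V_\tau^{-1}$ (including the current increment), and passing from $V_{\tau^-}^{-1}$ to $V_\tau^{-1}$ requires $A_\tau\preceq c'V_{\tau^-}$, which can fail badly when $n_\tau$ is large.

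The missing ingredient is exactly where the potential function $\Psi$ earns its keep. The lazy-update rule (\Cref{line:condition}) together with the definition of $\Psi_{t,h}^i$ in \Cref{eq:definition of potential} guarantees that between two consecutive violations the \emph{empirical} sum $\sum_{\tau=t}^{t+n_t-1}\lVert\bm\phi(\tilde s_{\tau,h},\tilde a_{\tau,h}^i)\rVert_{\hat\Sigma_{t,i}^\dagger}^2$ is $\Otil(1)$; relative concentration (Lemma~48 of \citet{cui2023breaking}, restated as \Cref{lem:relative concentration}) then transfers this to the population quantity, giving $n_t\langle\Cov_h^i(\tilde\pi_t),\hat\Sigma_{t,i}^\dagger\rangle=\Otil(1)$. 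This is \Cref{lem:relative concentration in potential} in the paper, and it is precisely what furnishes $L=\Otil(1)$ in the application of Zanette et al.'s lemma. Without this step the argument does not close.
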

\begin{proof}
For a $t$ where \Cref{line:condition} is violated, recall the definition of $\hat \Sigma_{t,i}^\dagger$ in \Cref{eq:definition of Sigma hat} and the definition of $\Sigma_{t,i}^{\text{cov}}$ in \Cref{eq:definition of Sigma cov}, we know $\hat \Sigma_{t,i}^\dagger=(\Sigma_{t,i}^{\text{cov}}+\gamma I)^{-1}$. Also recall the choice of $\gamma=\tilde{\Theta}(\frac dt)$ in \Cref{alg:linear case} and \Cref{lem:Corol 10 of Liu et al} by \citet{liu2023bypassing} that says $\Sigma_{t,i}\preceq 2\tilde \Sigma_{t,i}^{\text{cov}}+\tilde{\Theta}(\frac dt) I$, we have
\begin{equation*}
\frac 1t \hat \Sigma_{t,i}^\dagger = \left (t \Sigma_{t,i}^{\text{cov}} + t \gamma I\right )^{-1} \preceq \left (t \Sigma_{t,i} + \tilde{\Theta}(d) I\right )^{-1}=\left (\sum_{s=0}^{t-1} \E_{(s,a)\sim_h^i \tilde \pi_s} [\bm \phi(s,a) \bm \phi(s,a)^\trans] + \tilde \Theta(d)I\right )^{-1}.
\end{equation*}

Let $\text{Cov}_h^i(\pi)$ be the true covariance of $\pi$, \textit{i.e.}, $\E_{(s,a)\sim_h^i \pi}[\bm \phi(s,a) \bm \phi(s,a)^\trans]$. \Cref{eq:matrix version summation lemma} becomes
\begin{equation*}
\text{\Cref{eq:matrix version summation lemma}}\le \sum_{t=1}^T \left \langle n_t \text{Cov}_h^i (\tilde \pi_t),\left (\sum_{s=0}^{t-1} \text{Cov}_h^i (\tilde \pi_s) + \tilde{\Theta}(d)I\right )^{-1}\right \rangle.
\end{equation*}

Note that $\sum_{s=0}^{t-1}\text{Cov}_h^i (\tilde \pi_s)=\sum_{s=0}^{t-1} n_s \text{Cov}_h^i (\tilde \pi_s)$ for any $t$ where \Cref{line:condition} is violated, thus \Cref{eq:matrix version summation lemma} can be viewed as a matrix version of $\sum_{t=I_t} (X_t / (\sum_{s=0}^{t-1} X_s))$ where $X_t=n_t \text{Cov}_h^i (\tilde \pi_t)$. We use the following lemma by \citet[Lemma 11]{zanette2022stabilizing}:
\begin{lemma}[Lemma 11 by \citet{zanette2022stabilizing}]\label{lem:Lemma 11 of Zanette et al}
For a random vector $\bm \phi \in \mathbb R^d$, scalar $\alpha>0$, and PSD matrix $\Sigma$, suppose that $\alpha \E[\lVert \bm \phi\rVert_{\Sigma^{-1}}^2]\le L$ where $L\ge e-1$, then
\begin{equation*}
\alpha \E[\lVert \bm \phi\rVert_{\Sigma^{-1}}^2] \le L \log \frac{\det(\Sigma+\alpha \E[\bm \phi \bm \phi^\trans])}{\det(\Sigma)}\le \alpha L \E[\lVert \bm \phi\rVert_{\Sigma^{-1}}^2].
\end{equation*}
\end{lemma}

In our case, we apply \Cref{lem:Lemma 11 of Zanette et al} to all $t=I_t$ with $\alpha=n_t$, $\Sigma=\sum_{s=0}^{t-1}\text{Cov}_h^i (\tilde \pi_s)+\tilde{\Theta}(d)I$, and the distribution as $\bm \phi(s,a)$ with $(s,a)\sim_h^i \tilde \pi_t$. We first calculate $\alpha \E[\lVert \bm \phi\rVert_{\Sigma^{-1}}^2]$, which is bounded by our potential construction and is in analog to Lemma 4 of \citet{cui2023breaking}:
\begin{lemma}\label{lem:relative concentration in potential}
For any $t\in [T]$ where \Cref{line:condition} is violated, we have the following \textit{w.p.} $1-\delta/(2 m H T)$:
\begin{equation*}
n_{t} \langle \Cov_h^i(\tilde \pi_{t}), \hat \Sigma_{t,i}^\dagger\rangle=n_{t} \E_{(s,a)\sim_h^i \tilde \pi_{t}} \left [\lVert \bm \phi(s,a)\rVert_{\hat \Sigma_{t,i}^\dagger}^2\right ]=\Otil(1).
\end{equation*}
\end{lemma}

The proof of \Cref{lem:relative concentration in potential} is presented shortly after.

Therefore, when defining $L=\max_{t\in [T]} n_{t} \langle \Cov_h^i(\tilde \pi_{t}), \hat \Sigma_{t,i}^\dagger\rangle$, we can conclude from \Cref{lem:Lemma 11 of Zanette et al}:
\begin{equation*}
\left \langle n_t \text{Cov}_h^i (\tilde \pi_t),\left (\sum_{s=0}^{t-1} \text{Cov}_h^i (\tilde \pi_s) + \tilde{\Theta}(d)I\right )^{-1}\right \rangle\le L \log \frac{\det(\sum_{s=0}^{t-1} \text{Cov}_h^i (\tilde \pi_s) + \tilde{\Theta}(d)I + n_t \text{Cov}_h^i (\tilde \pi_t))}{\det(\sum_{s=0}^{t-1} \text{Cov}_h^i (\tilde \pi_s) + \tilde{\Theta}(d)I)}.
\end{equation*}

By telescoping and the fact that $\sum_{s=0}^{t-1} \text{Cov}_h^i (\tilde \pi_s)=\sum_{s<t,s=I_s} n_s \text{Cov}_h^i (\tilde \pi_s)$, we have
\begin{equation*}
\sum_{t=1}^T \left \langle n_t \text{Cov}_h^i (\tilde \pi_t),\left (\sum_{s=0}^{t-1} \text{Cov}_h^i (\tilde \pi_s) + \tilde{\Theta}(d)I\right )^{-1}\right \rangle\le L \log \frac{\det(\sum_{t=0}^T \text{Cov}_h^i (\tilde \pi_t) + \tilde{\Theta}(d)I)}{\det(\tilde{\Theta}(d)I)},
\end{equation*}
which is bounded by $\Otil(d)$ as $L=\Otil(1)$ from \Cref{lem:relative concentration in potential}, $\log \det(\sum_{t=0}^T \text{Cov}_h^i (\tilde \pi_t) + \tilde{\Theta}(d)I)\le d \log \frac{\text{Tr}(\sum_{t=0}^T \text{Cov}_h^i (\tilde \pi_t) + \tilde{\Theta}(d)I)}{d}=\Otil(d\log (d+\frac{T}{d}))=\Otil(d)$, and $\log \det(\tilde{\Theta}(d)I)\ge 0$.
\end{proof}

\begin{proof}[Proof of \Cref{lem:relative concentration in potential}]
Recall the potential function definition in \Cref{eq:definition of potential}, which says $64 \log \frac{8 m H T}{\delta} \cdot \Psi_{t,h}^i=\sum_{\tau=1}^t \lVert \bm \phi(\tilde s_{\tau,h},\tilde a_{\tau,h}^i)\rVert_{\hat \Sigma_{\tau,i}^\dagger}^2$.
Suppose that the first time after $t$ that \Cref{line:condition} is violated again is in epoch $t_1$, then $\Psi_{t_1,h}^i$ is the first potential to reach $\Psi_{t,h}^i+1$, which means
\begin{equation*}
\sum_{\tau=t}^{t_1-1} \lVert \bm \phi(\tilde s_{\tau,h},\tilde a_{\tau,h}^i)\rVert_{\hat \Sigma_{\tau,i}^\dagger}^2 \le 64 \log \frac{8 m H T^2}{\delta},\quad \forall h\in [H].
\end{equation*}

Fix a single $h\in [H]$. Recall that for all epochs where \Cref{line:condition} is not violated, we directly adopt the previous $\tilde \pi_t$ and $\hat \Sigma_{t,i}^\dagger$. Hence, $\hat \Sigma_{\tau,i}^\dagger\equiv \hat \Sigma_{t_0,i}^\dagger$ for all $\tau\in [t,t_1)$, and all such $(\tilde s_{\tau,h},\tilde a_{\tau,h}^i)$'s are also drawn from the same $\tilde \pi_{t}$. From \Cref{lem:relative concentration} (which is Lemma 48 of \citet{cui2023breaking}), we know that
\begin{align*}
&\quad \langle \Cov_h^i(\tilde \pi_{t}), \hat \Sigma_{t,i}^\dagger\rangle = \E_{(s,a)\sim_h^i \tilde \pi_{t}} \left [\lVert \bm \phi(s,a)\rVert_{\hat \Sigma_{t,i}^\dagger}^2\right ]\\
&\le 2 \frac{1}{t_1-t} \sum_{\tau=t}^{t_1-1} \lVert \bm \phi(\tilde s_{\tau,h},\tilde a_{\tau,h}^i)\rVert_{\hat \Sigma_{\tau,i}^\dagger}^2 \le \frac{128}{t-t_0} \log \frac{8 m H T^2}{\delta},
\end{align*}
\textit{w.p.} $1-\delta/(2mHT)$. 
Hiding all logarithmic factors gives our conclusion.
\end{proof}

\section{Proof of the Resulting Sample Complexity (\Cref{lem:sample complexity})}
\label{sec:appendix overall}
\begin{proof}[Proof of \Cref{lem:sample complexity}]
We verify the conditions in \Cref{lem:new main theorem}.
The first condition, \textit{i.e.}, \Cref{eq:new gap requirement appendix}, is formalized as \Cref{lem:main theorem CCE-Approx appendix}, whose proof is in \Cref{sec:appendix CCE-Approx}.
The second condition, \textit{i.e.}, \Cref{eq:optimistic V-function appendix}, is postponed to \Cref{sec:V-approx}.
The third condition, \textit{i.e.}, \Cref{eq:new expectation of Gap appendix}, is justified in \Cref{lem:main theorem CCE-Approx second half}.

Then we can invoke the conclusion from \Cref{lem:new main theorem} to conclude a sample complexity of
\begin{align*}
&\quad \Otil\left (\max\{\Gamma_1,\Gamma_2\}\times m^2 H^3 L \epsilon^{-2}\times d_{\text{replay}}\log T\right )\\
&=\Otil(m\times m^2H^3\times d^4H^2 \epsilon^{-2}\times dmH)=\Otil(m^4 d^5 H^6 \epsilon^{-2}),
\end{align*}
where $\Gamma_1=\Gamma_2=m$ (see \Cref{alg:linear case,alg:V-approx}), $L=d^4 H^2$ (from \Cref{lem:main theorem CCE-Approx second half}), and $d_{\text{replay}}=dmH$ (from Section E.5 of \citet{wang2023breaking}).
\end{proof}

\subsection{\textsc{V-Approx} Procedure by \citet{wang2023breaking} and Its Guarantee}\label{sec:V-approx}
In this section, we introduce the \textsc{V-Approx} procedure by \citet{wang2023breaking}. In \Cref{alg:V-approx}, we present the algorithm for linear Markov Games (\textit{i.e.}, the \textsc{Optimistic-Regress} procedure in their Algorithm 3 is replaced by that in Section E.1).

\begin{algorithm}[!t]
\caption{$\textsc{V-Approx}_h$ Subroutine for Independent Linear Markov Games \citep{wang2023breaking}}
\label{alg:V-approx}
\begin{algorithmic}[1]
\REQUIRE{Previous-layer policy $\bar \pi$, current-layer policy $\tilde \pi$, next-layer V-function $\bar V$, sub-optimality gap upper-bound $\Gap$, epoch length $K$. Set regularization factor $\lambda=\O(\frac dK \log \frac{dK}{\delta})$.}
\FOR{$i=1,2,\ldots,m$}
\FOR{$k=1,2,\ldots,K$}
\STATE For each agent $j\ne i$, execute $\bar \pi\circ \tilde \pi$; for agent $i$, execute $\bar \pi$ (\textit{i.e.}, the same as \Cref{alg:linear case}). Record the trajectory as $\{(s_{k,\happa},a_{k,\happa}^i,\hat \ell_{k,\happa}^i)\}_{\happa=1}^H$ where $\hat \ell_{k,h}^i\triangleq \hat \ell_{k,\happa}^i+\bar V^i(s_{k,h+1}^i)$.
\STATE Calculate $\hat{\bm \theta}_h^i=\argmin_{\bm \theta} \frac 1K \sum_{k=1}^K (\langle \bm \phi(s_{k,h},a_{k,h}^i),\bm \theta\rangle - \ell_{k,h}^i)+\lambda \lVert \bm \theta\rVert_2^2$.
\STATE For each $s\in \mS_h$ and $a\in \mA^i$, set $\bar Q^i(s,a)=\min \{\langle \bm \phi(s,a),\hat{\bm \theta}_h^i\rangle+\frac 32 \textsc{Gap}^i(s),H-h+1\}$.
\STATE For each $s\in \mS_h$, set $\bar V^i(s)=\sum_{a\in \mA^i} \tilde \pi^i(a\mid s) \bar Q^i(s,a)$.
\ENDFOR
\ENDFOR
\RETURN $\{\bar V^i(\mS_h)\}_{i\in [m]}$.
\end{algorithmic}
\end{algorithm}

Similar to Section E.3 of \citet{wang2023breaking}, we have the following lemma:
\begin{lemma}
Consider using \Cref{alg:V-approx} with some roll-in policy $\bar \pi$ and episode length $K$. Let $(\tilde \pi,\Gap)=\textsc{CCE-Approx}_h(\bar \pi,\bar V,K)$. Then \Cref{alg:V-approx} with parameters $(\bar \pi,\tilde \pi,\bar V,\Gap,K)$ ensures that it's output $\bar V(\mS_h)$ satisfies the following with probability $1-\delta$:
\begin{align*}
\bar V^i(s)\in \bigg [\min\bigg \{&\E_{a\sim \tilde \pi}\left [\big (\ell^i+\P_{h+1}\bar V^i\big )(s,a)\right ]+\phantom{2}\Gap^i(s),H-h+1\bigg \},\nonumber\\
    &\E_{a\sim \tilde \pi}\left [\big (\ell^i+\P_{h+1}\bar V^i\big )(s,a)\right ]+2\Gap^i(s)\bigg ],\quad \forall i\in [m], s\in \mS_h.
\end{align*}
\end{lemma}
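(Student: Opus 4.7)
The plan is to reduce both inequalities to a uniform control on the ridge-regression error $|\bm\phi(s,a)^\trans(\hat{\bm\theta}_h^i - \bm\theta^\ast)|$, where $\bm\theta^\ast\in\mathbb R^d$ is the true parameter satisfying $\bm\phi(s,a)^\trans\bm\theta^\ast = \E_{\bm a^{-i}\sim\tilde\pi^{-i}(\cdot\mid s)}[(\ell^i+\P_{h+1}\bar V^i)(s,\bm a)]$ (guaranteed by \Cref{assumption}). The regression targets $\hat\ell_{k,h}^i=\ell_{k,h}^i+\bar V^i(s_{k,h+1})$ are unbiased samples of this linear function, and the roll-in distribution of $(s_{k,h},a_{k,h}^i)$ matches exactly the one defining $\hat\Sigma_{t,i}^\dagger$ in \Cref{eq:definition of Sigma hat}. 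A standard self-normalized ridge argument, combined with the covariance-concentration tools of \Cref{sec:stochastic matrix concentration}, then gives with probability $1-\O(\delta)$ a uniform bound of the shape $|\bm\phi(s,a)^\trans(\hat{\bm\theta}_h^i - \bm\theta^\ast)| \lesssim H\sqrt{d\log(K/\delta)/K}\cdot \|\bm\phi(s,a)\|_{\hat\Sigma_{t,i}^\dagger}+ \O(\|\bm\phi(s,a)\|_{\hat\Sigma_{t,i}^\dagger}^2)$ valid for all $(s,a)\in\mS_h\times\mA^i$.

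Averaging over $a\sim\tilde\pi^i(\cdot\mid s) = \tfrac1K\sum_k \pi_k^i(\cdot\mid s)$ (by the construction of $\tilde\pi$ in \Cref{line:return of CCE-Approx}) and using Cauchy-Schwarz, the linear term matches exactly the \textsc{Bias-1} summand of $\Gap^i(s)$ in \Cref{eq:definition of gap} and the quadratic term is absorbed into the $\beta_1$-part of \textsc{Bonus-1}; this yields the key averaged inequality $|\E_{a\sim\tilde\pi^i}[\bm\phi(s,a)^\trans(\hat{\bm\theta}_h^i - \bm\theta^\ast)]|\le \tfrac12\Gap^i(s)$. The upper half of the claim follows immediately by dropping the inner $\min\{\cdot,H-h+1\}$ in the definition of $\bar Q^i(s,a)$:
\begin{equation*}
\bar V^i(s)\le \E_{a\sim\tilde\pi^i}\!\left[\bm\phi(s,a)^\trans\hat{\bm\theta}_h^i\right]+\tfrac32\Gap^i(s)\le \E_{a\sim\tilde\pi}\!\left[(\ell^i+\P_{h+1}\bar V^i)(s,a)\right]+2\Gap^i(s).
\end{equation*}

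The lower half $\bar V^i(s)\ge \min\{U(s)+\Gap^i(s),H-h+1\}$, writing $U(s):=\E_{a\sim\tilde\pi}[(\ell^i+\P_{h+1}\bar V^i)(s,a)]$, is the main obstacle: Jensen's inequality for the concave map $x\mapsto\min\{x,H-h+1\}$ runs in the wrong direction, so one cannot simply push a pointwise lower bound on $\bar Q^i(s,a)$ through the $\tilde\pi^i$-average. I plan to use the identity $\bar V^i(s)=\E_{a\sim\tilde\pi^i}[\bm\phi(s,a)^\trans\hat{\bm\theta}_h^i]+\tfrac32\Gap^i(s)-\E_{a\sim\tilde\pi^i}[(\langle\bm\phi(s,a),\hat{\bm\theta}_h^i\rangle+\tfrac32\Gap^i(s)-(H-h+1))_+]$ and split into the two cases $U(s)+\Gap^i(s)\le H-h+1$ versus $>H-h+1$. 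In each case the pointwise regression bound, combined with the $\beta_2$-part of \textsc{Bonus-1}---whose $\sup_{s',a'}$-structure is exactly the action-dependent bonus from \Cref{sec:action-dependent bonus} controlling $\|\bm\phi(s,a)\|_{\hat\Sigma_{t,i}^\dagger}$ pointwise (cf.\ \Cref{eq:sum of sup in bonus})---lets us show that the truncation excess is $\O(\Gap^i(s))$; this case analysis, which is the delicate part, yields $\bar V^i(s)\ge \min\{U(s)+\Gap^i(s),H-h+1\}$ after matching constants.
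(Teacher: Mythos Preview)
Your derivation of the ridge-regression error and of the upper half of the interval matches the paper's proof essentially line for line: both obtain a pointwise error of order $\sqrt{dH^2/K}\,\lVert\bm\phi(s,a)\rVert_{\hat\Sigma_{t,i}^\dagger}$ (your extra quadratic term is harmless and is absorbed by the $\beta_1$-part of \textsc{Bonus-1} as you say), average over $a\sim\tilde\pi^i$, apply Cauchy--Schwarz, and recognise the result as dominated by the \textsc{Bias-1} summand in \Cref{eq:definition of gap}; the upper bound then follows by dropping the clip.

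For the lower half, the paper does \emph{not} perform any case analysis: after establishing that the averaged error is at most $\tfrac12\Gap^i(s)$ it simply writes ``our claim then follows from imitating the remaining arguments of \citet[Section~E.3]{wang2023breaking}'' and stops. So you are attempting strictly more than the paper here, and your observation that Jensen for the concave map $x\mapsto\min\{x,H-h+1\}$ runs the wrong way is a legitimate concern that the paper does not explicitly address.

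However, the specific fix you propose does not go through. The $\beta_2$-part of \textsc{Bonus-1} that actually enters $\Gap^i(s)$ is $\sum_{k}\E_{a\sim\pi_k^i(\cdot\mid s)}[B_k^i(s,a)]$, i.e.\ it is already \emph{averaged} over $a\sim\tilde\pi^i$; it therefore gives no per-action control and cannot absorb a pointwise truncation excess of the form $(\langle\bm\phi(s,a),\hat{\bm\theta}_h^i\rangle+\tfrac32\Gap^i(s)-(H{-}h{+}1))_+$. Moreover, the inequality you invoke, \Cref{eq:sum of sup in bonus}, bounds the $\beta_2$-expression from \emph{above} by $\sqrt{\gamma^{-1}}\lVert\bm\phi(s,a)\rVert_{\hat\Sigma_{t,i}^\dagger}$, which is the wrong direction if you want the bonus to dominate the per-action regression error $\propto\lVert\bm\phi(s,a)\rVert_{\hat\Sigma_{t,i}^\dagger}$; there is no matching lower bound in the paper, and the $\beta_2$-sum is not even guaranteed nonnegative without a sign assumption on $\bm\phi$. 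Consequently the step ``the truncation excess is $\O(\Gap^i(s))$'' in your plan is unjustified as written.
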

\begin{proof}
Consider a fixed $i\in [m]$ and $h\in [H]$. By \Cref{assumption}, there exists $\bm \theta_h^i$ such that
\begin{equation*}
\E[\hat \ell_{k,h}^i\mid s_{k,h},a_{k,h}^i]=\langle \bm \phi(s_{k,h},a_{k,h}^i),\bm \theta_h^i\rangle,\quad \forall k\in [K].
\end{equation*}

Let $\tilde \Sigma_{i,h}^{\text{reg}}=\frac 1K\sum_{k=1}^K \bm \phi(s_{k,h},a_{k,h}^i) \bm \phi(s_{k,h},a_{k,h}^i)^\trans + \lambda I$ and $\xi_{k,h}^i = \hat \ell_{k,h}^i - \E[\hat \ell_{k,h}^i \mid s_{k,h},a_{k,h}^i]$.
From Lemma 21 of \citet{wang2023breaking}, we know when $\lambda=\O(\frac dK \log \frac{dK}{\delta})$, with probability $1-\delta$,
\begin{equation}\label{lem:Lem 21 of Wang et al}
\left \lVert \sum_{k=1}^K \bm \phi(s_{k,h},a_{k,h}^i) \xi_{k,h}^i\right \rVert_{(\bar \Sigma_{i,h})^{-1}}=\Otil(\sqrt{dH^2 K}),
\end{equation}
where $\bar \Sigma_{i,h}=\E_{(s,a)\sim_h^i \bar \pi}[\bm \phi(s,a) \bm \phi(s,a)^\trans]$ (which is also the expectation of $\bm \phi(s_{k,h},a_{k,h}^i) \bm \phi(s_{k,h},a_{k,h}^i)^\trans$).
We also conclude from Lemma 22 of \citet{wang2023breaking} that $\tilde \Sigma_{i,h}^{\text{reg}}\succeq \frac 12(\bar \Sigma_{i,h}+\lambda I)$. Therefore, for any $s\in \mS_h$ and $a\in \mA^i$, we have
\begin{align*}
&\quad \left \lvert \bm \phi(s,a)^\trans \hat{\bm \theta}^i - \E[\hat \ell_{k,h}^i\mid s,a]\right \rvert\\
&=\left \lvert \bm \phi(s,a)^\trans (\tilde \Sigma_{i,h}^{\text{reg}})^{-1}\left (\frac 1K \sum_{k=1}^K \bm \phi(s_{k,h},a_{k,h}^i) \bm \phi(s_{k,h},a_{k,h}^i)^\trans - \tilde \Sigma_{i,h}^{\text{reg}}\right ) \bm \theta_{k,h}^i\right \rvert + \\
&\quad \frac 1K \left \lvert \bm \phi(s,a)^\trans (\tilde \Sigma_{i,h}^{\text{reg}})^{-1} \sum_{k=1}^K \bm \phi(s_{k,h},a_{k,h}^i) \xi_{k,h}^i\right \rvert\\
&\le \lVert \bm \phi(s,a)\rVert_{(\tilde \Sigma_{i,h}^{\text{reg}})^{-1}} \times \left (\Otil\left (\sqrt{\frac dK}\right )+\Otil\left (\sqrt{\frac{dH^2}{K}}\right)\right ),
\end{align*}
where the last step uses both \Cref{lem:Lem 14 of Liu et al} and \Cref{lem:Lem 21 of Wang et al}.
Noticing that $\hat \Sigma_{t,i}^\dagger$ is also an empirical covariance of policy $\bar \pi$, we can conclude that $\lVert \bm \phi(s,a)\rVert_{(\tilde \Sigma_{i,h}^{\text{reg}})^{-1}}=\O(\lVert \bm \phi(s,a)\rVert_{(\bar \Sigma_{i,h}+\lambda I)^{-1}})=\O(\lVert \bm \phi(s,a)\rVert_{\hat \Sigma_{t,i}^\dagger})$ from \Cref{lem:Corol 10 of Liu et al,lem:second corol of Lem A.4 Dai}. Consequently, the error in $\bar V^i(s)$ is no more than
\begin{align*}
&\quad \sum_{a\in \mA^i} \tilde \pi^i(a\mid s)\left \lvert \bm \phi(s,a)^\trans \hat{\bm \theta}^i - \E[\hat \ell_{k,h}^i\mid s,a]\right \rvert
=\Otil\left (\sqrt{\frac{dH^2}{K}} \sum_{a\in \mA^i} \tilde \pi^i(a\mid s) \lVert \bm \phi(s,a)\rVert_{\hat \Sigma_{t,i}^\dagger}\right )\\
&=\Otil\left (\sqrt{\frac{dH^2}{K}} \frac 1K \sum_{k=1}^K \sqrt{\left \lVert \E_{a\sim \pi_k^i(\cdot \mid s)}[\bm \phi(s,a)]\right \rVert_{\hat \Sigma_{t,i}^\dagger}^2}\right )
\le \Otil\left (\frac 1K 
\sqrt{dH^2 \sum_{k=1}^K \left \lVert \E_{a\sim \pi_k^i(\cdot \mid s)}[\bm \phi(s,a)]\right \rVert_{\hat \Sigma_{t,i}^\dagger}^2}\right ),
\end{align*}
where the last step used Cauchy-Schwartz.
Recall the definition of $\textsc{Gap}$ from \Cref{eq:definition of gap}, we can conclude that the total estimation error of \textsc{V-Approx} is no more than $\frac 12 \Gap_t^i(s)$. Our claim then follows from imitating the remaining arguments of \citet[Section E.3]{wang2023breaking}.
\end{proof}

\section{Auxiliary Lemmas}

\subsection{Magnitude-Reduced Estimators}\label{sec:magnitude reduced}
The following lemma characterizes the Magnitude-Reduced Estimator \citep{dai2023refined}.
\begin{lemma}[Magnitude-Reduced Estimators \citep{dai2023refined}]\label{lem:magnitude reduced}
For a random variable $Z$, its magnitude-reduced estimator $\hat Z\triangleq Z-(Z)_-+\E[(Z)_-]$ where $(Z)_-\triangleq \min\{Z,0\}$ satisfies
\begin{equation*}
\E[\hat Z]=\E[Z],\quad \E[(\hat Z)^2]\le 6\E[Z^2],\quad \hat Z\ge \E[(Z)_-].
\end{equation*}
\end{lemma}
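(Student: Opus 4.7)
The plan is to verify each of the three claims separately, all of which follow from elementary manipulation of expectations together with the pointwise decomposition $Z = (Z)_+ + (Z)_-$, where $(Z)_+ \triangleq \max\{Z,0\}$. Under this notation, the magnitude-reduced estimator rewrites as $\hat Z = (Z)_+ + \E[(Z)_-]$: it equals the non-negative part of $Z$ shifted by a deterministic (non-positive) constant. This is precisely the intuition behind the estimator --- it ``moves'' the potentially very negative values upward while preserving unbiasedness, at the price of only a constant-factor blow-up in variance.

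First, for unbiasedness, I would simply take expectations of both sides: $\E[\hat Z] = \E[(Z)_+] + \E[(Z)_-] = \E[(Z)_+ + (Z)_-] = \E[Z]$, where the middle equality is linearity of expectation (note that $\E[(Z)_-]$ is a deterministic constant) and the last uses $(Z)_+ + (Z)_- = Z$ pointwise. For the deterministic lower bound on $\hat Z$, observe that $(Z)_+ \ge 0$ pointwise, which immediately gives $\hat Z = (Z)_+ + \E[(Z)_-] \ge \E[(Z)_-]$ almost surely.

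The main step is the second-moment bound. I would apply the elementary inequality $(a+b)^2 \le 2a^2 + 2b^2$ with $a = (Z)_+$ and $b = \E[(Z)_-]$ to obtain $(\hat Z)^2 \le 2((Z)_+)^2 + 2(\E[(Z)_-])^2$ pointwise. Since $((Z)_+)^2 \le Z^2$ pointwise, taking expectations of the first term yields $\E[((Z)_+)^2] \le \E[Z^2]$. For the second term, Jensen's inequality gives $(\E[(Z)_-])^2 \le \E[((Z)_-)^2] \le \E[Z^2]$, where the last step again uses $((Z)_-)^2 \le Z^2$ pointwise. Summing the two contributions yields $\E[(\hat Z)^2] \le 4\E[Z^2] \le 6\E[Z^2]$, which in fact beats the stated constant and in particular establishes the claim.

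No real obstacle arises: the lemma is a direct consequence of the definition together with $(a+b)^2 \le 2a^2 + 2b^2$ and Jensen's inequality applied to $x \mapsto x^2$. The slack between the computed constant $4$ and the stated constant $6$ simply reflects that the statement is written with a convenient round number rather than the tightest constant, so no delicate tracking of cross-terms is required.
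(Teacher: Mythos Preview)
Your proof is correct and follows essentially the same approach as the paper: both use linearity of expectation for the first claim, the inequality $(a+b)^2 \le 2a^2+2b^2$ together with Jensen's inequality for the second, and the nonnegativity of $Z-(Z)_-=(Z)_+$ for the third. Your preliminary simplification $\hat Z = (Z)_+ + \E[(Z)_-]$ (rather than treating $Z$, $-(Z)_-$, and $\E[(Z)_-]$ as three separate terms, as the paper does) yields the sharper constant $4$ in place of $6$, but the argument is otherwise identical.
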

\begin{proof}
The first conclusion follows from $\E[\hat Z]=\E[Z]-\E[(Z)_-]+\E[\E[(Z)_-]]=\E[Z]$.

For the second conclusion, by definition of $\hat Z$, $\E[(\hat Z)^2]=\E[(Z-(Z)_-+\E[(Z)_-])^2]\le 2(\E[Z^2]+\E[(Z)_-^2]+\E[(Z)_-]^2)$. As $(Z)_-^2=\min\{Z,0\}^2\le Z^2$, $\E[(Z)_-^2]\le \E[Z^2]$. By Jensen's inequality, $\E[(Z)_-]^2\le \E[(Z)_-^2]\le \E[Z^2]$. Hence, we arrive at the conclusion that $\E[(\hat Z)^2]\le 6\E[Z^2]$.

The last inequality follows from the fact that $Z-(Z)_-$ is $0$ if $Z<0$ and $Z$ if $Z\ge 0$. Therefore, $\hat Z=Z-(Z)_-+\E[(Z)_-]\ge \E[(Z)_-]$, as desired.
\end{proof}

\subsection{Stochastic Matrix Concentration}\label{sec:stochastic matrix concentration}
We then present some stochastic matrix concentration results.
\begin{lemma}[{Lemma A.4 by \citet{dai2023refined}}]\label{lem:Lem A.4 of Dai et al}
If $H_1,H_2,\ldots,H_n$ are i.i.d. $d$-dimensional PSD matrices such that for all $i$: i) $\E[H_i]=H$, ii) $H_i\preceq I$ a.s., and iii) $H\succeq \frac{1}{dn}\left (\log \frac d\delta\right ) I$, then
\begin{equation*}
-\sqrt{\frac dn\log \frac d\delta}H^{1/2}\preceq \frac 1n \sum_{i=1}^n H_i-H \preceq \sqrt{\frac dn\log \frac d\delta}H^{1/2}\quad \text{with probability }1-2\delta.
\end{equation*}
\end{lemma}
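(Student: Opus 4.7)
The plan is to reduce the two-sided PSD inequality to an operator-norm bound via whitening by $H^{-1/4}$, and then invoke Tropp's self-adjoint matrix Bernstein inequality. Concretely, since assumption iii) makes $H$ strictly positive-definite, the target inequality $-\epsilon H^{1/2}\preceq \bar H-H\preceq \epsilon H^{1/2}$ with $\bar H:=\tfrac1n\sum_{i=1}^n H_i$ and $\epsilon:=\sqrt{(d/n)\log(d/\delta)}$ is equivalent, after sandwiching by $H^{-1/4}$, to the operator-norm bound $\lVert H^{-1/4}(\bar H-H)H^{-1/4}\rVert_{\mathrm{op}}\le \epsilon$. So I would introduce the centered, self-adjoint random matrices $Y_i:=H^{-1/4}(H_i-H)H^{-1/4}$ and aim to control $\lVert \tfrac1n\sum_i Y_i\rVert_{\mathrm{op}}$ directly.

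Next I would estimate the two Bernstein parameters using ii) and iii). For the almost-sure bound, $\lVert Y_i\rVert\le \lVert H^{-1/2}\rVert\cdot \lVert H_i-H\rVert\le 2\lVert H^{-1/2}\rVert$, which by iii) is at most $2\sqrt{dn/\log(d/\delta)}$. For the matrix variance, the structural fact $0\preceq H_i\preceq I\Rightarrow H_i^2\preceq H_i$ gives $\E[H_i^2]\preceq H$, and combined with the identity
\[ \E Y_i^2 \;=\; H^{-1/4}\bigl(\E[H_iH^{-1/2}H_i]-2H^{3/2}+H^{1/2}\bigr)H^{-1/4} \]
and the sandwich $H_iH^{-1/2}H_i\preceq \lVert H^{-1/2}\rVert\,H_i$, yields $\bigl\lVert\sum_{i=1}^n\E Y_i^2\bigr\rVert_{\mathrm{op}}\le n\lVert H^{-1/2}\rVert$. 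Plugging these estimates into matrix Bernstein,
\[ \Pr\!\bigl[\lVert\tsum_i Y_i\rVert\ge t\bigr]\le 2d\exp\!\biggl(\frac{-t^2/2}{\lVert\sum_i\E Y_i^2\rVert+\tfrac{2}{3}\lVert H^{-1/2}\rVert\, t}\biggr), \]
setting $t=n\epsilon$ and using iii) to calibrate $\lVert H^{-1/2}\rVert$ against the target scale, I would verify that the right-hand side is at most $\delta$ at $\epsilon=\sqrt{(d/n)\log(d/\delta)}$. The lower-sided inequality follows from the same argument applied to $-Y_i$, and a union bound delivers the claimed $1-2\delta$ probability.

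The main obstacle I expect lies in closing the Bernstein balance. The variance estimate $\sum_i\E Y_i^2\preceq n\lVert H^{-1/2}\rVert\,I$ is essentially as sharp as one can obtain from the sandwich $H_iAH_i\preceq \lVert A\rVert\, H_i^2\preceq \lVert A\rVert H_i$ allowed by ii), and the regularization in iii) is calibrated exactly so that both the sub-Gaussian contribution $\sqrt{V\log(d/\delta)}$ and the sub-exponential contribution $\lVert H^{-1/2}\rVert\log(d/\delta)$ land at the same target scale $n\epsilon=\sqrt{dn\log(d/\delta)}$. Carefully matching these two regimes — and verifying that neither the variance loss nor the dimensional prefactor $2d$ spoils the rate — is the most delicate bookkeeping; once the exponent is shown to exceed $\log(2d/\delta)$ at the chosen $t$, the PSD inequality follows.
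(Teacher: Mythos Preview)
The paper does not prove this lemma; it is imported verbatim from \citet{dai2023refined} (their Lemma~A.4) and used as a black box to derive the two corollaries that follow. So there is no in-paper proof to compare your proposal against.

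Your route --- whitening by $H^{-1/4}$ to convert the two-sided $H^{1/2}$-sandwich into an operator-norm bound on $\sum_i Y_i$, then invoking matrix Bernstein --- is the standard argument for this type of relative concentration and is essentially what the cited source does. One minor algebra slip: your displayed expansion of $\E Y_i^2$ should read
\[
\E Y_i^2 \;=\; H^{-1/4}\bigl(\E[H_iH^{-1/2}H_i]-H^{3/2}\bigr)H^{-1/4},
\]
since the two cross terms $-H_iH^{1/2}-H^{1/2}H_i$ each average to $-H^{3/2}$ and the constant term is $HH^{-1/2}H=H^{3/2}$, giving a net $-H^{3/2}$ rather than $-2H^{3/2}+H^{1/2}$. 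This does not affect your downstream bound, because you immediately dominate the bracket by $\lVert H^{-1/2}\rVert\cdot H$ anyway. Your own caveat about ``closing the Bernstein balance'' is well placed: with $\sigma^2\le \lVert H^{-1/2}\rVert$ and $R\le 2\lVert H^{-1/2}\rVert$, the raw Bernstein exponent lands on the stated rate only up to absolute constants, so matching the leading constant $1$ exactly may require either the sharper variance estimate from the source or a small adjustment to the regularization floor in condition~iii).
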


The first corollary of \Cref{lem:Lem A.4 of Dai et al} is derived by \citet[Corollary 10]{liu2023bypassing}.
\begin{corollary}\label{lem:Corol 10 of Liu et al}
If $H_1,H_2,\ldots,H_n$ are i.i.d. $d$-dimensional PSD matrices such that for all $i$: i) $\E[H_i]=H$, and ii) $H_i\preceq cI$ a.s. for some positive constant $c>0$, then
\begin{equation*}
H\preceq \frac 2 n \sum_{i=1}^n H_i+3c \cdot \frac dn \log \left (\frac d\delta\right ) I\quad \text{with probability }1-\delta.
\end{equation*}
\end{corollary}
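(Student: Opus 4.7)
The plan is to reduce Corollary \ref{lem:Corol 10 of Liu et al} to Lemma \ref{lem:Lem A.4 of Dai et al} by a rescale-and-regularize argument, since Lemma \ref{lem:Lem A.4 of Dai et al} requires both the normalization $H_i \preceq I$ and the spectral lower bound $H \succeq \tfrac{1}{dn}\log(d/\delta) I$, neither of which is assumed here. First I would rescale $H_i \mapsto H_i/c$ to meet the first requirement; to meet the second requirement, I would then \emph{regularize} by defining
\begin{equation*}
A_i \triangleq \frac{1}{1+\alpha}\left(\frac{H_i}{c} + \alpha I\right), \quad \alpha \triangleq \frac{1}{dn}\log\frac{d}{\delta},
\end{equation*}
so that $A_i \preceq I$ almost surely and $\E[A_i] \succeq \frac{\alpha}{1+\alpha} I \succeq \frac{1}{dn}\log(d/\delta) \cdot \frac{1}{1+\alpha} I$, which is still $\succeq \frac{1}{dn}\log(d/\delta) I$ after (if needed) absorbing a constant into $\delta$.

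Next I would apply Lemma \ref{lem:Lem A.4 of Dai et al} to $\{A_i\}_{i=1}^n$ (with failure probability $\delta/2$ if one wants to match the $1-\delta$ bound exactly). Taking the lower-tail side gives
\begin{equation*}
\E[A_i] \preceq \frac{1}{n}\sum_{i=1}^n A_i + \sqrt{\tfrac{d}{n}\log\tfrac{d}{\delta}}\,\E[A_i]^{1/2}.
\end{equation*}
The key manipulation is an operator AM--GM inequality: for any PSD $M$ and any $\beta > 0$, $\sqrt{\beta}\,M^{1/2} \preceq \tfrac12 M + \tfrac{\beta}{2} I$. Applying this with $\beta = \tfrac{d}{n}\log(d/\delta)$ to $M = \E[A_i]$ and absorbing the $\tfrac12 \E[A_i]$ term into the left-hand side yields
\begin{equation*}
\E[A_i] \preceq \frac{2}{n}\sum_{i=1}^n A_i + \frac{d}{n}\log\frac{d}{\delta}\, I.
\end{equation*}

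Finally I would unwind the substitutions: multiplying by $(1+\alpha)$ and subtracting $\alpha I$ gives
\begin{equation*}
\frac{H}{c} \preceq \frac{2}{nc}\sum_{i=1}^n H_i + \alpha I + (1+\alpha)\frac{d}{n}\log\frac{d}{\delta}\, I,
\end{equation*}
and since $\alpha = \tfrac{1}{dn}\log(d/\delta) \le \tfrac{d}{n}\log(d/\delta)$ (for $d \ge 1$), the bracketed term on the right is at most $3 \cdot \tfrac{d}{n}\log(d/\delta) I$. Multiplying through by $c$ yields the claimed bound $H \preceq \tfrac{2}{n}\sum_i H_i + 3c \cdot \tfrac{d}{n}\log(d/\delta) I$.

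There isn't really a hard step here beyond getting the constants right; the only mild subtlety is handling the spectral lower bound requirement of Lemma \ref{lem:Lem A.4 of Dai et al}, which is why the regularization by $\alpha I$ is necessary. Everything else is a routine rescaling plus an operator AM--GM, and the failure-probability halving (if one insists on $1-\delta$ rather than $1-2\delta$) is absorbed into the logarithm.
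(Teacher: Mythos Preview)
Your approach is essentially the same as the paper's: the paper cites this corollary directly from \citet{liu2023bypassing} without reproving it, but in the immediately following corollary (the opposite inequality $\frac{1}{n}\sum_i H_i \preceq \frac{3}{2}H + \ldots$) the paper spells out exactly the same rescale--regularize--operator-AM--GM argument you give, including the case split on whether $\frac{d}{n}\log(d/\delta)\le 1$.

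One small slip worth tightening: with your choice $A_i=\frac{1}{1+\alpha}(H_i/c+\alpha I)$ and $\alpha=\frac{1}{dn}\log(d/\delta)$, you get $\E[A_i]\succeq \frac{\alpha}{1+\alpha}I$, which is strictly \emph{below} the $\alpha I$ required by Lemma~\ref{lem:Lem A.4 of Dai et al}, and ``absorbing a constant into $\delta$'' does not repair this since the shortfall is multiplicative in $\alpha$. The clean fix is the one the paper uses for the companion corollary: regularize additively without renormalizing, e.g.\ set $\tilde H_i=\frac{1}{2c}H_i+\frac{d}{2n}\log(d/\delta)I$ so that $\E[\tilde H_i]\succeq \frac{d}{2n}\log(d/\delta)I\succeq \frac{1}{dn}\log(d/\delta)I$ for $d\ge 2$, handle the regime $\frac{d}{n}\log(d/\delta)>1$ trivially (since then $H\preceq cI\preceq 3c\cdot\frac{d}{n}\log(d/\delta)I$), and proceed with the AM--GM step exactly as you did.
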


The second corollary is the opposite direction of \Cref{lem:Corol 10 of Liu et al}.
\begin{corollary}\label{lem:second corol of Lem A.4 Dai}
For $H_1,H_2,\ldots,H_n$ i.i.d. PSD with expectation $H$ such that $H_i\preceq cI$ a.s. for all $i$,
\begin{equation*}
\frac 1n \sum_{i=1}^n H_i \preceq \frac 32H +3c \cdot \frac{d}{2n} \log \left (\frac d\delta\right ) I\quad \text{with probability }1-\delta.
\end{equation*}
\end{corollary}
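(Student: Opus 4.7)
The plan is to reduce the claim to the one-sided direction of \Cref{lem:Lem A.4 of Dai et al} by a shift-and-rescale trick, the same recipe that underlies the (opposite-direction) \Cref{lem:Corol 10 of Liu et al}. Concretely, I would pick a small regularizer $\alpha>0$ (of order $c\log(d/\delta)/(dn)$) and define the normalized matrices $M_i \triangleq (H_i+\alpha I)/(c+\alpha)$. By construction each $M_i$ is PSD and satisfies $M_i \preceq I$ almost surely; moreover $\E[M_i] = M \triangleq (H+\alpha I)/(c+\alpha)$ and $M \succeq \frac{\alpha}{c+\alpha}I$, so $\alpha$ can be chosen so that the minimum-eigenvalue hypothesis $M \succeq \frac{1}{dn}\log(d/\delta)\, I$ of \Cref{lem:Lem A.4 of Dai et al} is met.

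Next, I would invoke the upper-tail half of \Cref{lem:Lem A.4 of Dai et al}, which fails with probability at most $\delta$ (half of the stated $2\delta$ for the two-sided bound), to obtain
\begin{equation*}
\frac{1}{n}\sum_{i=1}^n M_i \preceq M + \sqrt{\tfrac{d}{n}\log\tfrac{d}{\delta}}\, M^{1/2}.
\end{equation*}
The one step I actually need to do is convert the $M^{1/2}$ term into an $M$-term plus an identity-term. For this I would use the scalar AM--GM inequality $\sqrt{tx}\le \tfrac12(t+x)$ applied eigenvalue-wise (valid since $tI$ commutes with $M$), giving $\sqrt{(d/n)\log(d/\delta)}\,M^{1/2} \preceq \tfrac{1}{2}M + \tfrac{d}{2n}\log(d/\delta)\,I$. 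Substituting yields
\begin{equation*}
\frac{1}{n}\sum_{i=1}^n M_i \preceq \tfrac{3}{2} M + \tfrac{d}{2n}\log\tfrac{d}{\delta}\, I.
\end{equation*}

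Finally, I would undo the shift and rescale by multiplying both sides by $(c+\alpha)$ and subtracting $\alpha I$, producing
\begin{equation*}
\frac{1}{n}\sum_{i=1}^n H_i \preceq \tfrac{3}{2} H + \tfrac{\alpha}{2} I + (c+\alpha)\tfrac{d}{2n}\log\tfrac{d}{\delta}\, I.
\end{equation*}
With $\alpha$ on the order of $c\log(d/\delta)/(dn)$, the $\tfrac{\alpha}{2}I$ slack and the extra $\alpha\cdot\tfrac{d}{2n}\log(d/\delta)I$ term are both dominated by $c\cdot\tfrac{d}{2n}\log(d/\delta)\,I$, so the total slack is absorbed into the stated $3c\cdot\tfrac{d}{2n}\log(d/\delta)\,I$. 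I do not expect a serious obstacle here: the main care required is in (i) quoting the correct one-sided failure probability from \Cref{lem:Lem A.4 of Dai et al} so the final bound holds with probability $1-\delta$ rather than $1-2\delta$, and (ii) checking that the arithmetic in the $\alpha$-bookkeeping goes through in mild edge cases (e.g.\ when $dn$ is comparable to $\log(d/\delta)$), which can be handled by either taking the bound trivially or by loosening $\alpha$ up to $c$.
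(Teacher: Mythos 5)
Your proposal is correct and follows essentially the same route as the paper: normalize the matrices by an affine shift-and-rescale so that the boundedness and minimum-eigenvalue hypotheses of \Cref{lem:Lem A.4 of Dai et al} hold, apply the one-sided tail, convert the $M^{1/2}$ term via the eigenvalue-wise AM--GM bound $\sqrt{t}\,M^{1/2}\preceq \tfrac12 M+\tfrac{t}{2}I$, and unwind the normalization, handling the degenerate regime trivially. The paper's normalization is $\tilde H_i=\tfrac{1}{2c}H_i+\tfrac{d}{2n}\log(\tfrac d\delta)I$ rather than your $(H_i+\alpha I)/(c+\alpha)$, but this is a cosmetic difference in the same argument.
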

\begin{proof}
The proof mostly follows from Corollary 10 of \citet{liu2023bypassing}. Using the fact that $H^{1/2}\preceq \frac k2 H + \frac{1}{2k}$ for any $k>0$, we know from \Cref{lem:Lem A.4 of Dai et al} that under the same conditions of \Cref{lem:Lem A.4 of Dai et al},
\begin{equation*}
\frac 1n \sum_{i=1}^n H_i-H\preceq \sqrt{\frac dn \log \frac d\delta} H^{1/2} \preceq \frac 12 H + \frac{d}{2n} \left (\log \frac d\delta\right ) I\quad \text{with probability }1-\delta.
\end{equation*}

In other words,
\begin{equation}\label{eq:second corol of Lem A.4 Dai}
\frac 1n \sum_{i=1}^n H_i \preceq \frac 32 H + \frac{d}{2n} \left (\log \frac d\delta\right ) I \quad \text{with probability }1-\delta.
\end{equation}

Now we show \Cref{lem:second corol of Lem A.4 Dai}. For the case where $\frac dn \log \frac d\delta \le 1$, define $\tilde H_i=\frac{1}{2c} H_i + \frac{d}{2n}\left (\log \frac d\delta\right ) I$.
Then $\tilde H_i\preceq \frac{1}{2c} c I + \frac{d}{2n}\left (\log \frac d\delta\right ) I \preceq I$.
Moreover, $\tilde H=\E[\tilde H_i]=\frac{1}{2c} H + \frac{d}{2n}\left (\log \frac d\delta\right ) I$ also ensures $\tilde H\succeq \frac{1}{dn}\left (\log \frac d\delta\right ) I$.
Hence, applying \Cref{eq:second corol of Lem A.4 Dai} to $\tilde H_1,\tilde H_2,\ldots,\tilde H_n$ gives
\begin{equation*}
\frac 1n \sum_{i=1}^n \tilde H_i\preceq \frac 32 \tilde H + \frac{d}{2n} \left (\log \frac d\delta\right ) I\quad \text{with probability }1-\delta.
\end{equation*}

By the definitions of $\tilde H_i$ and $\tilde H$, we further have the following, which shows our claim:
\begin{equation*}
\frac 1n \sum_{i=1}^n H_i\preceq \frac 32 \tilde H + 3c \cdot \frac{d}{2n} \left (\log \frac d\delta\right ) I\quad \text{with probability }1-\delta.
\end{equation*}

The case of $\frac dn \log \frac d\delta > 1$ is trivial because $H_i\preceq cI \preceq \frac 32 c \left (\frac dn \log \frac d\delta \right )I$.
\end{proof}

A key fact from the above corollaries is Lemma 14 of \citet{liu2023bypassing}, which we include below.
\begin{theorem}[Lemma 14 of \citet{liu2023bypassing}]\label{lem:Lem 14 of Liu et al}
For a $d$-dimensional distribution $\mathcal D$, let $\bm \phi_1,\bm \phi_2,\ldots,\bm \phi_n$ be i.i.d. samples from $\mathcal D$. Define $\Sigma=\E_{\bm \phi\sim \mathcal D}[\bm \phi \bm \phi^\trans]$ and $\tilde \Sigma=\frac 1n\sum_{i=1}^n \bm \phi_i \bm \phi_i^\trans$. If $\hat \Sigma^\dagger=(\tilde \Sigma+\gamma I)^{-1}$ where $\gamma=5\frac dn \log \frac{6d}{\delta}$, then with probability $1-\delta$, we have
\begin{equation*}
\left \lVert (\gamma I + \tilde \Sigma-\Sigma) \bm \theta\right \rVert_{\hat \Sigma^\dagger}^2
= \O\left (\frac{d}{n} \log \frac{d}{\delta}\right ),\quad \forall \lVert \bm \theta\rVert_2\le 1.
\end{equation*}
\end{theorem}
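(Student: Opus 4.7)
The plan is to split $(\gamma I + \tilde \Sigma - \Sigma)$ into two natural pieces via the triangle inequality and attack each with the matrix concentration bounds just developed in this section. Writing $A \triangleq \tilde \Sigma - \Sigma$,
\[
\lVert (\gamma I + A)\bm \theta\rVert_{\hat \Sigma^\dagger}^2 \le 2\gamma^2 \lVert \bm \theta\rVert_{\hat \Sigma^\dagger}^2 + 2\lVert A \bm \theta\rVert_{\hat \Sigma^\dagger}^2 .
\]
The regularization piece is immediate: $\hat \Sigma^\dagger \preceq \gamma^{-1}I$ together with $\lVert \bm \theta\rVert_2 \le 1$ gives $2\gamma^2 \lVert \bm \theta\rVert_{\hat \Sigma^\dagger}^2 \le 2\gamma = \O(d/n \log(d/\delta))$, which already matches the claimed rate by the choice of $\gamma$.

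The stochastic piece $\lVert A\bm \theta\rVert_{\hat \Sigma^\dagger}^2 = \bm \theta^\trans A(\tilde \Sigma + \gamma I)^{-1} A \bm \theta$ is where the substantive work lies. My plan is to combine two inputs already given in this section. \Cref{lem:Lem A.4 of Dai et al}, applied to $H_i = \bm \phi_i \bm \phi_i^\trans$ (using the implicit $\lVert \bm \phi\rVert_2 \le 1$ from \Cref{assumption}, so $H_i \preceq I$), yields the anisotropic inequality $\lvert A\rvert \preceq c\,\Sigma^{1/2}$ with $c = \sqrt{(d/n)\log(d/\delta)} = \Theta(\sqrt\gamma)$. \Cref{lem:Corol 10 of Liu et al} gives $\Sigma \preceq 2(\tilde \Sigma + \gamma I)$, hence $\hat \Sigma^\dagger \preceq 2(\Sigma+\gamma I)^{-1}$ on $\mathrm{range}(\Sigma)$, which lets me swap the random denominator $\tilde \Sigma$ for the fixed $\Sigma$ at a constant cost. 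Working in the eigenbasis of $\Sigma$ and denoting the per-direction eigenvalues $\mu_i$ (of $\Sigma$), $\lambda_i$ (of $\tilde \Sigma$), $a_i$ (of $A$), the bound $a_i^2 \le c^2\mu_i$ combined with $\lambda_i + \gamma \ge \mu_i/2$ (from Corollary~10) delivers per-direction contributions $a_i^2/(\lambda_i + \gamma) \le 2c^2$, so summing against $\bm \theta$ yields $\lVert A\bm \theta\rVert_{\hat \Sigma^\dagger}^2 \le 2c^2\lVert \bm \theta\rVert_2^2 = \O(d/n \log(d/\delta))$ as claimed; a union bound collects the two high-probability events.

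The principal technical obstacle is that $\Sigma$ and $\tilde \Sigma$ do not commute, so the per-eigenvalue reasoning above is not literally valid --- in particular, $\lvert A\rvert \preceq c\Sigma^{1/2}$ does \emph{not} imply the operator inequality $A^2 \preceq c^2 \Sigma$ (since $t \mapsto t^2$ fails to be operator monotone), and the naive operator-norm chain $\lVert (\hat \Sigma^\dagger)^{1/2} A\rVert_2 \le \lVert (\hat \Sigma^\dagger)^{1/2}\Sigma^{1/4}\rVert_2 \cdot c \cdot \lVert \Sigma^{1/4}\rVert_2$ loses a factor of $\gamma^{-1/2}$. My plan to circumvent this is to normalize the problem by pre- and post-multiplying by $(\Sigma+\gamma I)^{1/2}$: define $N \triangleq (\Sigma+\gamma I)^{-1/2}(\tilde \Sigma - \Sigma)(\Sigma+\gamma I)^{-1/2}$, use \Cref{lem:Lem A.4 of Dai et al} in the normalized coordinates to control $N$, then recover $A = (\Sigma+\gamma I)^{1/2} N (\Sigma+\gamma I)^{1/2}$ and invoke \Cref{lem:Corol 10 of Liu et al} to absorb $(\Sigma+\gamma I)^{1/2} \hat \Sigma^\dagger (\Sigma+\gamma I)^{1/2} \preceq 2I$, so the inner $(\Sigma+\gamma I)$ factors cancel and only the target $\O(c^2)$ rate survives. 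The degenerate-spectrum case of $\Sigma$ (where Lemma~A.4's lower-bound hypothesis $\Sigma \succeq \tfrac{1}{dn}\log(d/\delta)\,I$ fails) is handled by an infinitesimal regularization $\epsilon I$ added to $\Sigma$ before applying Lemma~A.4 and sending $\epsilon \to 0$ at the end, which is sound because both sides of the target inequality are continuous in $\epsilon$ and the constants do not depend on $\operatorname{rank}(\Sigma)$.
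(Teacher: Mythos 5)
The paper does not actually prove this statement---it imports Lemma 14 verbatim from \citet{liu2023bypassing}---so your proposal must stand on its own. The splitting into the $\gamma$-piece and the stochastic piece is fine, the $2\gamma^2\lVert\bm\theta\rVert_{\hat\Sigma^\dagger}^2\le 2\gamma$ bound is correct, and you correctly identify the central obstacle (non-commutativity of $\Sigma$ and $\tilde\Sigma$, failure of operator monotonicity of $t\mapsto t^2$). The problem is that the normalization you propose does not overcome that obstacle; it only relocates the loss. Follow your own chain to the end: with $S=\Sigma+\gamma I$, $N=S^{-1/2}(\tilde\Sigma-\Sigma)S^{-1/2}$, and $c=\sqrt{(d/n)\log(d/\delta)}$, conjugating $-c\Sigma^{1/2}\preceq\tilde\Sigma-\Sigma\preceq c\Sigma^{1/2}$ gives only
\begin{equation*}
\lVert N\rVert_2\;\le\; c\cdot\max_{\mu\ge 0}\frac{\sqrt{\mu}}{\mu+\gamma}\;=\;\frac{c}{2\sqrt{\gamma}},
\end{equation*}
and after absorbing $S^{1/2}\hat\Sigma^\dagger S^{1/2}\preceq 2I$ you are left with $2\lVert N S^{1/2}\bm\theta\rVert_2^2\le 2\lVert N\rVert_2^2\,\bm\theta^\trans S\bm\theta\le \frac{c^2}{2\gamma}(1+\gamma)$. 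Since $\gamma=\Theta(c^2)$, this is $\Theta(1)$, not $\O(\frac dn\log\frac d\delta)$: the claimed rate is lost entirely.

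The failure mode is already visible in your commuting heuristic. The per-direction magnitude of $N$, namely $\frac{c\sqrt{\mu_i}}{\mu_i+\gamma}$, peaks at $\mu_i\approx\gamma$, whereas the weight $(\mu_i+\gamma)\theta_i^2$ peaks at $\mu_i\approx 1$; the product $\frac{c^2\mu_i}{(\mu_i+\gamma)^2}\cdot(\mu_i+\gamma)=\frac{c^2\mu_i}{\mu_i+\gamma}\le c^2$ is uniformly small only because both factors are evaluated in the \emph{same} direction. Replacing $N$ by its operator norm decouples the two maximizations and pays a factor $\frac{1+\gamma}{4\gamma}$, which is exactly the $\O(1)$-versus-$\O(n^{-1})$ gap this lemma exists to close. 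Loewner-inequality variants do not rescue this: using $N B^{-1}N\preceq B$ or $N^2\preceq\lVert B\rVert_2 B$ with $B=cS^{-1/2}\Sigma^{1/2}S^{-1/2}$ yields at best $\O(c^2/\sqrt{\gamma})=\O(\sqrt{(d/n)\log(d/\delta)})$, i.e.\ the $n^{-1/4}$-type rate that \Cref{lem:Lem 14 of Liu et al} is specifically invoked to beat (see the discussion of covariance estimation in the main text). A correct proof must retain the directional coupling between $\tilde\Sigma-\Sigma$ and the weighting matrix $(\tilde\Sigma+\gamma I)^{-1}$ throughout, which a reduction to operator norms cannot do; as written, the proposal does not establish the stated bound. (The $\epsilon$-regularization for degenerate $\Sigma$ is a comparatively minor point and can be made rigorous.)
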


\subsection{Adaptive Freedman Inequality}\label{sec:more adaptive Freedman}
In this paper, we will make use of the Adaptive Freedman Inequality proposed by \citet[Theorem 2.2]{lee2020bias} and improved by \citet[Theorem 9]{zimmert2022return}, stated as follows.
\begin{lemma}[{Theorem 9 of \citet{zimmert2022return}}]\label{lem:adaptive Freedman original}
For a sequence of martingale differences $\{X_i\}_{i=1}^n$ adapted to the filtration $(\mathcal F_i)_{i=0}^n$, suppose that $\E[\lvert X_i\rvert\mid \mathcal F_{i-1}]<\infty$ a.s. Then
\begin{equation*}
\sum_{i=1}^n X_i \le 3\sqrt{\sum_{i=1}^n \E[X_i^2\mid \mathcal F_{i-1}] \log \frac C\delta}+2 \max_{i=1,2,\ldots,n} X_i \log \frac C\delta,\quad \text{with probability }1-\delta,
\end{equation*}
where
\begin{equation*}
C=2\max \left \{1, \sqrt{\sum_{i=1}^n \E[X_i^2\mid \mathcal F_{i-1}]},\max_{i=1,2,\ldots,n} X_i\right \}.
\end{equation*}
\end{lemma}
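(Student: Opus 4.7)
The plan is to derive this fully adaptive Freedman inequality from the classical (non-adaptive) Freedman bound via the \emph{method of mixtures}, which circumvents the need to know $R := \max_i X_i$ or $V := \sum_i \E[X_i^2 \mid \mathcal F_{i-1}]$ in advance. The starting point is the standard exponential supermartingale: for any $\lambda \in (0,1/R)$, the process $M_n(\lambda) = \exp(\lambda S_n - \psi(\lambda)\, V_n)$ with $S_n = \sum_{i \le n} X_i$, $V_n = \sum_{i\le n} \E[X_i^2 \mid \mathcal F_{i-1}]$, and $\psi(\lambda) \asymp \lambda^2/(1-\lambda R)$ (derived from the conditional Bennett bound $\E[\exp(\lambda X_i - \psi(\lambda) \E[X_i^2\mid \mathcal F_{i-1}])\mid \mathcal F_{i-1}] \le 1$) is a non-negative supermartingale of mean at most $1$. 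Optimizing over a deterministic $\lambda$ chosen in terms of assumed deterministic bounds on $V_n$ and $R$ recovers Freedman's inequality with essentially the constants appearing in the claim.

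To upgrade this to a fully data-dependent bound, I would integrate $M_n(\lambda)$ against a continuous prior $\pi(\lambda)$ whose density is roughly $\lambda^{-1}$ on a truncated range, producing the averaged supermartingale $\bar M_n = \int M_n(\lambda)\,\pi(\lambda)\,d\lambda$. Since every slice is a supermartingale of mean $\le 1$, so is $\bar M_n$, and Markov's inequality gives $\bar M_n < 1/\delta$ with probability $1-\delta$. On this good event, I would lower bound $\bar M_n$ by its integrand near the data-dependent maximizer $\lambda^\star$ of $\lambda S_n - \psi(\lambda) V_n$. The analysis splits into two regimes: the Gaussian regime $\lambda^\star \asymp \sqrt{\log(1/\delta)/V_n}$ produces the $\sqrt{V \log(C/\delta)}$ term, while the truncation regime $\lambda^\star \asymp 1/R$ produces the $R \log(C/\delta)$ term. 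Solving for $S_n$ in each regime and taking the max yields exactly the two terms in the claim.

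The main obstacle is ensuring that the price of \emph{not} knowing the scale of $V_n$ or $R$ enters only logarithmically and combines cleanly with $\log(1/\delta)$ into a single $\log(C/\delta)$ factor with the specified $C$. Concretely, the $\lambda^{-1}$ prior on $[\lambda_{\min},\lambda_{\max}]$ costs a normalization $\log(\lambda_{\max}/\lambda_{\min})$ at the realized scale, which one must bound by $O(\log C)$; this is exactly why $C$ is defined as the maximum of $1$, $\sqrt{V_n}$, and $\max_i X_i$ (the base $1$ handling the degenerate small-scale case). A subsidiary technicality is that $R$ appears inside $\psi(\lambda)$ and is itself random: to handle this rigorously, I would introduce stopping times $\tau_k := \inf\{i : X_i > 2^k\}$ localizing the martingale into a dyadic shell of $R$, so that on $\{R \in [2^{k-1}, 2^k)\}$ the relevant $\lambda$ automatically satisfies $\lambda R < 1$, and use that the supermartingale property survives optional stopping. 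Matching the constants $3$ and $2$ in the claimed bound is then a routine optimization of the Laplace-type lower bound on $\bar M_n$, following the template of Lee et al.\ (2020, Theorem 2.2) together with the sharpening of Zimmert et al.\ (2022, Theorem 9).
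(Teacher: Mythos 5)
The paper does not actually prove this lemma: it is imported verbatim as Theorem~9 of \citet{zimmert2022return}, so there is no in-paper argument to compare yours against, and what follows assesses your sketch on its own terms. Your overall strategy --- an exponential supermartingale, a mixture over $\lambda$ with a roughly $\lambda^{-1}$ prior, and dyadic peeling so that ignorance of the scales of $V_n=\sum_i\E[X_i^2\mid\mathcal F_{i-1}]$ and $\max_i X_i$ costs only logarithmic factors --- is a recognized route to self-normalized bounds of this type, and your reading of why $C$ must contain $1$, $\sqrt{V_n}$, and $\max_i X_i$ is correct.

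The concrete gap is the localization step. With $\tau_k=\inf\{i: X_i>2^k\}$, the stopped process $M_{n\wedge\tau_k}(\lambda)$ with $\psi$ calibrated to the bound $2^k$ is \emph{not} a supermartingale, so ``the supermartingale property survives optional stopping'' does not rescue you. The conditional bound $\E[\exp(\lambda X_i)\mid\mathcal F_{i-1}]\le\exp(\psi(\lambda)\,\E[X_i^2\mid\mathcal F_{i-1}])$ with $\psi(\lambda)\asymp\lambda^2/(1-\lambda\cdot 2^k)$ requires $X_i\le 2^k$ almost surely conditionally on $\mathcal F_{i-1}$; the event $\{\tau_k\ge i\}$ constrains only $X_1,\dots,X_{i-1}$, and the increment that first crosses $2^k$ --- exactly the one that matters --- is left uncontrolled. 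Under the lemma's hypotheses (only $\E[\lvert X_i\rvert\mid\mathcal F_{i-1}]<\infty$, no boundedness), this step genuinely fails as written. The standard repair is truncation rather than stopping: set $Y_i=X_i\,\mathbbm{1}[X_i\le 2^k]$, note $\E[Y_i\mid\mathcal F_{i-1}]\le 0$ and $\E[Y_i^2\mid\mathcal F_{i-1}]\le\E[X_i^2\mid\mathcal F_{i-1}]$, run the supermartingale/mixture argument on $\{Y_i\}$ (for which the deterministic cap $2^k$ makes the endpoints of your $\lambda$-prior legitimately non-random within each shell), and observe that $\sum_i Y_i=\sum_i X_i$ on $\{\max_i X_i\le 2^k\}$; a union bound over $k$ with $\delta_k\propto\delta 2^{-k}$, together with an analogous peeling over the scale of the (also random and unbounded) $V_n$, is then absorbed into $\log(C/\delta)$. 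With that substitution your outline goes through; extracting the stated constants $3$ and $2$ is, as you say, bookkeeping.
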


A direct corollary of \Cref{lem:adaptive Freedman original} is the following lemma:
\begin{lemma}\label{lem:adaptive Freedman}
For a sequence of random variables $\{X_i\}_{i=1}^n$ adapted to the filtration $(\mathcal F_i)_{i=0}^n$, let the conditional expectation of $X_i$ be $\mu_i\triangleq \E[X_i\mid \mathcal F_{i-1}]$. Suppose that $\E[\lvert X_i\rvert\mid \mathcal F_{i-1}]<\infty$ a.s. Then
\begin{equation*}
\sum_{i=1}^n (X_i-\mu_i) \le 3\sqrt{\sum_{i=1}^n \E[X_i^2\mid \mathcal F_{i-1}]} \log \frac C\delta+2 \max_{i=1,2,\ldots,n} X_i \log \frac C\delta,\quad \text{with probability }1-\delta,
\end{equation*}
where
\begin{equation*}
C=2\max \left \{1, \sqrt{\sum_{i=1}^n \E[X_i^2\mid \mathcal F_{i-1}]},\max_{i=1,2,\ldots,n} X_i\right \}.
\end{equation*}
\end{lemma}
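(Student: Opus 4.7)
The plan is to reduce the claim to \Cref{lem:adaptive Freedman original} by centering. Define $Y_i \triangleq X_i - \mu_i$ for each $i\in [n]$. Since $\mu_i = \E[X_i\mid \mathcal F_{i-1}]$ is $\mathcal F_{i-1}$-measurable and $\E[\lvert X_i\rvert\mid \mathcal F_{i-1}]<\infty$ a.s., the sequence $\{Y_i\}_{i=1}^n$ is a martingale difference sequence adapted to $(\mathcal F_i)_{i=0}^n$. I would then invoke \Cref{lem:adaptive Freedman original} directly on $\{Y_i\}$ to obtain, with probability at least $1-\delta$,
\begin{equation*}
\sum_{i=1}^n (X_i-\mu_i) \le 3\sqrt{\sum_{i=1}^n \E[Y_i^2\mid \mathcal F_{i-1}] \log \frac{C'}{\delta}} + 2\max_{i\in [n]} Y_i \log \frac{C'}{\delta},
\end{equation*}
where $C' = 2\max\{1,\sqrt{\sum_{i=1}^n \E[Y_i^2\mid \mathcal F_{i-1}]},\max_{i\in [n]} Y_i\}$.

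Next, I would substitute back to the ``uncentered'' quantities in the statement. The conditional variance bound $\E[Y_i^2\mid \mathcal F_{i-1}] = \E[X_i^2\mid \mathcal F_{i-1}] - \mu_i^2 \le \E[X_i^2\mid \mathcal F_{i-1}]$ handles the square-root term. For the maximum, Jensen's inequality yields $\lvert \mu_i\rvert \le \sqrt{\E[X_i^2\mid \mathcal F_{i-1}]}$, whence
\begin{equation*}
\max_{i\in [n]} Y_i \le \max_{i\in [n]} X_i + \max_{i\in [n]}\lvert \mu_i\rvert \le \max_{i\in [n]} X_i + \sqrt{\sum_{i=1}^n \E[X_i^2\mid \mathcal F_{i-1}]}.
\end{equation*}
These two substitutions also give $C' \le 2C$ (with the $C$ in the statement), so $\log(C'/\delta) \le \log(C/\delta) + \log 2$ can be absorbed.

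Finally, I would clean up the two terms. Using $\sqrt{x \log(C/\delta)} \le \sqrt{x}\, \log(C/\delta)$ whenever $\log(C/\delta) \ge 1$ (which may be assumed w.l.o.g.\ by the usual doubling trick, since otherwise the bound is vacuous), the square-root term becomes $3\sqrt{\sum_{i=1}^n \E[X_i^2 \mid \mathcal F_{i-1}]}\log\frac{C}{\delta}$, matching the statement. The extra $\sqrt{\sum \E[X_i^2\mid \mathcal F_{i-1}]}\log\frac{C}{\delta}$ term coming from the bound on $\max_i Y_i$ is of the same order as the first term and can be absorbed by adjusting the leading constant (or tracked separately if one prefers the exact $3$ and $2$ constants, at the cost of slightly more bookkeeping). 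The $\max_i X_i \log(C/\delta)$ contribution carries through unchanged.

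The main conceptual step is the reduction by centering; the main technical nuisance is the bookkeeping to convert $\max_i Y_i$ into $\max_i X_i$ plus a term that is absorbable into the variance contribution, and to pull the $\log(C/\delta)$ factor cleanly outside the square root. Neither step involves any new probabilistic input beyond \Cref{lem:adaptive Freedman original} itself.
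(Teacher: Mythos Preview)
Your proposal is correct and takes essentially the same approach as the paper: apply \Cref{lem:adaptive Freedman original} to the centered sequence $Y_i=X_i-\mu_i$, use $\E[(X_i-\mu_i)^2\mid\mathcal F_{i-1}]=\E[X_i^2\mid\mathcal F_{i-1}]-\mu_i^2$, and bound $\max_i(X_i-\mu_i)\le \max_i X_i+\sqrt{\sum_i\mu_i^2}$ so that the leftover $\mu_i$-contribution is absorbed into the variance term. The paper is equally terse about the exact constants and about pulling the $\log$ factor outside the square root, so your level of bookkeeping matches (indeed slightly exceeds) theirs.
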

\begin{proof}
We apply \Cref{lem:adaptive Freedman original} to the martingale difference sequence $\{X_i-\mu_i\}_{i=1}^n$, giving
\begin{equation*}
\sum_{i=1}^n (X_i-\mu_i)\le 3\sqrt{\sum_{i=1}^n \E[(X_i-\mu)^2\mid \mathcal F_{i-1}] \log \frac C\delta}+2 \max_{i=1,2,\ldots,n} (X_i-\mu_i) \log \frac C\delta,\quad \text{with probability }1-\delta.
\end{equation*}

It is clear that $\max (X_i-\mu_i)\le \max X_i+\sqrt{\sum_{i=1}^n \mu_i^2}$. Furthermore, we know that $\E[(X_i-\mu_i)^2\mid \mathcal F_{i-1}]=\E[X_i^2\mid \mathcal F_{i-1}]-\mu_i^2$. Putting these two parts together gives our conclusion.
\end{proof}

We also give the following variant of \Cref{lem:adaptive Freedman}:
\begin{lemma}\label{lem:more adaptive Freedman}
For a sequence of random variables $\{X_i\}_{i=1}^n$ adapted to the filtration $(\mathcal F_i)_{i=0}^n$, let the conditional expectation of $X_i$ be $\mu_i\triangleq \E[X_i\mid \mathcal F_{i-1}]$. Suppose that $\E[\lvert X_i\rvert\mid \mathcal F_{i-1}]<\infty$ a.s. Then
\begin{equation*}
\left \lvert \sum_{i=1}^n (X_i-\mu_i)\right \rvert \le 8\sqrt 2\sqrt{\sum_{i=1}^n \E[X_i^2\mid \mathcal F_{i-1}]+\sum_{i=1}^n X_i^2} \log \frac{C}{\delta},\quad \text{with probability }1-\delta,
\end{equation*}
where
\begin{equation*}
C=2\sqrt 2\sqrt{\sum_{i=1}^n \E[X_i^2\mid \mathcal F_{i-1}]+\sum_{i=1}^n X_i^2}.
\end{equation*}
\end{lemma}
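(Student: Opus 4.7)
The plan is to reduce \Cref{lem:more adaptive Freedman} to the one-sided Adaptive Freedman Inequality of \Cref{lem:adaptive Freedman}, upgrading it to an absolute-value bound and trading the $\max_i X_i$ slack for a data-dependent $\sqrt{\sum_i X_i^2}$ term that is more convenient to place into $\Gap$.

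First, I would apply \Cref{lem:adaptive Freedman} to the sequence $\{X_i\}$ with failure probability $\delta/2$ to bound $\sum_i (X_i-\mu_i)$ from above, and independently apply it to the sequence $\{-X_i\}$ (whose conditional mean is $-\mu_i$ and whose conditional second moment coincides with that of $X_i$) with failure probability $\delta/2$ to bound $-\sum_i (X_i-\mu_i)=\sum_i ((-X_i)-(-\mu_i))$ from above. A union bound then controls $|\sum_i (X_i-\mu_i)|$ with probability $1-\delta$. In both invocations, the maximum term is of the form $\max_i X_i$ or $\max_i(-X_i)$, each of which is dominated by $\max_i |X_i|\le \sqrt{\sum_{i=1}^n X_i^2}$; similarly, the data-dependent constant inside the logarithm in \Cref{lem:adaptive Freedman} is bounded by $2\sqrt 2\sqrt{\sum_i \E[X_i^2\mid \mathcal F_{i-1}]+\sum_i X_i^2}$ (up to the $\max\{1,\cdot\}$ truncation, which is easy to fold into constants), matching the $C$ in the statement.

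Combining these two invocations, the bound reads
\begin{equation*}
\Bigl|\sum_{i=1}^n (X_i-\mu_i)\Bigr|\le 3\sqrt{\sum_{i=1}^n \E[X_i^2\mid \mathcal F_{i-1}]}\log\frac{2C}{\delta}+2\sqrt{\sum_{i=1}^n X_i^2}\log\frac{2C}{\delta}.
\end{equation*}
The final step is the elementary inequality $a\sqrt{A}+b\sqrt{B}\le \sqrt{2(a^2+b^2)}\sqrt{A+B}$ applied with $a=3$, $b=2$, $A=\sum_i \E[X_i^2\mid \mathcal F_{i-1}]$, $B=\sum_i X_i^2$, which merges the two square-root terms into a single $\sqrt{A+B}$ factor at the cost of a universal constant. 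Folding the $\log 2$ from $\log(2C/\delta)=\log(C/\delta)+\log 2$ into the prefactor lets us collapse everything into the $8\sqrt 2\log(C/\delta)$ form claimed (with the constant $8\sqrt 2$ being a loose but convenient upper bound on $\sqrt{2(3^2+2^2)}$ once the $\log 2$ absorption is accounted for).

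There is no real technical obstacle beyond constant bookkeeping: the existing \Cref{lem:adaptive Freedman} already supplies the sharp one-sided control, and the only subtlety is ensuring that the data-dependent constant $C$ is measurable from the observed sequence (so that the resulting bound is usable inside $\Gap$, as required by \Cref{lem:Bias-1,lem:Bias-2+Bonus}). The replacement $\max_i X_i\leadsto \sqrt{\sum_i X_i^2}$ is what makes this happen, since $\sum_i X_i^2$ is immediately available at run-time, whereas the original $\max$ form combined with a union bound over both directions would have required knowing $\max_i |X_i|$, which is essentially the same quantity but awkward to estimate in the deterministic upper-bound style used later in the analysis.
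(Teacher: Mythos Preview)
Your proposal is correct and follows essentially the same route as the paper: apply the adaptive Freedman inequality in each direction, replace the $\max_i$ term by $\sqrt{\sum_i X_i^2}$, and merge the two square roots via Cauchy--Schwarz/AM--GM into a single $\sqrt{A+B}$ factor. The only cosmetic difference is that the paper invokes \Cref{lem:adaptive Freedman original} on the differences $\{X_i-\mu_i\}$ and then uses the variance identity $\E[(X_i-\mu_i)^2\mid\mathcal F_{i-1}]=\E[X_i^2\mid\mathcal F_{i-1}]-\mu_i^2$ to cancel the $\mu_i^2$ terms, whereas you start from \Cref{lem:adaptive Freedman} (where that cancellation is already baked in) and bound $\max_i X_i$ directly; the resulting constants and structure are the same.
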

\begin{proof}
Applying \Cref{lem:adaptive Freedman original} to the martingale difference sequence $\{X_i-\mu_i\}_{i=1}^n$, the following inequality holds with probability $1-\delta$:
\begin{equation*}
\sum_{i=1}^n (X_i-\mu_i)\le 3\sqrt{\sum_{i=1}^n \E[(X_i-\mu_i)^2\mid \mathcal F_{i-1}]\log \frac{C'}{\delta}}+2\max_{i=1,2,\ldots,n}\{X_i-\mu_i\}\log \frac{C'}{\delta},
\end{equation*}
where
\begin{equation*}
C'=2\max\left \{1,\sqrt{\sum_{i=1}^n \E[(X_i-\mu_i)^2\mid \mathcal F_{i-1}]},\max_{i=1,2,\ldots,n}\{X_i-\mu_i\}\right \}.
\end{equation*}

Utilizing the fact that $(X_i-\mu_i)^2\le 2(X_i^2+\mu_i^2)$ for all $i$, we have
\begin{equation*}
\max_{i=1,2,\ldots,n}\{X_i-\mu_i\}\le \sqrt{2\sum_{i=1}^n (X_i^2+\mu_i^2)}.
\end{equation*}

Hence, we can write
\begin{align*}
\sum_{i=1}^n (X_i-\mu_i)
&\le 4\sqrt 2\sqrt{\sum_{i=1}^n \E[(X_i-\mu_i)^2\mid \mathcal F_{i-1}]+\sum_{i=1}^n (X_i^2+\mu_i^2)}\log \frac{C'}{\delta}\\
&=4\sqrt 2\sqrt{\sum_{i=1}^n \left (\E[X_i^2\mid \mathcal F_{i-1}]-\mu_i^2\right )+\sum_{i=1}^n (X_i^2+\mu_i^2)}\log \frac{C'}{\delta}\\
&=4\sqrt 2\sqrt{\sum_{i=1}^n \E[X_i^2\mid \mathcal F_{i-1}]+\sum_{i=1}^n X_i^2}\log \frac{C'}{\delta}.
\end{align*}
Similarly, $C'\le 2\sqrt 2\sqrt{\sum_{i=1}^n \E[X_i^2\mid \mathcal F_{i-1}]+\sum_{i=1}^n X_i^2}=C$. 
By exactly the same arguments, the same inequality also holds for $\sum_{i=1}^n (\mu_i-X_i)$. Our conclusion then follows.
\end{proof}

\subsection{Relative Concentration Bounds}
\begin{lemma}[Lemma 48 by \citet{cui2023breaking}]\label{lem:relative concentration}
Let $X_1,X_2,\ldots$ be i.i.d. random variables supported in $[0,1]$ and let $\hat S_n=\frac 1n \sum_{i=1}^n X_i$. Let $n$ be the stopping time that $n=\min_n\{n\mid \sum_{i=1}^n X_i \ge 64 \log (4 n_{\max})/\delta\}$. Suppose that $n\le n_{\max}$, then w.p. $1-\delta$, $\frac 12 \hat S_n\le \E[X]\le \frac 32 \hat S_n$.
\end{lemma}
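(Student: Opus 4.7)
The plan is to establish the two-sided multiplicative deviation bound at the stopping time via a union-bound argument built on multiplicative Chernoff inequalities, together with a case split on whether $n\mu$ is large or small relative to the threshold $T \triangleq 64 \log(4n_{\max}/\delta)$. Throughout let $\mu = \E[X]$, and note that the target conclusion $\tfrac{1}{2}\hat S_n \le \mu \le \tfrac{3}{2}\hat S_n$ is equivalent to $\tfrac{2}{3}\mu \le \hat S_n \le 2\mu$, so the two bad events to rule out at the (random) stopping time $\tau$ are the upper-tail event $\{\hat S_\tau > 2\mu\}$ and the lower-tail event $\{\hat S_\tau < \tfrac{2}{3}\mu\}$.

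First I would recall the standard multiplicative Chernoff bounds for i.i.d.\ random variables in $[0,1]$: for any fixed $n$ and $\epsilon \in (0,1]$, $\Pr[\hat S_n \ge (1+\epsilon)\mu] \le \exp(-n\mu\epsilon^2/3)$ and $\Pr[\hat S_n \le (1-\epsilon)\mu] \le \exp(-n\mu\epsilon^2/2)$, together with the large-deviation form $\Pr[\sum_{i=1}^n X_i \ge t] \le \exp(-(t-n\mu)/3)$ valid whenever $t \ge 2n\mu$. These are the only analytic inputs needed.

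Next I would handle the upper tail by unioning over $n \le n_{\max}$. For each fixed $n$, the event $\{\tau = n, \hat S_n > 2\mu\}$ is contained in $\{\sum_{i=1}^n X_i \ge \max(T, 2n\mu)\}$, since $\tau = n$ forces $\sum_{i=1}^n X_i \ge T$. If $n\mu \ge T/2$, Chernoff with $\epsilon = 1$ gives probability at most $\exp(-n\mu/3) \le \exp(-T/6)$; if $n\mu < T/2$, then $\max(T,2n\mu) = T$ and the deviation $T - n\mu > T/2 \ge n\mu$ puts us in the large-deviation regime, again yielding $\exp(-(T-n\mu)/3) \le \exp(-T/6)$. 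With $T = 64\log(4n_{\max}/\delta)$, each per-$n$ bound is at most $\delta/(2n_{\max})$, so summing over $n \le n_{\max}$ contributes at most $\delta/2$.

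For the lower tail, observe that on the event $\{\tau = n, \hat S_n < \tfrac{2}{3}\mu\}$ we have $T \le n\hat S_n < \tfrac{2}{3}n\mu$, so automatically $n\mu > \tfrac{3}{2}T$. In this regime Chernoff with $\epsilon = 1/3$ yields $\Pr[\hat S_n < \tfrac{2}{3}\mu] \le \exp(-n\mu/18) \le \exp(-T/12) \le \delta/(2n_{\max})$, and a union bound over $n \le n_{\max}$ again contributes at most $\delta/2$. Combining the two tails gives the claimed $1-\delta$ guarantee at the stopping time.

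The main obstacle is purely bookkeeping: correctly arranging the case split so that the \emph{same} per-$n$ Chernoff estimate dominates both the "$n\mu$ already large" and "$n\mu$ small but $T$ exceeded by deviation" regimes, so that a single union bound over all $n \le n_{\max}$ suffices. No martingale stopping-time machinery is required, because the logarithmic dependence of $T$ on $n_{\max}/\delta$ absorbs the $n_{\max}$ factor from the union bound essentially for free.
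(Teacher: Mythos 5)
The paper does not actually prove this lemma: it is imported verbatim as Lemma~48 of \citet{cui2023breaking}, with no argument given in this manuscript, so there is no in-paper proof to compare against. Judged on its own, your proof is correct and self-contained. The reduction of $\tfrac12 \hat S_n\le \mu\le \tfrac32\hat S_n$ to $\tfrac23\mu\le \hat S_n\le 2\mu$ is right; the containment $\{\tau=n,\ \hat S_n>2\mu\}\subseteq\{\sum_{i\le n}X_i\ge \max(T,2n\mu)\}$ and the case split on the deterministic quantity $n\mu$ versus $T/2$ are handled cleanly (the large-deviation form $\Pr[\sum_{i\le n}X_i\ge t]\le \exp(-(t-n\mu)/3)$ for $t\ge 2n\mu$ is a standard consequence of multiplicative Chernoff with $\epsilon\ge 1$, valid for $[0,1]$-bounded i.i.d.\ variables); and the lower-tail observation that the event $\{\tau=n,\ \hat S_n<\tfrac23\mu\}$ is empty unless $n\mu>\tfrac32 T$ is the key trick that makes a plain union bound over $n\le n_{\max}$ suffice without any optional-stopping machinery. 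Two cosmetic remarks: you silently read the threshold as $64\log(4n_{\max}/\delta)$ rather than the literal $64\log(4n_{\max})/\delta$ of the statement --- this is clearly the intended parsing (it is the form used in the application in \Cref{lem:relative concentration in potential}), but it is worth flagging; and your constants are far from tight ($T\ge 6\log(2n_{\max}/\delta)$ already suffices for your bounds), which is fine since the lemma's constant $64$ is generous.
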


\subsection{EXP3 Regret Guarantee}\label{sec:exp3}
The following lemma is a classical result for the EXP3 algorithm. For completeness, we also include the proof by \citet[Lemma C.1]{dai2023refined} here.
\begin{lemma}\label{lem:exp3}
Let $x_0,x_1,x_2,\ldots,x_T\in \mathbb R^A$ be defined as
\begin{equation*}
x_{t+1,i}=\left .\left (x_{t,i}\exp(-\eta c_{t,i})\right )\middle /\left (\sum_{i'=1}^A x_{t,i'}\exp(-\eta c_{t,i'})\right )\right .,\quad \forall 0\le t<T,
\end{equation*}
where $c_t\in \mathbb R^A$ is the loss corresponding to the $t$-th iteration. Suppose that $\eta c_{t,i}\ge -1$ for all $t\in [T]$ and $i\in [A]$. Then
\begin{equation*}
\sum_{t=1}^T \langle x_t-y,c_t\rangle\le \frac{\log A}{\eta}+\eta \sum_{t=1}^T \sum_{i=1}^A x_{t,i}c_{t,i}^2
\end{equation*}
holds for any distribution $y\in \triangle([A])$ when $x_0=(\frac 1A,\frac 1A,\ldots,\frac 1A)$.
\end{lemma}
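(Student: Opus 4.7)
My plan is to prove this classical multiplicative-weights bound by the standard mirror-descent / relative-entropy potential argument. Let $D(y\|x)=\sum_i y_i\log(y_i/x_i)$ denote the KL divergence and set $Z_t=\sum_{i=1}^A x_{t,i}\exp(-\eta c_{t,i})$ so that the update rule reads $x_{t+1,i}=x_{t,i}\exp(-\eta c_{t,i})/Z_t$. The key identity I would derive first is the one-step change of the potential:
\begin{equation*}
D(y\|x_t)-D(y\|x_{t+1})=\sum_i y_i\log\frac{x_{t+1,i}}{x_{t,i}}=-\eta\langle y,c_t\rangle-\log Z_t.
\end{equation*}

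Next I would bound $\log Z_t$ from above. Using the inequality $e^{-z}\le 1-z+z^2$ valid for all $z\ge -1$ (this is precisely where the assumption $\eta c_{t,i}\ge -1$ is invoked), I get
\begin{equation*}
Z_t\le \sum_i x_{t,i}\bigl(1-\eta c_{t,i}+\eta^2 c_{t,i}^2\bigr)=1-\eta\langle x_t,c_t\rangle+\eta^2\sum_i x_{t,i}c_{t,i}^2,
\end{equation*}
and then $\log(1+u)\le u$ gives $\log Z_t\le -\eta\langle x_t,c_t\rangle+\eta^2\sum_i x_{t,i}c_{t,i}^2$. Combining with the identity above yields the per-step inequality
\begin{equation*}
D(y\|x_t)-D(y\|x_{t+1})\ge \eta\langle x_t-y,c_t\rangle-\eta^2\sum_i x_{t,i}c_{t,i}^2.
\end{equation*}

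Finally I would telescope the inequality over $t=1,\ldots,T$ (the indexing just shifts; any constant offset is absorbed by the last KL term). Using the non-negativity of KL divergence, $D(y\|x_{T+1})\ge 0$, and the fact that $D(y\|x_0)=\log A+\sum_i y_i\log y_i\le \log A$ when $x_0$ is the uniform distribution, rearranging gives
\begin{equation*}
\sum_{t=1}^T\langle x_t-y,c_t\rangle\le \frac{\log A}{\eta}+\eta\sum_{t=1}^T\sum_{i=1}^A x_{t,i}c_{t,i}^2,
\end{equation*}
which is the claimed bound. There is no serious obstacle here — the only delicate point is ensuring that $e^{-z}\le 1-z+z^2$ applies to every entry $z=\eta c_{t,i}$, which is exactly guaranteed by the hypothesis $\eta c_{t,i}\ge -1$; without it the quadratic upper bound on the exponential fails for negative loss estimates and the $\sum_i x_{t,i}c_{t,i}^2$ term would need to be inflated.
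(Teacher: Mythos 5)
Your proof is correct and is essentially the same argument as the paper's: both hinge on bounding $\log Z_t=\log\sum_i x_{t,i}e^{-\eta c_{t,i}}$ via $e^{-z}\le 1-z+z^2$ (for $z\ge -1$) followed by $\log(1+u)\le u$, and then telescoping. The only cosmetic difference is the choice of potential — you track the KL divergence $D(y\|x_t)$ to the comparator (which handles arbitrary $y\in\triangle([A])$ directly via $D(y\|x_0)\le\log A$), whereas the paper tracks the log-sum-exp potential $\Phi_t=\frac1\eta\ln\sum_i\exp(-\eta C_{t,i})$ and first proves the bound for one-hot $y$ before extending by linearity; the two are equivalent bookkeeping for the same computation.
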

\begin{proof}
By linearity, it suffices to prove the inequality for all one-hot $y$'s. Without loss of generality, let $y=\bm{1}_{i^\ast}$ where $i^\ast\in [A]$. Define $C_{t,i}=\sum_{t'=1}^t c_{t',i}$ as the prefix sum of $c_{t,i}$. Let
\begin{equation*}
    \Phi_t=\frac 1\eta \ln \left (\sum_{i=1}^A \exp\left (-\eta C_{t,i}\right )\right ),
\end{equation*}

then by definition of $x_t$, we have
\begin{align*}
    \Phi_t-\Phi_{t-1}&=\frac 1\eta \ln \left (\frac{\sum_{i=1}^A \exp(-\eta C_{t,i})}{\sum_{i=1}^A \exp(-\eta C_{t-1,i})}\right )
    =\frac 1\eta \ln \left (\sum_{i=1}^A x_{t,i} \exp(-\eta c_{t,i})\right )\\
    &\overset{(a)}{\le} \frac 1\eta \ln \left (\sum_{i=1}^A x_{t,i} (1-\eta c_{t,i}+\eta^2 c_{t,i}^2)\right )
    =\frac 1\eta \ln \left (1-\eta \langle x_t,c_t\rangle+\eta^2 \sum_{i=1}^A x_{t,i}c_{t,i}^2\right )\\
    &\overset{(b)}{\le}-\langle x_t,c_t\rangle+\eta \sum_{i=1}^A x_{t,i}c_{t,i}^2,
\end{align*}

where (a) used $\exp(-x)\le 1-x+x^2$ for all $x\ge -1$ and (b) used $\ln(1+x)\le x$ (again for all $x\ge -1$). Therefore, summing over $t=1,2,\ldots,T$ gives
\begin{align*}
    \sum_{t=1}^T \langle x_t,c_t\rangle&\le \Phi_0-\Phi_T+\eta \sum_{t=1}^T \sum_{i=1}^A x_{t,i}c_{t,i}^2\\
    &\le \frac{\ln N}{\eta}-\frac 1\eta \ln \left (\exp(-\eta C_{T,i^\ast})\right )+\eta \sum_{t=1}^T \sum_{i=1}^N p_t(i)\ell_t^2(i)\\
    &\le \frac{\ln A}{\eta}+L_T(i^\ast)+\eta \sum_{t=1}^T \sum_{i=1}^A x_{t,i}c_{t,i}^2.
\end{align*}

Moving $C_{t,i^\ast}$ to the LHS then shows the inequality for $y=\bm{1}_{i^\ast}$. The result then extends to all $y\in \triangle([A])$ by linearity.
\end{proof}

\end{document}